\documentclass[11pt]{article}
\usepackage[margin=1in]{geometry}

\usepackage{times}
\usepackage{microtype}

\usepackage{amsmath,amssymb,amsfonts,amsthm}

\usepackage{graphicx}
\usepackage{booktabs}

\usepackage{natbib}
\usepackage{url}

\usepackage{algorithm2e}

\usepackage{tikz}
\usetikzlibrary{arrows.meta,positioning,decorations.pathreplacing,calc}

\usepackage[pdfencoding=auto,psdextra]{hyperref}
\usepackage{bookmark}
\usepackage[capitalize,nameinlink,noabbrev]{cleveref}

\title{Online Realizable Regression and Applications for ReLU Networks\footnote{Accepted for presentation at the
Conference on Learning Theory (COLT) 2026}}
\author{%
	Ilan Doron-Arad\thanks{Supported by grant NSF DMS-2031883 and Vannevar Bush Faculty Fellowship ONR-N00014-20-1-2826 (PI Mossel).}\\
	Massachusetts Institute of Technology, Cambridge, MA, USA\\
	\texttt{ilanda@mit.edu}
	\and
	Idan Mehalel\thanks{Supported by the European Research Council (ERC) under the European Union’s Horizon 2022 research and innovation program (grant agreement No. 101041711), the Israel Science Foundation (grant number 2258/19), and the Simons Foundation (as part of the Collaboration on the Mathematical and Scientific Foundations of Deep Learning).}\\
	The Hebrew University of Jerusalem, Jerusalem, Israel\\
	\texttt{idanmehalel@gmail.com}
    \and
	Elchanan Mossel\thanks{Partially supported by NSF DMS-2031883, the Vannevar Bush Faculty Fellowship ONR-N00014-20-1-2826, MURI N000142412742, and a Simons Investigator Award.}\\
	Massachusetts Institute of Technology, Cambridge, MA, USA\\
	\texttt{elmos@mit.edu}
}
\date{}

\theoremstyle{plain}
\newtheorem{theorem}{Theorem}[section]
\newtheorem{lemma}[theorem]{Lemma}
\newtheorem{corollary}[theorem]{Corollary}
\newtheorem{proposition}[theorem]{Proposition}

\theoremstyle{definition}
\newtheorem{definition}[theorem]{Definition}

\theoremstyle{remark}

\newtheorem{claim}[theorem]{Claim}

\newcommand{\cR}{\mathcal{R}}
\newcommand{\cX}{\mathcal{X}}
\newcommand{\cF}{\mathcal{F}}
\newcommand{\cG}{\mathcal{G}}
\newcommand{\cH}{\mathcal{H}}
\newcommand{\cZ}{\mathcal{Z}}
\newcommand{\cN}{\mathcal{N}}
\newcommand{\cA}{\mathcal{A}}
\newcommand{\N}{\mathbb{N}}

\newcommand{\tree}{\mathcal{T}}
\newcommand{\rel}{\textnormal{ReLU}}
\newcommand{\clip}{\textnormal{clip}}

\newcommand{\cV}{\mathcal{V}}

\DeclareMathOperator{\fat}{fat}

\newcommand{\Alg}{\mathsf{Alg}}

\newcommand{\OPT}{\mathrm{OPT}}

\newcommand{\relu}{\operatorname{ReLU}}
\newcommand{\LD}{\mathrm{Ldim}}

\newcommand{\poly}{\mathrm{poly}}

\newcommand{\E}{\mathbb{E}}

\newcommand{\Rad}{\mathfrak{R}}
\newcommand{\VRad}{\mathfrak{R}^{\mathrm{vec}}}

\newcommand{\cY}{\mathcal{Y}}
\newcommand{\cP}{\mathcal{P}}
\newcommand{\R}{\mathbb{R}}
\newcommand{\eps}{\varepsilon}
\newcommand{\diam}{\operatorname{diam}}

\begin{document}
	\maketitle

	\begin{abstract}%
		
Realizable online regression can behave very differently from online classification.
Even without any margin or stochastic assumptions, realizability may enforce \emph{horizon-free}
(finite) cumulative loss under metric-like losses, even when the analogous classification
problem has an infinite mistake bound.
We study realizable online regression in the adversarial model under losses that satisfy an
approximate triangle inequality (approximate pseudo-metrics).
Recent work of \cite{attias2023optimal} shows that the minimax realizable cumulative loss is characterized by the scaled Littlestone/online dimension $\mathbb{D}_{\mathrm{onl}}$,
but this quantity can be difficult to analyze.
		
Our main contribution is a generic potential method that upper bounds $\mathbb{D}_{\mathrm{onl}}$
by a concrete Dudley-type entropy integral that depends only on covering numbers of the hypothesis
class under the induced sup pseudo-metric. For an hypothesis class $\mathcal{H}$,
we define an \emph{entropy potential}
$\Phi(\cH)=\int_{0}^{\diam(\cH)} \log N(\cH,\eps)\,d\eps$, where $N(\cH,\eps)$ is the $\eps$-covering number of $\cH$,
and show that for every $c$-approximate pseudo-metric loss,
$\mathbb{D}_{\mathrm{onl}}(\cH)\le O(c)\,\Phi(\cH)$.
In particular, polynomial metric entropy implies $\Phi(\cH)<\infty$ and hence a horizon-free realizable cumulative-loss bound with transparent dependence on effective dimension.

We illustrate the method on two families.
For the class $\mathcal{H}_L$ of all $L$-Lipschitz functions on $[-1,1]^d$ under $\ell_q(y,y')=|y-y'|^q$, we establish a sharp phase transition:
if $q>d$ then $\mathbb{D}_{\mathrm{onl}}(\cH_L)=\Theta_{d,q}(L^d)$ (and the bound is achievable efficiently), whereas if $q\le d$ then $\mathbb{D}_{\mathrm{onl}}(\cH_L)=\infty$.
Complementing these metric-specific results, we also show
that for any continuous loss with $\ell(y,y)=0$, the loss along realizable sequences in $\cH_L$
satisfies $\ell(\hat y_t,y_t)\to 0 ~\text{as } t\to\infty$.
As a second application, we study bounded-norm $k$-ReLU networks over $[-1,1]^d$ with squared loss and highlight a regression--classification separation:
realizable online classification is impossible already for $\cH_{k=2,d=1}$ under $0/1$ loss, yet realizable regression admits finite total loss, including a
$\widetilde O(k^2)$ cumulative-loss upper bound, a lower bound of $\Omega(k^2 \log (d/k^2))$, and an efficient $O(1)$ guarantee for a single ReLU independent of the input dimension.
Finally, 
we rule out (conditionally) efficient proper online learners that achieve realizable accumulated loss $\widetilde o(d)$ 
for any constant $k\ge 2$.
	\end{abstract}

\newpage 

\tableofcontents

\newpage 

	\section{Introduction}
	
	Online classification and online regression can behave very differently. Without a margin assumption, online classification is hard even for very simple hypothesis classes such as Perceptrons \cite{novikoff1962perceptron,shalevshwartz2014uml}, whereas regression, under \emph{realizable} sequences, can admit horizon-free (finite) cumulative-loss guarantees. 
	In this paper, we study \emph{realizable} online regression:
	\emph{when does realizability imply a horizon-free bound on cumulative loss, and what complexity parameter controls it?}
	We work in the adversarial online model (no stochastic assumptions), and we deliberately avoid any margin/separation condition.
	The key message is that, for regression under metric-like losses, realizability can be understood by a generic potential function mechanism, and enforce \emph{finite total loss} even when the analogous
	classification problem incurs an infinite mistake bound.

	We start with some basic definitions of the model. Fix an instance domain $\cX$, a label space $\cY$, a hypothesis class $\cH\subseteq \cY^{\cX}$, and a loss
	$\ell:\cY\times\cY\to\R_{\ge 0}$.
	In each round $t=1,2,\dots$, the learner observes $x_t\in\cX$, predicts $\hat y_t\in\cY$, then observes
	$y_t\in\cY$ and incurs the loss $\ell(\hat y_t,y_t)$. We say the sequence is \emph{realizable} by $\cH$
	if there exists $h^\star\in\cH$ such that $y_t=h^\star(x_t)$ for all $t$.
	The object is the minimax realizable cumulative loss:
	\[
	\inf_{\Alg}\ \sup_{h^\star\in\cH, (x_t)_{t \geq 1}}\ \sum_{t\ge 1}\ell(\hat y_t,y_t),
	\]
	where $\hat y_t \;:=\; \Alg\!\bigl(x_t \mid (x_r,y_r)_{r < t}\bigr)$ and $y_t=h^\star(x_t)$ for all $t$. 
	The goal is to upper bound this quantity (or show it is infinite), as a function of $\cH$ and $\ell$. 
	
	The recent work of \cite{attias2023optimal} shows that
	the minimax realizable cumulative loss is characterized (up to a factor of $2$) by an abstract dimension: the scaled Littlestone/online dimension $\mathbb D_{\mathrm{onl}}$. This works for every loss that satisfies a triangle-type inequality.\footnote{it rules out degenerate losses where a learner can predict an intermediate label and pay tiny loss forever
		even though the scaled Littlestone tree gaps stay large (we give an explicit example in \Cref{sec:prel})}
	When $\ell$ behaves like a (pseudo-)metric, the online game becomes tightly linked to the geometry of $\cH$
	under the induced sup pseudo-metric $d_\ell(f,g)=\sup_x \ell(f(x),g(x))$.
	
	Following \cite{attias2023optimal}, we focus on losses that satisfy an (approximate) triangle inequality.
	For $c\ge 1$, we call $\ell$ a \emph{$c$-approximate pseudo-metric} if $\ell(y,y)=0$,
	$\ell(y_1,y_2)=\ell(y_2,y_1)$, and
	$\ell(y_1,y_2)\le c(\ell(y_1,y_3)+\ell(y_2,y_3))$ for all $y_1,y_2,y_3\in\cY$
	(see Definition~\ref{def:approx-pseudometric} for the formal definition).
	Under this assumption, \cite{attias2023optimal} define the abstract complexity measure
	of scaled Littlestone dimension $\mathbb D_{\mathrm{onl}}$,
	that characterizes the minimax realizable cumulative loss up to constant factors.
	However, $\mathbb D_{\mathrm{onl}}$ can be difficult to compute directly and determining the online realizable regression complexity of specific hypothesis classes remains challenging.

	Our contribution is to upper bound this abstract quantity $\mathbb D_{\mathrm{onl}}$ by a concrete \emph{Dudley entropy integral}~\cite{	dudley1967sizes,rakhlin2015online} 
	that depends only on covering numbers of $\cH$ and $d_\ell$.
	Conceptually, this parallels classical “entropy integral” phenomena:
	finite metric entropy implies finite total loss. Moreover, using this entropy technique we derive tight bounds for interesting case study classes such as Lipschitz functions and ReLU networks.

	\subsection{Our Results}
	Our main results can be summarized as follows, where all results consider solely realizable online learning without a margin assumption. We connect online learnability to metric entropy via an entropy-potential bound; we then derive clean horizon-free consequences from polynomial covering numbers and establish a sharp phase transition for Lipschitz regression under the $\ell_q$ loss. As an important case study, we explore bounded ReLU classes, contrasting classification and regression behavior, and analyze the total loss obtainable for these classes. 
	
	We start with our general machinery of upper bounding the online dimension $\mathbb D_{\mathrm{onl}}$ by a simple Dudley entropy integral, which then yields
	covering-number-based guarantees.
	Let $d_\ell$ be the induced (pseudo-)metric on $\cH$:
	\[
	d_\ell(f,g)\ :=\ \sup_{x\in\cX} \ell\big(f(x),g(x)\big),\qquad f,g\in\cH,
	\]
	and let $N(U,\eps) = N_{\cH}(U,d_{\ell},\eps)$ denote the $\eps$-covering number of $U\subseteq\cH$ with respect to $d_\ell$; that is, $N(U,\eps)$ is the minimum cardinality of a set $S \subseteq \cH$ such that for every $f \in U$ there is $g \in S$ such that $d_{\ell}(f,g) \leq \eps$. 
	Assume $\diam(\cH):=\sup_{f,g\in\cH} d_\ell(f,g)<\infty$, and define the {\em entropy potential}
	\[
	\Phi(U)\ :=\ \int_{0}^{\diam(\cH)} \log_2 N(U,\eps)\, d\eps,
	\]
	interpreted as an extended real value. Our first result is a useful upper bound on the online dimension. 
	
	\begin{theorem}[Online dimension via entropy potential]\label{thm:intro_Donl-via-Phi}
		Assume $\ell$ is a $c$-approximate pseudo-metric for some $c\ge 1$ and $\diam(\cH)<\infty$.
		Then
		\[
		\mathbb{D}_{\mathrm{onl}}(\cH)\ \le\ 4c \cdot \Phi(\cH).
		\]
	\end{theorem}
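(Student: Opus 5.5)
We argue directly from the definition of $\mathbb{D}_{\mathrm{onl}}$ in \cite{attias2023optimal}. That definition is a supremum, over scaled Littlestone trees $\tree$ shattered by $\cH$, of the infimum over root-to-leaf paths of the sum of the gaps $\gamma$ along the path. At each node $v$ with instance $x_v$ the tree has two children labeled $y^0_v,y^1_v$ with $\ell(y^0_v,y^1_v)\ge\gamma_v$, and every path is realized by some $h\in\cH$. It therefore suffices to fix a shattered tree and exhibit one path whose gap sum is at most $4c\,\Phi(\cH)$.

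We select the path greedily using the potential $\Phi$. For a node $v$, let $V_v\subseteq\cH$ be the version space, i.e.\ the hypotheses consistent with the labels on the path to $v$; at the root $V_{\mathrm{root}}=\cH$. At node $v$ the two children have version spaces $V^0,V^1\subseteq V_v$, both nonempty by shattering. We move to the child with the smaller potential.

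The key claim is a one-step decrease:
\[
\min\bigl(\Phi(V^0),\Phi(V^1)\bigr)\ \le\ \Phi(V_v)-\frac{\gamma_v}{4c}.
\]
Telescoping along the chosen path then gives $\sum_v\gamma_v\le 4c\,(\Phi(\cH)-\Phi(\text{final}))\le 4c\,\Phi(\cH)$, since $\Phi\ge 0$. For infinite paths, the same argument applies to every finite prefix.

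To prove the claim, take any $f\in V^0$ and $g\in V^1$. Then $f(x_v)=y^0_v$ and $g(x_v)=y^1_v$, so $d_\ell(f,g)\ge\gamma_v$. Consider a ball of radius $\eps$, meaning all $h$ with $d_\ell(h,s)\le\eps$ for a center $s\in\cH$. If it contained both some $f\in V^0$ and some $g\in V^1$, the approximate triangle inequality would give $\gamma_v\le d_\ell(f,g)\le 2c\eps$. So for $\eps<\gamma_v/(2c)$, each ball of an optimal cover of $V_v$ meets at most one of $V^0,V^1$. Consequently
\[
N(V^0,\eps)+N(V^1,\eps)\le N(V_v,\eps),
\]
and hence $\min_i N(V^i,\eps)\le N(V_v,\eps)/2$, i.e.\ $\log_2 N(V^i,\eps)\le\log_2N(V_v,\eps)-1$ for the smaller side.

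The main obstacle is that the smaller side may depend on $\eps$, whereas we need a single child with $\Phi(V^i)\le \Phi(V_v)-\gamma_v/(4c)$. We handle it by summing over both children. Monotonicity of covering numbers under inclusion gives $\log N(V^i,\eps)\le\log N(V_v,\eps)$ for every $\eps$. For $\eps<\gamma_v/(2c)$, the inequality $a+b\le n$ with $a,b\ge1$ implies $\log_2 a+\log_2 b\le 2\log_2 n-1$: if $a\le b$ then $a\le n/2$, so $\log_2 a+\log_2 b\le(\log_2 n-1)+\log_2 n$. Integrating over $\eps\in[0,\gamma_v/(2c)]$ (this interval lies within $[0,\diam(\cH)]$ up to truncation, since $\gamma_v\le c\cdot 2\diam$) yields
\[
\Phi(V^0)+\Phi(V^1)\le 2\Phi(V_v)-\frac{\gamma_v}{2c}.
\]
Therefore the smaller of the two potentials is at most $\Phi(V_v)-\gamma_v/(4c)$, which is exactly the claimed constant $4c$.

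If the upper limit $\gamma_v/(2c)$ exceeds $\diam(\cH)$, we handle the boundary by a small adjustment of the interval, or by noting that covering at scale $\diam(\cH)$ is trivial. The case $\Phi(\cH)=\infty$ needs no argument. Finally, we must check that the potentials are finite along the path, so that the subtractions are legitimate; this holds because the potentials are nonincreasing under inclusion and $\Phi(\cH)<\infty$.
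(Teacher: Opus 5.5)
Your proof is correct and follows the paper's argument: the same cover-splitting inequality $N(V^0,\eps)+N(V^1,\eps)\le N(V_v,\eps)$ for $\eps<\gamma_v/(2c)$, the same greedy descent, and the same telescoping. The only difference is the one-step drop: you average the two children's potentials, whereas the paper uses a pigeonhole on the sets $A_b=\{\eps: N(U_b,\eps)\le N(U,\eps)/2\}$, one of which has measure at least $\gamma_u/(4c)$. Your boundary worry is moot, since $\gamma_v\le d_\ell(f,g)\le\diam(\cH)$ for any $f\in V^0$, $g\in V^1$. Also, your averaging route is actually slightly stronger: AM--GM gives $\log_2 a+\log_2 b\le 2\log_2 n-2$ whenever $a+b\le n$, which improves the constant from $4c$ to $2c$.
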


	Let us describe the proof's main idea. We analyze a realizable scaled Littlestone tree (the formal definition of \cite{attias2023optimal} is given in \Cref{sec:prel}) through \emph{version spaces} $U\subseteq\cH$: all hypotheses consistent with the realizable sequence thus far.
	At a node with gap $\gamma$, the (approximated) triangle inequality implies that for all $\eps < \frac{\gamma}{2 \cdot c}$,
	any $\eps$-cover of $U$ must split across the two children (no cover center can serve both sides).
	Integrating this cover splitting across scales yields a quantitative \emph{potential drop} of order $\gamma$,
	and a greedy descent argument produces a branch of the scaled Littlestone tree whose total gap is controlled by $\Phi(\cH)$. 
	This turns an abstract online dimension bound into a metric-entropy calculation.
	
	Unlike agnostic regret bounds that use sequential covers/chaining via sequential Rademacher processes
	(e.g.,~\cite{rakhlin2015online,rakhlin2015sequential,block2021majorizing}), realizability lets us work with standard
	(non-sequential) covers of $(\cH,d_\ell)$ under the sup pseudo-metric.

	As an immediate consequence of \Cref{thm:intro_Donl-via-Phi}, polynomial covering numbers of $(\cH,d_\ell)$ imply $\Phi(\cH)<\infty$ and hence a
	horizon-free realizable cumulative-loss bound. 
	
	\begin{corollary}\label{cor:intro_poly-covers}
		Assume that $\diam(\cH)\le 1$ and that there exist constants $A\ge 1$ and $p\ge 1$ such that for all
		$\eps\in(0,1]$ it holds that
		$
		N(\cH,\eps)\ \le\ \Big(\frac{A}{\eps}\Big)^p.
		$
		Then,
		\[
		\Phi(\cH)\ \le\ p\big(\log_2 A + 1/\ln 2\big),
		\]
		and thus by Theorem~\ref{thm:intro_Donl-via-Phi}:
		\[
		\mathbb{D}_{\mathrm{onl}}(\cH)\ \le\ 4c\,p\big(\log_2 A + 1/\ln 2\big).
		\]
	\end{corollary}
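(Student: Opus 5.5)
The plan is to bound the integral directly, using the covering hypothesis pointwise on $(0,\diam(\cH)]$. Since $\diam(\cH)\le 1$, we have
\[
\Phi(\cH)=\int_0^{\diam(\cH)}\log_2 N(\cH,\eps)\,d\eps \le \int_0^{1}\log_2 N(\cH,\eps)\,d\eps .
\]
The integrand is nonnegative, because $N(\cH,\eps)\ge 1$ whenever $\cH\neq\emptyset$, so extending the upper limit of integration to $1$ can only increase the integral. For every $\eps\in(0,1]$ the hypothesis gives $\log_2 N(\cH,\eps)\le p\log_2 A + p\log_2(1/\eps)$. The integrand is finite and measurable on $(0,1]$, since $N(\cH,\cdot)$ is monotone nonincreasing. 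The single point $\eps=0$ has measure zero and can be ignored.

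Next I integrate term by term:
\[
\int_0^1 p\log_2 A\,d\eps = p\log_2 A.
\]
For the second term I use
\[
\int_0^1 \ln(1/\eps)\,d\eps=\bigl[\eps-\eps\ln\eps\bigr]_0^1 = 1,
\]
which holds since $\eps\ln\eps\to 0$ as $\eps\to 0$. It follows that $\int_0^1 \log_2(1/\eps)\,d\eps = 1/\ln 2$. Summing the two terms gives $\Phi(\cH)\le p(\log_2 A + 1/\ln 2)$, which is the first claim.

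The second claim then follows by plugging this bound into Theorem~\ref{thm:intro_Donl-via-Phi}. Its hypotheses are satisfied: $\ell$ is a $c$-approximate pseudo-metric, as assumed in that theorem's setting, and $\diam(\cH)\le 1<\infty$. Therefore
\[
\mathbb D_{\mathrm{onl}}(\cH)\le 4c\,\Phi(\cH)\le 4c\,p(\log_2 A+1/\ln 2).
\]

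There is no real obstacle here. The only points needing care are the monotonicity of the integral when the range is extended to $[0,1]$ (nonnegativity of $\log_2 N$), and the improper integral of $\log(1/\eps)$ near $0$. Both are routine.
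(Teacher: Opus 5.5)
Your proposal is correct and follows essentially the same route as the paper: bound $\log_2 N(\cH,\eps)$ pointwise by $p\log_2 A + p\log_2(1/\eps)$, integrate over $(0,1]$ using $\int_0^1 \ln(1/\eps)\,d\eps = 1$, and then invoke Theorem~\ref{thm:intro_Donl-via-Phi}. You also state explicitly that enlarging the upper limit from $\diam(\cH)$ to $1$ uses the nonnegativity of $\log_2 N$; the paper leaves this point implicit.
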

	
	As an immediate consequence of \Cref{thm:intro_Donl-via-Phi}, polynomial covering numbers of $(\cH,d_\ell)$ imply
	$\Phi(\cH)<\infty$, and therefore a horizon-free realizable cumulative-loss bound.
	We develop two instantiations: (i) $L$-Lipschitz classes on $[-1,1]^d$, exhibiting a sharp $q\gtrless d$ transition, and
	(ii) bounded ReLU networks, where parameter-based entropy bounds yield guarantees depending only on the dimension.
	
	\subsection*{Lipschitz Realizable Regression}
	For the class $\cH_L$ of $L$-Lipschitz functions on $[-1,1]^d$ (with respect to $\|\cdot\|_\infty$) and
	$\ell_q(y,y'):=|y-y'|^q$, we show a sharp dichotomy: : horizon-free learnability holds exactly when $q>d$, a condition that can be restrictive in high dimensions.
	
	\begin{theorem}\label{thm:intro_qgt-d}
		Fix $d\ge 1$ and $L\ge 1$.
		\begin{itemize}
			\item If $q>d$, then $\Phi(\cH_L)<\infty$ and $\mathbb{D}_{\mathrm{onl}}(\cH_L)=\Theta_{d,q}(L^d)$, which can be obtained efficiently.
			\item If $q\leq d$, then $\mathbb{D}_{\mathrm{onl}}(\cH_L)=\infty$. 
		\end{itemize}
	\end{theorem}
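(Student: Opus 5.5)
We treat the two regimes separately. Throughout, we take as given the characterization of \cite{attias2023optimal}: the minimax realizable cumulative loss is within a factor $2$ of $\mathbb{D}_{\mathrm{onl}}$. We also use that $\ell_q$ is a $2^{q-1}$-approximate pseudo-metric for $q\ge 1$; for $q<1$ it is a genuine metric. We assume the class is normalized to have range in $[-L,L]$, or equivalently $h(0)=0$, so that $\diam(\cH_L)<\infty$. For an unnormalized class, the first prediction can be off by an arbitrary amount, so some such normalization must be implicit.

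\textbf{Upper bound for $q>d$.} We apply Theorem~\ref{thm:intro_Donl-via-Phi}. Under $\ell_q$, the induced metric is $d_\ell(f,g)=\|f-g\|_\infty^q$, so $N(\cH_L,\eps)=N_\infty(\cH_L,\eps^{1/q})$. The classical Kolmogorov--Tikhomirov bound gives $\log N_\infty(\cH_L,\delta)\le C_d (L/\delta)^d$. Hence
\[
\Phi(\cH_L)\le \int_0^{(2L)^q} C_d L^d \eps^{-d/q}\,d\eps .
\]
This integral converges exactly when $d/q<1$, and it evaluates to $O_{d,q}(L^d\cdot L^{q-d})$. That scaling is too large, so a naive diameter cutoff gives the wrong power of $L$. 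To recover the $L^d$ scaling, we refine the argument: we run the potential argument only for $\eps\le 1$, where $\delta\le1$, and handle the coarse scales separately. The coarse scales can be charged to an explicit efficient algorithm, which we construct directly. The algorithm maintains a grid of mesh $\delta_j=2^{-j}/L$ on $[-1,1]^d$ and predicts $\hat y_t$ by Lipschitz extension, namely the midpoint of the consistent interval $[\max_r (y_r-L\|x_t-x_r\|_\infty),\ \min_r(y_r+L\|x_t-x_r\|_\infty)]$. A loss of size about $\delta^q L^q$ at $x_t$ forces $x_t$ to be at distance at least about $\delta$ from all previous points. A packing argument then shows there are at most $(1/\delta)^d$ such rounds per dyadic scale. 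Summing gives
\[
\sum_j (2^{j})^{d}\, 2^{-jq}\,\cdot O(1)=O_{d,q}(1)\cdot L^d,
\]
which converges because $q>d$; we normalize the scale index so that the total is $L^d$. This yields the efficient upper bound $O_{d,q}(L^d)$ and complements $\Phi<\infty$.

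\textbf{Lower bound $\Omega(L^d)$.} We build a scaled Littlestone tree as follows. Take a grid of $(L)^d$ points spaced $1/L$ apart. At each grid point, the adversary places a bump of height $\pm 1/2$ supported on its cell. Such bumps are $L$-Lipschitz and have disjoint supports, so every sign pattern is realizable. Each node therefore has gap $\ell_q(1/2,-1/2)=1$, and a branch has depth $L^d$, giving $\mathbb{D}_{\mathrm{onl}}\ge \Omega(L^d)$.

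\textbf{Infinite regime $q\le d$.} We use the same construction at all scales simultaneously. For scale $k$, we take $2^{kd}$ disjoint cells of side $2^{-k}$ and bumps of height $\pm L2^{-k}/4$, placing scale-$k$ cells inside a region disjoint from the other scales; for example, we allocate a sub-box of side $2^{-k}$ to scale $k$. Because their supports are disjoint, the sum of all such bumps is $L$-Lipschitz. Scale $k$ then contributes $\asymp 2^{kd}\,(L2^{-2k})^q$ to a branch if we use nested boxes. We must choose the geometry so that this quantity does not decay. The cleanest choice is to put scale $k$ in a box of side $1/k^2$ with cells of side $1/(k^2 m_k)$, so that the contribution is $m_k^{d}\,(L/(k^2m_k))^q$. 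Taking $m_k\to\infty$, this diverges when $q<d$. For $q=d$, the contribution becomes $L^d k^{-2d}$, which is summable, so this choice fails there. The main obstacle is therefore the boundary case $q=d$. There we will instead use box sizes $r_k$ with $\sum r_k\le 1$ but $\sum r_k^d$ weighted by $L^d$ unbounded. Since each box of side $r$ allows $\Omega((Lr)^d)$ gap after rescaling, we need $\sum_k r_k^d=\infty$ under disjointness. This is impossible with volume-disjoint boxes, so for $q=d$ we will instead exploit adaptivity: bumps are added on top of previously revealed values inside shrinking cells, each level contributing constant total gap, and heights shrink geometrically while Lipschitzness is maintained. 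Verifying that the adaptive nested construction keeps the function $L$-Lipschitz while each level contributes $\Omega(1)$ is the step we expect to require the most care.
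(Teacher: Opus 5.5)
Your $q>d$ upper bound goes wrong at the normalization. In the paper, $\cH_L$ consists of $L$-Lipschitz maps $[-1,1]^d\to\cY=[0,1]$. So $\diam(\cH_L)\le 1$, and your own bound $\log N(\cH_L,\eps)\le C_dL^d\eps^{-d/q}$, integrated over $(0,1]$, gives $\Phi(\cH_L)\le C_dL^d/(1-d/q)$ directly.

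Under your normalization (range $[-L,L]$, or $h(0)=0$) the statement is false. A root querying $x=(1,\dots,1)$ with edge labels $\pm L$ is realized by $\pm L\|x\|_\infty$, so $\mathbb{D}_{\mathrm{onl}}\ge(2L)^q$, which is not $O_{d,q}(L^d)$ when $q>d$. The $L^q$ your integral produced is the truth in that model, so no repair can reach $L^d$. The proposed repair is also not an argument:
\begin{itemize}
\item \Cref{thm:intro_Donl-via-Phi} bounds a tree by the full integral and cannot be run ``only for $\eps\le1$''.
\item Splicing it with an algorithm on coarse scales would need a common branch, as in the paper's large/small-gap decomposition for ReLUs.
\item ``Normalize the scale index so that the total is $L^d$'' is exactly where the error scales between $1$ and $2L$, worth a factor $L^{q-d}$, are dropped.
\end{itemize}

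With range $[0,1]$, none of this is needed, and your algorithmic part becomes a correct alternative to the paper's proof. Rounds with error $>L\delta$ have pairwise $\delta$-separated queries, so there are at most $(2/\delta+1)^d$ of them. Summing over $\delta=2^{-j}/L$, $j\ge0$, gives $O_{d,q}(L^d)$. This is the counting/layer-cake argument the paper uses for \Cref{cor:env-pointwise} and for $q=d$, in place of its potential $\Psi_t=\int W_t^{q-d}$. Via the lower-bound half of \Cref{thm:SLD}, it alone gives $\mathbb{D}_{\mathrm{onl}}(\cH_L)=O_{d,q}(L^d)$.

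Your $\Omega(L^d)$ bump tree (shifted by $\tfrac12$) is fine. Your $q<d$ argument is also fine, though a single scale with $m\to\infty$ suffices because $\mathbb{D}_{\mathrm{onl}}$ is a supremum over trees; that is the paper's \Cref{thm:qlt-d}.

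The main gap is $q=d$, which you leave unproved, and the scheme you sketch cannot work as described.
\begin{itemize}
\item At $q=d$, a level with cells of side $r$ and increments $h$ contributes $\asymp(h/r)^d$. So ``constant gap per level'' forces $h/r\asymp L$ at every level. Each level then spends a constant fraction of the slope budget, and along nested chains these slopes add.
\item Take $d=1$, $L=1$, dyadic cells and increments $\delta_j=2^{-j-2}$. Let the increments along $[-1,0]\supset[-\tfrac12,0]$ be negative and those along $[0,1]\supset[0,\tfrac12]$ positive. The centers $\pm\tfrac14$ then get values $\tfrac18$ and $\tfrac78$ at distance $\tfrac12$. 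Deeper chains toward $0$ give a value gap $\approx1$ at distance $2^{-j}$.
\item Every branch of a scaled Littlestone tree must be realizable, so such sign patterns cannot be excluded. Against an adversary, a learner forces them by predicting $v(\mathrm{parent}(Q))\pm\delta_j$.
\item At the other extreme, keep the slope regions of different levels disjoint (plateau bumps nested inside plateaus). Then each bump's gap is $\lesssim_d L^d$ times the volume of its own slope region, so the total stays $O_d(L^d)$. This is the same obstruction you found for disjoint boxes.
\end{itemize}
A valid $q=d$ construction needs something like a path-dependent allocation of the slope budget (refining only where the accumulated slope leaves room), and that idea is missing from your sketch.

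Be aware that the paper's critical lemma for $q=d$ uses exactly this dyadic scheme. Its Lipschitz check rests on the claim that centers of cubes separating at level $m+1$ are at least $a_{m+1}$ apart, which is false for deeper cubes. Two counterexamples:
\begin{itemize}
\item the chain example above;
\item the center $\tfrac12$ of $[0,1]$ against the center $\tfrac38$ of its grandchild $[\tfrac14,\tfrac12]$, whose values can differ by $\tfrac3{16}>\tfrac18$.
\end{itemize}
So you cannot close your gap by importing that lemma.
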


	The entropy bound becomes borderline at $q=d$ and we show that the total loss grows logarithmically with the horizon $T$, whereas for $q<d$ the total loss grows linearly with $T$.
	These regimes highlight that realizable cumulative-loss guarantees are loss-dependent.
	This motivates the following natural question: under what conditions we have $\ell(\hat y_t,y_t)\to 0$ as $t\to\infty$ for arbitrary loss functions?
	Interestingly, we show as a corollary that for $L$-Lipschitz functions, learning \emph{eventually} occurs for any truthful continuous loss.
	
	\begin{corollary}\label{cor:env-pointwise-loss}
		Let $\ell:[0,1]^2\to\mathbb{R}_{\ge 0}$ be a continuous loss satisfying $\ell(y,y)=0$ for all $y\in[0,1]$.
		Then, there is a deterministic proper algorithm that for every realizable sequence in $\cH_L$ satisfies
		$
		\lim_{t \to \infty} \ell(\hat y_t,y_t) = 0$. 
	\end{corollary}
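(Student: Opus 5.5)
The plan is to reduce the statement to a cumulative-loss bound for the absolute loss $\ell_1(y,y')=|y-y'|$ on a class where that bound is finite, and then transfer it to $\ell$ via uniform continuity. Since $\ell$ is continuous on the compact set $[0,1]^2$ and vanishes on the diagonal, it is uniformly continuous. Hence for every $\eta>0$ there is $\delta>0$ such that $|\hat y-y|\le\delta$ implies $\ell(\hat y,y)\le\eta$. So it suffices to build a deterministic proper algorithm with $|\hat y_t-y_t|\to 0$ on every realizable sequence in $\cH_L$.

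For the base algorithm, I would pick an exponent $q>d$ (say $q=d+1$) and use the first bullet of Theorem~\ref{thm:intro_qgt-d}. With $\ell_q$ on the label range $[0,1]$ we have $\mathbb{D}_{\mathrm{onl}}(\cH_L)=\Theta_{d,q}(L^d)<\infty$. By the characterization of \cite{attias2023optimal}, this yields a deterministic algorithm whose total $\ell_q$ loss on any realizable sequence is at most $M:=2\mathbb{D}_{\mathrm{onl}}(\cH_L)<\infty$. Here $\ell_q$ is a $2^{q-1}$-approximate pseudo-metric, so that framework applies. Then $\sum_t|\hat y_t-y_t|^q\le M$ forces $|\hat y_t-y_t|\to0$, and in fact the number of rounds with $|\hat y_t-y_t|>\delta$ is at most $M/\delta^q$. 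Composing with the uniform continuity of $\ell$ gives $\ell(\hat y_t,y_t)\to 0$.

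The remaining issue is that the algorithm must be \emph{proper}, i.e., each prediction must be $\hat y_t=h_t(x_t)$ for some $h_t\in\cH_L$ consistent with the history. This is where I expect the main obstacle. The minimax algorithm from \cite{attias2023optimal} need not be proper, although the statement says the bound "can be obtained efficiently", which presumably comes from an explicit learner.

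My first attempt would avoid the black box and use a concrete proper learner. The natural choice is to predict the midpoint of the feasible interval $[\max_{s<t}(y_s-L\|x_t-x_s\|_\infty),\ \min_{s<t}(y_s+L\|x_t-x_s\|_\infty)]$, clipped to $[0,1]$. By McShane--Whitney extension, every value in this interval is attained by a consistent $L$-Lipschitz function into $[0,1]$, so the learner is proper. I would then argue directly that its errors are summable in $q$-th power, or at least tend to zero. The argument would be a packing step: if $|\hat y_t-y_t|>\delta$, then the interval at $x_t$ had width greater than $\delta$, so $x_t$ is at $\|\cdot\|_\infty$-distance at least $\delta/(2L)$ from all previous points. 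By the volume of $[-1,1]^d$, this can happen only $O((L/\delta)^d)$ times.

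This packing bound alone already gives that, for each $\delta$, only finitely many rounds have error above $\delta$, which is exactly $|\hat y_t-y_t|\to0$. The proof then closes with the uniform-continuity step, without needing the exact $L^d$ constant.
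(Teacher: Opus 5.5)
Your proposal is correct and follows essentially the paper's route: the paper also uses the Lipschitz-envelope midpoint learner, bounds the number of rounds with error above $\varepsilon$ by a volume argument (at most $(8L/\varepsilon)^d$, \Cref{cor:env-pointwise}), and finishes with uniform continuity of $\ell$ on $[0,1]^2$. Your packing argument (each bad query point is $\delta/(2L)$-far from every earlier point) is a slightly more direct form of the paper's shrinking-measure argument on $\{W_t>\varepsilon\}$. Your McShane check also supplies the properness that the paper only asserts.
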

	
	In Appendix~\ref{app:total}, we generalize the above result by showing that total boundedness of a function class $\cH$ with respect to $d_{\ell}$ is a sufficient condition for a deterministic proper realizable algorithm achieving eventual learnability, using an $\eps$-net elimination scheme.

	\begin{figure}[t]
		\centering
		\begin{tikzpicture}[>=Latex, font=\small]
			\def\W{8.2}
			\def\D{4.1}
			\def\Y{0.35}
			\def\H{1.45}
			
			\fill[red!30] (0,\Y) rectangle (\D,\H);
			\fill[blue!30]  (\D,\Y) rectangle (\W,\H);
			\draw (\D,\Y) -- (\D,\H);
			
			\node[align=center] at ({0.5*\D},{0.5*(\Y+\H)})
			{${q\leq d}$\\[-1pt]\footnotesize ${\mathbb D_{\mathrm{onl}}=\infty}$};
			
			\node[align=center] at ({0.5*(\D+\W)},{0.5*(\Y+\H)})
			{${q>d}$\\[-1pt]\footnotesize ${\mathbb D_{\mathrm{onl}} \leq \Phi(\cH_L)<\infty}$};
			
			\draw[->] (0,0) -- (\W+0.35,0) node[below] {};
			\draw (\D,0) -- (\D,-0.12) node[below] {$q$};
		\end{tikzpicture}
		\caption{Lipschitz realizable regression under $\ell_q(y,y')=|y-y'|^q$ loss: Divergence for $q\leq d$ and finiteness for $q>d$.}
		\label{fig:lipschitz-phase}
	\end{figure}
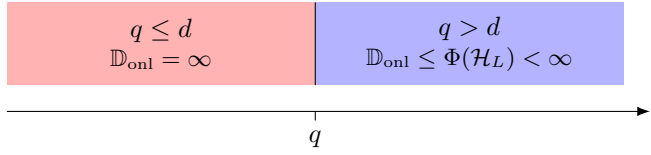

	\subsection*{ReLU Networks}
	A second, more concrete application we give for the generic potential technique is online realizable regression over ReLU networks. For the simplicity of the presentation, we restrict ourselves to the canonical squared loss and focus on shallow networks (see the discussion for more details on deep network and other Lipschitz activations). 
	Fix an input dimension $d$ and a constant $k\ge 2$, and let $\cH_{k,d}$ be the class of
	\emph{bounded $k$-\textnormal{ReLU}}~\cite{DBLP:conf/innovations/GoelKM021} on $[-1,1]^d$ with inputs of bounded norm $\|x\|_2\le 1$. That is, it is the function class $\cH_{k,d}$ that contains all functions of the form: 
	\[
	h(x)=\sum_{j=1}^k a_j \rel(w^j \cdot x),
	\qquad
	a\in [-1,1]^k,~~ ||w^j||_2 \leq 1 ~\forall j = 1, \ldots, k
	\] with the output clipped to $[-1,1]$. 
	We start by contrasting regression with classification for this class. Namely,
	even for bounded parameters, online classification in this class can be unlearnable.

	\begin{table}[t]
		\centering
		\setlength{\tabcolsep}{7pt}
		\renewcommand{\arraystretch}{1.25}
		\begin{tabular}{@{}p{0.15\linewidth}p{0.4\linewidth}p{0.33\linewidth}@{}}
			\toprule
			\textbf{Setting} & \textbf{Assumptions} & \textbf{Result for bounded $k$-ReLU} \\
			\midrule
			\textbf{Classification} & $0/1$ loss & 
			$\cH_{2,1}$ is unlearnable \\
			\addlinespace
			\textbf{Regression} & squared loss & 
			information-theoretic $\widetilde \Theta(k^2)$.\\
			\addlinespace
			\textbf{Regression} & squared loss & 
			Efficient $O(1)$-loss for $k=1$.\\
			\addlinespace
			\textbf{Regression} & squared loss; proper learners; Gap-ETH & 
			No $d^{O(1)}$-time $\widetilde o(d)$-loss $\forall k\ge 2$.\\
			\bottomrule
		\end{tabular}
		\caption{Our online realizable regression/classification results for bounded $k$-ReLU.}
		\label{tab:relu-contrast}
	\end{table}

	\begin{theorem}\label{thm:intro_classification}
		There is no realizable online learning algorithm for $\cH_{2,1}$ under the discrete $0/1$ loss
		$\ell_{0/1}(\hat y,y):=\mathbf{1}\{\hat y\neq y\}$ on $\cY=[0,1]$.
		
	\end{theorem}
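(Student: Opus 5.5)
The plan is to show that the $0/1$ Littlestone dimension of $\cH_{2,1}$ is infinite. Then, by the standard adversary argument, every deterministic learner can be forced to err in every round. I will build, for every $n$, a mistake tree of depth $n$, and then glue these trees, or use a single nested construction, so that the adversary forces a mistake in each round $t=1,2,\dots$. Under $0/1$ loss on the continuum $\cY=[0,1]$, a mistake is forced whenever the adversary can realize at least two distinct labels at $x_t$ while keeping an infinite consistent sub-family alive. The adversary then answers with any realizable label different from $\hat y_t$. The whole proof is therefore the construction of a nested family of consistent hypotheses.

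The construction will embed thresholds using the two units: one unit creates the rise and the other cancels it, so that $h$ takes one fixed value $0$ to the left of a breakpoint and one fixed value $v>0$ to the right. Only these two values are revealed on queried points, so a label carries one bit and not the location of the breakpoint. Concretely, I will read the one-dimensional input as carrying an affine offset, which is how a breakpoint $\theta$ can be moved. The target family is $h_\theta(x)=\clip\bigl(\rel(x-\theta)-\rel(x-\theta-\delta)\bigr)$ with the plateau value $\delta$ held fixed. At round $t$ the adversary maintains an interval $I_t$ of breakpoints consistent with all labels so far and queries a point $x_t$ inside $I_t$. If $\hat y_t\neq 0$ it answers $0$, which keeps the breakpoints $\theta\ge x_t$. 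Otherwise it answers $\delta$, which keeps the breakpoints $\theta\le x_t-\delta$. Either way the learner errs and a sub-interval survives.

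The main obstacle, which I expect to be decisive, is the ramp of width $\delta$ forced by the weight bound $\|w\|\le 1$. Each answer discards a band of width about $\delta$, so with $\delta$ fixed only about $1/\delta$ rounds are possible. This gives an unbounded but finite-per-$\delta$ mistake lower bound, not an infinite one on a single sequence. My first attempt will be to let the surviving family carry shrinking plateaus $\delta_t\to 0$ with $\sum_t\delta_t<1$. Since $0/1$ loss cares only whether the label equals $\hat y_t$ and not how close it is, distinct small positive plateau values still force mistakes. I must then check that the labels revealed at earlier queries stay consistent with all surviving hypotheses. This is exactly where the plateau value leaks information, and I would handle it by making every revealed nonzero label land on the already clipped part of the function, which requires scaling so that the clip at $1$ is active. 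I also have to decide how the breakpoint offset is realized: the stated class $\cH_{2,1}$ is bias-free, each unit $\rel(wx)$ breaks only at $x=0$, and I have not yet verified that any bias-free member of the class has a movable breakpoint. If the offset is not available, the stated class has only a fixed kink and the threshold embedding above does not apply as written, so the plan depends on this step.
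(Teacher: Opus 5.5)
Your core construction is the paper's. The paper uses the mirror-image ramp $f_{\eps,\theta}(x)=\rel(\theta-x)-\rel(\theta-x-\eps)$, keeps an interval of breakpoints consistent with the labels so far, and queries inside it so that the two labels $0$ and $\eps$ split the interval. Where the proposal goes wrong is in what it thinks still has to be proved.

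``No realizable online learning algorithm'' means no finite mistake bound, i.e.\ $\LD=\infty$. For that it suffices to exhibit, for each $n$, a shattered tree of depth $n$, and the plateau may depend on $n$. The paper takes $\eps=2^{-n-2}$ and queries the shifted midpoint $x_v=(L_v+U_v-\eps)/2$, so the surviving intervals have length $2^{-t+1}-\eps>\eps$ through depth $n$. So the fixed-$\delta$ argument you dismiss is already the whole proof. The shrinking-plateau/clipping scheme aims at infinitely many mistakes on a single sequence, which is a stronger and unneeded statement, and can be discarded.

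One correction to your count: for a fixed $\delta$ you get about $\log_2(1/\delta)$ rounds, not $1/\delta$, and only if you bisect. The learner's prediction decides which side survives. If you query near an endpoint to discard only a band of width $\delta$, the learner simply steers you into the short side and the game ends.

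The bias question you leave open is a genuine gap, and it cannot be settled in favor of the bias-free class. In $d=1$ with $|x|\le 1$ there is no spare coordinate to carry an affine offset. Every bias-free member has the form $\clip(\alpha x)$ on $[0,1]$ and $\clip(\beta|x|)$ on $[-1,0]$, and $h(0)=0$ always. An unclipped label at any $x\neq 0$ pins down the slope on that side. Consider the learner that predicts exactly once a side's slope is known; otherwise predicts $1$ (resp.\ $-1$) after having seen a label $1$ (resp.\ $-1$) on that side; and predicts $1$ if nothing is known. This learner errs at most twice per side, so the bias-free class has Littlestone dimension at most $4$, and the statement fails for it.

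The paper's proof is in fact carried out for the class $\cN$ of two-hidden-unit networks on $[-1,1]$ whose weights \emph{and biases} all lie in $[-1,1]$. It realizes the ramp with $w_1=w_2=-1$, $b_1=\theta$, $b_2=\theta-\eps$, $a_1=1$, $a_2=-1$; this is not the bias-free $\cH_{2,1}$ defined in the introduction. Once you work in $\cN$ and let the plateau depend on the target depth, your threshold embedding goes through verbatim.
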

	\Cref{thm:intro_classification} strengthens~\cite[Theorem 4.13]{geneson2025mistake} in two orthogonal ways:
	most importantly, we show infiniteness of the Littlestone dimension even with bounded weights and biases;
	less importantly, we use only two hidden neurons (rather than four).
	The above hardness does \emph{not} carry over to regression under approximate pseudo-metric losses, as shown next. 
	
	Corollary~\ref{cor:intro_poly-covers} yields qualitatively tight linear dependency with respect for the number of scalar parameters $p$ for the unbounded counterpart of the class $\cH_{k,d}$. However, 
	a direct entropy-potential argument does not by itself yield the dependence on $d$ that we seek.
	We show that one can still obtain a polylogarithmic dependence on $d$ by combining two ideas:
	(i) controlling \emph{large} gaps via sequential fat-shattering bounds \cite{rakhlin2015online},
	and (ii) controlling \emph{small} gaps via a scaling/coupling argument to an unbounded-norm comparison class.
	We write $\widetilde O(\cdot)$ to hide polylogarithmic factors in the input dimension $d$. 
	
	\begin{theorem}\label{thm:intro_k_bounded}
		There is a (not necessarily efficient) online realizable algorithm for bounded $k$-ReLU with squared loss
		whose cumulative loss is $O(k^2\cdot \log^4 d) = \widetilde O(k^2)$.
	\end{theorem}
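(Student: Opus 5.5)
Since \cite{attias2023optimal} shows that the minimax realizable cumulative loss is at most a constant multiple of $\mathbb{D}_{\mathrm{onl}}(\cH_{k,d})$ for approximate pseudo-metric losses, and the squared loss on $[-1,1]$ is a $2$-approximate pseudo-metric, we bound $\mathbb{D}_{\mathrm{onl}}(\cH_{k,d})$ by $O(k^2\log^4 d)$. Fix any scaled Littlestone tree and any branch, with gaps $\gamma_1,\gamma_2,\dots$ along the branch. We split the gaps at a threshold $\tau$ (to be chosen as roughly $1/\mathrm{poly}(k,d)$, or $\tau\approx 1/d$) and bound $\sum_{i:\gamma_i\ge\tau}\gamma_i$ and $\sum_{i:\gamma_i<\tau}\gamma_i$ separately. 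It suffices to find one branch where the total is small, so the greedy descent of Theorem~\ref{thm:intro_Donl-via-Phi} can be adapted.

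\emph{Large gaps.} For each dyadic scale $\gamma\in[2^{-j-1},2^{-j})$ with $2^{-j}\ge\tau$, a squared-loss gap of $\gamma$ means the two children labels differ by at least $\sqrt{\gamma}$ in absolute value. So the nodes of that scale along a path form a sequential $\Omega(\sqrt{\gamma})$-fat-shattering tree in the realizable sense, and the number of them along a well-chosen branch is at most $\fat^{\mathrm{seq}}_{\sqrt{\gamma}/2}(\cH_{k,d})$. I will use the sequential Rademacher bound of \cite{rakhlin2015online} for $k$-ReLU with $\|a\|_\infty\le1$, $\|w^j\|_2\le1$. Contraction through ReLU and the sum over $k$ units gives sequential Rademacher complexity $O(k\sqrt{\log^3 T}/\sqrt{T})$. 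This translates to $\fat^{\mathrm{seq}}_\alpha=O(k^2\log^3(1/\alpha)/\alpha^2)$, with only logarithmic dependence on $d$ for $\ell_2$-bounded linear parts. Scale $\gamma$ then contributes $\gamma\cdot O(k^2\mathrm{polylog}/\gamma)=O(k^2\mathrm{polylog})$. Summing over $O(\log(1/\tau))=O(\log d)$ scales gives $O(k^2\log^4 d)$. The count per scale has to be combined across scales on a single branch; I would pick at each node the child realizing the sub-tree with smaller multiscale potential, i.e.\ a weighted sum of fat-dimensions of version spaces.

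\emph{Small gaps.} For gaps below $\tau$, I apply the entropy-potential argument of Theorem~\ref{thm:intro_Donl-via-Phi}, restricted to the integral $\int_0^{\tau}\log N(U,\eps)\,d\eps$. The class, viewed as a subset of the unbounded-norm comparison class with $p=k(d+1)$ parameters, has covering number $N(\cH,\eps)\le(A/\eps)^{p}$ with $A=\mathrm{poly}(k)$ under the sup squared-loss metric, since the map is Lipschitz in the parameters. Hence $\int_0^\tau\log N\,d\eps\lesssim p\,\tau\log(A/\tau)=O(kd\,\tau\log(kd))$, which is $O(k\log d)$ for $\tau=1/d$. The scaling/coupling step handles the parameter count: we rescale so that gaps below $\tau$ correspond to a comparison class at unit scale, and check that covers of the bounded class embed into covers of the rescaled unbounded one. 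The potential-drop lemma must hold for the truncated potential, which is valid because only scales $\eps<\gamma/(2c)\le\tau$ are used for gaps below $\tau$.

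\emph{Main obstacle.} The hardest step is combining the two controls along one branch. The greedy descent for small gaps and the fat-shattering argument for large gaps may prefer different children. I would define a joint potential $\Psi(U)=\int_0^\tau\log N(U,\eps)d\eps+\sum_{j}2^{-j}\fat^{\mathrm{seq}}_{c2^{-j/2}}(U)$ and show that at every node some child decreases $\Psi$ by $\Omega(\gamma)$. For large gaps this follows because one child must have strictly smaller sequential fat dimension at the matching scale, or else the tree could be extended. Obtaining this without a large loss in the $\log^4 d$ factor is where I expect the difficulty to lie.
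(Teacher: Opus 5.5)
Your proposal is correct. It combines the two regimes differently from the paper, and the step you flag as the main obstacle is in fact immediate.

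\textbf{Why the combination works.} Both $U\mapsto N(U,\eps)$ and $U\mapsto\fat_\alpha(U)$ are monotone under inclusion.
\begin{itemize}
\item At a node with $\gamma_u\ge\tau$, move to the child whose fat dimension drops at the matching scale. The truncated entropy term and all other fat terms cannot increase.
\item At a node with $\gamma_u<\tau$, move to the child given by the cover-splitting lemma. Its truncated entropy $\int_0^\tau\log_2 N(\cdot,\eps)\,d\eps$ drops by at least $\gamma_u/(4c)$, because $(0,\gamma_u/(2c))\subseteq(0,\tau)$. The fat terms again cannot increase.
\end{itemize}
So your joint $\Psi$ drops by $\Omega(\gamma_u)$ at every node. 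The greedy branch then has total gap $O(\Psi(\cH))=O(k\log(kd))+O(k^2\log^4(kd))$, with no loss in the polylogarithmic factor.

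\textbf{How the paper does it.} The paper builds no joint potential.
\begin{itemize}
\item It fixes the fat-decreasing move only at nodes with $\Delta_u>1/\sqrt d$.
\item The remaining free choices at small-gap nodes form a tree shattered by $\cH$.
\item It rescales that tree's labels by $\sqrt d$, giving a tree shattered by $\cF=\sqrt d\,\cH$ under the clipped loss $\ell_\cap$.
\item It then applies $\mathbb D_{\mathrm{onl}}(\cF;\ell_\cap)\le 8\Phi(\cF)=O(kd\log d)$ as a black box, paying a factor $4/d$.
\end{itemize}
Since $N(\cF,\eps)=N(\cH,\eps/d)$ for $\eps<1$ (covers w.r.t.\ $\ell_\cap$), one has $\Phi(\cF)=d\int_0^{1/d}\log_2 N(\cH,\eps)\,d\eps$. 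The paper's small-gap bound is therefore exactly your truncated entropy integral at $\tau\asymp 1/d$ in disguise.

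\textbf{What each route buys.} Yours is a single self-contained greedy argument that needs neither $\ell_\cap$, nor the comparison class, nor the subtree-extraction lemma. The paper's is more modular: any bound on the online dimension of the comparison class could be plugged in, and the same large-gap rule is reused verbatim for deep networks.

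\textbf{Minor points.}
\begin{itemize}
\item The sequential fat bound should read $O(k^2\alpha^{-2}\log^3(ek/\alpha))$, which gives $O(k^2\log^4(kd))$, as in the paper.
\item The Rademacher chain needs a final contraction step for the output clipping.
\item Your remark about embedding covers into a rescaled ``unbounded-norm'' class is unnecessary. The cover-splitting lemma only uses covers of version spaces $U\subseteq\cH$ with centers in $\cH$.
\end{itemize}
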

	
	In particular, for constant $k$ this theorem yields $\widetilde O(1)$ accumulated loss.  
    The proof of \Cref{thm:intro_k_bounded} combines two ingredients. Large gaps are controlled using $\log^{O(1)} d$ sequential fat-shattering bounds, whereas small gaps are controlled using the entropy-potential method developed above, via a scaling/coupling argument to an unbounded-norm comparison class. Thus, the entropy-potential argument is essential for the small-loss regime and allows us to obtain the $\widetilde O(k^2)$ cumulative-loss guarantee.

The next lower bound shows that the quadratic dependency on $k$ and a logarithmic dependency on $d$ are necessary. Hence, our algorithm is tight up to polylogarithmic factors in $d$. 

\begin{theorem}
\label{thm:LBd}
If $d/k^2\to\infty$, then the realizable online square-loss minimax value of $\cH_{k,d}$ is at least
\(
\Omega\left(k^2\log(d/k^2)\right). 
\)
\end{theorem}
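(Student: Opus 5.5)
We prove the lower bound with an explicit adversary against an arbitrary learner. The instance splits into $m:=\lfloor k^2/C\rfloor$ independent blocks, one per group of about $C$ neurons. Inside each block the adversary plays a binary-search game that forces loss of order $\log(d/k^2)/k^2$ per block, giving total $\Omega(k^2\log(d/k^2))$.

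\textbf{Step 1 (construction).} Partition the $d$ coordinates into $m\approx k^2$ disjoint groups $G_1,\dots,G_m$, each of size $n\approx d/k^2\to\infty$. For each group, the target contains one neuron (or an $O(1)$-size gadget) of the form $a_i\rel(w^i\cdot x)$ with $w^i=e_{j_i}$ for some hidden coordinate $j_i\in G_i$. To make the outputs of different blocks not interfere, we plan queries $x$ supported on a single group only. The gadget weight is $a_i=\pm 1/k$ or $\pm1$. With $\|x\|_2\le 1$ supported on $G_i$, the output of $h$ on $x$ depends only on block $i$'s neuron. The relevant scale is $\rel(e_{j}\cdot x)=x_j$.

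\textbf{Step 2 (per-block game).} Inside group $i$ the adversary identifies the hidden index $j_i$ by binary search over $n$ candidates. It queries $x=\frac{1}{\sqrt{|S|}}\mathbf 1_S$ for the current candidate half $S$. The label is either $a/\sqrt{|S|}$ or $0$, and the adversary picks whichever is farther from the learner's prediction. This costs the learner at least $(a/(2\sqrt{|S|}))^2=a^2/(4|S|)$ per query. Summing over halvings $|S|=n,n/2,\dots$ is dominated by the last steps, which gives only $O(a^2)$ and no $\log n$ factor. So this naive design is too weak, and fixing it is the main obstacle. To obtain the $\log n$, the scale must be kept constant across the $\log n$ rounds. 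We therefore use queries with a fixed $\ell_2$ mass on a signed combination: take $x=\frac1{\sqrt n}\sum_{j\in G_i}\sigma_j e_j$ with sign patterns $\sigma\in\{\pm1\}^n$ from a Hadamard-like family. Then $\rel(x_{j_i})\in\{0,1/\sqrt n\}$ still has scale $n^{-1/2}$, so this does not fix it either. The alternative we would try is to let a single block have $k$-dependent amplitude. Allot each of $k$ neurons to a block and give it weight $|a|=1$, so the output is clipped within $[-1,1]$. Use $k^2$ blocks by pairing neurons so that their differences $\rel(w\cdot x)-\rel(w'\cdot x)$ realize the threshold/index functions with amplitude $1/k$. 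Each round then costs $\Omega(1/k^2)$. Running $\log n$ rounds per block over $k^2$ blocks yields $\Omega(\log(d/k^2))$, and we would then need to improve the amplitude to $\Theta(1)$ per block.

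\textbf{Step 3 (alternative via the minimax characterization).} Since the minimax value is within a factor $2$ of $\mathbb D_{\mathrm{onl}}$ by \cite{attias2023optimal}, it suffices to build a scaled Littlestone tree with branch-sum $\Omega(k^2\log(d/k^2))$. Concatenate $m$ subtrees sequentially, one per block, each of depth $\log n$ with gaps $\gamma=\Omega(1)$ in squared loss. Realizability requires every path to be consistent with some $h\in\cH_{k,d}$, which holds because blocks use disjoint coordinates and disjoint neurons. The crux is that the per-block gap can be $\Omega(1)$ only if each block may use weights of norm $\Theta(1)$ while keeping $\sum_j|a_j|$ irrelevant under clipping. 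This needs $\approx k^2$ blocks built from only $k$ neurons, so neurons must be shared across blocks. We would try taking each block to be a pair $(p,q)$ of neurons, with inputs supported on coordinates indexing both, and encoding the block index in the sign pattern. Verifying that this sharing still keeps paths realizable while the gaps remain constant is the step we expect to be hardest.
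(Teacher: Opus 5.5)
Your proposal is not yet a proof, and the route it ends on cannot work. You concede that each concrete design is too weak: single-path binary search gives $O(a^2)$, the Hadamard variant keeps the scale at $n^{-1/2}$, and the pairing variant gives only $\Omega(\log(d/k^2))$. Your opening accounting has the same problem, since $\lfloor k^2/C\rfloor$ blocks each costing $\log(d/k^2)/k^2$ also sum to only $\Omega(\log(d/k^2))$.

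The plan in Step 3 ($\approx k^2$ blocks, each contributing $\log n$ nodes of squared-loss gap $\Omega(1)$) is not merely unverified; it is impossible for fixed $k$ and $d\to\infty$. Suppose every branch of a tree has $\Theta(k^2\log(d/k^2))$ nodes with $|s_{u,0}-s_{u,1}|\ge c$ for a constant $c>0$. Putting thresholds at the midpoints of the edge labels turns it into a sequential fat-shattering tree of that depth at scale $c/2$. But Proposition~\ref{prop:fatbound-l2-general-domain} gives $\fat_{\alpha}(\cH_{k,d})\le Ck^2\alpha^{-2}\log^3(ek/\alpha)$, which does not grow with $d$, because the sequential Rademacher bound for $\ell_2$-bounded linear functions is dimension-free. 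No way of sharing neurons across blocks gets around this. Separately, with only $k$ neurons feeding one output sum, $k^2$ non-interfering blocks are not available in the first place.

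Two ideas from the paper are missing. First, the $k$-fold amplitude should come from stacking all neurons coherently on one dense hidden direction, not from splitting them into blocks. The targets are $h_s(x)=\clip(\frac k2|\langle v_s,x\rangle|)$ with $v_s=s/\sqrt d$ and $s\in\{\pm1\}^d$. These are realized by $k/2$ neurons with weight $v_s$, $k/2$ with weight $-v_s$, and all output weights equal to $1$.

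Second, the logarithm comes from summing over dyadic scales, each contributing $\Theta(k^2)$ through many small-loss queries, not from $\log n$ constant-loss rounds. The adversary merges adjacent dyadic blocks of coordinates bottom-up. It keeps public reference patterns $r_B$, with the invariant that every surviving $s$ satisfies $s_B=\pm r_B$ on each current block. To merge $B_0,B_1$ into $B$ with $|B|=m$, it queries the unit vector $x_B=m^{-1/2}\sum_{i\in B}r(i)e_i$, where $r=r_{B_0}$ on $B_0$ and $r=r_{B_1}$ on $B_1$. The label is $\alpha_m=\min\{\frac k2\sqrt{m/d},1\}$ if the two child orientations agree and $0$ if they disagree, and both remain feasible. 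This forces square loss $\alpha_m^2/4=k^2m/(16d)$ whenever $m\le 4d/k^2$. A level has $d/m$ merges, so each such level costs $k^2/16$, and there are $\Omega(\log(d/k^2))$ of them.

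Your binary search failed because an index target carries only $\log n$ bits and supports one informative query per scale. The dense sign-vector target carries $d$ bits and supports $d/m$ informative queries at scale $m$, which exactly offsets the $m/d$ per-query loss. Only the top unclipped level, with about $k^2/4$ queries, has constant loss, which is consistent with the fat-shattering bound above.
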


We also show in the appendix how to generalize the result for any depth $L$ ReLU network of width $k$ and input dimension $d$ and to obtain total loss $\tilde{O}\left(k^L\right)$, with a matching lower bound of ${\Omega}\left(k^{L} \log (d / k^L)\right)$.  
    For the interesting special case of a single ReLU where $k = 1$ with no biases or output scalar parameter, i.e., the class 
	$\cH = \{x \mapsto \rel\left(w \cdot x\right) \mid w \in [-1,1]^d, ||w||_2 \leq 1\}$,
	we improve the above polylogarithmic bound to a true constant,
	independent of $d$. This extends the classic constant-loss guarantee for linear prediction
	\cite{cesa1996worst}.
	\begin{theorem}\label{thm:intro_one_relu}
		There is an efficient $O(1)$ online realizable algorithm for $1$-ReLU with squared loss.
	\end{theorem}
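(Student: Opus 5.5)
We prove \Cref{thm:intro_one_relu} with a proper, efficient, projection-based learner that imitates the classical constant-loss argument for realizable linear regression \cite{cesa1996worst}. The plan is to maintain a convex set of consistent weights, predict with one of them, and show that a squared-distance potential drops by at least the incurred loss each round.

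Throughout, $y_t=\rel(w^\star\cdot x_t)$ with $\|w^\star\|_2\le 1$ and $\|x_t\|_2\le 1$. Let $K_0=\{w:\|w\|_2\le 1\}$. After each round we intersect with the constraint set
\[
C_t=\begin{cases}\{w: w\cdot x_t = y_t\}, & y_t>0,\\ \{w: w\cdot x_t\le 0\}, & y_t=0.\end{cases}
\]
Each $C_t$ is convex (a hyperplane or a halfspace), so $K_t=K_0\cap C_1\cap\dots\cap C_t$ is convex and contains $w^\star$. The learner keeps a point $w_t$ and predicts $\hat y_t=\rel(w_t\cdot x_t)$. After seeing $y_t$, it sets $w_{t+1}=\Pi_{C_t}(w_t)$, the Euclidean projection onto the single new constraint. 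This projection has a closed form and costs $O(d)$ time.

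The core inequality is the Pythagorean property of projection onto a convex set containing $w^\star$:
\[
\|w_{t+1}-w^\star\|^2\le \|w_t-w^\star\|^2-\|w_t-w_{t+1}\|^2 .
\]
We then lower-bound the step length $\|w_t-w_{t+1}\|$ by $|\hat y_t-y_t|$. There are three cases.
\begin{itemize}
\item If $y_t>0$, the step length is $|w_t\cdot x_t-y_t|/\|x_t\|\ge |w_t\cdot x_t-y_t|\ge|\rel(w_t\cdot x_t)-y_t|$, since $\rel$ is $1$-Lipschitz and $y_t=\rel(y_t)$.
\item If $y_t=0$ and $w_t\cdot x_t>0$, the step length is $w_t\cdot x_t/\|x_t\|\ge \hat y_t=|\hat y_t-y_t|$.
\item Otherwise the loss is zero and no update is needed.
\end{itemize}
Telescoping gives $\sum_t(\hat y_t-y_t)^2\le\|w_1-w^\star\|^2\le 4$ when $w_1\in K_0$, which is a constant independent of $d$.

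The expected obstacle is that the projection onto $C_t$ alone may leave the unit ball. If $\|w_t\|$ grows, the predictions are no longer those of a hypothesis in the class, so properness fails. The potential bound itself is unaffected, since it only uses $w^\star\in C_t$. However, the theorem concerns the class with $\|w\|_2\le 1$, so we want proper predictions, or at least bounded outputs for the clipped version.

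My first remedy is to project onto $C_t\cap K_0$ instead, that is, onto the intersection of a ball with a hyperplane or halfspace. This is still convex, still contains $w^\star$, and the projection is computable in closed form via a one-dimensional Lagrange multiplier. The Pythagorean inequality still holds for this set. The step length is still at least the distance to $C_t$, because $C_t\cap K_0\subseteq C_t$, so the loss-versus-step comparison above goes through unchanged.

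The iterate does not lie in $\bigcap_{s\le t}C_s$, and it does not need to: the argument only requires $w^\star$ to lie in each set we project onto. Hence the total squared loss is at most $4$, and the algorithm is efficient and proper, which gives an $O(1)$ bound. If clipping of the output to $[0,1]$ is part of the class definition, clipping $\hat y_t$ only reduces the loss, since clipping to $[0,1]$ is $1$-Lipschitz and fixes $y_t$.
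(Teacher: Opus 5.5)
Your proof is correct. It keeps the paper's potential $\|w_t-w^\star\|^2$ and its telescoping, but it obtains the one-step drop differently.

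\textbf{The paper's route.} It uses the unnormalized residual update $w_{t+1}=w_t-(\hat y_t-y_t)x_t$ from $w_1=0$. It proves $\phi_{t+1}\le\phi_t-(\hat y_t-y_t)^2$ by a four-case sign analysis of $-2\alpha_t(w_t-w^\star)\cdot x_t$ (Claim~\ref{claim:alpha}), using $\|x_t\|\le 1$ to absorb the $\alpha_t^2\|x_t\|^2$ term.

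\textbf{Your route.} You observe that realizability places $w^\star$ in a convex set $C_t$: the hyperplane $\{w\cdot x_t=y_t\}$ or the halfspace $\{w\cdot x_t\le 0\}$. The Pythagorean inequality for projections then gives the drop $\|w_t-w_{t+1}\|^2$ directly. The case analysis collapses to $\mathrm{dist}(w_t,C_t)\ge|\hat y_t-y_t|$, which needs only the $1$-Lipschitzness of $\rel$ and $\|x_t\|\le 1$; $x_t\neq 0$ is automatic in the two cases where you move.

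\textbf{Comparison.} Both arguments bound the total loss by $\|w_1-w^\star\|^2\le 1$ when $w_1=0$. The paper's route buys a one-line additive update with no normalization or projection. Yours buys properness: intersecting with the unit ball keeps every iterate in the class, whereas the paper's iterates are only guaranteed $\|w_t\|\le 2$. That is a nice counterpoint to the paper's proper-hardness result for $k\ge 2$.

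\textbf{A simplification.} The ball step is easier than a Lagrange-multiplier computation. When $y_t=0$, the projection onto $\{w: w\cdot x_t\le 0\}$ either does nothing or is the orthogonal projection onto $x_t^{\perp}$, so it cannot increase the norm. When $y_t>0$, the set $C_t\cap K_0$ is a disk inside the hyperplane, centered at $y_t x_t/\|x_t\|^2$. It is nonempty because $y_t=w^\star\cdot x_t\le\|x_t\|$. So you project onto the hyperplane and then radially onto that disk.
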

	
	\Cref{thm:intro_one_relu} is efficient while \Cref{thm:intro_k_bounded} is only information-theoretic. Thus, it is a natural question whether an efficient algorithm exists for more than one ReLU ($k>1$). We rule this out conditionally for \emph{proper} learners based on the hardness of \cite{DBLP:conf/innovations/GoelKM021}.
	
	\begin{theorem}[Proper online hardness]\label{cor:intro_proper-online-hard}
		Assume \textnormal{Gap-ETH}~\cite{DBLP:journals/eccc/Dinur16,DBLP:journals/corr/ManurangsiR16} and fix any constant $k\ge 2$. For any $\eps \in (0,1)$ such that $\eps = \omega(1/\log d)$ and $\eps=o(1)$,
		there is no $\eps$-efficient proper online algorithm that achieves realizable accumulated loss $\widetilde o(d)$
		in time $\poly(d)$.
	\end{theorem}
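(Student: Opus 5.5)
The plan is to convert an efficient proper online learner into an efficient proper PAC learner for bounded $k$-ReLU, and then invoke the Gap-ETH hardness of \cite{DBLP:conf/innovations/GoelKM021}. That result says that for constant $k\ge 2$, no $\poly(d)$-time proper learner achieves squared error $\eps$ in the realizable distributional setting, for $\eps$ in the stated range ($\omega(1/\log d)$ and $o(1)$). The conversion is the standard online-to-batch argument. Suppose $\Alg$ is a proper online learner with realizable accumulated loss $M=\widetilde o(d)$ in time $\poly(d)$. Draw $T$ i.i.d.\ samples $(x_t,h^\star(x_t))$ from the unknown distribution $D$ and feed them to $\Alg$ sequentially. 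The sequence is realizable by $h^\star$, so $\sum_{t\le T}\ell(\hat y_t,y_t)\le M$ deterministically.

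Since $\Alg$ is proper, each prediction at round $t$ has the form $\hat y_t=h_t(x_t)$, where $h_t\in\cH_{k,d}$ depends only on the past. The key step is the identity $\E[\ell(h_t(x_t),y_t)\mid \text{past}]=\mathrm{err}_D(h_t)$. Taking expectations gives $\sum_t \E[\mathrm{err}_D(h_t)]\le M$.

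Output $h_\tau$ for a uniformly random $\tau\in[T]$. Its expected squared error is at most $M/T$. Choosing $T=\lceil M/\eps\rceil\cdot O(1)$ makes this at most $\eps/2$, and Markov's inequality, together with a few independent repetitions plus validation on fresh samples, boosts this to error $\le\eps$ with constant probability. For the validation, losses are bounded by $4$ because outputs are clipped to $[-1,1]$, so Hoeffding's inequality costs only $O(\log(1/\delta)/\eps^2)$ samples.

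The running time is $T\cdot\poly(d)$, with $T=O(M/\eps)$. Since $M=\widetilde o(d)\le d$ and $1/\eps=O(\log d)$, we get $T=\poly(d)$. The output $h_\tau$ lies in $\cH_{k,d}$, so the learner is proper. This contradicts the hardness of \cite{DBLP:conf/innovations/GoelKM021}.

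The main obstacle is matching the quantitative regime exactly. The Goel et al.\ lower bound is stated for a specific relationship between accuracy, dimension and runtime, something like a $d^{\Omega(\log(1/\eps))}$ or superpolynomial lower bound for proper learning at accuracy $\eps$. I need the reduction's sample count $M/\eps$ and the $\eps$-efficiency notion (presumably runtime $\poly(d,1/\eps)$ per round) to fall inside that regime. Here $\eps=o(1)$ ensures the accuracy is nontrivial, and $\eps=\omega(1/\log d)$ keeps $1/\eps$ subpolynomial. The $\widetilde o(d)$ threshold is where I would check carefully. If the hardness instead concerns sample-efficient proper learners with fewer than roughly $d/\eps$ samples, then the contradiction comes from $T=\widetilde o(d)/\eps$ falling below that threshold, and I would phrase the reduction in those terms.
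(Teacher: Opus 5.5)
Your route matches the paper's: an online-to-batch conversion (\Cref{lem:online-to-batch-proper}) followed by the Gap-ETH hardness of \cite{DBLP:conf/innovations/GoelKM021}. The paper reduces from \emph{training} rather than PAC learning. It samples uniformly from a fixed realizable $S$ and returns the $h_t$ with the smallest exactly computed $L_S(h_t)$. That is your argument with $D$ uniform on $S$, so no Hoeffding validation is needed.

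However, the step ``$h_\tau\in\cH_{k,d}$, so this contradicts Goel et al.'' is missing an argument. Their hardness concerns unclipped sums $\sum_j \bar a_j\rel(\bar w^j\cdot x)$ with $\bar a_j\in\{\pm1\}$, $\|\bar w^j\|_2\le 1$, and labels in $[-k,k]$. Your output is $\clip(\tilde h)$ with $a_j\in[-1,1]$, which is not proper in their sense. Moreover, their instances need not even be realizable by the clipped class. The paper bridges this in \Cref{thm:proper-hard-d} in three steps:
\begin{itemize}
\item It rescales labels by a constant $\alpha\le 1/(2k)$ into $[-\tfrac12,\tfrac12]$, losing only a constant factor $1/\alpha^2$ in accuracy.
\item Any clipped output with $L_S(\clip(\tilde h))\le\eps'$ can then clip on at most a $4\eps'$ fraction of points, since each such point costs at least $1/4$. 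Hence $L_S(\tilde h)\le(1+4(k+\tfrac12)^2)\eps'$.
\item Finally, $|a_j|$ is absorbed into $w^j$ by positive homogeneity of $\rel$.
\end{itemize}

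Second, you leave the quantitative matching open, and the hardness you posit is not what is available. You assume no $\poly(d)$-time proper learner exists for $\eps$ in the stated range. \Cref{thm:proper-hard-d} instead rules out proper algorithms that succeed for \emph{every} $\eps>0$ within time $2^{o(1/\eps)}\poly(d,m)$. For $\eps=\omega(1/\log d)$ we have $2^{1/\eps}=d^{o(1)}$, so a polynomial-time learner at such accuracies contradicts nothing. Running your reduction only in that range therefore yields no contradiction.

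The fix is to use that your accumulated-loss bound $M$ does not depend on $\eps$. Take $T=\lceil M/\eps\rceil$ for arbitrary $\eps>0$, e.g.\ $\eps=1/d$. This gives a proper training algorithm valid for all $\eps$, with running time $\poly(d,m,1/\eps)$, which is of the form $2^{o(1/\eps)}\poly(d,m)$. This all-$\eps$ quantification is exactly how \Cref{cor:proper-online-hard} obtains its contradiction. Also, the lower bound is computational, not a bound on sample size, so the alternative reading in your last paragraph does not arise.
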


	\subsection{Related Work}
	
	Realizable online regression is relatively unexplored, especially in comparison to classification and agnostic regression.  
	In terms of results, most related to our work, in particular to the application to neural nets, are the following papers.
	\cite{geneson2025mistake} prove some mistake bounds for nets with 0-1 loss. We improve their lower bound by showing that even nets with bounded input, number of parameters, and weights are unlearnable with 0-1 loss.
	The recent paper of \cite{daniely2025online} also study the 0-1 loss case, but with sign activation and margin assumptions, which are not used in this work.
	\cite{rakhlin2015online} study a similar setting to ours, but in the agnostic setting (i.e., not necessarily realizable), where a horizon-free loss bound as ours is out of reach. On training rather learning with ReLU networks, there is also a large body of work, e.g.,~\cite{dey20,DBLP:conf/innovations/GoelKM021,Froese22,Froese2024,Boob22,AroraBMM18,PilanciE20,ergen2021convex}.

	In terms of techniques, we rely on \cite{attias2023optimal}, which introduced the online dimension.
	We also use a variation of the Dudley entropy integral \cite{dudley1967sizes,cesa1999prediction},
	originating in stochastic processes and later adapted to adversarial online learning
	(e.g.,~\cite{rakhlin2010online,rakhlin2015online}).
	Unlike sequential chaining tools for \emph{agnostic} regret, our entropy-potential argument upper bounds the
	realizable online dimension via standard (non-sequential) covers in the induced sup pseudo-metric.

	The metric-entropy behavior of Lipschitz classes under the sup norm is classical, dating back to $\varepsilon$-entropy/capacity theory~\cite{kolmogorov1961entropy}. 
	In adversarial \emph{online} nonparametric regression, \cite{rakhlin2014online} developed a sequential-complexity framework (via sequential entropies) that yields minimax regret bounds and algorithms for H\"older/Lipschitz classes, with extensions to general loss functions~\cite{rakhlin2015general} and constructive chaining procedures~\cite{gaillard2015chaining}. 
	Our results are complementary: we study the \emph{realizable} setting and characterize when realizability yields horizon-free cumulative loss for $L$-Lipschitz regression under $\ell_q(y,y')=|y-y'|^q$. 
	

	\subsection{Discussion}
	
	In this paper, we give a Dudley-integral based technique to study realizable online regression. We use this technique to obtain tight bounds for Lipschitz functions and ReLU networks, in the context of realizable (online) regression. 
	We list below some implications and limitations of our work and suggest directions for future work.

	\paragraph{When the entropy potential diverges.}
	The entropy-potential approach yields horizon-free guarantees precisely when $\Phi(\cH)<\infty$.
	If $\log N(\cH,\eps)$ diverges too quickly as $\eps\to 0$ (e.g.\ $\log N(\cH,\eps)\asymp \eps^{-p}$ with $p\ge 1$),
	then $\Phi(\cH)=\infty$ and the bound from \Cref{thm:intro_Donl-via-Phi} is vacuous.
	Moreover, while the potential bound can be tight up to poly-logarithmic factors (e.g., unbounded norm ReLU nets, see Appendix~\ref{sec:param-to-potential}), the potential bound is not tight in general: $\Phi(\cH)$ can be infinite even when
	$\mathbb{D}_{\mathrm{onl}}(\cH)$ is finite. We give the proof of the following proposition in Appendix~\ref{app:div}. Thus, $\Phi(\cH)$ should be viewed as an explicit upper-bound surrogate for $\mathbb D_{\mathrm{onl}}(\cH)$, not as a characterization in general.
	\begin{proposition}[$\Phi(\cH)=\infty$ while $\mathbb{D}_{\mathrm{onl}}(\cH)<\infty$]
		There exist $(\cX,\cY,\ell)$ with $\ell$ a metric and a class $\cH\subseteq \cY^{\cX}$
		such that $\diam(\cH)<\infty$, $\Phi(\cH)=\infty$, but $\mathbb{D}_{\mathrm{onl}}(\cH)<\infty$.
	\end{proposition}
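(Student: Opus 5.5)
The plan is to exhibit a very simple class whose elements are uniformly separated, so that covering numbers are infinite at every small scale, but which is trivially learnable online because a single large-gap round pins down the target. Concretely, take $\cX=\N$, $\cY=\{0,1\}$, and let $\ell=\ell_{0/1}$, i.e.\ $\ell(y,y')=\mathbf{1}\{y\neq y'\}$. This is a genuine metric on $\cY$, so it is a $1$-approximate pseudo-metric. Let $\cH=\{e_i : i\in\N\}$, where $e_i(x)=\mathbf{1}\{x=i\}$ are the singleton indicators; adding the all-zero function would also be harmless.

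\emph{Step 1 (the entropy potential diverges).} For $i\neq j$ we have $d_\ell(e_i,e_j)=\sup_x \ell(e_i(x),e_j(x))=1$, so $\diam(\cH)=1<\infty$. For any $\eps<1$, each ball of radius $\eps$ in $d_\ell$ contains exactly one hypothesis. Hence $N(\cH,\eps)=\infty$ for all $\eps\in(0,1)$, and therefore $\Phi(\cH)=\int_0^1\log_2 N(\cH,\eps)\,d\eps=\infty$.

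\emph{Step 2 ($\mathbb D_{\mathrm{onl}}$ is finite).} Using the scaled Littlestone tree definition from \Cref{sec:prel}, I will show $\mathbb D_{\mathrm{onl}}(\cH)\le 1$. Recall that $\mathbb D_{\mathrm{onl}}$ is the supremum over realizable scaled trees of the infimum over branches of the summed gaps. Fix any such tree.

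\begin{itemize}
\item If every node on the path has gap $0$, then the branch total is $0$.
\item Otherwise, consider the first node on the path whose gap is positive. With labels in $\{0,1\}$, its two child labels must be $0$ and $1$, at some point $x=i$. Follow the child labelled $1$.
\item Realizability of this branch forces the target to be $e_i$. The only hypothesis with $h(i)=1$ is $e_i$, so the version space becomes the singleton $\{e_i\}$.
\item Every later node on the branch must have both of its child labels realizable by the current version space. Since that version space is $\{e_i\}$, both labels are equal, so each later gap is $0$.
\end{itemize}

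This specifies a branch with total gap at most $1$, so the infimum over branches is at most $1$ for every tree, giving $\mathbb D_{\mathrm{onl}}(\cH)\le 1$. Equivalently, $\mathbb D_{\mathrm{onl}}(\cH)$ is bounded by the Littlestone dimension of singletons, which is $1$.

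The only delicate point is the exact realizability convention in the tree definition. I need the formal definition to require that each path prefix be consistent with some $h\in\cH$ (or that the entire infinite branch be realizable). Under either reading, the argument above goes through: once the target is pinned to $e_i$, later nodes on a realizable branch cannot have gap $1$. I will verify the convention against the formal definition in \Cref{sec:prel}, but I expect no real obstacle.
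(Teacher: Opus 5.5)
Your construction is correct and proves the statement as written. With the discrete metric on $\{0,1\}$ and $\cH=\{e_i\}_{i\in\N}$, every $d_\ell$-ball of radius $\eps<1$ around a point of $\cH$ meets $\cH$ only in its center. Hence $N(\cH,\eps)=\infty$ on $(0,1)$ and $\Phi(\cH)=\infty$. Meanwhile, the branch that takes the label-$1$ edge at the first positive-gap node collapses the version space to $\{e_i\}$, so its total gap is at most $1$. On the convention you flagged: read literally, Definition~\ref{def:scaled-tree} only constrains prefixes of length $n<D$, which leaves the last level of a finite-depth tree unconstrained. Under that reading your bound becomes $2$ instead of $1$, which changes nothing.

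The paper takes a different and more demanding route: it builds a \emph{totally bounded} class. The labels are $(k,i)$ with $i\in[m_k]$, with an ultrametric at scales $a_k=2^{-k}$ and block sizes $m_k=2^{2^k}$ (so $a_k\log_2 m_k=1$), and hypotheses satisfy $h(x_k)$ in block $k$. It computes $N(\cH,\eps)=\prod_{i\le K(\eps)}m_i$ exactly, which is finite for every $\eps>0$. It then shows by summation by parts that $\Phi(\cH)=\sum_k(a_k-a_{k+1})\sum_{i\le k}\log_2 m_i=\infty$. On the other side, each $x_k$ can carry a positive gap (equal to $a_k$) at most once along a branch, so every branch has total gap at most $\sum_k a_k=1$.

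Your example is shorter, and it even respects the $[0,1]$-valued convention of Definition~\ref{def:online-dim}. However, its divergence is of the trivial kind: $\Phi=\infty$ only because $\cH$ is not totally bounded, so any covering-number bound is vacuous for it. The paper itself notes in Appendix~\ref{app:total} that point-indicator classes are easily learnable yet not totally bounded. The paper's example shows the stronger fact that $\Phi$ can be loose purely because of how fast $\log N(\cH,\eps)$ grows as $\eps\to 0$, even with every covering number finite. That rate phenomenon is what the surrounding discussion of diverging entropy is really about.
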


	\paragraph{Near-realizable sequences.}
	Realizability ($L=0$) can be too optimistic, while fully agnostic regret bounds can be pessimistic when the best
	comparator has loss $L \ll T$.
	Can one extend our framework to a near-realizable setting where $\inf_{h\in\cH}\sum_{t\le T}\ell(h(x_t),y_t)\le L$,
	with guarantees that depend (essentially) linearly on $L$?
	At a structural level, this suggests defining and characterizing an $L$-realizable analogue of the online dimension,
	and asking whether an entropy-potential mechanism continues to control it.
	Many works provided an analysis of this type for standard online classification. See, e.g.\, \cite{cesa1996line, cesa1997use, auer1999structural, branzei2019online, filmus2023optimal}.
	
	\paragraph{Efficiency.}
	Our Lipschitz upper bounds in the learnable regime are efficient, and \Cref{thm:intro_one_relu} is efficient as well; on the other hand, \Cref{thm:intro_k_bounded} is information-theoretic as it relies on our generic (and not necessarily efficient) potential technique.
	Can one design an efficient {\em improper} online algorithm for bounded $k$-\textnormal{ReLU with $k>1$} that achieves
	$\widetilde O(1)$ (or $O(1)$) cumulative loss, or alternatively prove computational lower bounds that apply even to improper learners? Our $\tilde{\Omega}(d)$ lower bound presented in \Cref{cor:intro_proper-online-hard} stems from the hardness of training and PAC learning bounded $k$-ReLU networks; thus, an improper $O(1)$ online learning algorithm does not contradict this result and it is left open. We note that proper and efficient eventual learnability follows from Lipschitz properties and \Cref{cor:env-pointwise-loss}.

\paragraph{Organization.}
\Cref{sec:prel} introduces the model, approximate pseudo-metric losses, and the scaled online dimension.
\Cref{sec:potential} develops the entropy-potential method and proves the general bound $\mathbb D_{\mathrm{onl}}(\cH)\lesssim \Phi(\cH)$ and \Cref{sec:Lip} gives a high level overview of our results for Lipschitz functions. Our full results for Lipschitz functions and ReLU networks as well as implications of the potential technique are delegated to the appendices.

\section{Preliminaries}
\label{sec:prel}

We use the scaled Littlestone dimension of \cite{attias2023optimal} for a general family of loss functions of approximate pseudo-metrics defined below. Let $\cX$ be an instance domain, $\cY$ a label space, and $\cH\subseteq \cY^{\cX}$ a hypothesis class.
Fix a loss $\ell:\cY\times\cY\to\R_{\ge 0}$. 

\begin{definition}[$c$-approximate pseudo-metric]\label{def:approx-pseudometric}
	For $c\ge 1$, we say that $\ell$ is a \emph{$c$-approximate pseudo-metric} if:
	(i) $\ell(y,y)=0$ for all $y\in\cY$;
	(ii) $\ell(y_1,y_2)=\ell(y_2,y_1)$ for all $y_1,y_2\in\cY$;
	(iii) for all $y_1,y_2,y_3\in\cY$,
	\[
	\ell(y_1,y_2)\ \le\ c\big(\ell(y_1,y_3)+\ell(y_2,y_3)\big).
	\]
	If $c=1$, this is a (true) pseudo-metric. 
\end{definition}

In the following, we give the definition of a scaled Littlestone tree. 

\begin{definition}[Scaled Littlestone Tree]
	\label{def:scaled-tree}
	A scaled Littlestone tree of depth $D\le \infty$ for $\cH\subseteq \cY^{\cX}$ (under loss $\ell$)
	is a complete binary tree where each internal node $u\in\{0,1\}^{<D}$ is labeled by an instance
	$x_u\in\cX$, and its two outgoing edges are labeled by $s_{u,0},s_{u,1}\in\cY$.
	The gap at node $u$ is
	$
	\gamma_u \ :=\ \ell(s_{u,0},s_{u,1}).$
	
	The tree is \emph{realizable by $\cH$} if for every branch $b\in\{0,1\}^D$ and every finite $n<D$,
	there exists $h\in\cH$ such that for all $t=0,1,\dots,n-1$,
	$
	h\!\left(x_{b_{\le t}}\right)\ =\ s_{b_{\le t},\, b_{t+1}},
	$
	where $b_{\le t}$ is the length-$t$ prefix of $b$ (and $b_{\le 0}=\emptyset$).
\end{definition}

\begin{definition}[Online Dimension]
	\label{def:online-dim}
	Let $\cH \subseteq [0,1]^{\cX}$. For a realizable scaled Littlestone tree $T$ for $\cH$ (Definition~\ref{def:scaled-tree}),
	let $\cP(T)$ denote its set of branches. For a branch $b\in\cP(T)$, write $b_{\le t}$ for its length-$t$ prefix;
	then the node at depth $t$ is indexed by $b_{\le t}$ and has gap $\gamma_{b_{\le t}}$.
	Define
	\[
	\mathbb D_{\mathrm{onl}}(\cH)
	\ :=\
	\sup_{T}\ \inf_{b\in \cP(T)}\ \sum_{t=0}^{\mathrm{dep}(T)-1} \gamma_{b_{\le t}},
	\]
	with the natural interpretation when $\mathrm{dep}(T)=\infty$ (sum over $t\ge 0$).
\end{definition}

The following result of \cite{attias2023optimal} shows that this dimension characterizes online realizable regression up to a constant factor. 

\begin{theorem}[\cite{attias2023optimal}]
	\label{thm:SLD}
	Let $\cH \subseteq [0,1]^{\cX}$ and $\eps > 0$ for some input domain $\cX$. Then, there exists a deterministic
	algorithm whose cumulative loss in the realizable setting is bounded by $\mathbb D^{\mathrm{onl}}(\mathcal H) + \eps$. Conversely,
	for any $\eps > 0$, every deterministic algorithm in the realizable setting incurs loss at least $\mathbb D^{\mathrm{onl}}(\mathcal H) / 2 -\eps$. 
\end{theorem}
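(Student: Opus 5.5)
The plan is to prove the two directions separately. The upper bound uses a "scaled SOA" that tracks the online dimension of the current version space. The lower bound uses an adversary that walks down a near-optimal scaled Littlestone tree.

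\emph{Upper bound.} For a version space $V\subseteq\cH$ write $D(V):=\mathbb D_{\mathrm{onl}}(V)$. If $D(\cH)=\infty$ there is nothing to prove. At round $t$, let $V_t$ be the current version space and let $x_t$ be the instance. For a label $y$ put $V^y=\{h\in V_t: h(x_t)=y\}$, and let $Y_t=\{y: V^y\neq\emptyset\}$ be the realizable labels.

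The key structural inequality is this: for any $y,y'\in Y_t$,
\[
D(V_t)\ \ge\ \ell(y,y')+\min\{D(V^{y}),D(V^{y'})\}.
\]
To see it, place $x_t$ at a root with edge labels $y,y'$ and hang near-optimal trees for $V^{y}$ and $V^{y'}$ below it. Realizability of the combined tree is inherited from the subtrees.

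The algorithm predicts a label $\hat y_t=y^\star\in Y_t$ with $D(V^{y^\star})\ge \sup_{y\in Y_t}D(V^y)-\eps 2^{-t}$. Then for the revealed $y_t\in Y_t$ we have $\min\{D(V^{y^\star}),D(V^{y_t})\}\ge D(V^{y_t})-\eps2^{-t}$. Plugging this into the structural inequality gives
\[
\ell(\hat y_t,y_t)+D(V_{t+1})\ \le\ D(V_t)+\eps 2^{-t}.
\]
This step uses no triangle inequality. Telescoping over $t$ and using $D\ge 0$ gives total loss at most $D(\cH)+\eps$.

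\emph{Lower bound.} Fix a deterministic algorithm and a realizable tree $T$ with $\inf_b\sum_t\gamma_{b_{\le t}}\ge \mathbb D_{\mathrm{onl}}(\cH)-\eps$. The adversary starts at the root. At node $u$ it presents $x_u$, sees $\hat y$, and reveals whichever of $s_{u,0},s_{u,1}$ is farther from $\hat y$ under $\ell$.

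By the triangle-type inequality, $\gamma_u=\ell(s_{u,0},s_{u,1})\le \ell(\hat y,s_{u,0})+\ell(\hat y,s_{u,1})$. So the incurred loss is at least $\gamma_u/2$, or $\gamma_u/(2c)$ for a $c$-approximate pseudo-metric. Hence this is where the loss assumption enters. The adversary traces some branch $b$, and the learner's loss along $b$ is at least half the branch sum, which is at least $(\mathbb D_{\mathrm{onl}}(\cH)-\eps)/2$.

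For an infinite tree, truncate at a finite depth $n$ where the partial branch sum is within $\eps$ of its limit. This needs care because $n$ depends on the branch. The fix is to let the adversary stop once the accumulated gap along its path exceeds $\mathbb D_{\mathrm{onl}}(\cH)-2\eps$, which happens at finite depth. Realizability of finite prefixes, guaranteed by Definition~\ref{def:scaled-tree}, then yields an $h^\star$ consistent with everything revealed.

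\emph{Main obstacle.} The hardest part is the structural inequality behind the upper bound. One must check carefully that gluing two near-optimal realizable trees under a new root again gives a realizable tree in the sense of Definition~\ref{def:scaled-tree}, including the infinite-depth case and the $\sup$/$\inf$ approximations. This is what forces the summable slack $\eps2^{-t}$ in the choice of $\hat y_t$.
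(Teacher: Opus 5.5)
The paper gives no proof of this theorem; it imports it from \cite{attias2023optimal}. Your argument is correct and is essentially the original one. The upper bound is the scaled SOA: it predicts an approximate maximizer of $\mathbb D_{\mathrm{onl}}$ over the restricted version spaces, justified by the gluing inequality $\mathbb D_{\mathrm{onl}}(V)\ge \ell(y,y')+\min\{\mathbb D_{\mathrm{onl}}(V^{y}),\mathbb D_{\mathrm{onl}}(V^{y'})\}$, which needs no triangle inequality. The lower bound is the adversary that walks a near-optimal tree, reveals the edge label farther from $\hat y$, and stops once the accumulated gap exceeds $\mathbb D_{\mathrm{onl}}(\cH)-2\eps$.

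Two small remarks. First, the constant $1/2$ in the lower bound genuinely requires $c=1$ (absolute loss or a true pseudo-metric), as your $\gamma_u/(2c)$ remark indicates. For example, take squared loss and $\cH=\{h_0,h_1\}$ on a single point with $h_0\equiv 0$ and $h_1\equiv 1$. Then $\mathbb D_{\mathrm{onl}}(\cH)=1$, while predicting $1/2$ incurs total loss $1/4<\mathbb D_{\mathrm{onl}}(\cH)/2$. So the statement as written should be read with $c=1$, and for general $c$ the right lower bound is $\mathbb D_{\mathrm{onl}}(\cH)/(2c)-\eps$.

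Second, the obstacle you flag is routine. Definition~\ref{def:scaled-tree} requires complete trees of a single depth, so pad the shallower subtree (or both, to infinite depth) with zero-gap nodes. Each padding node carries the same value $h(x)$ on both edges, for a fixed hypothesis $h$ consistent with that leaf; this preserves realizability and leaves every branch sum unchanged. Likewise, after your adversary stops at finite depth, it can continue with $y_t=h^\star(x_t)$ to produce a full infinite realizable sequence without decreasing the learner's loss.
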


\paragraph{Why an (approximate) triangle inequality is necessary.}
Without a triangle-type condition, “sum of gaps” does not control realizable cumulative loss.
For example, let $\cY=\{a,b,c\}$ and define a symmetric loss by $\ell(y,y)=0$, $\ell(a,b)=1$, and
$\ell(a,c)=\ell(b,c)=\varepsilon$ for arbitrarily small $\varepsilon>0$. Then $\ell$ violates any
$c$-approximate triangle inequality for $c<1/(2\varepsilon)$ since $\ell(a,b)=1>2c\varepsilon$.
In a realizable online problem with true labels in $\{a,b\}$, the learner predicting $\hat y_t\equiv c$
incurs loss $\varepsilon$ every round, regardless of the realized label, while any scaled Littlestone
tree labeled only by $\{a,b\}$ has gaps $\gamma_u=\ell(a,b)=1$ at each internal node, hence arbitrarily
large total gap along a branch. Therefore some (approximate) triangle inequality is essential and we focus our attention on approximate pseudo metrics.

\section{A Potential Upper bound for Online Realizable Regression}
\label{sec:potential}
In this section, we give our general upper bound on the scaled-Littlestone dimension for a general family of loss functions of approximate pseudo-metrics defined below. Let $\cX$ be an instance domain, $\cY$ a label space, and $\cH\subseteq \cY^{\cX}$ a hypothesis class.
Fix a loss $\ell:\cY\times\cY\to\R_{\ge 0}$.

Define the induced (pseudo-)metric on $\cH$ by
\begin{equation}\label{eq:induced-metric}
	d_\ell(f,g)\ :=\ \sup_{x\in\cX} \ell\big(f(x),g(x)\big),\qquad f,g\in\cH .
\end{equation}
If $\ell$ is a $c$-approximate pseudo-metric, then $d_\ell$ satisfies the same loss-properties as $\ell$ with the
same constant $c$ (by taking $\sup_{x\in\cX}$ in the inequalities for $\ell$).
For $U\subseteq\cH$ and $\eps>0$, define the $\eps$-covering number $N(U,\eps)$ w.r.t.  $d_\ell$ and $\cH$ by
\[
N(U,\eps)\ :=\ \min\Big\{|S|:\ S\subseteq\cH,\ \forall u\in U\ \exists s\in S\text{ with } d_\ell(u,s)\le \eps\Big\},
\]
with the convention $N(U,\eps)=\infty$ if no finite cover exists.
For the following, assume that the diameter $\diam(\cH):=\sup_{f,g\in\cH} d_\ell(f,g)$ is finite, and define the potential
\begin{equation}\label{eq:potential}
	\Phi(U)\ :=\ \int_{0}^{\diam(\cH)} \log_2 N(U,\eps)\, d\eps,
\end{equation}
interpreted as an extended real value (possibly $+\infty$). Since $\eps\mapsto N(U,\eps)$ is non-increasing, it is Borel measurable (every monotone function is measurable). We note that all logarithms are interchangeable up to constant factors; we keep base-$2$ in the formal definitions.

\subsection{Scaled Littlestone trees and a branch bound}

We use the following minimal structure of a scaled Littlestone tree for $\cH$:
each internal node $u$ is labeled by an instance $x_u\in\cX$ and has two outgoing edges labeled
$s_{u,0},s_{u,1}\in\cY$. Each node $u$ has an associated \emph{version space} $U\subseteq\cH$ consisting
of hypotheses consistent with the labels along the root-to-$u$ path. The children of $u$ have version spaces
\[
U_b\ :=\ \{h\in U:\ h(x_u)=s_{u,b}\},\qquad b\in\{0,1\}.
\]
We assume that for every internal node, both children's version spaces are non-empty. This follows by the realizability property in the definition of a scaled Littlestone tree (with respect to $\cH$). 
Define the \emph{gap} at an internal node $u$ by
\begin{equation}\label{eq:gap}
	\gamma_u\ :=\ \ell(s_{u,0},s_{u,1}).
\end{equation}
Let $\cP(\tree)$ denote the set of branches (root-to-leaf paths). For a branch $y\in\cP(\tree)$, let $y_0$
be the root and $y_i$ the $i$-th node along the branch. \Cref{thm:intro_Donl-via-Phi} follows directly from the next result. 

\begin{theorem}[Entropy-potential upper bound]\label{thm:potential-bound-general}
	Let $\tree$ be any realizable scaled Littlestone tree for $\cH$ (w.r.t.\ $\ell$).
	Assume $\ell$ is a $c$-approximate pseudo-metric for some $c\ge 1$ and $\diam(\cH)<\infty$.
	Then there exists a branch $y\in\cP(\tree)$ such that
	\[
	\sum_{i\ge 0}\gamma_{y_i}\ \le\ 4c\,\Phi(\cH).
	\]
	(If $\tree$ has finite depth $D$, the sum is over $i=0,1,\dots,D-1$.)
\end{theorem}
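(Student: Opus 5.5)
Starting at the root with version space $U_0=\cH$, we descend greedily: at each internal node $u$ with version space $U$, we move to the child $b\in\{0,1\}$ that minimizes $\Phi(U_b)$. The aim is the one-step potential drop
\[
\Phi(U_b)\ \le\ \Phi(U)-\frac{\gamma_u}{4c},
\]
since telescoping along the branch then gives $\sum_{i<n}\gamma_{y_i}\le 4c(\Phi(\cH)-\Phi(U_n))\le 4c\,\Phi(\cH)$ for every $n$, because $\Phi\ge 0$. Letting $n\to\infty$ handles infinite depth. If $\Phi(\cH)=\infty$ there is nothing to prove, so we may assume it is finite; every $\Phi(U)$ with $U\subseteq\cH$ is then finite by monotonicity of covering numbers.

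\textbf{Cover splitting.} Fix $\eps<\gamma_u/(2c)$ and let $S\subseteq\cH$ be a minimal $\eps$-cover of $U$. Suppose some $s\in S$ covers both $f\in U_0$ and $g\in U_1$. Applying the approximate triangle inequality at the point $x_u$ gives
\[
\gamma_u=\ell(f(x_u),g(x_u))\le c\bigl(\ell(f(x_u),s(x_u))+\ell(s(x_u),g(x_u))\bigr)\le 2c\eps<\gamma_u,
\]
a contradiction. So the centers used for $U_0$ are disjoint from those used for $U_1$, and
\[
N(U_0,\eps)+N(U_1,\eps)\le N(U,\eps).
\]
(Symmetry of $\ell$ is used here, and the covers are allowed to have centers anywhere in $\cH$, which matches the definition.) Hence $\min_b N(U_b,\eps)\le N(U,\eps)/2$, so $\log_2 N(U_b,\eps)\le \log_2 N(U,\eps)-1$ for that $b$ at this scale. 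When $N(U,\eps)$ is infinite, the inequality is either trivial or handled by the finiteness assumption.

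\textbf{Main obstacle: integrating across scales.} The child that halves the cover may depend on $\eps$, so pointwise halving does not directly give a uniform drop for a single child. I would use $\log_2 a+\log_2 b\le 2\log_2((a+b)/2)$ (concavity of $\log$), which gives, for every $\eps<\gamma_u/(2c)$,
\[
\log_2 N(U_0,\eps)+\log_2 N(U_1,\eps)\le 2\log_2 N(U,\eps)-2,
\]
and for all other $\eps$ the trivial bound $\le 2\log_2 N(U,\eps)$ by monotonicity. Integrating over $[0,\diam(\cH)]$ yields
\[
\Phi(U_0)+\Phi(U_1)\le 2\Phi(U)-2\min\bigl(\gamma_u/(2c),\diam(\cH)\bigr).
\]
So the child with the smaller potential satisfies $\Phi(U_b)\le\Phi(U)-\min(\gamma_u/(2c),\diam(\cH))$.

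\textbf{The truncation.} It remains to compare $\diam(\cH)$ with $\gamma_u/(2c)$. Since $f,g\in U$ realize values $s_{u,0},s_{u,1}$ at $x_u$, we have $\gamma_u\le d_\ell(f,g)\le\diam(\cH)$, so $\gamma_u/(2c)\le\diam(\cH)$ as $c\ge 1$. This gives a drop of at least $\gamma_u/(2c)$, which is even better than the required $\gamma_u/(4c)$; the slack absorbs edge cases such as the use of a non-strict inequality at $\eps$ exactly at the threshold. Telescoping then proves the theorem, and \Cref{thm:intro_Donl-via-Phi} follows by taking the infimum over branches and the supremum over trees.
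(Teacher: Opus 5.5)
Your proof is correct. It uses the same skeleton as the paper: the cover-splitting inequality $N(U_0,\eps)+N(U_1,\eps)\le N(U,\eps)$ for $\eps<\gamma_u/(2c)$, greedy descent, and telescoping with $\Phi\ge 0$. The one-step drop, however, is obtained by a genuinely different and sharper argument.

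\textbf{The paper's step.} The paper works pointwise in $\eps$. At each scale it picks a child whose cover halves, partitions $I=(0,\gamma_u/(2c))$ into the sets $A_0\cup A_1$ where child $0$ or child $1$ halves, and keeps the child with $|A_b|\ge |I|/2$. This yields a drop of $\gamma_u/(4c)$.

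\textbf{Your step.} You bound both children at once, using AM--GM to get $\log_2 N(U_0,\eps)+\log_2 N(U_1,\eps)\le 2\log_2 N(U,\eps)-2$ on all of $I$. You then integrate and take the child of smaller potential. This avoids the scale-dependent choice of child. It also exploits information the paper discards: when one child's cover is very small, the other's is still at most $N(U,\eps)$, and AM--GM accounts for this rather than just crediting a drop of $1$ to one child.

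\textbf{Consequences.} The drop doubles to $\gamma_u/(2c)$, so your argument proves the theorem with $2c\,\Phi(\cH)$ in place of $4c\,\Phi(\cH)$. Your explicit check that $\gamma_u/(2c)\le\gamma_u\le\diam(\cH)$ supplies a step the paper uses only tacitly, namely that the interval $I$ lies inside the integration range $[0,\diam(\cH)]$.

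\textbf{A minor tightening.} Your remark about infinite covering numbers can be made concrete. If $N(\cH,\eps_0)=\infty$ for some $\eps_0>0$, then $\eps_0<\diam(\cH)$, because $N(\cH,\diam(\cH))\le 1$. By monotonicity $N(\cH,\eps)=\infty$ on all of $(0,\eps_0]$, so $\Phi(\cH)=\infty$. Hence finiteness of $\Phi(\cH)$ already forces every $N(U,\eps)$ with $\eps>0$ and $U\subseteq\cH$ to be finite.
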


In particular, it follows that  $	\mathbb D^{\mathrm{onl}}(\mathcal H) \leq 4c\,\Phi(\cH)$, giving an upper bound on the total loss of an online realizable regression algorithm by \Cref{thm:SLD}. 
In the following, we prove \Cref{thm:potential-bound-general}. If $\Phi(\cH)=\infty$ then Theorem~\ref{thm:potential-bound-general} is vacuous; hence, in what follows we assume $\Phi(\cH)<\infty$. We illustrate the following result in \Cref{fig:potential-mechanism}. 

\begin{lemma}[Covers split below the gap]\label{lem:cover-sum}
	Let $u$ be an internal node with version space $U$ and children version spaces $U_0,U_1$.
	Then for all $\eps\in\big(0,\gamma_u/(2c)\big)$,
	\[
	N(U,\eps)\ \ge\ N(U_0,\eps)\ +\ N(U_1,\eps).
	\]
\end{lemma}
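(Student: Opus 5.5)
Take a minimum $\eps$-cover $S\subseteq\cH$ of $U$ with $|S|=N(U,\eps)$. If $N(U,\eps)=\infty$ there is nothing to prove, so assume $S$ is finite. Partition $S$ into three sets: $S_0$, the centers that lie within distance $\eps$ of some element of $U_0$; $S_1$, those within distance $\eps$ of some element of $U_1$; and the remaining centers. By construction $S_0$ is an $\eps$-cover of $U_0$. Indeed, each $f\in U_0\subseteq U$ has some center $s\in S$ with $d_\ell(f,s)\le\eps$, and that $s$ lies in $S_0$. The same argument shows $S_1$ is an $\eps$-cover of $U_1$. Note that the covering number is defined with centers in $\cH$, not in the subset being covered, so $S_b\subseteq\cH$ is an admissible cover. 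Therefore $N(U_b,\eps)\le |S_b|$.

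The key step is to show $S_0\cap S_1=\emptyset$, which gives
\[
N(U,\eps)=|S|\ge |S_0|+|S_1|\ge N(U_0,\eps)+N(U_1,\eps).
\]
Suppose instead that some $s$ satisfies $d_\ell(f_0,s)\le\eps$ and $d_\ell(f_1,s)\le\eps$, with $f_0\in U_0$ and $f_1\in U_1$. Evaluate at the node instance $x_u$. We have $f_0(x_u)=s_{u,0}$ and $f_1(x_u)=s_{u,1}$. By the definition of $d_\ell$ as a supremum,
\[
\ell(s_{u,0},s(x_u))\le\eps \quad\text{and}\quad \ell(s_{u,1},s(x_u))\le\eps .
\]
Applying the $c$-approximate triangle inequality with $y_3=s(x_u)$, together with symmetry, gives
\[
\gamma_u=\ell(s_{u,0},s_{u,1})\le c\bigl(\ell(s_{u,0},s(x_u))+\ell(s_{u,1},s(x_u))\bigr)\le 2c\eps<\gamma_u,
\]
which is a contradiction. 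Hence $S_0$ and $S_1$ are disjoint.

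The argument needs no real obstacle handled. The only points to check are, first, that the triangle inequality is applied pointwise at $x_u$ rather than through $d_\ell$. Second, that the strict inequality $\eps<\gamma_u/(2c)$ is what produces the contradiction. Third, that the infinite-cover case is trivial.
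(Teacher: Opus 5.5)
Your proof is correct and follows the paper's argument. Both take an $\eps$-cover $S$ of $U$, define $S_b$ as the centers within $\eps$ of some element of $U_b$, show $S_0\cap S_1=\emptyset$ via the $c$-approximate triangle inequality and $2c\eps<\gamma_u$, and conclude $|S|\ge|S_0|+|S_1|\ge N(U_0,\eps)+N(U_1,\eps)$. The only cosmetic difference is that you apply the triangle inequality pointwise at $x_u$, while the paper applies it to $d_\ell$ together with the bound $d_\ell(h_0,h_1)\ge\gamma_u$; the two are equivalent.
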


\begin{figure}[t]
	\centering
	\resizebox{0.7\columnwidth}{!}{%
		\begin{tikzpicture}[
			>=Latex, font=\small,
			box/.style={draw, rounded corners=4pt, thick, fill=black!2, inner sep=6pt},
			cover/.style={
				rounded corners=6pt,
				fill=black!3,
				minimum width=5.2cm,
				minimum height=2.0cm
			}
			pt/.style={circle, fill=black, inner sep=2.0pt},
			ball/.style={draw, circle, line width=0.5pt},
			cover2/.style={rounded corners=6pt, thick,
				minimum width=10.4cm, minimum height=1.2cm},
			]
			
			\node[box] (U)  at (0,3.0) {$U$};
			\node[box] (U0) at (-4,1.4) {$U_0$};
			\node[box] (U1) at ( 4,1.4) {$U_1$};
			
			\draw[->,thick] (U) -- (U0);
			\draw[->,thick] (U) -- (U1);
			
			(U0.north east) -- (U1.north west)
			
			\node[cover2] (C2) at (0,4.2) {};
			\node[cover] (C0) at (-4,-0.4) {};
			\node[cover] (C1) at ( 4,-0.4) {};
			\node[font=\normalsize] at (-4,0.3) {$\varepsilon\text{-cover of }U_0$};
			\node[font=\normalsize] at ( 4,0.3) {$\varepsilon\text{-cover of }U_1$};
			
			\node[font=\normalsize] at (0,4.3) {$\varepsilon\text{-cover of }U$};
			\foreach \x/\y in {-4.3/4.3, -3.2/4.3, -2.1/4.3, 2.1/4.3, 3.2/4.3, 4.2/4.3} {
				\draw[] (\x,\y) circle (0.4);
			}
			\foreach \x/\y [count=\i from 1] in {-4.3/4.3, -3.2/4.3, -2.1/4.3, 2.1/4.3, 3.2/4.3, 4.2/4.35} {
				\node at (\x,\y) {$f_{\i}$};
			}

			\foreach \x/\y in {-5.3/-0.7, -4.2/-0.9, -3.1/-0.5} {
				\draw[] (\x,\y) circle (0.4);
			}
			\foreach \x/\y [count=\i from 1] in {-5.3/-0.7, -4.2/-0.9, -3.1/-0.5} {
				\node at (\x,\y) {$f_{\i}$};
			}
			\foreach \x/\y [count=\i from 4] in {3.1/-0.7, 4.2/-0.5, 5.2/-0.9} {
				\node at (\x,\y) {$f_{\i}$};
			}
			\foreach \x/\y in {3.1/-0.7, 4.2/-0.5, 5.2/-0.9} {
				\draw[ball] (\x,\y) circle (0.4);
			}
			
		\end{tikzpicture}%
	}
	\caption{Illustration of the inequality $N(U,\varepsilon)\ \ge\ N(U_0,\varepsilon)+N(U_1,\varepsilon)$. Below the gap scale, no $\varepsilon$-ball can hit both children, so any $\varepsilon$-cover splits into disjoint covers of $U_0$ and $U_1$.}
	\label{fig:potential-mechanism}
\end{figure}
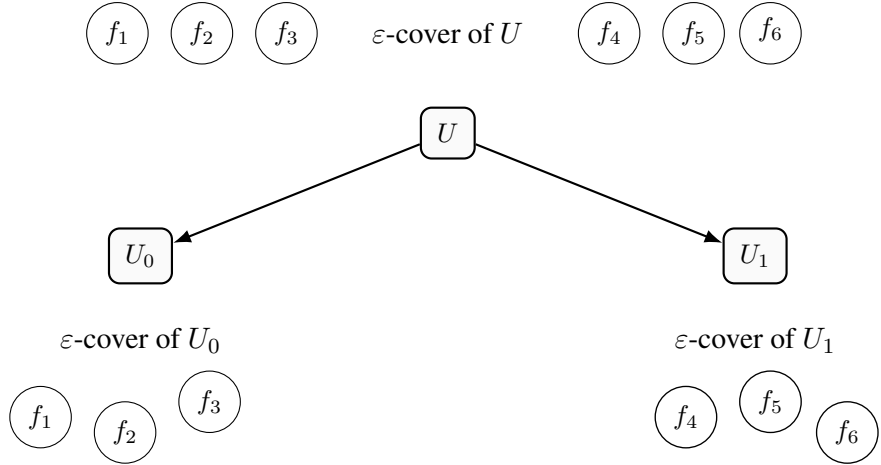

\begin{proof}
	Fix $\eps\in(0,\gamma_u/(2c))$. First note that for any $h_0\in U_0$ and $h_1\in U_1$,
	\[
	d_\ell(h_0,h_1)\ \ge\ \ell\big(h_0(x_u),h_1(x_u)\big)\ =\ \ell(s_{u,0},s_{u,1})\ =\ \gamma_u.
	\]
	Now suppose (towards contradiction) that there exist $h_0\in U_0$, $h_1\in U_1$, and $s\in\cH$ such that
	$d_\ell(h_0,s)\le \eps$ and $d_\ell(h_1,s)\le \eps$. By the $c$-approximate triangle inequality for $d_\ell$,
	\[
	d_\ell(h_0,h_1)\ \le\ c\big(d_\ell(h_0,s)+d_\ell(h_1,s)\big)\ \le\ 2c\eps\ <\ \gamma_u,
	\]
	a contradiction. Hence, intuitively, no single center $s\in\cH$ can be within $\eps$ of both some point in $U_0$
	and some point in $U_1$.
	Let $S\subseteq\cH$ be any $\eps$-cover of $U$. Define
	\[
	S_b\ :=\ \{s\in S:\ \exists h\in U_b \text{ with } d_\ell(h,s)\le \eps\},\qquad b\in\{0,1\}.
	\]
	Then $S_b$ is an $\eps$-cover of $U_b$, and the previous paragraph implies $S_0\cap S_1=\emptyset$.
	Therefore,
	\[
	|S|\ \ge\ |S_0|+|S_1|\ \ge\ N(U_0,\eps)+N(U_1,\eps).
	\]
	Minimizing over $\eps$-covers $S$ of $U$ gives the claim.
\end{proof}

\begin{lemma}[One-step potential drop]\label{lem:potential-drop}
	Let $u$ be an internal node with version space $U$ and children version spaces $U_0,U_1$.
	There exists $b\in\{0,1\}$ such that
	$
	\Phi(U_b)\ \le\ \Phi(U)\ -\ \frac{\gamma_u}{4c}.
	$
\end{lemma}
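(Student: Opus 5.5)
Pick $b$ to be the child whose covering number at scale $\eps$ is smaller, and integrate the resulting per-scale gain against $\eps$. Fix $\eps\in(0,\gamma_u/(2c))$ and assume (both $\Phi$'s finite, else trivial since $\Phi(\cH)<\infty$ and $U_b\subseteq U$) that the covering numbers are finite. By \Cref{lem:cover-sum}, $N(U,\eps)\ge N(U_0,\eps)+N(U_1,\eps)\ge 2\min_b N(U_b,\eps)$. Hence for the minimizing child,
\[
\log_2 N(U_{b},\eps)\ \le\ \log_2 N(U,\eps)-1 .
\]
For $\eps\ge\gamma_u/(2c)$ we use only monotonicity, $N(U_b,\eps)\le N(U,\eps)$, since $U_b\subseteq U$.

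The difficulty is that the minimizing child may depend on $\eps$, whereas we need a single $b$. I would handle this with the pointwise inequality
\[
\log_2 N(U_0,\eps)+\log_2 N(U_1,\eps)\ \le\ 2\log_2 N(U,\eps)-2\qquad\text{for }\eps<\gamma_u/(2c).
\]
To prove it, write $a_b=N(U_b,\eps)\ge1$ and $N=N(U,\eps)\ge a_0+a_1$, so it suffices that $a_0a_1\le (a_0+a_1)^2/4$, which is AM--GM. Integrating over $[0,\diam(\cH)]$, using the monotone bound on $[\gamma_u/(2c),\diam(\cH)]$, gives
\[
\Phi(U_0)+\Phi(U_1)\ \le\ 2\Phi(U)-2\min\{\gamma_u/(2c),\diam(\cH)\}.
\]
Hence some $b$ satisfies $\Phi(U_b)\le\Phi(U)-\min\{\gamma_u/(2c),\diam(\cH)\}$.

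It remains to reconcile the truncation at $\diam(\cH)$. We have $\gamma_u=\ell(h_0(x_u),h_1(x_u))\le d_\ell(h_0,h_1)\le\diam(\cH)$, and $c\ge1$, so $\gamma_u/(2c)\le\diam(\cH)$ and the minimum equals $\gamma_u/(2c)$. This already beats the claimed $\gamma_u/(4c)$.

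The main obstacles are two bookkeeping points. First, at the boundary, $N(U,\eps)$ may be infinite for small $\eps$ even though the integral is finite; this happens only on a null set (finite integral of a monotone function), so the pointwise inequality holds almost everywhere, which suffices. Second, measurability of $N(U_b,\cdot)$ follows from monotonicity, as noted in the paper.
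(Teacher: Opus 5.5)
Your proof is correct, and it takes a different and slightly sharper route than the paper. The paper deals with the $\eps$-dependence of the good child by a pigeonhole on measure. It defines $A_b=\{\eps\in I:\ N(U_b,\eps)\le N(U,\eps)/2\}$, notes $A_0\cup A_1=I$, and picks $b$ with $|A_b|\ge |I|/2$. That gives a drop of $|A_b|\ge\gamma_u/(4c)$ for that one child.

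You instead average over both children. AM--GM turns $N(U,\eps)\ge N(U_0,\eps)+N(U_1,\eps)$ into $\log_2 N(U_0,\eps)+\log_2 N(U_1,\eps)\le 2\log_2 N(U,\eps)-2$ on all of $I$. So the combined drop is $2|I|$, and the better child drops by $|I|=\gamma_u/(2c)$.

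The paper's per-scale bookkeeping extracts a combined drop of only $1$ per scale (one child by $1$, the other by $0$). That is exactly the factor of $2$ you gain. As a result, your version improves Theorem~\ref{thm:potential-bound-general} to $\mathbb D_{\mathrm{onl}}(\cH)\le 2c\,\Phi(\cH)$, and it removes the need for the sets $A_b$ altogether.

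You also check explicitly that $\gamma_u/(2c)\le\diam(\cH)$. The paper uses this silently when it counts the full drop over $I\subseteq[0,\diam(\cH)]$.

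One small cleanup: your finiteness reduction is worded confusingly. The clean version is as follows. If $\Phi(U)=\infty$ the claim is trivial. Otherwise, monotonicity of $\eps\mapsto N(U,\eps)$ forces $N(U,\eps)<\infty$ for every $\eps>0$, because an infinite value at some $\eps_0>0$ would make the integrand infinite on all of $(0,\eps_0]$. So your ``null set'' caveat is in fact vacuous.
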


\begin{proof}
	Let $I:=(0,\gamma_u/(2c))$. By Lemma~\ref{lem:cover-sum}, for every $\eps\in I$ it holds that
	\[N(U,\eps)\ge N(U_0,\eps)+N(U_1,\eps).\] Hence, for each $\eps\in I$ there exists $b\in\{0,1\}$
	with $N(U_b,\eps)\le N(U,\eps)/2$.
	For $b\in\{0,1\}$ define the measurable set (because $\eps\mapsto N(U,\eps)$ is monotone)
	\[
	A_b\ :=\ \{\eps\in I:\ N(U_b,\eps)\le N(U,\eps)/2\}.
	\]
	Since $A_0\cup A_1=I$, there exists $b$ with $|A_b|\ge |I|/2=\gamma_u/(4c)$, where $|A_b|$ denotes the Lebesgue measure of $A_b$.
	For $\eps\in A_b$ we have $\log_2 N(U_b,\eps)\le \log_2 N(U,\eps)-1,$ while for $\eps\notin A_b$
	we have $\log_2 N(U_b,\eps)\le \log_2 N(U,\eps)$ because $U_b\subseteq U$ implies $N(U_b,\eps)\le N(U,\eps)$.
	Therefore,
	\[
	\begin{aligned}
		\Phi(U_b)
		&= \int_{0}^{\diam(\cH)} \log_2 N(U_b,\eps)\,d\eps 
		\le \int_{0}^{\diam(\cH)} \log_2 N(U,\eps)\,d\eps - \int_{A_b} 1\,d\eps\\
		&= \Phi(U) - |A_b| 
		\le \Phi(U) - \frac{\gamma_u}{4c}.
	\end{aligned}
	\]
	The above gives the statement of the lemma.
\end{proof}

\begin{proof}[of Theorem~\ref{thm:potential-bound-general}]
	Construct a branch $y$ greedily: start at the root $y_0$; at each internal node $y_i$ choose the child
	$y_{i+1}$ whose version space satisfies Lemma~\ref{lem:potential-drop}. Then for every $n\ge 1$,
	telescoping gives
	\[
	\Phi(U_{y_n})\ \le\ \Phi(\cH)\ -\ \sum_{i=0}^{n-1}\frac{\gamma_{y_i}}{4c}.
	\]
	All version spaces along the branch are non-empty by the tree realizability assumption, hence $\Phi(U_{y_n})\ge 0$.
	Rearranging yields for all $n$,
	\[
	\sum_{i=0}^{n-1}\gamma_{y_i}\ \le\ 4c\,\Phi(\cH).
	\]
	If the branch is finite, this proves the claim. If it is infinite, letting $n\to\infty$ (the partial sums are
	non-decreasing and uniformly bounded) gives $\sum_{i\ge 0}\gamma_{y_i}\le 4c\,\Phi(\cH)$.
\end{proof}

\section{Lipschitz Regression under $\ell_q$ Loss}
\label{sec:Lip}

In this section, we give a high level overview of our results for the (nonparametric) class $\cH_L$ of  $L$-Lipschitz functions. Specifically, 
we instantiate our entropy--potential framework for $\cH_L$ under the power loss $\ell_q(y,y'):=|y-y'|^q$. Then, we present an efficient online realizable algorithm for the class, and briefly describe a matching lower bound. Note that the loss $\ell_q$ is a $c$-approximate pseudo-metric with $c=2^{q-1}$, and the induced sup pseudo-metric is
$d_\ell(f,g)=\sup_x \ell_q(f(x),g(x))=\|f-g\|_\infty^q.$

\paragraph{Entropy potential and horizon-free learnability for $q>d$.}
A standard metric-entropy estimate for Lipschitz classes in $\|\cdot\|_\infty$ yields
$\log N(\cH_L,\delta;\|\cdot\|_\infty)\lesssim (L/\delta)^d\log(1/\delta)$
(e.g., Lemma~\ref{lem:lip-cover}, based on \cite[Thm.~17]{vonLuxburgBousquet2004}).
Since an $\eps$-cover in $d_\ell$ is exactly an $\eps^{1/q}$-cover in $\|\cdot\|_\infty$, we obtain
$\log N(\cH_L,\eps)\lesssim L^d\,\eps^{-d/q}\log(1/\eps)$ for $\eps\in(0,1]$.
Consequently,
\[
\Phi(\cH_L)\;=\;\int_0^1 \log N(\cH_L,\eps)\,d\eps
\;\lesssim\;
L^d\int_0^1 \eps^{-d/q}\log(1/\eps)\,d\eps
\;<\;\infty
\quad\Longleftrightarrow\quad q>d.
\]
Plugging this into \Cref{thm:intro_Donl-via-Phi} gives $\mathbb D_{\mathrm{onl}}(\cH_L)\lesssim_{d,q} L^d$; hence, a horizon-free minimax realizable cumulative-loss bound of order $O_{d,q}(L^d)$
(via the characterization of \cite{attias2023optimal}).
A horizon-free baseline lower bound $\mathbb D_{\mathrm{onl}}(\cH_L)\ge (2L)^d$ holds for all $q\ge 1$
(Corollary~\ref{cor:ld-lb}); thus, in the learnable regime $q>d$ we obtain $\mathbb D_{\mathrm{onl}}(\cH_L)=\Theta_{d,q}(L^d)$. We note that the constants are generally exponential in $d$ and $q$. 

\paragraph{An efficient proper strategy (constructive $O_{d,q}(L^d)$ for $q>d$).}
Beyond the non-constructive entropy argument, we give a simple deterministic proper algorithm based on
\emph{Lipschitz envelopes}.
Given past data $S_{t-1}=\{(x_s,y_s)\}_{s<t}$, define pointwise lower/upper envelopes
\[
\underline h_t(x)=\max\Big\{0,\max_{s<t}\big(y_s-L\|x-x_s\|_\infty\big)\Big\},\qquad
\overline h_t(x)=\min\Big\{1,\min_{s<t}\big(y_s+L\|x-x_s\|_\infty\big)\Big\},
\]
and predict the midpoint $\hat y_t=\tfrac12(\underline h_t(x_t)+\overline h_t(x_t))$.
Realizability implies $y_t\in[\underline h_t(x_t),\overline h_t(x_t)]$, so the instantaneous error is controlled by
the envelope width $w_t:=\overline h_t(x_t)-\underline h_t(x_t)$.
For $q>d$ we analyze the potential $\Psi_t=\int_{\cX}(\overline h_t-\underline h_t)^{\,q-d}\,dx$:
each round in which $w_t$ is large forces a \emph{geometric tightening} of the envelopes on a cube around $x_t$,
causing a definite drop in $\Psi_t$ of size $\gtrsim w_t^q/L^d$.
Telescoping from $\Psi_1=2^d$ gives $\sum_{t\ge 1}|y_t-\hat y_t|^q\lesssim_{d,q} L^d$.
Each prediction can be computed in time $O(td)$ by scanning $S_{t-1}$; we did not attempt to optimize the running time further. 
The full proof and constants appear in Appendix~\ref{sec:lipschitz-phase-transition}. 

\paragraph{Eventual learnability for any continuous loss.}
The same envelope mechanism yields a strong pointwise guarantee that is \emph{loss-agnostic}:
for every $\varepsilon\in(0,1]$,
\[
\#\{t:\ |\hat y_t-y_t|>\varepsilon\}\ \le\ (8L/\varepsilon)^d,
\qquad\text{hence}\qquad |\hat y_t-y_t|\to 0.
\]
As a corollary, for any continuous loss $\ell$ with $\ell(y,y)=0$, we have $\ell(\hat y_t,y_t)\to 0$
on every realizable sequence (Corollary~\ref{cor:env-pointwise-loss}).

\paragraph{The critical and subcritical regimes $q\le d$.}
At the critical exponent $q=d$, the envelope strategy satisfies a sharp horizon-dependent bound:
$
\sum_{t=1}^T |y_t-\hat y_t|^d\ \lesssim_d\ L^d(1+\log T),
$
proved by combining the $\varepsilon$-mistake bound above with a layer-cake identity.
We also prove a matching lower bound
$\sum_{t=1}^T |\hat y_t-y_t|^d \gtrsim_d L^d\log(1+T/L^d)$
via an explicit multiscale adversary built from dyadic cube refinements and a McShane extension argument,
showing the logarithmic dependence is inherent.
For $q<d$, we construct a realizable scaled Littlestone tree on a $T$-point $T^{-1/d}$-separated set, yielding
$
\mathbb D_{\mathrm{onl}}(\cH_L)\ \gtrsim\ (2L)^q\,T^{1-q/d},
$
and therefore polynomially growing minimax realizable cumulative loss.
In particular, it holds that $\mathbb D_{\mathrm{onl}}(\cH_L)=\infty$ for all $q\le d$, matching the phase diagram in
Figure~\ref{fig:lipschitz-phase}.

\newpage

\bibliography{bib2}
\bibliographystyle{plain}

\appendix

\section{Lipschitz regression under $\ell_q$ losses}
\label{sec:lipschitz-phase-transition}

In this section, we specialize the general entropy--potential framework to the hypothesis class of
Lipschitz functions on the $d$-dimensional cube, and to the power loss
\[
\ell_q(y,y') \ :=\ |y-y'|^q,\qquad q\ge 1 .
\]
We focus on the regimes $q>d$ (finite horizon-free realizable loss) and $q<d$ (unbounded realizable loss).
The critical case $q=d$ is discussed only briefly at the end.

\paragraph{Setup.}
Let $\cX=[-1,1]^d$ and let $\cY=[0,1]$.
For $L\ge 1$ define the Lipschitz class
\[
\cH_L \ :=\ \Big\{h:\cX\to\cY:\ |h(x)-h(x')|\le L\|x-x'\|_\infty\ \ \forall x,x'\in\cX\Big\}.
\]
Fix $q\ge 1$ and $\ell=\ell_q$.
Then the induced pseudo-metric on $\cH_L$ is
\[
d_\ell(f,g)\ =\ \sup_{x\in\cX}\ell_q\big(f(x),g(x)\big)\ =\ \|f-g\|_\infty^q,
\]
so $\diam(\cH_L)\le 1$.

\paragraph{Approximate triangle inequality.}
For $q\ge 1$, $\ell_q$ is a $c$-approximate pseudo-metric with
\begin{equation}\label{eq:cq}
	c\ :=\ 2^{q-1},
\end{equation}
since for all $a,b,c'\in\R$,
\(
|a-b|^q \le (|a-c'|+|c'-b|)^q \le 2^{q-1}(|a-c'|^q+|c'-b|^q).
\)
Therefore all results for approximate pseudo-metrics apply with this constant $c$.

\subsection{Entropy potential upper bound for $q>d$}

We first upper bound the metric entropy of $\cH_L$ in $\|\cdot\|_\infty$.
The following bound is standard; we quote it in the form that follows from
\cite[Thm.~17]{vonLuxburgBousquet2004} by plugging in $\cX=[-1,1]^d$ with $\|\cdot\|_\infty$.

\begin{lemma}[Covering numbers of Lipschitz classes]\label{lem:lip-cover}
	There exists a universal constant $C_0\ge 1$ such that for every $L\ge 1$ and every $\delta\in(0,1]$,
	\[
	\log_2 N\big(\cH_L,\delta;\|\cdot\|_\infty\big)
	\ \le\
	\Big(\frac{8L}{\delta}\Big)^d \cdot \log_2\Big(\frac{C_0}{\delta}\Big),
	\]
	where $N(\cH_L,\delta;\|\cdot\|_\infty)$ denotes the $\delta$-covering number under $\|f-g\|_\infty$.
\end{lemma}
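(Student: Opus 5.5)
We prove \Cref{lem:lip-cover} with the classical Kolmogorov--Tikhomirov grid-and-quantization construction, building an explicit $\delta$-net of $\cH_L$ in $\|\cdot\|_\infty$ and counting its elements.

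\emph{Grid and quantization.} Fix $\delta\in(0,1]$ and let $r:=\delta/(4L)$. Partition $[-1,1]^d$ into axis-parallel cubes of side at most $r$, using $m:=\lceil 2/r\rceil\le 8L/\delta$ cells per axis (since $2/r=8L/\delta\ge 8$, the ceiling costs at most a constant factor, absorbed into the exponent base). This gives at most $(8L/\delta)^d$ cells. Let $z_Q$ be the center of cell $Q$. Quantize the values to the grid $\{0,\delta/4,2\delta/4,\dots\}\cap[0,1]$. For $h\in\cH_L$, set $v_Q$ to be the grid value nearest to $h(z_Q)$, and let $g_h$ be the piecewise-constant function equal to $v_Q$ on $Q$. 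For $x\in Q$ we have $\|x-z_Q\|_\infty\le r/2$, so
\[
|h(x)-g_h(x)|\le |h(x)-h(z_Q)|+|h(z_Q)-v_Q|\le Lr/2+\delta/8<\delta/2.
\]
The $g_h$ need not lie in $\cH_L$, but $N$ requires centers in $\cH$. So we pick one representative $h'$ per realized value-vector $(v_Q)_Q$. Then $\|h-h'\|_\infty\le\|h-g_h\|_\infty+\|g_h-h'\|_\infty<\delta$. Hence $N(\cH_L,\delta)$ is at most the number of realizable vectors $(v_Q)_Q$.

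\emph{Counting via the Lipschitz constraint.} The crude count $(4/\delta+1)^{m^d}$ already gives $\log_2 N\le (8L/\delta)^d\log_2(C_0/\delta)$ with, e.g., $C_0=8$, since $4/\delta+1\le 5/\delta$. This is exactly the stated form, so no refined chaining along a snake ordering of the cells (which would remove the log factor) is needed. The proof therefore reduces to two facts: every $h$ is $\delta/2$-close to some quantized step function, and the number of such step functions is at most $(5/\delta)^{(8L/\delta)^d}$.

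\emph{Main obstacle.} The only delicate points are bookkeeping. First, the cell count: when $8L/\delta$ is not an integer, we use cells of side exactly $2/m\le r$. Second, the requirement that centers lie in $\cH$: we handle it with the representative trick, which doubles the radius, and the $\delta/2$ accuracy above budgets for exactly this factor. Alternatively, the result can simply be quoted from \cite[Thm.~17]{vonLuxburgBousquet2004}, whose bound $N\le (2\lceil 2/\delta\rceil+1)\cdot 2^{N(\cX,\delta/(2L))}$ yields an even stronger estimate; the constants above are chosen to dominate either route.
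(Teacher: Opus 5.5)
Your route differs from the paper's. The paper never builds the net. It quotes the product-form bound of \cite[Thm.~17]{vonLuxburgBousquet2004}, $N(\cH_L,\delta;\|\cdot\|_\infty)\le(2\lceil 2\diam(\cX)/\delta\rceil+1)^{N(\cX,\delta/(4L))}$, rather than the connected refinement you mention. It then covers $\cX$ by cubes of side $\delta/(2L)$ to get $N(\cX,\delta/(4L))\le\lceil 4L/\delta\rceil^d\le(8L/\delta)^d$, and takes logarithms. You instead reprove that Kolmogorov--Tikhomirov estimate directly. This gives a self-contained argument with an explicit $C_0$. It also handles explicitly the requirement (from the paper's definition of $N$) that centers lie in $\cH_L$, via your representative trick, which the paper leaves to the citation. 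The paper's version buys brevity.

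One step needs repair, though it is not a conceptual gap. With $r=\delta/(4L)$ your cells have side $\delta/(4L)$, half the paper's, so $m=\lceil 8L/\delta\rceil$. The claim $m\le 8L/\delta$ is then false whenever $8L/\delta\notin\mathbb{N}$ (e.g.\ $L=1$, $\delta=0.9$). The resulting extra factor $(m\delta/(8L))^d$, which can be as large as $(9/8)^d$, cannot be ``absorbed into the base''. The base $8$ is fixed in the statement and $C_0$ must be universal, while this factor grows with $d$.

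The fix is to spend the slack in your error budget.
\begin{itemize}
\item Take $m:=\lfloor 8L/\delta\rfloor\ge 8$ cells per axis. Since $\lfloor y\rfloor\ge 7y/8$ for $y\ge 8$, the side is $2/m\le 2\delta/(7L)$ and the Lipschitz error is at most $\delta/7$.
\item Alternatively, take $r=\delta/(3L)$, so that $\lceil 6L/\delta\rceil\le 8L/\delta$.
\end{itemize}
There is a second, smaller slip. If $1$ is not on the grid $\{0,\delta/4,\dots\}$, the nearest grid value to $h(z_Q)$ can be up to $\delta/4$ away, not $\delta/8$, when $h(z_Q)$ is near $1$. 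Either add $1$ to the grid, or note that $\delta/7+\delta/4<\delta/2$ still holds. With these adjustments your crude count gives $\log_2 N\le(8L/\delta)^d\log_2(6/\delta)$, which is the statement with $C_0=6$.
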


\begin{proof}
	By \cite[Thm.~17]{vonLuxburgBousquet2004}, for $\delta\in(0,1]$,
	\[
	N\big(\cH_L,\delta;\|\cdot\|_\infty\big)
	\ \le\
	\Big(2\Big\lceil\frac{2\,\diam(\cX)}{\delta}\Big\rceil+1\Big)^{\,N(\cX,\delta/(4L))}.
	\]
	Here $\diam(\cX)=\sup_{x,x'}\|x-x'\|_\infty=2$, and $\cX$ can be covered in $\|\cdot\|_\infty$
	by at most $(\lceil 4L/\delta \rceil)^d$ cubes of side $\delta/(2L)$, hence $N(\cX,\delta/(4L))\le (\lceil 4L/\delta \rceil)^d \leq (8L/\delta)^d$, where the last inequality uses $L, \frac{1}{\delta} \geq 1$.
	Taking $\log_2$ and using $2\lceil 4/\delta\rceil+1 \le C_0/\delta$ for a universal $C_0$ yields the claim.
\end{proof}

Since $d_\ell(f,g)=\|f-g\|_\infty^q$, an $\eps$-cover in $d_\ell$ is the same as an
$\eps^{1/q}$-cover in $\|\cdot\|_\infty$, hence
\[
N(\cH_L,\eps)\ =\ N\big(\cH_L,\eps^{1/q};\|\cdot\|_\infty\big).
\]

\begin{theorem}[Finite potential for $q>d$]\label{thm:qgt-d}
	Fix $d\ge 1$, $L\ge 1$, and $q>d$. Then $\Phi(\cH_L)<\infty$ and, moreover,
	\[
	\Phi(\cH_L)\ \le\ C_{d,q}\,L^d
	\]
	for a constant $C_{d,q}<\infty$ depending only on $(d,q)$.
	Consequently,
	\[
	\mathbb D_{\mathrm{onl}}(\cH_L)\ \le\ 4c\,\Phi(\cH_L)\ \le\ 4\cdot 2^{q-1}\,C_{d,q}\,L^d.
	\]
\end{theorem}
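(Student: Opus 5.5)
The plan is to plug Lemma~\ref{lem:lip-cover} into the definition of $\Phi$, using the identity $N(\cH_L,\eps)=N(\cH_L,\eps^{1/q};\|\cdot\|_\infty)$. Since $\diam(\cH_L)\le 1$, the integration range is contained in $[0,1]$. There $\delta=\eps^{1/q}\in(0,1]$, so the lemma applies and gives, for every $\eps\in(0,1]$,
\[
\log_2 N(\cH_L,\eps)\ \le\ (8L)^d\,\eps^{-d/q}\,\log_2\!\big(C_0\,\eps^{-1/q}\big)
\ =\ (8L)^d\,\eps^{-d/q}\Big(\log_2 C_0+\tfrac1q\log_2(1/\eps)\Big).
\]
Integrating over $(0,\diam(\cH_L)]\subseteq(0,1]$, and using that the integrand is nonnegative so that enlarging the range to $(0,1]$ only increases the bound, yields
\[
\Phi(\cH_L)\ \le\ (8L)^d\Big(\log_2 C_0\int_0^1\eps^{-d/q}\,d\eps+\tfrac1q\int_0^1\eps^{-d/q}\log_2(1/\eps)\,d\eps\Big).
\]

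The main (mild) obstacle is checking that both integrals converge exactly because $q>d$. Set $\alpha:=d/q\in(0,1)$. The first integral is $\int_0^1\eps^{-\alpha}\,d\eps=1/(1-\alpha)$. For the second, substitute $\eps=e^{-s}$: it becomes $\frac{1}{\ln 2}\int_0^\infty s\,e^{-(1-\alpha)s}\,ds=\frac{1}{\ln 2\,(1-\alpha)^2}$. Both are finite precisely when $\alpha<1$, which is the hypothesis $q>d$.

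This gives $\Phi(\cH_L)\le C_{d,q}L^d$ with the explicit constant
\[
C_{d,q}:=8^d\Big(\frac{\log_2 C_0}{1-d/q}+\frac{1}{q\ln 2\,(1-d/q)^2}\Big),
\]
which depends only on $(d,q)$ because $C_0$ is universal. Since $L$ is fixed and finite, in particular $\Phi(\cH_L)<\infty$.

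For the consequence, $\ell_q$ is a $c$-approximate pseudo-metric with $c=2^{q-1}$ by \eqref{eq:cq}, and $\diam(\cH_L)\le1<\infty$. Hence Theorem~\ref{thm:intro_Donl-via-Phi} (equivalently Theorem~\ref{thm:potential-bound-general}, applied to every realizable tree) gives $\mathbb D_{\mathrm{onl}}(\cH_L)\le 4c\,\Phi(\cH_L)\le 4\cdot2^{q-1}C_{d,q}L^d$.

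One technicality to note: if $\diam(\cH_L)<1$, the integral is over a shorter range, which only decreases $\Phi$, so the bound still holds. No covering estimate is needed for $\eps>1$, because such $\eps$ never lie in the integration range.
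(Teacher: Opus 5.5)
Your proposal is correct and follows the paper's proof essentially step for step: transfer Lemma~\ref{lem:lip-cover} through the identity $N(\cH_L,\eps)=N(\cH_L,\eps^{1/q};\|\cdot\|_\infty)$, integrate over $(0,1]$ using $d/q<1$, and then invoke Theorem~\ref{thm:intro_Donl-via-Phi} with $c=2^{q-1}$. Your evaluation of the logarithmic integral, which keeps the $1/\ln 2$ factor coming from base-$2$ logarithms, is a slightly more careful version of the paper's own computation.
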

In particular, by the characterization of realizable cumulative loss in terms of $\mathbb D_{\mathrm{onl}}$
\cite{attias2023optimal}, the minimax realizable cumulative loss for $\cH_L$ under $\ell_q$
admits a horizon-free upper bound of order $O_{d,q}(L^d)$.

\begin{proof}
	By Lemma~\ref{lem:lip-cover} and the identity $N(\cH_L,\eps)=N(\cH_L,\eps^{1/q};\|\cdot\|_\infty)$,
	for $\eps\in(0,1]$,
	\[
	\log_2 N(\cH_L,\eps)
	\ \le\
	\Big(\frac{8L}{\eps^{1/q}}\Big)^d\cdot \log_2\Big(\frac{C_0}{\eps^{1/q}}\Big)
	\ =\
	(8^dL^d)\,\eps^{-d/q}\Big(\log_2 C_0+\frac{1}{q}\log_2\frac{1}{\eps}\Big).
	\]
	Therefore
	\[
	\Phi(\cH_L)
	\ =\ \int_{0}^{1}\log_2 N(\cH_L,\eps)\,d\eps
	\ \le\ 8^dL^d\Big(\log_2 C_0\int_0^1 \eps^{-d/q}\,d\eps\ +\ \frac{1}{q}\int_0^1 \eps^{-d/q}\log_2\frac{1}{\eps}\,d\eps\Big).
	\]
	Since $q>d$, we have $d/q<1$, and both integrals are finite; e.g.,
	$\int_0^1 \eps^{-d/q}\,d\eps=1/(1-d/q)$ and
	$\int_0^1 \eps^{-d/q}\log(1/\eps)\,d\eps = 1/(1-d/q)^2$.
	This yields $\Phi(\cH_L)\le C_{d,q}L^d$ for an explicit finite $C_{d,q}$.
	The bound on $\mathbb D_{\mathrm{onl}}(\cH_L)$ follows from \Cref{thm:intro_Donl-via-Phi} and \eqref{eq:cq}.
	Finally, \cite{attias2023optimal} gives the stated implication for realizable cumulative loss.
\end{proof}

\subsection{An efficient strategy (constructive bound for $q>d$).}
The entropy--potential proof above is non-constructive. For the above family of Lipschitz functions, we can obtain the same
$O_{d,q}(L^d)$ horizon-free bound via an explicit efficient strategy based on Lipschitz envelopes.

Consider the following strategy. Let $S_{t-1}:=\{(x_s,y_s)\}_{s<t}$.
Define the lower/upper envelopes
\begin{align*}
	\underline h_t(x)
	&:= \max\Big\{0,\ \max_{s<t}\big(y_s - L\|x-x_s\|_\infty\big)\Big\},\\
	\overline h_t(x)
	&:= \min\Big\{1,\ \min_{s<t}\big(y_s + L\|x-x_s\|_\infty\big)\Big\},
\end{align*}
(with the conventions $\max_{s<t}(\cdot)=-\infty$ and $\min_{s<t}(\cdot)=+\infty$ when $t=1$).
Upon receiving $x_t$, predict
\[
\hat y_t \ :=\ \frac{\underline h_t(x_t)+\overline h_t(x_t)}{2}\in[0,1],
\]
then observe $y_t$ and continue. The lemma below shows the correctness of this approach. 
\begin{lemma}\label{cor:efficient-lip}
	For every realizable sequence and every $q>d$,
	\[
	\sum_{t\ge 1} |y_t-\hat y_t|^q \ \le\ C'_{d,q}\,L^d
	\]
	for a finite constant $C'_{d,q}$ depending only on $(d,q)$.
	Moreover, each prediction can be computed in time $O(td)$ by evaluating the max/min over $S_{t-1}$.
\end{lemma}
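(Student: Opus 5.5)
The plan is to run the potential argument sketched in \Cref{sec:Lip}, with $\Psi_t:=\int_{\cX}(\overline h_t(x)-\underline h_t(x))^{q-d}\,dx$. Since $q>d$, the exponent $q-d$ is positive. Initially $\overline h_1\equiv 1$ and $\underline h_1\equiv 0$, so $\Psi_1=2^d$, and $\Psi_t\ge 0$ always.

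\textbf{Basic facts.} First, realizability gives $\underline h_t\le h^\star\le\overline h_t$ pointwise, since $h^\star$ is $L$-Lipschitz with values in $[0,1]$. Hence $y_t\in[\underline h_t(x_t),\overline h_t(x_t)]$ and $|y_t-\hat y_t|\le w_t/2$, where $w_t:=\overline h_t(x_t)-\underline h_t(x_t)$. Second, the envelopes are monotone in $t$ (the upper one decreases, the lower one increases), so $\Psi_t$ is non-increasing. Third, both envelopes are $L$-Lipschitz, being min/max of $L$-Lipschitz functions clipped to $[0,1]$. The running time claim is immediate: evaluating the two envelopes at $x_t$ is a scan over $S_{t-1}$ with $O(d)$ work per point.

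\textbf{Key step: a one-round drop.} The goal is
\[
\Psi_t-\Psi_{t+1}\ \ge\ c_{d,q}\,w_t^{\,q}/L^d .
\]
Let $r:=w_t/(4L)$ and $B:=\{x\in\cX:\|x-x_t\|_\infty\le r\}$. Because $r\le 1/(4L)\le 1$, the box $B$ intersects the cube $\cX$ in a region of volume at least $r^d$, since at least one orthant of the box lies inside the cube. By Lipschitzness, for every $x\in B$ we have $\overline h_t(x)-\underline h_t(x)\ge w_t-2Lr=w_t/2$.

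After observing $y_t$, the new envelopes satisfy $\overline h_{t+1}(x)\le y_t+L\|x-x_t\|_\infty$ and $\underline h_{t+1}(x)\ge y_t-L\|x-x_t\|_\infty$. On $B$ this gives $\overline h_{t+1}-\underline h_{t+1}\le 2Lr=w_t/2$. This is not yet a strict gap, so the radius must be adjusted. Take instead $r:=w_t/(8L)$. Then the old width on $B$ is at least $3w_t/4$ and the new width is at most $w_t/4$.

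The integrand therefore drops on $B$ by at least $\big((3/4)^{q-d}-(1/4)^{q-d}\big)w_t^{q-d}$, which is a positive constant times $w_t^{q-d}$ because $q-d>0$. Off $B$ the integrand does not increase. Integrating over $B\cap\cX$ gives
\[
\Psi_t-\Psi_{t+1}\ \ge\ \big((3/4)^{q-d}-(1/4)^{q-d}\big)\,w_t^{\,q-d}\,\big(w_t/(8L)\big)^d \;=\; c_{d,q}\,w_t^{\,q}/L^d .
\]

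\textbf{Telescoping.} Summing the drops,
\[
\sum_t w_t^{\,q}\ \le\ L^d\,\Psi_1/c_{d,q}\ =\ 2^d L^d/c_{d,q}.
\]
Combined with $|y_t-\hat y_t|^q\le w_t^{\,q}/2^q$, this yields $\sum_t|y_t-\hat y_t|^q\le C'_{d,q}L^d$.

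\textbf{Main obstacle.} The delicate point is the geometric tightening step. It requires choosing the radius so that the old width stays strictly larger than the new width on a set of volume $\gtrsim (w_t/L)^d$. It also requires handling the boundary of $\cX$ (via the orthant argument) and the clipping at $0$ and $1$. Clipping only helps: it makes the envelopes tighter while preserving both the Lipschitz property and the sandwich $\underline h_t\le h^\star\le\overline h_t$.
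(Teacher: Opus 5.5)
Your proposal is correct and follows the paper's proof essentially step for step. Both use the potential $\Psi_t=\int_{\cX}(\overline h_t-\underline h_t)^{q-d}\,dx$ and the radius $r=w_t/(8L)$, which gives old width $\ge 3w_t/4$ and new width $\le w_t/4$ on a box of volume $\ge r^d$. The constant $c_{d,q}=8^{-d}\bigl((3/4)^{q-d}-(1/4)^{q-d}\bigr)$ and the telescoping from $\Psi_1=2^d$ are also the same. In a final write-up, drop the false start with $r=w_t/(4L)$ and state $r=w_t/(8L)$ directly.
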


\begin{proof}
	Fix $t$ and write the width function
	\[
	W_t(x)\ :=\ \overline h_t(x)-\underline h_t(x),\qquad w_t:=W_t(x_t).
	\]
	Since $\overline h_t$ is a pointwise minimum of $L$-Lipschitz functions and $\underline h_t$ is a pointwise
	maximum of $L$-Lipschitz functions (both clipped to $[0,1]$), we have that $W_t$ is $2L$-Lipschitz.
	Realizability implies $\underline h_t(x)\le f^\star(x)\le \overline h_t(x)$ for all $x$, hence
	$y_t=f^\star(x_t)\in[\underline h_t(x_t),\overline h_t(x_t)]$ and therefore
	\begin{equation}\label{eq:loss-vs-width}
		|y_t-\hat y_t|^q \ \le\ (w_t/2)^q .
	\end{equation}
	
	Let $\lambda$ denote Lebesgue measure on $\cX$, and define the potential (well-defined since $q>d$)
	\[
	\Psi_t \ :=\ \int_{\cX} W_t(x)^{\,q-d}\,d\lambda(x).
	\]
	Since the envelopes tighten, for all $x\in\cX$ we have $\underline h_{t+1}(x)\ge \underline h_t(x)$ and
	$\overline h_{t+1}(x)\le \overline h_t(x)$, hence $W_{t+1}(x)\le W_t(x)$.
	As $q-d>0$, this implies $W_{t+1}(x)^{q-d}\le W_t(x)^{q-d}$ and therefore $\Psi_{t+1}\le \Psi_t$.
	Set $r_t:=w_t/(8L)$ and $B_t:=\{x\in\cX:\ \|x-x_t\|_\infty\le r_t\}$.
	Since $B_t=\cX\cap (x_t+[-r_t,r_t]^d)$, each coordinate interval has length at least $r_t$, hence
	\[
	\lambda(B_t)\ \ge\ r_t^d\ =\ \Big(\frac{w_t}{8L}\Big)^d .
	\]
	By $2L$-Lipschitzness, for all $x\in B_t$ we have
	$W_t(x)\ge w_t-2Lr_t=(3/4)w_t$.
	After observing $(x_t,y_t)$, any consistent $h\in\cH_L$ must satisfy
	$|h(x)-y_t|\le L\|x-x_t\|_\infty$, hence for all $x$,
	\[
	\overline h_{t+1}(x)\le y_t+L\|x-x_t\|_\infty,\qquad
	\underline h_{t+1}(x)\ge y_t-L\|x-x_t\|_\infty,
	\]
	and thus on $B_t$, $W_{t+1}(x)\le 2Lr_t=w_t/4$. Consequently,
	\begin{align*}
		\Psi_t-\Psi_{t+1}
		&\ge \int_{B_t}\Big(W_t(x)^{q-d}-W_{t+1}(x)^{q-d}\Big)\,d\lambda(x)\\
		&\ge \lambda(B_t)\Big( (3w_t/4)^{q-d}-(w_t/4)^{q-d}\Big)\\
		%
		%
		&\ge \Big(\frac{w_t}{8L}\Big)^d \Big( (3/4)^{q-d}-(1/4)^{q-d}\Big)\,w_t^{q-d}
		\ =:\ \frac{c_{d,q}}{L^d}\,w_t^q,
	\end{align*}
	where $c_{d,q}:=8^{-d}\big((3/4)^{q-d}-(1/4)^{q-d}\big)>0$.
	Summing over $t$ and using $\Psi_t\ge 0$ gives
	\[
	\sum_{t\ge 1} w_t^q \ \le\ \frac{L^d}{c_{d,q}}\Psi_1.
	\]
	At $t=1$ we have $\underline h_1\equiv 0$, $\overline h_1\equiv 1$, hence $W_1\equiv 1$ and
	$\Psi_1=\lambda(\cX)=2^d$. Combining with \eqref{eq:loss-vs-width} yields
	\[
	\sum_{t\ge 1} |y_t-\hat y_t|^q
	\ \le\ 2^{-q}\sum_{t\ge 1} w_t^q
	\ \le\ 2^{-q}\cdot \frac{2^d}{c_{d,q}}\,L^d
	\ =:\ C'_{d,q}\,L^d.
	\]
	The above gives the proof of the lemma. 
\end{proof}

\subsection{Implications of the Efficient Algorithm}

We also have the following interesting results as a corollary of the above efficient algorithm. 

\begin{corollary}[Pointwise consistency of the envelope strategy]\label{cor:env-pointwise}
	For every $d\ge 1$ and $L\ge 1$, for every realizable sequence, the envelope strategy satisfies
	\[
	|\hat y_t-y_t|\ \longrightarrow\ 0\qquad\text{as }t\to\infty .
	\]
	Moreover, for every $\varepsilon\in(0,1]$,
	\[
	\#\{t\ge 1:\ |\hat y_t-y_t|>\varepsilon\}\ \le\ \Big(\frac{8L}{\varepsilon}\Big)^d.
	\]
\end{corollary}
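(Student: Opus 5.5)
The plan is a packing argument on the \emph{query points}, not a reuse of the potential $\Psi_t$. The envelope mechanism from the proof of Lemma~\ref{cor:efficient-lip} has two facts we will use. First, by \eqref{eq:loss-vs-width}, $|\hat y_t-y_t|\le w_t/2$. Second, after round $s$ the width satisfies $W_{t}(x)\le 2L\|x-x_s\|_\infty$ for every $t>s$. Indeed, $\overline h_{s+1}(x)\le y_s+L\|x-x_s\|_\infty$ and $\underline h_{s+1}(x)\ge y_s-L\|x-x_s\|_\infty$, and the envelopes only tighten over time.

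Fix $\varepsilon\in(0,1]$ and call $t$ \emph{bad} if $|\hat y_t-y_t|>\varepsilon$. Then $w_t>2\varepsilon$. For any two bad rounds $s<t$, the second fact gives $2\varepsilon<w_t=W_t(x_t)\le 2L\|x_t-x_s\|_\infty$, so $\|x_t-x_s\|_\infty>\varepsilon/L$. Hence the bad query points form an $(\varepsilon/L)$-separated set in $(\cX,\|\cdot\|_\infty)$.

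Next we bound the size of such a set by a volume argument. Place around each bad point $x_t$ the set $\cX\cap(x_t+[-r,r]^d)$ with $r=\varepsilon/(2L)$. These sets are pairwise disjoint up to measure zero because of the separation. As in the proof of Lemma~\ref{cor:efficient-lip}, each has Lebesgue measure at least $r^d=(\varepsilon/(2L))^d$, using $r\le 1$, which holds since $\varepsilon\le 1\le L$. Their total measure is at most $\lambda(\cX)=2^d$. Therefore the number of bad rounds is at most $2^d(2L/\varepsilon)^d=(4L/\varepsilon)^d\le(8L/\varepsilon)^d$, which is the claimed bound.

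Pointwise convergence follows immediately. For each $\varepsilon>0$, only finitely many $t$ have $|\hat y_t-y_t|>\varepsilon$; if $\varepsilon>1$ there are none, since all values lie in $[0,1]$. This is exactly the statement $|\hat y_t-y_t|\to 0$.

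The only delicate step is the persistence inequality $W_t(x)\le 2L\|x-x_s\|_\infty$ for all later $t$, rather than just $t=s+1$. It follows from the monotonicity of the envelopes already observed in the proof of Lemma~\ref{cor:efficient-lip}. The clipping to $[0,1]$ causes no trouble, because clipping only shrinks the width.
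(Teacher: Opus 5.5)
Your proof is correct, and it takes a different route from the paper. You bound the number of bad rounds by showing the bad query points are pairwise separated and then packing disjoint cubes into $[-1,1]^d$. The paper instead tracks a monotone potential: the measure $\lambda(A_t)$ of the superlevel set $A_t=\{x:\ W_t(x)>\varepsilon\}$.

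In the paper's argument, each bad round certifies a cube $B_t$ of half-side $\varepsilon/(4L)$ around $x_t$ with two properties:
\begin{itemize}
\item $B_t\subseteq A_t$, which uses the $2L$-Lipschitzness of $W_t$;
\item $B_t\cap A_{t+1}=\emptyset$, which uses the constraint added at round $t$.
\end{itemize}
So $\lambda(A_t)$ drops by at least $(\varepsilon/(4L))^d$ per bad round, which yields $(8L/\varepsilon)^d$.

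Your argument is static. For bad rounds $s<t$, the single constraint from round $s$, evaluated at the later query point $x_t$, gives $\|x_t-x_s\|_\infty>\varepsilon/L$. A standard packing bound then applies. Because you never need a whole neighbourhood of $x_t$ to stay inside the high-width region, you avoid the Lipschitzness of $W_t$ altogether. You can also use twice the radius, $\varepsilon/(2L)$, which gives the sharper count $(4L/\varepsilon)^d$.

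A small simplification: the persistence inequality $W_t(x)\le 2L\|x-x_s\|_\infty$ for all $t>s$ does not need monotonicity. It is immediate from the definition of $\overline h_t$ and $\underline h_t$, which already include the $s$-th constraint for every $s<t$.

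What the paper's version buys is uniformity with the $\Psi_t$ analysis in Lemma~\ref{cor:efficient-lip}. Both are per-round decrements of a monotone volume-type potential over the same cubes. Your version is the more elementary and slightly tighter argument for this particular counting statement.
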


\begin{proof}
	Let $W_t=\overline h_t-\underline h_t$ and $w_t:=W_t(x_t)$. Since
	$y_t\in[\underline h_t(x_t),\overline h_t(x_t)]$ and
	$\hat y_t=\tfrac12(\underline h_t(x_t)+\overline h_t(x_t))$, we have $|\hat y_t-y_t|\le w_t/2$; hence
	$|\hat y_t-y_t|>\varepsilon$ implies $w_t>2\varepsilon$.
	Fix $\varepsilon\in(0,1]$ and define $A_t:=\{x\in\cX:\ W_t(x)>\varepsilon\}$.
	Because the envelopes tighten pointwise, $W_{t+1}\le W_t$ and thus $A_{t+1}\subseteq A_t$.
	
	On the event $w_t>2\varepsilon$, set $r:=\varepsilon/(4L)$ and $B_t:=\{x\in\cX:\ \|x-x_t\|_\infty\le r\}$.
	As $W_t$ is $2L$-Lipschitz, for all $x\in B_t$,
	$W_t(x)\ge W_t(x_t)-2Lr>2\varepsilon-\varepsilon/2>\varepsilon$, hence $B_t\subseteq A_t$.
	After observing $(x_t,y_t)$, the constraint at time $t$ implies
	$W_{t+1}(x)\le 2L\|x-x_t\|_\infty\le 2Lr=\varepsilon/2$ for all $x\in B_t$, so $B_t\cap A_{t+1}=\emptyset$.
	Therefore $\lambda(A_t)-\lambda(A_{t+1})\ge \lambda(B_t)$.
	Since $\cX=[-1,1]^d$, each coordinate interval in $B_t=\cX\cap(x_t+[-r,r]^d)$ has length at least $r$, hence
	$\lambda(B_t)\ge r^d$.
	
	Summing over all $t$ with $|\hat y_t-y_t|>\varepsilon$ yields
	\[
	\#\{t:\ |\hat y_t-y_t|>\varepsilon\}\cdot r^d \ \le\ \lambda(A_1)\ \le\ \lambda(\cX)=2^d,
	\]
	so $\#\{t:\ |\hat y_t-y_t|>\varepsilon\}\le 2^d/r^d=(8L/\varepsilon)^d$.
	Since this holds for every $\varepsilon>0$, it follows that $|\hat y_t-y_t|\to 0$.
\end{proof}

From the above, we can prove Corollary~\ref{cor:env-pointwise-loss}

\begin{proof}[Proof of Corollary~\ref{cor:env-pointwise-loss}:]
	By Corollary~\ref{cor:env-pointwise}, $|\hat y_t-y_t|\to 0$. Since $\ell$ is continuous on the compact
	set $[0,1]^2$ and vanishes on the diagonal $\{(y,y):y\in[0,1]\}$, for every $\varepsilon>0$ there exists
	$\delta>0$ such that $|u-v|<\delta$ implies $\ell(u,v)<\varepsilon$. Hence $\ell(\hat y_t,y_t)\to 0$.
\end{proof}

In addition, we have the following corollary:

\begin{corollary}[Critical horizon-dependent bound for $q=d$]\label{cor:lip-qeqd-logT}
	Fix $d\ge 1$ and $L\ge 1$. For every realizable sequence and every horizon $T\ge 2$,
	the envelope strategy satisfies
	\[
	\sum_{t=1}^T |y_t-\hat y_t|^d \ \le\ C_d\,L^d\,(1+\log T),
	\]
	for a constant $C_d<\infty$ depending only on $d$ (e.g.\ $C_d=8^d$).
\end{corollary}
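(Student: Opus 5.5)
We combine the $\varepsilon$-mistake bound of Corollary~\ref{cor:env-pointwise} with the layer-cake identity, and we truncate at a scale depending on $T$. Write $e_t:=|y_t-\hat y_t|\in[0,1]$ (both lie in $[0,1]$). For a nonnegative quantity the layer-cake formula gives
\[
\sum_{t=1}^T e_t^d \;=\; \int_0^1 d\,\varepsilon^{d-1}\,\#\{t\le T:\ e_t>\varepsilon\}\,d\varepsilon .
\]
The inner count obeys two bounds. The trivial one is $\#\{t\le T: e_t>\varepsilon\}\le T$. The other comes from Corollary~\ref{cor:env-pointwise}: $\#\{t: e_t>\varepsilon\}\le (8L/\varepsilon)^d$ for every $\varepsilon\in(0,1]$.

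We split the integral at a threshold $\varepsilon_0\in(0,1]$. Below $\varepsilon_0$ we use the trivial bound, which gives $\int_0^{\varepsilon_0} d\varepsilon^{d-1}T\,d\varepsilon = T\varepsilon_0^d$. Above $\varepsilon_0$ we use the mistake bound, which gives
\[
\int_{\varepsilon_0}^1 d\,\varepsilon^{d-1}(8L)^d\varepsilon^{-d}\,d\varepsilon \;=\; d\,(8L)^d\log(1/\varepsilon_0).
\]
Here the exponents cancel exactly, and this is precisely why $q=d$ is the critical case producing a logarithm.

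To balance the two terms we choose $\varepsilon_0:=8L\,T^{-1/d}$ whenever this is at most $1$. Then $T\varepsilon_0^d=(8L)^d$ and $\log(1/\varepsilon_0)=\tfrac1d\log T-\log(8L)\le \tfrac1d\log T$, since $L\ge1$. The total is therefore at most $(8L)^d(1+\log T)$, i.e.\ the claim with $C_d=8^d$. If instead $8LT^{-1/d}>1$, then $T<(8L)^d$, and the trivial estimate $\sum_t e_t^d\le T<(8L)^d$ already suffices.

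The argument has no real obstacle. The only care needed is in checking the layer-cake formula (strict versus non-strict inequality does not matter up to a measure-zero set) and in handling the two cases for $\varepsilon_0$ so that the constant stays exactly $8^d$.
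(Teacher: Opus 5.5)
Your proposal is correct and follows the paper's proof essentially step for step. You use the same layer-cake identity with the $\min\{T,(8L/\varepsilon)^d\}$ count from Corollary~\ref{cor:env-pointwise}, the same split at $\varepsilon_0=\min\{1,8LT^{-1/d}\}$, and the same two-case handling, which yields $C_d=8^d$.
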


\begin{proof}
	Let $e_t:=|y_t-\hat y_t|\in[0,1]$ and for $\varepsilon\in(0,1]$ write
	$N_T(\varepsilon):=\#\{t\le T:\ e_t>\varepsilon\}$.
	By Corollary~\ref{cor:env-pointwise}, $N_T(\varepsilon)\le \min\{T,(8L/\varepsilon)^d\}$.
	Using the layer-cake identity,
	\[
	\sum_{t=1}^T e_t^d \ =\ d\int_0^1 \varepsilon^{d-1} N_T(\varepsilon)\,d\varepsilon
	\ \le\ d\int_0^1 \varepsilon^{d-1}\min\{T,(8L/\varepsilon)^d\}\,d\varepsilon .
	\]
	Let $\varepsilon_0:=\min\{1,\,8L\,T^{-1/d}\}$. Then
	\begin{align*}
		\sum_{t=1}^T e_t^d
		&\le d\int_0^{\varepsilon_0} \varepsilon^{d-1}T\,d\varepsilon
		\;+\; d\int_{\varepsilon_0}^1 \varepsilon^{d-1}(8L/\varepsilon)^d\,d\varepsilon \\
		&= T\varepsilon_0^d \;+\; d(8L)^d\int_{\varepsilon_0}^1 \varepsilon^{-1}\,d\varepsilon
		\ =\ T\varepsilon_0^d \;+\; d(8L)^d\log(1/\varepsilon_0).
	\end{align*}
	If $\varepsilon_0=1$, then $T\le (8L)^d$ and the bound is at most $(8L)^d$.
	Otherwise $\varepsilon_0=8L\,T^{-1/d}$, so $T\varepsilon_0^d=(8L)^d$ and
	$\log(1/\varepsilon_0)=\log(T^{1/d}/(8L))\le \frac{1}{d}\log T$ (since $\varepsilon_0\le 1$),
	giving $\sum_{t=1}^T e_t^d \le (8L)^d(1+\log T)$.
\end{proof}

\subsection{A realizable lower bound for $q\leq d$}

We next show that when $q = d$ and then $q<d$, the online dimension grows logarithmically and polynomially with the horizon, respectively. 
Hence, realizable cumulative loss is unbounded when $q \leq d$.

\begin{lemma}[Critical lower bound for $q=d$]
	Fix $d\ge 1$ and $L\ge 1$ on $\cX=[-1,1]^d$ with loss $\ell_d(\hat y,y)=|\hat y-y|^d$.
	For every horizon $T\ge 2$, for any (possibly randomized) online learner producing $\hat y_t\in[0,1]$,
	there exists a realizable sequence $(x_t,y_t)_{t=1}^T$ by $\cH_L$ such that
	\[
	\sum_{t=1}^T |\hat y_t-y_t|^d \ \ge\ c_d\,L^d\,\log\!\Big(1+\frac{T}{L^d}\Big),
	\]
	for a constant $c_d>0$ depending only on $d$.
\end{lemma}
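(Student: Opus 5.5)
Against an arbitrary (possibly randomized) learner, I would use a \emph{fresh-point midpoint adversary} over a multiscale dyadic construction. Since the learner only sees the past, it suffices to build, for each $T$, a sequence of queries at points that are far (relative to the current scale) from all previous queries, where the adversary may choose $y_t\in\{a_t,a_t+\Delta_t\}$ (or $a_t\pm\Delta_t$) after seeing $\hat y_t$ (for randomized learners, choose the label maximizing the conditional expected loss). Either way the loss is at least $(\Delta_t/2)^d$. The task then reduces to a realizability bookkeeping problem: show there is a schedule of points and gaps $\Delta_t$ such that every label choice is consistent with some $L$-Lipschitz $[0,1]$-valued function, and such that $\sum_t \Delta_t^d \gtrsim_d L^d\log(1+T/L^d)$.

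\textbf{Construction by levels.} For $j=j_0,j_0+1,\dots$ let $G_j$ be the centers of the dyadic grid of $[-1,1]^d$ with side $2^{1-j}$, so $|G_j|=2^{jd}$. Level $j$ queries all points of $G_j$ that are not already in earlier levels (a constant fraction, $\gtrsim 2^{jd}$ of them). Each such point is at $\|\cdot\|_\infty$-distance at least $r_j\asymp 2^{-j}$ from all previously queried points. I would set the gap to $\Delta_j:=c L 2^{-j}$ with $c$ small, and begin at $j_0$ with $L2^{-j_0}\asymp 1$ (i.e.\ $2^{j_0d}\asymp L^d$) so that the gaps fit in $[0,1]$. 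The labels are built incrementally: the value at a new level-$j$ point $x$ is $m(x)\pm \Delta_j/2$, where $m$ is the McShane extension of the previously fixed labels, $m(x)=\min_s (y_s+L\|x-x_s\|_\infty)$ (or a midpoint of the upper and lower extensions). Let $t_0$ be the number of queries made before level $j$.

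\textbf{Realizability (the main obstacle).} Perturbing $m$ by $\pm\Delta_j/2$ at points separated by $r_j$ from the old points and from each other must keep the data $L$-Lipschitz and inside $[0,1]$. I would maintain the invariant that the data set is $(L/2)$-Lipschitz plus a geometric slack, i.e.\ Lipschitz with constant $L(1-\sum_{i<j}2^{-(i-j_0)-2})\ge L/2$. Under this invariant, a bump of height $\Delta_j/2 = (c/2) L 2^{-j}$ at distance $\ge r_j$ changes slopes by at most $\Delta_j/r_j$, and choosing $c$ small keeps everything within $L$. The delicate part is that perturbations add up across levels, because slopes between a level-$j$ point and a level-$i$ point with $i<j$ accumulate. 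Since the heights decay geometrically ($\sum_j \Delta_j\lesssim L2^{-j_0}$), the accumulation should be controlled, and staying in $[0,1]$ follows the same way if every label is centered around $1/2$ and $c$ is small. At the end, McShane extension together with clipping to $[0,1]$ yields $h^\star\in\cH_L$.

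\textbf{Counting.} Level $j$ contributes $\gtrsim 2^{jd}\,(cL2^{-j})^d/2^d = c_d L^d$. Running levels $j_0,\dots,J$ uses $\sum_j 2^{jd}\asymp 2^{Jd}$ rounds, so with $T$ rounds we get $J-j_0\gtrsim \frac1d\log_2(T/L^d)$ full levels. The total loss is therefore $\gtrsim_d L^d\log(T/L^d)$. For $T\lesssim L^d$, a single partial level already gives loss $\gtrsim T\cdot\Theta(1)\gtrsim L^d\log(1+T/L^d)$ after adjusting constants, and combining the two regimes gives the stated $c_dL^d\log(1+T/L^d)$. Any leftover rounds can be padded with repeated queries at zero loss.
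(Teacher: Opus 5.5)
There is a genuine gap, and it sits exactly at the realizability step you flag. Your reduction to a fresh-point adversary and the level counting are fine. But the quantity you must control is the slope perturbation $\Delta_j/r_j\asymp cL$, and it does not decay at all: the geometric decay of the heights $\Delta_j$ is exactly cancelled by the geometric decay of the distances $r_j$. So each level uses up a constant fraction of the Lipschitz budget, and the invariant with slack $L(1-\sum_{i<j}2^{-(i-j_0)-2})$ cannot be maintained.

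For instance, in $d=1$ with dyadic midpoint insertions, put the label $m\pm\Delta/2$ at the midpoint of an interval of half-length $h$ and slope $\sigma$. This produces two subintervals with slopes $\sigma+\Delta/(2h)$ and $\sigma-\Delta/(2h)$, whichever sign is chosen. Following the steeper child, after $J$ levels some interval has slope of order $cLJ$, independently of the learner. The envelope width at the midpoint of such an interval is $2h(L-|\sigma|)$. So once $J\gtrsim 1/c$, both candidates $m\pm\Delta_j/2$ lie outside the Lipschitz envelope and the sequence is no longer realizable. Your scheme therefore supports only $O(1/c)$ levels and yields $\Omega(L^d)$, not the logarithm. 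If you instead shrink the gaps so that the accumulated slope $\sum_j\Delta_j/r_j$ stays $\lesssim L$, the per-level contributions $\asymp(\Delta_j/r_j)^d$ become summable.

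This cannot be repaired, because for $d=1$ the claimed rate is false. Let the learner predict the linear interpolation of the two neighbouring observed points, or the label of the nearest observed point when $x_t$ lies outside their hull. Write $e_t:=|\hat y_t-y_t|$. If $x_t$ splits an interval $I$ at relative position $\alpha$, then $e_t^2=\alpha(1-\alpha)|I|\,\Delta E_t\le r_t\,\Delta E_t$. Here $E=\sum_I|I|\sigma_I^2\le 2L^2$ is the non-decreasing Dirichlet energy of the piecewise-linear interpolant, and $r_t$ is the distance from $x_t$ to the nearest earlier point. We have $\sum_t r_t=O(1+\log T)$, since the points of rounds with $r_t>r$ are $r$-separated. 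Rounds outside the hull cost at most $L$ times the amount by which they extend it, so $O(L)$ in total. Cauchy--Schwarz then gives $\sum_t e_t=O(L\sqrt{1+\log T})$, which contradicts $c_1L\log(1+T/L)$ for large $T$.

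The paper's own proof has the same defect in another form. It sets $y_t=v(\mathrm{parent}(Q))\pm\delta_j$ and claims that two centers whose cubes separate at level $m+1$ are at distance at least $a_{m+1}$. That fails for fine cubes on either side of a coarse boundary. Already for $d=L=1$, a learner predicting $3/4,1/4,3/8,5/8$ at $-1/2,1/2,-1/4,1/4$ forces the labels $1/4,3/4,1/8,7/8$. Between $-1/4$ and $1/4$ this is a slope of $3/2>L$.
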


\begin{proof}
	For $j\ge 0$ let $a_j:=2^{-j}/L$ and let $\mathcal Q_j$ be the axis-aligned grid partition of $[-1,1]^d$
	into cubes of side $a_j$ (intersected with $[-1,1]^d$), so that $\mathcal Q_{j+1}$ refines $\mathcal Q_j$.
	Let $M_j:=\lfloor 2/a_j\rfloor^d$ and let $x_{j,1},\dots,x_{j,M_j}$ be the corresponding cube centers.
	Set $\delta_j:=2^{-j-2}$, so $\sum_{j\ge 0}\delta_j\le 1/2$.
	
	\paragraph{Adversary.}
	Query points level-by-level: $x_{0,1},\dots,x_{0,M_0}$, then $x_{1,1},\dots,x_{1,M_1}$, etc., stopping after $T$ rounds
	(the last level may be partial).
	Introduce a root cube $Q_{-1}:=[-1,1]^d$ and define $v(Q_{-1})=1/2$.
	Whenever the adversary queries a center $x_t$ belonging to a cube $Q\in\mathcal Q_j$ not queried before,
	after seeing $\hat y_t$ define
	\[
	y_t:=v(Q):=v(\mathrm{parent}(Q))+s(Q)\delta_j,\qquad s(Q)\in\{\pm1\},
	\]
	where $\mathrm{parent}(Q)$ is the unique cube in $\mathcal Q_{j-1}$ containing $Q$ (and $\mathrm{parent}(Q):=Q_{-1}$ if $j=0$).
	Choose the sign $s(Q)$ so that $|\hat y_t-y_t|\ge \delta_j/2$ (possible since the two candidates are $2\delta_j$ apart).
	Also $y_t\in[0,1]$ because $v(Q_{-1})=1/2$ and the total variation along any chain is at most $\sum_j \delta_j\le 1/2$.
	
	\paragraph{Realizability.}
	Let $S_T:=\{x_t:t\le T\}$ and define $y(x_t):=y_t$.
	If $x,x'\in S_T$ have least common ancestor at level $m$, then their increments can differ by at most $2\delta_k$
	per level $k>m$, hence
	\[
	|y(x)-y(x')|\le 2\sum_{k>m}\delta_k = 2^{-m-1}.
	\]
	Since they split at level $m+1$, their centers satisfy $\|x-x'\|_\infty\ge a_{m+1}=2^{-(m+1)}/L$, so
	$$|y(x)-y(x')|\le 2^{-m-1}=L a_{m+1}\le L\|x-x'\|_\infty.$$
	Thus $y$ is $L$-Lipschitz on $S_T$, and by \cite{mcshane1934} there is an $L$-Lipschitz extension $\tilde f$ to $\cX$.
	Clipping $f^\star:=\clip_{[0,1]}(\tilde f)$ keeps the same values on $S_T$ and preserves $L$-Lipschitzness,
	so $f^\star\in\cH_L$ realizes the sequence.
	
	\paragraph{Loss lower bound.}
	Whenever a fresh level-$j$ cube is queried we ensured $|\hat y_t-y_t|\ge \delta_j/2$, hence
	$|\hat y_t-y_t|^d\ge (\delta_j/2)^d = 2^{-d(j+3)}$.
	Let $J$ be such that the first $J$ levels contribute at least one full level, i.e.
	\[
	N_J:=\sum_{j=0}^{J-1} M_j \ \le\ T \ <\ N_{J+1}:=\sum_{j=0}^{J} M_j .
	\]
	Then the $N_J$ rounds of fully completed levels yield
	\[
	\sum_{t=1}^T |\hat y_t-y_t|^d \ \ge\ \sum_{j=0}^{J-1} M_j\,2^{-d(j+3)}.
	\]
	Using $\lfloor 2/a_j\rfloor \ge 2/a_j-1=2^{j+1}L-1\ge 2^jL$ (since $L\ge 1$), we have $M_j\ge (2^jL)^d$, hence
	each full level contributes at least $2^{-3d}L^d$, and therefore
	\begin{equation}\label{eq:crit-lb-levels}
		\sum_{t=1}^T |\hat y_t-y_t|^d \ \ge\ 2^{-3d}L^d\,J.
	\end{equation}
	
	Finally, relate $J$ and $T$.
	If $J=0$, then all $T$ rounds are at level $0$ and $$\sum_{t=1}^T|\hat y_t-y_t|^d\ge T(\delta_0/2)^d=2^{-3d}T
	\ge 2^{-3d}L^d\log(1+T/L^d)$$ using $\log(1+u)\le u$.
	Assume $J\ge 1$. Since $T<N_{J+1}$ and $M_j\le (2/a_j)^d=(2^{j+1}L)^d$,
	\[
	T \ <\ \sum_{j=0}^{J} (2^{j+1}L)^d
	\ =\ 2^d L^d\,\frac{2^{d(J+1)}-1}{2^d-1}
	\ \le\ C_d\,L^d\,2^{d(J+1)}
	\]
	for $C_d:=\frac{2^d}{2^d-1}$. Hence
	\[
	\log\!\Big(1+\frac{T}{L^d}\Big)\ \le\ \log(1+C_d)+d(J+1)\log 2 \ \le\ K_d\,J,
	\]
	where $K_d:=\log(1+C_d)+2d\log 2$ (using $J+1\le 2J$ for $J\ge 1$).
	Combining with \eqref{eq:crit-lb-levels} gives the claim with $c_d:=2^{-3d}/K_d$.
\end{proof}

\begin{lemma}\label{lem:separated-set}
	For every integer $T\ge 1$, there exists a set $\{x_1,\dots,x_T\}\subseteq [-1,1]^d$ such that
	$\|x_t-x_s\|_\infty\ge 2T^{-1/d}$ for all $t\neq s$.
\end{lemma}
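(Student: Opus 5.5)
We will use a grid construction. Let $m:=\lceil T^{1/d}\rceil$, so that $m^d\ge T$, and first handle the trivial case separately. If $T=1$, a single point suffices, since the condition is vacuous.

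For $T\ge 2$, consider the $m^d$ grid points whose coordinates are taken from the one-dimensional set
\[
G_m:=\Big\{-1+\tfrac{2i}{m-1}:\ i=0,1,\dots,m-1\Big\}\subseteq[-1,1],
\]
which is well defined whenever $m\ge 2$. Since $T\ge 2$ gives $T^{1/d}>1$, we have $m\ge 2$. Distinct points of $G_m^d$ differ in at least one coordinate by a nonzero multiple of $2/(m-1)$. Hence their $\|\cdot\|_\infty$ distance is at least $2/(m-1)$.

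The key step is the comparison $2/(m-1)\ge 2T^{-1/d}$, which is equivalent to $m-1\le T^{1/d}$. This holds because $m=\lceil T^{1/d}\rceil< T^{1/d}+1$. Since $|G_m^d|=m^d\ge T$, we can choose any $T$ distinct points from $G_m^d$ as $x_1,\dots,x_T$, and every pair is at $\|\cdot\|_\infty$ distance at least $2T^{-1/d}$.

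The only delicate point is boundary bookkeeping. We use $m-1$ gaps rather than $m$ cells, because centering points inside cells of side $2/m$ would give separation only $2/m$, which can fall below $2T^{-1/d}$ when $T$ is not a perfect $d$-th power. Placing the grid with endpoints at $\pm1$ avoids this loss.
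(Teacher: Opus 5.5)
Your proof is correct and is essentially the paper's argument: a grid in $[-1,1]^d$ with endpoints at $\pm1$ and at most $T^{1/d}$ gaps per coordinate, so the spacing is at least $2T^{-1/d}$ while the grid has at least $T$ points. The only difference is the parameterization: the paper takes $\lfloor T^{1/d}\rfloor\ge 1$ gaps, which needs no separate $T=1$ case, and this coincides with your $\lceil T^{1/d}\rceil-1$ gaps unless $T$ is a perfect $d$-th power.
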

\begin{proof}
	Let $m:=\lfloor T^{1/d}\rfloor$ and consider the grid
	\(
	G:=\{-1,-1+2/m,\dots,1\}^d.
	\)
	Then $|G|=(m+1)^d\ge T$, and distinct points in $G$ differ by at least $2/m$ in $\|\cdot\|_\infty$.
	Since $m\le T^{1/d}$, we have $2/m\ge 2T^{-1/d}$.
	Then, any $T$ distinct points from $G$ satisfy the claim.
\end{proof}

\begin{theorem}[Unbounded online dimension for $q<d$]\label{thm:qlt-d}
	Fix $d\ge 1$, $L\ge 1$, and $q<d$.
	Then for every horizon $T\ge (2L)^d$,
	\[
	\mathbb D_{\mathrm{onl}}(\cH_L)\ \ge\ (2L)^q\,T^{1-q/d}.
	\]
\end{theorem}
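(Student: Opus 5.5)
The plan is to exhibit, for each horizon $T\ge (2L)^d$, an explicit realizable scaled Littlestone tree of depth $T$ in which every branch has total gap exactly $(2L)^q T^{1-q/d}$. By Definition~\ref{def:online-dim}, $\mathbb D_{\mathrm{onl}}(\cH_L)$ is a supremum over realizable trees of the minimum branch sum, so this single tree gives the claimed lower bound.

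\textbf{Construction.} Take the $T$ points $x_1,\dots,x_T$ from Lemma~\ref{lem:separated-set}, which satisfy $\|x_t-x_s\|_\infty\ge 2T^{-1/d}$ for $t\ne s$. Set $\delta:=2L\,T^{-1/d}$; the assumption $T\ge (2L)^d$ is exactly what gives $\delta\le 1$. Let the tree query the same instance $x_{t+1}$ at every node of depth $t$, regardless of the path. Label the two outgoing edges of every node by $s_{u,0}=0$ and $s_{u,1}=\delta$. Then every gap equals $\gamma_u=\ell_q(0,\delta)=\delta^q$.

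\textbf{Realizability.} Fix a branch $b\in\{0,1\}^T$ and define $y(x_t):=\delta\, b_t$ on the finite set $S=\{x_1,\dots,x_T\}$. For distinct $t,s$ we have $|y(x_t)-y(x_s)|\le \delta = L\cdot 2T^{-1/d}\le L\|x_t-x_s\|_\infty$, so $y$ is $L$-Lipschitz on $S$ (with respect to $\|\cdot\|_\infty$). By McShane's extension theorem \cite{mcshane1934}, $y$ extends to an $L$-Lipschitz function $\tilde f$ on $[-1,1]^d$. The clipped function $f:=\clip_{[0,1]}(\tilde f)$ is still $L$-Lipschitz, agrees with $y$ on $S$ because $y$ already takes values in $\{0,\delta\}\subseteq[0,1]$, and hence lies in $\cH_L$. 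Such an $f$ realizes every prefix of the branch, so the tree is realizable.

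\textbf{Conclusion.} Every branch has total gap $\sum_{t=0}^{T-1}\delta^q = T(2L)^qT^{-q/d}=(2L)^q T^{1-q/d}$, so the minimum over branches equals this value, and hence $\mathbb D_{\mathrm{onl}}(\cH_L)\ge (2L)^qT^{1-q/d}$. Since $q<d$, the exponent $1-q/d$ is positive, so letting $T\to\infty$ gives $\mathbb D_{\mathrm{onl}}(\cH_L)=\infty$.

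The only delicate point is matching the constants: the separation $2T^{-1/d}$ from Lemma~\ref{lem:separated-set} must exactly absorb the jump $\delta=2LT^{-1/d}$ under the Lipschitz constraint, while $\delta\le1$ keeps the labels inside $\cY=[0,1]$. With these choices, everything else is routine.
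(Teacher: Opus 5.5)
Your proposal is correct and matches the paper's proof essentially step for step. You use the same $2T^{-1/d}$-separated grid from Lemma~\ref{lem:separated-set} and the same path-independent depth-$T$ tree with edge labels $0$ and $\Delta=2LT^{-1/d}\le 1$. Realizability comes from McShane extension plus clipping, and every branch has total gap $T\Delta^q=(2L)^qT^{1-q/d}$.
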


Consequently, the minimax realizable cumulative loss for $\cH_L$ under $\ell_q$ is unbounded as $T\to\infty$
(and in fact grows at least on the order of $L^qT^{1-q/d}$), by \cite{attias2023optimal}.

\begin{proof}
	Fix $T\ge (2L)^d$ and let $x_1,\dots,x_T$ be as in Lemma~\ref{lem:separated-set}, so
	\(
	\|x_t-x_s\|_\infty\ge r:=2T^{-1/d}
	\)
	for $t\neq s$.
	Define $\Delta:=Lr=2LT^{-1/d}\le 1$ (the last inequality uses $T\ge (2L)^d$).
	
	We build a scaled Littlestone tree of depth $T$ as follows.
	At depth $t-1$ we query $x_t$, and the two outgoing edges are labeled by $s_{t,0}:=0$ and $s_{t,1}:=\Delta$.
	Thus every internal node has gap
	\(
	\gamma=\ell_q(0,\Delta)=\Delta^q.
	\)
	
	It remains to verify realizability of the tree: for every branch $b\in\{0,1\}^T$,
	define values on the finite set $\{x_1,\dots,x_T\}$ by $f_b(x_t):=b_t\Delta$.
	For any $t\neq s$,
	\[
	|f_b(x_t)-f_b(x_s)|\ \le\ \Delta\ =\ Lr\ \le\ L\|x_t-x_s\|_\infty,
	\]
	By the McShane extension theorem \cite{mcshane1934}, $f_b$ extends to an $L$-Lipschitz function
	$\tilde f_b:\cX\to\R$ satisfying $\tilde f_b(x_t)=f_b(x_t)=b_t\Delta$ for all $t$.
	Define $\hat f_b:\cX\to[0,1]$ by $\hat f_b(x):=\min\{1,\max\{0,\tilde f_b(x)\}\}$.
	Since clipping is $1$-Lipschitz, $\hat f_b$ is still $L$-Lipschitz, hence $\hat f_b\in\cH_L$,
	and $\hat f_b(x_t)=b_t\Delta$ for all $t$.
	Thus the tree is realizable by $\cH_L$.
	Along every branch, the sum of gaps equals $T\Delta^q$, hence
	\[
	\inf_{y\in\cP(\tree)}\sum_{i}\gamma_{y_i}\ =\ T\Delta^q\ =\ T(Lr)^q\ =\ T\,(2LT^{-1/d})^q\ =\ (2L)^q\,T^{1-q/d}.
	\]
	Taking the supremum over trees in the definition of $\mathbb D_{\mathrm{onl}}(\cH_L)$ yields the stated lower bound.
	Finally, \cite{attias2023optimal} converts this lower bound on $\mathbb D_{\mathrm{onl}}$ into a lower bound on
	minimax realizable cumulative loss (up to universal constants depending only on $c$, hence on $q$).
\end{proof}

\begin{corollary}[An $\Omega(L^d)$ lower bound (hence unavoidable for $q>d$)]
	\label{cor:ld-lb}
	Fix $d\ge 1$, $L\ge 1$, and $q\ge 1$. Then
	\[
	\mathbb D_{\mathrm{onl}}(\cH_L)\ \ge\ (2L)^d .
	\]
	Consequently, the minimax realizable cumulative loss under $\ell_q$ is $\Omega(L^d)$
	(up to universal constants depending only on $q$), by \cite{attias2023optimal}.
\end{corollary}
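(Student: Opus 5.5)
We need $\mathbb D_{\mathrm{onl}}(\cH_L)\ge (2L)^d$ for every $q\ge 1$. The plan is to build an explicit realizable scaled Littlestone tree, in the spirit of the proof of Theorem~\ref{thm:qlt-d}, but with a constant gap rather than one that shrinks with $T$. The natural choice is to use $T=(2L)^d$ points with maximal gap $1$, so that $\ell_q(0,1)=1$ regardless of $q$.

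\emph{Step 1 (point set).} Assume first that $2L$ is an integer, and set $m:=2L$. Take the grid $G$ of $m^d$ cell centers of the partition of $[-1,1]^d$ into cubes of side $2/m=1/L$. Distinct points of $G$ satisfy $\|x-x'\|_\infty\ge 1/L$, and $|G|=(2L)^d$.

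If $2L$ is not an integer, set $m:=\lfloor 2L\rfloor$ instead. One option is to accept the slightly weaker count $m^d\ge L^d$, which still gives $\Omega(L^d)$. Alternatively, one can keep spacing $1/L$ and use the grid $\{-1,-1+1/L,\dots\}^d$, which has $(\lfloor 2L\rfloor+1)^d\ge (2L)^d$ points. In the latter case we simply use the first $\lceil (2L)^d\rceil$ of them, or all of them.

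\emph{Step 2 (tree).} Enumerate the points of $G$ as $x_1,\dots,x_T$ with $T=|G|\ge(2L)^d$. At depth $t-1$, every node queries $x_t$ and has edge labels $s_{t,0}=0$ and $s_{t,1}=1$. Each gap is then $\ell_q(0,1)=1$, so every branch has total gap $T\ge(2L)^d$.

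\emph{Step 3 (realizability).} For a branch $b\in\{0,1\}^T$, define $f_b(x_t)=b_t$ on $G$. For $t\ne s$ we have $|f_b(x_t)-f_b(x_s)|\le 1\le L\|x_t-x_s\|_\infty$, since the spacing is at least $1/L$. So $f_b$ is $L$-Lipschitz on $G$.

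By McShane's extension theorem \cite{mcshane1934}, $f_b$ extends to an $L$-Lipschitz function on $\cX$. Clipping the extension to $[0,1]$ preserves $L$-Lipschitzness and does not change the values on $G$, which already lie in $\{0,1\}$. The clipped function is therefore in $\cH_L$ and realizes every prefix of the branch. It follows that $\mathbb D_{\mathrm{onl}}(\cH_L)\ge T\ge (2L)^d$.

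\emph{Step 4 (consequence).} Finally, Theorem~\ref{thm:SLD} turns this into an $\Omega(L^d)$ lower bound on the minimax realizable cumulative loss.

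The only delicate point is the grid count when $2L\notin\mathbb Z$. The spacing-$1/L$ grid that includes both endpoints settles it, since it fits at least $\lfloor 2L\rfloor+1\ge 2L$ points per axis.
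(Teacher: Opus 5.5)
Your proof is correct and follows essentially the same route as the paper. Both build a depth-$T$ scaled Littlestone tree on a $1/L$-separated grid with edge labels $0$ and $1$, so every gap is $\ell_q(0,1)=1$, and realize each branch by McShane extension followed by clipping; the paper obtains the grid from Lemma~\ref{lem:separated-set} with $T=(2L)^d$. Your explicit handling of non-integer $2L$, via the spacing-$1/L$ grid with $\lfloor 2L\rfloor+1\ge 2L$ points per axis, is slightly more careful than the paper, which implicitly treats $(2L)^d$ as an integer.
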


\begin{proof}
	Let $T:=(2L)^d$. By Lemma~\ref{lem:separated-set} there exist $x_1,\dots,x_T\in[-1,1]^d$ with
	$\|x_t-x_s\|_\infty\ge r:=2T^{-1/d}=1/L$ for all $t\neq s$. Set $\Delta:=Lr=1$.
	
	Consider the depth-$T$ scaled Littlestone tree that queries $x_t$ at depth $t-1$ and labels the two
	outgoing edges by $0$ and $\Delta$. Each node has gap $\gamma=\ell_q(0,\Delta)=1$.
	For any branch $b\in\{0,1\}^T$, define $f_b(x_t):=b_t\Delta\in\{0,1\}$. Then for $t\neq s$,
	$|f_b(x_t)-f_b(x_s)|\le 1=\Delta\le L\|x_t-x_s\|_\infty$, so $f_b$ extends (by McShane and clipping)
	to some $h_b\in\cH_L$ realizing the branch. Hence the tree is realizable and every root-to-leaf path
	has total gap $T$, so $\mathbb D_{\mathrm{onl}}(\cH_L)\ge T=(2L)^d$.
\end{proof}

\subsection{Compact and Totally Bounded Classes}
\label{app:total}

We remind the reader that for the class $\cH_L$ of $L$-Lipschitz functions, the entropy bound becomes borderline at $q=d$ and we show that the total loss grows logarithmically with the horizon $T$, where for $q<d$ the total loss grows linearly with $T$.
The above regimes between $q$ and $d$ give a good example to the fact that bounded loss is metric-dependent. Therefore, we ask an abstract question: under what conditions does $\lim_{t \rightarrow \infty} \ell(\hat y_t,y_t) = 0$ hold for every reasonable loss $\ell$ (not necessarily a pseudo-metric)?
In the agnostic online setting under absolute loss, online learnability admits a general characterization via finiteness of the \emph{sequential} $\eps$-fat shattering dimension for every $\eps>0$ \cite{rakhlin2015online}.
However, if $\cH$ is totally bounded with respect to $d_\ell(f,g)=\sup_{x\in\cX}\ell(f(x),g(x))$ (i.e., for every $\eps>0$ there exists a finite $\eps$-net under $d_\ell$),
then we obtain a simpler \emph{proper} and deterministic construction: for each fixed $\eps$ we take a finite $\eps$-net and run an elimination procedure, yielding at most $N(\cH,d_\ell,\eps)$ rounds with $\ell(\hat y_t,y_t)>\eps$; applying this for $\eps\downarrow 0$ implies $\ell(\hat y_t,y_t)\to 0$.

\begin{theorem}
	\label{thm:total}
	Let $\cY$ be any label space, let $\ell:\cY^2 \to \mathbb{R}_{\ge 0}$ be any loss function, and let $\cH \subseteq \cY^{\cX}$ be a function class.
	Assume that $\cH$ is \emph{totally bounded} with respect to
	$
	d_\ell(f,g)=\sup_{x \in \cX} \ell\bigl(f(x),g(x)\bigr).
	$
	Then, there is a deterministic proper algorithm that for any realizable sequence $(x_t,y_t)_{t \geq 1}$ returns $(\hat y_t)_{t \geq 1}$,
	which for every $\eps > 0$ satisfies
	$
	\left|\left\{t \mid \ell(\hat y_t,y_t) > \eps \right\}\right|\leq N(\cH,d_{\ell},\eps)-1;
	$
	hence, it follows that $\lim_{t \rightarrow \infty} \ell(\hat y_t,y_t) = 0$.
\end{theorem}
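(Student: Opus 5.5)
The plan is an $\eps$-net elimination (halving-free ``consistency'') scheme. First fix $\eps>0$ and a finite $\eps$-cover $S_\eps\subseteq\cH$ of $\cH$ under $d_\ell$ with $|S_\eps|=N(\cH,d_\ell,\eps)$, which exists by total boundedness. Orient the cover so that for the realizing $h^\star$ there is $s^\star\in S_\eps$ with $\sup_x \ell(s^\star(x),h^\star(x))\le\eps$. This matches the order of arguments in the loss $\ell(\hat y_t,y_t)$; if $\ell$ is not symmetric, define the cover with the arguments in this order. The algorithm maintains a set of surviving centers $V_t\subseteq S_\eps$, with $V_1=S_\eps$. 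At round $t$ it predicts $\hat y_t:=s_t(x_t)$ for a fixed (say, lexicographically first) $s_t\in V_t$; since $s_t\in\cH$, the learner is proper and deterministic. After seeing $y_t$ it sets $V_{t+1}:=\{s\in V_t:\ \ell(s(x_t),y_t)\le\eps\}$.

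The analysis has two steps. (i) Because $y_t=h^\star(x_t)$, we have $\ell(s^\star(x_t),y_t)\le d_\ell(s^\star,h^\star)\le\eps$ in every round, so $s^\star$ is never removed and $V_t\neq\emptyset$ for all $t$. (ii) Whenever $\ell(\hat y_t,y_t)>\eps$, the currently used center $s_t$ is removed, so each such round strictly shrinks $V_t$. Since $s^\star$ always survives, there are at most $|S_\eps|-1=N(\cH,d_\ell,\eps)-1$ such rounds. This is a pure counting argument and needs no triangle inequality, which is why the theorem allows an arbitrary loss.

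The main obstacle is making a \emph{single} algorithm work for all $\eps$ at once, which is needed both for the ``for every $\eps$'' quantifier and for the conclusion $\ell(\hat y_t,y_t)\to 0$. I would take scales $\eps_k\downarrow 0$ and maintain all elimination sets $V^{(k)}_t$ in parallel, each updated on the full history. At time $t$ the learner predicts with a center that survives at the finest level $k$ for which the levels $1,\dots,k$ are jointly ``compatible'', meaning that some center of level $k$ is still consistent with every coarser threshold on all past data. The point to verify is that each round with loss $>\eps_j$ kills a distinct level-$j$ center, so level $j$ contributes only finitely many bad rounds. The level used must also eventually increase past every $k$, since each level stabilizes after finitely many eliminations. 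If this joint bookkeeping proves too delicate without a triangle inequality, I would fall back on the following: the per-$\eps$ algorithm above already gives the stated count bound for each fixed $\eps$. For the limit, I would then use the phase construction in which level $k$ is played until its survivor set stops shrinking for a prescribed block of rounds. This still yields finitely many $\eps$-bad rounds for each $\eps$, and hence $\ell(\hat y_t,y_t)\to 0$.
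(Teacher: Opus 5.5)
Your per-$\eps$ elimination argument is correct and is the paper's proof. The paper discards only the representative it just used, whereas you discard every center that is $\eps$-inconsistent with the new example; this changes nothing. Your remark about orienting the cover when $\ell$ is asymmetric is a fair tightening.

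The paper does no more than you on the uniform part. It obtains a single algorithm for all $\eps$, and $\ell(\hat y_t,y_t)\to 0$, only from the sentence ``applying the argument for $\eps=1,1/2,1/3,\dots$''. So what it actually proves is the statement with an $\eps$-dependent algorithm. You correctly identified this as the crux, but neither of your repairs works.
\begin{itemize}
\item In the joint scheme, the ``compatibility'' condition is automatic: a level-$k$ survivor is consistent at every coarser threshold, and $V^{(k)}_t\neq\emptyset$. So it does not select a level. Moreover, when you follow a level-$k$ center with $k>j$, a round with loss $>\eps_j$ removes that center from $V^{(k)}$ only. Without a triangle inequality nothing forces a level-$j$ center out, so the charging step fails.
\item In the phase scheme, fix $\eps$. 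Each of the infinitely many phases $k$ with $\eps_k\le\eps$ may contribute up to $N(\cH,d_\ell,\eps_k)-1$ rounds with loss $>\eps$. So you get neither the bound $N(\cH,d_\ell,\eps)-1$ nor even finiteness.
\end{itemize}

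In fact the uniform part is false for a general loss, so no repair exists. Here is a counterexample.
\begin{itemize}
\item Fix $\eps_k\downarrow 0$ and let $\cX=\{x_{k,j}:k,j\ge 1\}$. Level $k$ has labels $a_k,b_k,c_k,\mu_k$ and $\nu^{a}_{k,i},\nu^{b}_{k,i}$ for $i>k$.
\item Let $\ell$ be symmetric and zero on the diagonal, with $\ell(\mu_k,y)=\eps_k$ for every other level-$k$ label $y$. Set $\ell(\nu^{a}_{k,i},a_k)=\ell(\nu^{a}_{k,i},c_k)=\eps_i$, and symmetrically for $b$. Set $\ell(\nu^{a}_{k,i},\nu^{a}_{k,i'})=\eps_{\min(i,i')}$, and the same for $b$. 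Put $\ell=1$ on every other pair of distinct labels.
\item Let $\cH$ contain, for every $(n_k)$ with $n_k\in\N\cup\{\infty\}$ and every $(\sigma_k)$ with $\sigma_k\in\{a_k,b_k\}$, the function equal to $\sigma_k$ at $x_{k,n_k}$ and to $c_k$ at the other level-$k$ points.
\item Add the hedges $g_{K,\tau}$ for $\tau\in\{a,b\}^K$, equal to $\nu^{\tau_k}_{k,K+1}$ on levels $k\le K$ and to $\mu_k$ on levels $k>K$.
\end{itemize}
For any $\eps>0$, choose $K$ with $\eps_{K+1}\le\eps$. The finitely many $g_{K',\tau}$ with $K'\le K$ then form an $\eps$-cover, so $\cH$ is totally bounded.

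Now play against any learner. On level $k$, query $x_{k,1},x_{k,2},\dots$. If $\hat y_t=\mu_k$, answer $c_k$, giving loss $\eps_k$. Otherwise answer whichever of $a_k,b_k$ is at loss $1$ from $\hat y_t$ (one always is), and move to level $k+1$. The resulting sequence is realized by a function of the first kind. Either the loss equals $\eps_k$ from some time on, or it equals $1$ infinitely often.

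Hence no single algorithm, proper or not, achieves $\ell(\hat y_t,y_t)\to 0$, let alone the bound for all $\eps$ simultaneously. The correct content of the theorem, which both you and the paper establish, is the per-$\eps$ statement: for each fixed $\eps$ there is a deterministic proper algorithm with at most $N(\cH,d_\ell,\eps)-1$ rounds of loss $>\eps$.
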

\begin{proof}
	Note that $\lim_{t \rightarrow \infty} \ell(\hat y_t,y_t) = 0$ is equivalent to having for every $\eps>0$ only finitely many times $t$
	for which $\ell(\hat y_t,y_t) > \eps$. Fix some $\eps > 0$. Let $N = N(\cH,d_{\ell},\eps)$. Since $\cH$ is totally bounded, there exists a finite $\eps$-net
	$\{f_1,\dots,f_N\}\subseteq \cH$ with respect to $d_{\ell}$, i.e., for every $f\in\cH$ there is an $i$ such that
	$d_\ell(f,f_i)\le \eps$.
	Select such representatives $\{f_i\}_{i=1}^N$. Upon receiving $x_t$, pick any index $i$ not yet eliminated and predict
	$\hat{y_t}:=f_i(x_t)$; if $\ell(\hat y_t,y_t) > \eps$, then eliminate $f_i$ (and never use it again).
	
	Let $f^\star\in\cH$ realize the sequence, i.e., $y_t=f^\star(x_t)$ for all $t$, and let $i^\star$ satisfy $d_\ell(f^\star,f_{i^\star})\le \eps$.
	Then for all $t$,
	\[
	\ell\bigl(f_{i^\star}(x_t),y_t\bigr)
	=\ell\bigl(f_{i^\star}(x_t),f^\star(x_t)\bigr)
	\le d_\ell(f_{i^\star},f^\star)\le \eps,
	\]
	so $f_{i^\star}$ is never eliminated. Each time $\ell(\hat y_t,y_t)>\eps$ occurs, we eliminate one representative, hence this can happen
	at most $N-1$ times. Since this holds for every $\eps>0$, applying the argument for a sequence $\eps=1,1/2,1/3,\dots$ implies
	$\ell(\hat y_t,y_t)\to 0$.
\end{proof}

Thus, as a corollary for general Lipschitz functions, pointwise learning \emph{eventually} occurs for any continuous loss with $\ell(y,y)=0$.
In this following, we give a second proof for  \Cref{thm:total} using totally boundedness properties of Lipschitz functions. 


\begin{proof}[A Second Proof of Corollary~\ref{cor:env-pointwise-loss}]
	Fix $\eps>0$. Since $\ell$ is continuous on the compact set $[0,1]^2$, it is uniformly continuous; moreover, as $\ell(y,y)=0$ for all $y$,
	there exists $\delta=\delta(\eps)>0$ such that for all $a,b\in[0,1]$,
	\[
	|a-b|\le \delta \ \Longrightarrow\ \ell(a,b)\le \eps.
	\]
	Hence, for any $f,g\in\cH_L$, if $\|f-g\|_\infty\le \delta$ then
	\[
	d_\ell(f,g)=\sup_{x}\ell(f(x),g(x))\le \eps.
	\]
	On the other hand, $(\cH_L,\|\cdot\|_\infty)$ is totally bounded (indeed compact) by Arzel\`a--Ascoli, since $[-1,1]^d$ is compact and $\cH_L$
	is uniformly bounded and equicontinuous (uniformly $L$-Lipschitz). Therefore, there exists a finite $\delta$-net of $\cH_L$ in $\|\cdot\|_\infty$,
	which is an $\eps$-net in $d_\ell$. Thus $\cH_L$ is totally bounded w.r.t.\ $d_\ell$, and the theorem applies to give a deterministic proper algorithm
	with only finitely many rounds where $\ell(\hat y_t,y_t)>\eps$. Since this holds for every $\eps>0$, we conclude $\ell(\hat y_t,y_t)\to 0$.
\end{proof}

It is tempting to try characterizing {\em eventual learnability} (as in \Cref{thm:total}) using totally boundedness or compactness of the function class with respect to $d_{\ell}$. While \Cref{thm:total} shows that totally boundedness is a sufficient condition, there are function classes such as \[\cH = \{x \mapsto \mathbb{I}_{x = c} \mid c \in [0,1]\}\] that are easily learnable but not totally bounded (or compact). Hence, total boundedness is not a necessary condition for eventual learnability.

\section{Constant Loss for One ReLU}
In this section, we give an efficient online realizable algorithm for the class of functions 
\[\cH = \{x \mapsto \rel\left(w \cdot x\right) \mid w \in [-1,1]^d, ||w||_2 \leq 1\},\]
Consider the following algorithm. 
Set $w_1 = 0$. Define the learning strategy such that for each round $1 \leq t \leq T$ define
$w_{t+1} = w_t - \alpha_t \cdot x_t$. 
where $\alpha_t = \rel(w_t \cdot x_t) - \rel(w \cdot x_t)$ which can be computed at the end of round $t$ since $y_t = \rel(w \cdot x_t)$ is given to the algorithm. 
Let $L = \sum_{1 \leq t \leq T} \ell_t$ be the accumulated loss, where we use 
\[
\ell_t = \| \rel(w_t \cdot x_t) - \rel(w \cdot x_t)\|^2. 
\] 
We give the following result. 

\begin{theorem}
	$L \leq 1$. 
\end{theorem}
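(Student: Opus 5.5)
We run the classical potential argument for the GLMtron/Perceptron-type update, with potential $\Phi_t:=\|w_t-w\|_2^2$, where $w$ is the target weight vector realizing the sequence. Since $w_1=0$ and $\|w\|_2\le 1$, we have $\Phi_1=\|w\|_2^2\le 1$, and $\Phi_t\ge 0$ for all $t$. Throughout we use the standing assumption from the definition of the class that inputs satisfy $\|x_t\|_2\le 1$. The aim is to show that each round decreases the potential by at least the loss, $\Phi_t-\Phi_{t+1}\ge \ell_t$. Telescoping then gives $L=\sum_t\ell_t\le \Phi_1-\Phi_{T+1}\le 1$.

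Write $a:=w_t\cdot x_t$ and $b:=w\cdot x_t$, so that $\alpha_t=\rel(a)-\rel(b)$ and $\ell_t=\alpha_t^2$. Expanding the update $w_{t+1}=w_t-\alpha_t x_t$ gives
\[
\Phi_{t+1}=\Phi_t-2\alpha_t\,(w_t-w)\cdot x_t+\alpha_t^2\|x_t\|_2^2
=\Phi_t-2\alpha_t(a-b)+\alpha_t^2\|x_t\|_2^2 .
\]
The key one-dimensional fact is that $\rel$ is monotone and $1$-Lipschitz, so $(\rel(a)-\rel(b))(a-b)\ge(\rel(a)-\rel(b))^2$. Indeed, $\rel(a)-\rel(b)=\theta(a-b)$ for some $\theta\in[0,1]$, hence $\alpha_t(a-b)=\theta(a-b)^2\ge\theta^2(a-b)^2=\alpha_t^2$. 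Combining this with $\|x_t\|_2^2\le 1$ yields
\[
\Phi_t-\Phi_{t+1}\ge 2\alpha_t^2-\alpha_t^2=\alpha_t^2=\ell_t .
\]
This is exactly the per-round decrease we need.

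There is essentially no hard step; the point needing care is the monotone-Lipschitz inequality, which we handle by the three-case check or the slope argument above. We should also state explicitly that $\|x_t\|_2\le1$ is being used, since the class's domain $[-1,1]^d$ alone would give only $\|x_t\|_2^2\le d$. Finally, the bound holds for every $T$, so letting $T\to\infty$ gives a horizon-free total loss of at most $1$. Each update costs $O(d)$ time, which establishes \Cref{thm:intro_one_relu}.
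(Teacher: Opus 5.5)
Your proof is correct and follows essentially the same route as the paper: the potential $\|w_t-w\|_2^2$, the expansion of the update, the key inequality $\alpha_t(w_t-w)\cdot x_t\ge\alpha_t^2$, the bound $\|x_t\|_2\le 1$, and telescoping from $\|w\|_2^2\le 1$. The one difference is that you prove the key inequality with a single slope argument ($\rel(a)-\rel(b)=\theta(a-b)$ with $\theta\in[0,1]$). The paper instead uses a four-case sign analysis, and your version has the small bonus of working verbatim for any nondecreasing $1$-Lipschitz activation.
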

\begin{proof}
	Define the potential function $\phi_t = \| w_t - w\|^2$ for every $1 \leq t \leq T + 1$. Fix some $1 \leq t \leq T$; then, 
	\begin{align*}
		\label{eq:1}
		\phi_{t+1}
		&= \|w_{t+1} - w\|^2 \\
		&= \|w_t - \alpha_t \cdot x_t - w\|^2 \\
		&= \|(w_t - w) - \alpha_t \cdot x_t\|^2\\
		&= \|(w_t - w)\|^2 - 2 (w_t - w) \cdot (\alpha_t x_t) + \| \alpha_t x_t \|^2\\
		&= \phi_t - 2 (w_t - w) \cdot (\alpha_t x_t) + \| \alpha_t x_t \|^2\\
		&= \phi_t - 2 \alpha_t  (w_t - w) \cdot x_t + \alpha^2_t \| x_t \|^2\\
	\end{align*}
	
	\begin{claim}
		\label{claim:alpha}
		$-2 \alpha_t (w_t - w) \cdot x_t \leq -2 \alpha^2_t$
	\end{claim}
	\begin{proof}
		Consider two cases. 
		\begin{enumerate}
			\item $w \cdot x_t > 0$. Consider two subcases.
			\begin{itemize}
				\item $w_t \cdot x_t \geq 0$. Then, $\alpha_t = (w_t \cdot x_t - w \cdot x_t)$ implying $-2 \alpha_t (w_t - w) \cdot x_t = -2 \alpha^2_t$.
				\item $w_t \cdot x_t < 0$. Then, $\alpha_t = -w \cdot x_t$ which satisfies $\alpha_t < 0$. Therefore, 
				$$-2 \alpha_t (w_t - w) \cdot x_t = -2 \alpha_t w_t \cdot x_t + 2 \alpha_t w \cdot x_t  = -2 \alpha_t w_t \cdot x_t - 2 \alpha^2_t \leq -2 \alpha^2_t.$$
			\end{itemize}
			\item $w \cdot x_t \leq 0$. Then, consider two subcases.
			\begin{itemize}
				\item $w_t \cdot x_t \leq 0$. Thus $\alpha_t = 0$ and the inequality holds. 
				\item $w_t \cdot x_t > 0$. Then, since $w \cdot x_t \leq 0$ it holds that $\alpha_t = w_t \cdot x_t$. Thus, $\alpha_t > 0$ and
				$$-2 \alpha_t (w_t - w) \cdot x_t = -2 \alpha_t w_t \cdot x_t + 2 \alpha_t w \cdot x_t  = -2 \alpha^2_t + 2 \alpha_t w \cdot x_t \leq -2 \alpha^2_t.$$
				The inequality holds since $\alpha_t > 0$ and $w \cdot x_t \leq 0$. 
			\end{itemize}
		\end{enumerate}
		This gives the proof. 
	\end{proof}
	By Claim~\ref{claim:alpha} and the equation above it follows that: 
	$$\phi_{t+1} \leq \phi_t - 2 \alpha^2_t + \alpha^2_t \| x_t \|^2$$
	Observe that 
	\begin{align*}
		\ell_t = \| \rel(w_t \cdot x_t) - \rel(w \cdot x_t)\|^2 = \left(\rel(w_t \cdot x_t) - \rel(w \cdot x_t)\right)^2=\alpha^2_t.
	\end{align*}


	Thus, $\ell_t = \alpha^2_t$ and by combining the above 
	\begin{equation*}
		\phi_{t+1} \leq \phi_t - 2 \alpha^2_t + \alpha^2_t \| x_t \|^2 \leq \phi_t - 2 \alpha^2_t + \alpha^2_t = \phi_t - \alpha^2_t = \phi_t -{\ell_t}. 
	\end{equation*} The second inequality follows from the assumption $\|x_t\| \leq 1$. The above equation gives ${\ell_t} \leq \phi_t - \phi_{t+1}$. Therefore,
	$$\begin{aligned}
		L
		&= \sum_{1 \leq t \leq T} \ell_t \\
		&\leq \sum_{1 \leq t \leq T} \left(\phi_t - \phi_{t+1}\right) \\
		&= \left(\phi_1 - \phi_{2}\right) + \left(\phi_2 - \phi_{3}\right) + \ldots + \left(\phi_{T-1} - \phi_{T}\right) + \left(\phi_T - \phi_{T+1}\right) \\
		&= \phi_{1} - \phi_{T+1} \\
		&\leq \phi_{1}\\
		&= \|w\|^2\\
		& \leq 1.\\
	\end{aligned}$$
	The last inequality follows from the assumption $\|w\| \leq 1$. 
\end{proof}

\section{Polylogarithmic loss for bounded $k$-ReLUs}
\label{sec:polylog-l2-k-relu-general-domain}

In this section, we show that for any fixed constant $k \geq 1$ we can still avoid an $\Omega(d)$ loss, in the bounded-norm
realizable square loss setting, for the class $\cH_{k,d}$ of bounded $k$-ReLUs. Then, we generalize it to deep networks with bounded norms in each layer in Section~\ref{sec:deepL}.  

We remind the reader the definition of the class hypothesis $\cH_{k,d}$:
Let $\clip(z):=\max\{-1,\min\{1,z\}\}$.
Fix $d\ge 1$ and let
\[
\cX = \{x\in[-1,1]^d:\ \|x\|_2\le 1\}.
\]
Fix an integer $k\ge 2$ (treated as a constant).
Consider the hypothesis class $\cH=\cH_{k,d}\subseteq[-1,1]^{\cX}$ of clipped $k$-ReLU sums:
\[
\cH
\ :=\
\Big\{
h_{a,w^1,\ldots,w^k}(x)
:=\clip\left(\sum_{j=1}^k a_j\,\rel(w^j \cdot x)\right)
:\ a_j\in[-1,1],\ \|w^j\|_2\le 1\ \ \forall j\in[k]
\Big\}.
\]
Since $\|x\|_2\le 1$ and $\|w^j\|_2\le 1$, we have $\rel(w^j\cdot x)\in[0,1]$, and therefore $h(x)\in[-1,1]$.

\subsection{A Lower Bound}

Our algorithm incurs a quadratic dependency on $k$ and polylogarithmic dependency on $d$. We first show that this is unavoidable. 


\begin{proof}
It is enough to prove the claim for even $k$. If $k$ is odd, ignore one ReLU and apply the even case to $k-1$, which changes the lower bound only by a universal constant. Assume therefore that $k$ is even.

First assume that $d$ is a power of two. For each sign vector $s=(s_1,\ldots,s_d)\in\{\pm1\}^d$, let $v_s=s/\sqrt d$, and define
\[
h_s(x)=\clip\left(\frac{k}{2}|\langle v_s,x\rangle|\right).
\]
Then $h_s\in\cH_{k,d}$, since
\[
\frac{k}{2}|\langle v_s,x\rangle|
=
\sum_{\ell=1}^{k/2}
\left(
\rel(\langle v_s,x\rangle)+\rel(\langle -v_s,x\rangle)
\right).
\]
This uses exactly $k$ ReLUs, all input weights have norm $1$, and all output weights are $1$.

The adversary maintains a partition of $[d]=\{1,\ldots,d\}$ into dyadic intervals. Initially the partition consists of the singleton intervals $\{1\},\ldots,\{d\}$. At later stages, the adversary only merges two adjacent dyadic intervals of the same size. For a block $B\subseteq[d]$ and a sign vector $s\in\{\pm1\}^d$, write $s_B=(s_i)_{i\in B}\in\{\pm1\}^B$ for the restriction of $s$ to the coordinates in $B$. Each current block $B$ has a public reference pattern $r_B\in\{\pm1\}^B$. The invariant is that the surviving signs are exactly those $s\in\{\pm1\}^d$ such that, for every current block $B$, one has $s_B=r_B$ or $s_B=-r_B$. Initially, set $r_{\{i\}}(i)=1$ for every singleton block $\{i\}$, so all signs survive and the invariant holds. The reference patterns are determined by the past transcript and may be revealed to the learner; the only hidden information is whether $s_B=r_B$ or $s_B=-r_B$ on each current block $B$.

Consider a merge of two adjacent current blocks $B_0,B_1$, each of size $m/2$, and write $B=B_0\cup B_1$, so $|B|=m$. The adversary queries
\[
x_B
=
\frac1{\sqrt m}
\left(
\sum_{i\in B_0}r_{B_0}(i)e_i
+
\sum_{i\in B_1}r_{B_1}(i)e_i
\right),
\]
where $e_1,\ldots,e_d$ are the standard basis vectors. This is a valid input because
\[
\|x_B\|_2^2
=
\frac1m\left(|B_0|+|B_1|\right)
=
1.
\]

By the invariant, before the label is chosen, both orientations of each child block are feasible independently. If the two child blocks have the same orientation, meaning either $s_{B_0}=r_{B_0}$ and $s_{B_1}=r_{B_1}$, or $s_{B_0}=-r_{B_0}$ and $s_{B_1}=-r_{B_1}$, then
\[
|\langle v_s,x_B\rangle|
=
\frac{|B_0|+|B_1|}{\sqrt{dm}}
=
\sqrt{\frac md}.
\]
If the two child blocks have opposite orientations, meaning either $s_{B_0}=r_{B_0}$ and $s_{B_1}=-r_{B_1}$, or $s_{B_0}=-r_{B_0}$ and $s_{B_1}=r_{B_1}$, then
\[
|\langle v_s,x_B\rangle|
=
\frac{\bigl||B_0|-|B_1|\bigr|}{\sqrt{dm}}
=
0.
\]
Thus the two labels
\[
0
\qquad\text{and}\qquad
\alpha_m=\min\left\{\frac{k}{2}\sqrt{\frac md},1\right\}
\]
are both realizable by surviving hypotheses.

For every prediction $\widehat y$, one of the two labels $0,\alpha_m$ causes square loss at least $\alpha_m^2/4$. Otherwise both $|\widehat y|$ and $|\widehat y-\alpha_m|$ would be smaller than $\alpha_m/2$, contradicting $|\alpha_m|\le |\widehat y|+|\widehat y-\alpha_m|$. Hence, on every level with $m\le 4d/k^2$, no clipping occurs, and each merge forces loss at least $k^2m/(16d)$.

After the label is chosen, the adversary updates the merged block $B$. If it chooses the high label $\alpha_m$, it keeps exactly the surviving signs with the same orientation and defines the new reference pattern by
\[
r_B(i)=
\begin{cases}
r_{B_0}(i), & i\in B_0,\\
r_{B_1}(i), & i\in B_1.
\end{cases}
\]
If it chooses the zero label, it keeps exactly the surviving signs with opposite orientations and defines
\[
r_B(i)=
\begin{cases}
r_{B_0}(i), & i\in B_0,\\
-r_{B_1}(i), & i\in B_1.
\end{cases}
\]
In both cases, the surviving restrictions on the merged block $B$ are exactly $r_B$ and $-r_B$. All other blocks and reference patterns are unchanged. Therefore the invariant is preserved, and the surviving set remains nonempty.

The adversary performs all merges bottom-up: first it merges singleton blocks into blocks of size $2$, then blocks of size $2$ into blocks of size $4$, and so on. At a level whose merged block size is $m$, there are $d/m$ merges. Therefore every level with $m\le 4d/k^2$ contributes loss at least
\[
\frac dm\cdot\frac{k^2m}{16d}
=
\frac{k^2}{16}.
\]
The dyadic levels are $m=2,4,\ldots,d$. The condition $m\le 4d/k^2$ holds for $\Omega(\log(d/k^2))$ of these levels whenever $d/k^2\to\infty$. Thus the total forced loss is $\Omega(k^2\log(d/k^2))$. For fixed $k$, this is $\Omega(k^2\log d)$.

The construction uses exactly $d-1$ queries and is realizable. Indeed, along every path, the invariant keeps at least one sign vector alive. Choosing any final surviving $s$, the fixed hypothesis $h_s\in\cH_{k,d}$ agrees with all revealed labels on that path.

Now consider general $d$. Let $D=2^{\lfloor \log_2 d\rfloor}$. Run the same construction on the first $D$ coordinates, and set all other coordinates to zero in both the queries and the vectors $v_s$. Equivalently, use sign vectors $s\in\{\pm1\}^D$ and vectors
\[
v_s=\frac1{\sqrt D}(s_1,\ldots,s_D,0,\ldots,0)\in\mathbb R^d.
\]
The proof above gives a lower bound of $\Omega(k^2\log(D/k^2))$ using $D-1$ queries. Since $D\ge d/2$, this is $\Omega(k^2\log(d/k^2))$ whenever $d/k^2\to\infty$. Also $D-1\le d-1\le T$, so the construction fits the stated horizon.
\end{proof}

\begin{corollary}
The upper bound $O(k^2\log^4 d)$ for clipped bounded $k$-ReLU sums is tight in its dependence on $k$, up to polylogarithmic factors in $d$. Moreover, the dependence on $d$ cannot be removed: for every fixed $k\ge2$, the realizable online square-loss minimax value is at least $\Omega(\log d)$. Thus a logarithmic dependence on the input dimension is necessary.
\end{corollary}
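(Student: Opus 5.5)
The corollary combines two ingredients, so the plan is a short bookkeeping argument rather than a new construction. The upper bound $O(k^2\log^4 d)$ comes from \Cref{thm:intro_k_bounded}. The lower bound $\Omega(k^2\log(d/k^2))$ comes from \Cref{thm:LBd}, whose proof (the dyadic-merge adversary over sign vectors $v_s$) appears immediately above.

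\textbf{Tightness in $k$.} I would first fix $d$ large relative to $k^2$, say $d\ge k^4$, so that $\log(d/k^2)\ge \tfrac12\log d$. Then \Cref{thm:LBd} gives a lower bound of $\Omega(k^2\log d)$, while the upper bound is $O(k^2\log^4 d)$. The ratio between them is $O(\log^3 d)$, which depends only on $d$. Both bounds therefore scale as $k^2$ up to polylogarithmic factors in $d$. This is exactly the statement that the quadratic dependence on $k$ cannot be improved.

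\textbf{The $\Omega(\log d)$ claim for fixed $k\ge 2$.} Here $d/k^2\to\infty$ as $d\to\infty$, so \Cref{thm:LBd} applies and gives $\Omega(k^2\log(d/k^2))$. Since $k$ is fixed, $\log(d/k^2)=\log d-2\log k\ge \tfrac12\log d$ for all $d\ge k^4$, so the bound is $\Omega(\log d)$. For the finitely many smaller $d$ the claim is trivial, because the constant in $\Omega$ can absorb them. A trivial lower bound such as $\mathbb D_{\mathrm{onl}}>0$ is not even needed.

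\textbf{Deterministic versus randomized learners.} The corollary speaks of the minimax value, and I would check that the adversary's bound covers both. The proof above constructs an adaptive adversary against a deterministic prediction $\widehat y$. For a randomized learner, I would argue with the convexity of the square loss, so that the expected loss against the mean prediction is at least $\alpha_m^2/4$; alternatively one can restrict to deterministic algorithms, as in \Cref{thm:SLD}.

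I expect the main obstacle to be only this bookkeeping: making the regime $d\ge k^4$ explicit, so that $\log(d/k^2)$ and $\log d$ are comparable.
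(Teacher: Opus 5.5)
Your proposal is correct and matches the paper. The paper states this corollary without a separate proof, as the direct combination of the $O(k^2\log^4 d)$ upper bound of \Cref{thm:intro_k_bounded} with the $\Omega(k^2\log(d/k^2))$ lower bound of \Cref{thm:LBd}, and your restriction to $d\ge k^4$ (giving $\log(d/k^2)\ge\tfrac12\log d$) is exactly the bookkeeping needed. One small quibble: absorbing the finitely many $d<k^4$ into the $\Omega$-constant does use that the minimax value is positive for $d\ge 2$, which is trivially true, whereas under the usual asymptotic reading of $\Omega(\log d)$ nothing about small $d$ is needed.
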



\subsection{The Algorithm for $\mathcal{H}_{k,d}$}

In the following, we give the $\tilde{O}(k^2)$ upper bound given in \Cref{thm:intro_k_bounded}. 

\paragraph{Proof Roadmap.}
We prove the upper bound by splitting any shattered tree into \emph{large-gap} and \emph{small-gap} nodes: large gaps are controlled by a truncated multiscale argument via sequential fat-shattering bounds \cite{rakhlin2015online}, while small gaps are handled by scaling into a comparison class and bounding its online dimension using our entropy-potential method and Euclidean covers. 
We conclude by combining the two contributions


For the construction only, to cleanly handle rescalings without leaving a bounded-loss regime, we work with the
\emph{clipped squared loss}
\[
\ell_{\cap}(\hat y,y)\ :=\ \min\Big\{1,\ \frac{(\hat y-y)^2}{4}\Big\}\in[0,1],
\qquad \hat y,y\in\R.
\]
For $\hat y,y\in[-1,1]$ (as in the output of functions from $\cH$) we have $\ell_{\cap}(\hat y,y)=(\hat y-y)^2/4$.
In the following, we will use the online dimension $\mathbb D_{\mathrm{onl}}(\cG;\ell_{\cap})$ with respect to \ $\ell_{\cap}$ (see Definition~\ref{def:online-dim}). In order to use the online dimension and the result of~\cite{attias2023optimal}, we show that $\ell_{\cap}$ is an approximate pseudo metric. 


\begin{lemma}[$\ell_{\cap}$ is a $2$-approximate pseudo-metric]
	\label{lem:lcap-approx-metric}
	For all $a,b,c\in\R$,
	\[
	\ell_{\cap}(a,b)\ \le\ 2\big(\ell_{\cap}(a,c)+\ell_{\cap}(c,b)\big).
	\]
\end{lemma}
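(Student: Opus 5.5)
The plan is to write $\ell_{\cap}(a,b)=\min\{1,\phi(a,b)\}$ with $\phi(a,b):=(a-b)^2/4$, and first record the unclipped inequality
\[
\phi(a,b)\ \le\ 2\big(\phi(a,c)+\phi(c,b)\big),
\]
which is the $q=2$ case of \eqref{eq:cq}: $(a-b)^2\le (|a-c|+|c-b|)^2\le 2\big((a-c)^2+(c-b)^2\big)$, then divide by $4$. Everything else reduces to case analysis on which terms are clipped, since $\min\{1,\cdot\}$ is monotone and the right-hand side carries the factor $2$.

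\emph{Case 1:} $\phi(a,c)\ge 1$ or $\phi(c,b)\ge 1$. Then $\ell_{\cap}(a,c)=1$ or $\ell_{\cap}(c,b)=1$, so the right-hand side is at least $2$. The left-hand side is at most $1$ by definition, and the inequality holds trivially.

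\emph{Case 2:} $\phi(a,c)<1$ and $\phi(c,b)<1$. Then both clipped terms equal their unclipped values. Using $\min\{1,x\}\le x$ and the unclipped inequality,
\[
\ell_{\cap}(a,b)\ \le\ \phi(a,b)\ \le\ 2\big(\phi(a,c)+\phi(c,b)\big)\ =\ 2\big(\ell_{\cap}(a,c)+\ell_{\cap}(c,b)\big).
\]

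The remaining axioms are immediate: $\ell_{\cap}(a,a)=0$ and symmetry both follow from the symmetry of $(a-b)^2$. So $\ell_{\cap}$ is a $2$-approximate pseudo-metric in the sense of Definition~\ref{def:approx-pseudometric}, and the results stated for such losses (Theorem~\ref{thm:SLD}, Theorem~\ref{thm:intro_Donl-via-Phi}) apply to it.

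There is no real obstacle here. The only point requiring care is to organize the cases according to which terms on the right-hand side are clipped, so that the truncation never works against the inequality.
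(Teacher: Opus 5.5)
Your proof is correct and follows essentially the same route as the paper: you first establish the unclipped $2$-approximate triangle inequality for $(a-b)^2/4$, then split on whether some right-hand term is clipped (so the right side is at least $2$) or neither is (so $\min\{1,u\}\le u$ finishes the argument). Your remarks on $\ell_{\cap}(a,a)=0$ and symmetry are a harmless addition that the paper leaves implicit.
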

\begin{proof}
	Let $u:=(a-b)^2/4$, $v:=(a-c)^2/4$, $w:=(c-b)^2/4$. Then $u\le 2(v+w)$ (the standard $2$-approximate triangle for squared loss).
	We show $\min\{1,u\}\le 2(\min\{1,v\}+\min\{1,w\})$.
	If $\max\{v,w\}\ge 1$, then the RHS is at least $2$ and the claim holds since the LHS is at most $1$.
	Otherwise $v<1$ and $w<1$, hence $\min\{1,v\}=v$ and $\min\{1,w\}=w$, so
	$\min\{1,u\}\le u\le 2(v+w)=2(\min\{1,v\}+\min\{1,w\})$.
\end{proof}

To get the bound for $\cH$, we partition the analysis into two. First, we show that relatively large gaps are bounded using
sequential fat-shattering dimension bounds \cite{rakhlin2015online}. Then, we upper bound infinitely small gaps using a
$1/d$ coupling to an auxiliary comparison class via our entropy-potential bound given in \Cref{thm:intro_Donl-via-Phi}. 

\subsection{Large gaps: a finite-scale multiscale branch bound via sequential fat}

For $\alpha>0$ and $\cF\subseteq[-1,1]^{\cX}$, we use $\fat_\alpha(\cF)$ to denote the sequential fat-shattering dimension
\cite{rakhlin2015online}, defined as follows for completeness.

\begin{definition}[Sequential fat-shattering dimension]\label{def:seq-fat}
	Let $\cF\subseteq \R^{\cX}$ and let $\alpha>0$.
	A pair of complete binary trees
	\[
	(x_u)_{u\in\{0,1\}^{<m}} \subseteq \cX
	\qquad\text{and}\qquad
	(s_u)_{u\in\{0,1\}^{<m}} \subseteq \R
	\]
	is \emph{$\alpha$-shattered} by $\cF$ if for every path $b=(b_1,\dots,b_m)\in\{0,1\}^m$
	there exists $f_b\in\cF$ such that for all $t=0,1,\dots,m-1$,
	\[
	b_{t+1}=1 \ \Rightarrow\ f_b\!\left(x_{b_{\le t}}\right)\ \ge\ s_{b_{\le t}}+\alpha,
	\qquad
	b_{t+1}=0 \ \Rightarrow\ f_b\!\left(x_{b_{\le t}}\right)\ \le\ s_{b_{\le t}}-\alpha,
	\]
	where $b_{\le t}:=(b_1,\dots,b_t)$ and $b_{\le 0}:=\emptyset$.
	The \emph{sequential fat-shattering dimension} of $\cF$ at scale $\alpha$ is
	\[
	\fat_{\alpha}(\cF)\ :=\ \sup\Big\{m\in\N:\ \exists\text{ a depth-$m$ tree $\alpha$-shattered by }\cF\Big\},
	\]
	with the convention $\sup\emptyset=0$ (and $\fat_\alpha(\cF)=\infty$ if arbitrarily large depths are possible).
\end{definition}
The following lemma shows the decrease in the above dimension along a path of a shattered tree. 

\begin{lemma}
	\label{lem:fatdrop-general-domain}
	Let $\cV\subseteq[-1,1]^{\cX}$ be nonempty, fix $x\in\cX$, and two values $a\neq b$.
	Let $\Delta:=|a-b|$ and~define
	\[
	\cV_0:=\{h\in\cV:h(x)=a\},\qquad \cV_1:=\{h\in\cV:h(x)=b\},
	\]
	assumed both nonempty. If $\Delta\ge 2\alpha$, then
	\[
	\min\bigl\{\fat_\alpha(\cV_0),\ \fat_\alpha(\cV_1)\bigr\}
	\ \le\
	\fat_\alpha(\cV)-1.
	\]
\end{lemma}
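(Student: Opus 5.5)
The plan is a contradiction argument that glues shattered trees for the two children under a new root at $x$. Suppose the claim fails. Then $\fat_\alpha(\cV_0)\ge \fat_\alpha(\cV)$ and $\fat_\alpha(\cV_1)\ge\fat_\alpha(\cV)$.

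First I dispose of the infinite case. If $\fat_\alpha(\cV)=\infty$, the right-hand side is $\infty$ and there is nothing to prove. So I assume $m:=\fat_\alpha(\cV)<\infty$. Since $\cV_b\subseteq\cV$ gives $\fat_\alpha(\cV_b)\le m$, the negation of the claim means $\fat_\alpha(\cV_0)=\fat_\alpha(\cV_1)=m$. Take a depth-$m$ tree pair $(x^{(b)}_u,s^{(b)}_u)$ that is $\alpha$-shattered by $\cV_b$, for each $b\in\{0,1\}$. If $m=0$, take empty trees.

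Next I build a depth-$(m+1)$ tree. The root carries instance $x$ and threshold $s:=(a+b)/2$. Without loss of generality $a<b$; otherwise swap which child is attached to which branch. Below the root I attach the tree for $\cV_0$ on the $0$-branch and the tree for $\cV_1$ on the $1$-branch.

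To check that $\cV$ $\alpha$-shatters this tree, fix a path $(\beta_1,\beta_2,\dots,\beta_{m+1})$. Let $f\in\cV_{\beta_1}$ be the witness of the subpath $(\beta_2,\dots,\beta_{m+1})$ in the subtree for $\cV_{\beta_1}$.
\begin{itemize}
\item Below the root, $f$ satisfies every required inequality by construction of the subtree.
\item At the root, if $\beta_1=1$ then $f(x)=b=s+\Delta/2\ge s+\alpha$, using $\Delta\ge2\alpha$.
\item If $\beta_1=0$ then $f(x)=a=s-\Delta/2\le s-\alpha$.
\end{itemize}
Since $f\in\cV$, the glued tree is $\alpha$-shattered by $\cV$. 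This gives $\fat_\alpha(\cV)\ge m+1$, a contradiction.

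The main subtlety is the choice of the root threshold as the midpoint, which is exactly where the hypothesis $\Delta\ge 2\alpha$ is used. Everything else is routine bookkeeping: handling the infinite and zero-depth conventions, and noting that the definition allows arbitrary real thresholds, so the midpoint is admissible even though the range is $[-1,1]$.
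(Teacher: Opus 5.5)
Your proof is correct and follows the paper's argument: glue depth-$m$ $\alpha$-shattering trees for $\cV_0$ and $\cV_1$ under a new root at $x$ with midpoint threshold $(a+b)/2$, where $\Delta\ge 2\alpha$ certifies the split at the root. Your explicit treatment of the cases $\fat_\alpha(\cV)=\infty$ and $m=0$ is a small addition; the paper leaves both implicit.
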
 		We call a child $u^+$ of a node $u$ \emph{$\alpha$-decreasing} if
$\fat_{\alpha}(\cV(u^+))\le \fat_{\alpha}(\cV(u))-1$.
\begin{proof}
	Let $m:=\fat_\alpha(\cV)$ and assume for contradiction that both children have $\fat_\alpha$ dimension at least $m$.
	Without loss of generality assume $a>b$ (otherwise swap the roles of $\cV_0$ and $\cV_1$).
	Glue depth-$m$ $\alpha$-shattering trees for $\cV_0$ and $\cV_1$ under a new root at instance $x$
	with threshold $\theta:=(a+b)/2$, attaching the $\cV_0$ subtree to the branch requiring
	$f(x)\ge \theta+\alpha$ and the $\cV_1$ subtree to the branch requiring $f(x)\le \theta-\alpha$.
	Since $\Delta=a-b\ge 2\alpha$, we have $a\ge \theta+\alpha$ and $b\le \theta-\alpha$, yielding an $\alpha$-shattering tree
	of depth $m+1$ for $\cV$, a contradiction.
\end{proof}

Let $\alpha_r:=2^{-(r+2)}$ for $r\ge -1$, and set
\[
K:=\big\lceil \log_2(\sqrt d)\big\rceil.
\]

The next lemma shows that large gaps can be upper bound by a sum of fat-shattering components on scales $-1,\ldots, K$. 
\begin{lemma}[Large-gap branch bound via a truncated multiscale potential]
	\label{lem:large-gap-truncated-general-domain}
	Let $T$ be any scaled Littlestone tree shattered by $\cH\subseteq[-1,1]^{\cX}$, and let
	$\Delta_u:=|s_{u,0}-s_{u,1}|$ at node $u$.
	Then there exists a root-to-leaf branch $P$ in $T$ such that
	\[
	\sum_{u\in P:\ \Delta_u>1/\sqrt d}\Delta_u^2
	\ \le\
	16\sum_{r=-1}^{K}\alpha_r^2\,\fat_{\alpha_r}(\cH).
	\]
\end{lemma}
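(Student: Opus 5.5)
We build the branch greedily, using a multiscale potential that mirrors the proof of Theorem~\ref{thm:potential-bound-general} but with sequential fat-shattering dimensions in place of covering numbers. For a node $u$ with version space $\cV(u)$ define
\[
\Psi(u)\ :=\ \sum_{r=-1}^{K}\alpha_r^2\,\fat_{\alpha_r}(\cV(u)).
\]
At the root $\Psi=\sum_{r=-1}^K\alpha_r^2\fat_{\alpha_r}(\cH)$, and $\Psi\ge 0$ always. Each $\fat_{\alpha_r}$ is monotone under inclusion, so $\Psi$ never increases along any branch. It therefore suffices to show that at every node with $\Delta_u>1/\sqrt d$ we can pick a child with $\Psi(u^+)\le \Psi(u)-\Delta_u^2/16$. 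At nodes with $\Delta_u\le 1/\sqrt d$ we move to an arbitrary child. Telescoping, as in the proof of Theorem~\ref{thm:potential-bound-general}, then gives the bound for every finite prefix, and monotone convergence handles infinite branches. (If some $\fat_{\alpha_r}(\cH)=\infty$ the statement is vacuous, so we may assume all are finite.)

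For the one-step drop, fix a node $u$ with $\Delta:=\Delta_u>1/\sqrt d$. Since labels lie in $[-1,1]$, we have $\Delta\le 2$. Let $r^\ast$ be the smallest index $r\ge -1$ with $2\alpha_r\le\Delta$, that is, the largest scale $\alpha_r=2^{-(r+2)}\le \Delta/2$. Because $\alpha_{-1}=1/2$ and $\Delta\le 2$, we get $r^\ast\ge -1$ whenever $\Delta<1$; if $\Delta\ge1$ then $r^\ast=-1$.

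We must check $r^\ast\le K$. Since $\Delta>1/\sqrt d\ge 2^{-K}$, we have $\alpha_K=2^{-(K+2)}<\Delta/4\le\Delta/2$, so $r^\ast\le K$. By minimality of $r^\ast$ (when $r^\ast>-1$) we have $\alpha_{r^\ast-1}>\Delta/2$, hence $\alpha_{r^\ast}>\Delta/4$. When $r^\ast=-1$ we have $\alpha_{-1}=1/2\ge\Delta/4$. In either case $\alpha_{r^\ast}^2\ge \Delta^2/16$.

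Now apply Lemma~\ref{lem:fatdrop-general-domain} with $\alpha=\alpha_{r^\ast}$, which is allowed since $\Delta\ge 2\alpha_{r^\ast}$. Both children are nonempty by realizability, so some child $u^+$ is $\alpha_{r^\ast}$-decreasing. Choose that child. Its $r^\ast$ term drops by at least $\alpha_{r^\ast}^2\ge\Delta^2/16$, and every other term does not increase by monotonicity. Hence $\Psi(u^+)\le\Psi(u)-\Delta^2/16$, which is the required drop and yields the factor $16$.

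The main obstacle is this scale-selection bookkeeping. We need a single scale $\alpha_r$ that is simultaneously at most $\Delta/2$ (so that Lemma~\ref{lem:fatdrop-general-domain} applies), comparable to $\Delta$ (so the drop is of order $\Delta^2$), and inside the truncated range $[-1,K]$, which is exactly where the threshold $1/\sqrt d$ and the choice $K=\lceil\log_2\sqrt d\rceil$ enter. The boundary cases $\Delta$ near $2$ and $\Delta$ just above $1/\sqrt d$ need to be checked, but the dyadic spacing leaves a factor-$2$ margin that the constant $16$ absorbs.
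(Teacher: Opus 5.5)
Your proof is correct and follows essentially the same argument as the paper: a truncated multiscale potential $\sum_{r=-1}^{K}\alpha_r^2\,\fat_{\alpha_r}(\cV(u))$, dyadic bucketing of $\Delta_u$ into a scale with $2\alpha_r\le\Delta_u\le 4\alpha_r$, one application of Lemma~\ref{lem:fatdrop-general-domain} at that scale, and telescoping. The differences are cosmetic. You put the factor $16$ in the per-step drop rather than inside $\Psi$, and you use a slightly different boundary convention for the bucket. You also make explicit two points the paper leaves implicit: monotonicity of $\fat$ under inclusion (needed at small-gap nodes and for the untouched scales), and the limit for infinite branches.
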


\begin{proof}
	For a node $u$, let $\cV(u)\subseteq\cH$ be its version space (hypotheses consistent with the root-to-$u$ labels),
	and define the truncated potential
	\[
	\Psi(u)\ :=\ 16\sum_{r=-1}^{K}\alpha_r^2\,\fat_{\alpha_r}(\cV(u)).
	\]
	Construct a branch top-down. At an internal node $u$:
	\begin{itemize}
		\item If $\Delta_u\le 1/\sqrt d$, move to an arbitrary child.
		\item If $\Delta_u>1/\sqrt d$, choose the unique $r\in\{-1,0,\dots,K\}$ such that
		\[
		2\alpha_r < \Delta_u \le 4\alpha_r.
		\]
		We justify existence as follows. Since $4\alpha_{-1}=2$ and always $\Delta_u\le 2$ (labels lie in $[-1,1]$), and also
		$\Delta_u>1/\sqrt d\ge 2^{-K}$ with $4\alpha_K=2^{-K}$, such $r\le K$ exists.
		Apply Lemma~\ref{lem:fatdrop-general-domain} at scale $\alpha_r$ to $\cV(u)$; since $\Delta_u\ge 2\alpha_r$,
		there exists a child $u^+$ with
		$\fat_{\alpha_r}(\cV(u^+))\le \fat_{\alpha_r}(\cV(u))-1$.
		Move to such a child $u^+$.
	\end{itemize}
	Along the resulting branch $P$, whenever we visit a node $u$ with $\Delta_u>1/\sqrt d$ and bucket index $r$,
	the potential drops by at least $16\alpha_r^2$, and also $\Delta_u^2\le (4\alpha_r)^2=16\alpha_r^2$.
	Thus,
	\[
	\sum_{u\in P:\Delta_u>1/\sqrt d}\Delta_u^2
	\ \le\
	\sum_{u\in P:\Delta_u>1/\sqrt d}\big(\Psi(u)-\Psi(u^+)\big)
	\ \le\
	\Psi(\text{root})
	\ =\
	16\sum_{r=-1}^{K}\alpha_r^2\,\fat_{\alpha_r}(\cH).
	\]
	Note that the argument only used the forced choice at nodes with $\Delta_u>1/\sqrt d$; at nodes with
	$\Delta_u\le 1/\sqrt d$ the child was arbitrary. Therefore, the same bound holds for \emph{any} branch that,
	whenever it reaches a node with $\Delta_u>1/\sqrt d$, follows the $\alpha_r$-decreasing child chosen in the construction.
\end{proof}

We now show using the work of \cite{rakhlin2015online} an upper bound on the fat-shattering dimension of the class. 
\begin{proposition}[Sequential fat bound for $\cH$]
	\label{prop:fatbound-l2-general-domain}
	There exists a universal constant $C>0$ such that for all $\alpha\in(0,1]$,
	\[
	\fat_{\alpha}(\cH)\ \le\ \frac{Ck^2}{\alpha^2}\,\log^3\!\Big(\frac{ek}{\alpha}\Big).
	\]
\end{proposition}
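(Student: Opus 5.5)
We bound the sequential fat-shattering dimension through the sequential Rademacher complexity of $\cH$, using the machinery of \cite{rakhlin2015online}. The route has three steps: control the sequential Rademacher complexity of a single ReLU unit, propagate it through the $k$-term sum and the clipping, and then convert the resulting complexity bound into a fat-shattering bound.

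\emph{Step 1: the linear class and a single ReLU.} Consider the linear class $\cF_{\mathrm{lin}}=\{x\mapsto w\cdot x:\|w\|_2\le 1\}$ on the unit Euclidean ball. Its sequential Rademacher complexity over depth-$n$ trees satisfies $\Rad_n(\cF_{\mathrm{lin}})\le 1/\sqrt n$. This follows from the martingale argument in Hilbert space: $\E\|\sum_t\epsilon_t \mathbf{x}_t(\epsilon)\|_2\le \sqrt n$. Since $\rel$ is $1$-Lipschitz, the sequential contraction lemma of \cite{rakhlin2015online} gives $\Rad_n(\rel\circ\cF_{\mathrm{lin}})\le O(\log^{3/2} n)\cdot \Rad_n(\cF_{\mathrm{lin}})$. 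The $\log^{3/2}$ loss is unavoidable with the sequential contraction, and it is the source of the $\log^3$ factor in the final statement.

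\emph{Step 2: the $k$-term sum and clipping.} A function $\sum_j a_j\rel(w^j\cdot x)$ with $|a_j|\le 1$ lies in a sum of $k$ copies of the symmetric hull of $\rel\circ\cF_{\mathrm{lin}}$, and sequential Rademacher complexity is subadditive over such sums. Hence the unclipped class has complexity $O(k\log^{3/2}n/\sqrt n)$. Clipping is $1$-Lipschitz, so a second contraction costs another polylogarithmic factor. Alternatively, I would avoid this second loss by noting that $\alpha$-shattering by the clipped class implies $\alpha$-shattering by the unclipped class whenever the thresholds lie in $[-1+\alpha,1-\alpha]$, which we may assume. This gives $\Rad_n(\cH)\le C k\log^{3/2}(n)/\sqrt n$.

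\emph{Step 3: from Rademacher complexity to fat shattering.} By the lower bound in \cite{rakhlin2015online}, an $\alpha$-shattered tree of depth $m$ yields $m\Rad_m(\cH)\ge \alpha m/2$ up to constants: choose $\epsilon$ along the path and use $f_\epsilon$. Therefore
\[
\alpha\sqrt m\ \lesssim\ k\log^{3/2} m, \qquad\text{i.e.,}\qquad m\ \lesssim\ \frac{k^2}{\alpha^2}\log^3 m.
\]
Solving this implicit inequality gives $m\le (Ck^2/\alpha^2)\log^3(ek/\alpha)$, because $\log m=O(\log(ek/\alpha))$ at the fixed point.

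The main obstacle is Step 1: the sequential Lipschitz contraction is not free as it is in the i.i.d.\ setting. I would first invoke the covering-based contraction of \cite{rakhlin2015online}, which bounds sequential entropy via fat-shattering of the linear class. If that produces worse logarithmic factors, the fallback is to directly bound the sequential $\ell_\infty$ covering numbers of $\rel\circ\cF_{\mathrm{lin}}$ at scale $\beta$ by those of $\cF_{\mathrm{lin}}$, which are at most $\exp(O(\beta^{-2}\log(n/\beta)))$. One then runs Dudley chaining to recover the same $\tilde O(k/\sqrt n)$ bound; the logarithmic factors are what produce the exponent $3$.
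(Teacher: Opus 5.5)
Your proposal is correct and follows essentially the same chain as the paper. Both arguments use the Hilbert-space bound $\Rad_T\le 1/\sqrt T$ for the linear class, one sequential contraction (cost $\rho(T)=O(\log^{3/2}T)$) for the ReLU, and subadditivity over the $k$ summands. Both then lower-bound the complexity by $\Rad_m\ge\alpha$ on an $\alpha$-shattered depth-$m$ tree (the paper's Prop.~9 step) and solve $m\lesssim (k^2/\alpha^2)\log^3 m$.

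\textbf{Clipping.} This is where you genuinely diverge, and your version is the more careful one. The paper asserts $\Rad_T(\cH)\le\Rad_T(\cH_0)$ ``by contraction''. But the sequential contraction lemma would charge a second factor $\rho(T)$, which gives $\log^6$ rather than $\log^3$. Your observation avoids this:
\begin{itemize}
\item In an $\alpha$-shattered tree for $\cH$, every threshold automatically satisfies $s_u\in[-1+\alpha,1-\alpha]$, because both directions at $u$ have witnesses with values in $[-1,1]$.
\item $\clip(z)>-1$ forces $z\ge\clip(z)$, and $\clip(z)<1$ forces $z\le\clip(z)$.
\item Hence the unclipped witnesses $\alpha$-shatter the same tree, so $\fat_\alpha(\cH)\le\fat_\alpha(\cH_0)$.
\end{itemize}
You should state the output of this step in exactly that form and run Step~3 on $\cH_0$; the bound $\Rad_m\ge\alpha$ needs no range assumption. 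As written, you claim $\Rad_n(\cH)\le Ck\log^{3/2}n/\sqrt n$, which your argument does not give.

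\textbf{Minor points.}
\begin{itemize}
\item Handling the scalars $a_j$ through the symmetric hull is a cleaner substitute for the paper's $8\rho(T)$ step. In the sequential setting, however, $\Rad(\cF\cup-\cF)\le 2\Rad(\cF)$ uses $0\in\cF$ and the reflected tree $\epsilon\mapsto\mathbf{x}_t(-\epsilon)$, and you should say so.
\item The contraction lemma you invoke carries the side condition $\Rad_T(\cF_{\mathrm{lin}})\ge 1/T$. This holds because $\Rad_T(\cF_{\mathrm{lin}})\gtrsim 1/\sqrt T$ (take a constant tree at a unit vector).
\end{itemize}
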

\begin{proof}
	We invoke the standard sequential Rademacher--fat chain from~\cite{rakhlin2015online}.
	
	\paragraph{Step 1 (linear class).}
	Let $\cG:=\{x\mapsto \langle w,x\rangle:\ \|w\|_2\le 1\}\subseteq[-1,1]^{\cX}$.
	By~\cite[Prop.~14]{rakhlin2015online}, for all $T\ge 1$,
	\[
	\Rad_T(\cG)\ \le\ \frac{C_0}{\sqrt{T}}
	\]
	for a universal constant $C_0$.
	
	\paragraph{Step 2 (single ReLU with a scalar).}
	Let
	\[
	\cR:=\{x\mapsto a\,\rel(\langle w,x\rangle):\ |a|\le 1,\ \|w\|_2\le 1\}.
	\]
	Since $\rel$ is $1$-Lipschitz, by~\cite[Cor.~5]{rakhlin2015online} there is a factor
	$\rho(T)=O(\log^{3/2}(eT))$ such that
	\[
	\Rad_T(\cR)\ \le\ 8\,\rho(T)\,\Rad_T(\cG)\ \le\ \frac{C_1\,\rho(T)}{\sqrt{T}}
	\]
	for a universal constant $C_1$.
	
	\paragraph{Step 3 (unclipped $k$-term sums and clipping).}
	Let
	\[
	\cH_0:=\Bigl\{x\mapsto \sum_{j=1}^k r_j(x):\ r_j\in\cR\Bigr\}.
	\]
	For any $\cX$-valued tree $z$, separability of the parameters across $j$ gives
	\[
	\Rad_T(\cH_0,z)
	=\frac1T\,\E_\eps\Bigl[\sup_{r_1,\dots,r_k\in\cR}\sum_{j=1}^k\sum_{t=1}^T \eps_t r_j(z_t(\eps))\Bigr]
	=\sum_{j=1}^k \Rad_T(\cR,z)
	= k\,\Rad_T(\cR,z).
	\]
	Hence $\Rad_T(\cH_0)\le k\,\Rad_T(\cR)\le C_1 k\rho(T)/\sqrt{T}$.
	Now $\cH$ is obtained by clipping $\cH_0$ pointwise to $[-1,1]$ (i.e., applying $\clip$),
	and $\clip$ is $1$-Lipschitz on $\R$, so by contraction \cite[Cor.~5]{rakhlin2015online},
	\[
	\Rad_T(\cH)\ \le\ \Rad_T(\cH_0)\ \le\ \frac{C_1 k\rho(T)}{\sqrt{T}}.
	\]
	
	\paragraph{Step 4 (fat from sequential Rademacher).}
	By~\cite[Prop.~9]{rakhlin2015online}, for all $\alpha\in(0,1]$,
	\[
	\alpha\sqrt{\frac{\min\{\fat_\alpha(\cH),T\}}{T}}
	\ \le\ C_2\,\Rad_T(\cH)
	\ \le\ \frac{C_3\,k\rho(T)}{\sqrt{T}}
	\]
	for universal constants $C_2,C_3$.
	Choosing $T=\Theta\!\big(\alpha^{-2}k^2\log^3(ek/\alpha)\big)$ large enough ensures
	\[C_3 k\rho(T)/\sqrt{T}<\alpha/2,\] which forces $\fat_\alpha(\cH)<T$ and yields the claim.
\end{proof}

We conclude on the sum of fat-shattering across different scales as follows. 

\begin{lemma}[Large-gap contribution is $O(k^2\log^4(edk))$]
	\label{lem:large-gap-Olog4-general-domain}
	\[
	16\sum_{r=-1}^{K}\alpha_r^2\,\fat_{\alpha_r}(\cH)\ =\ O\!\big(k^2\log^4(edk)\big),
	\]
	where $\alpha_r:=2^{-(r+2)}$ and $K=\lceil \log_2(\sqrt d)\rceil$.
\end{lemma}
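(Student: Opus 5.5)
The bound follows by plugging Proposition~\ref{prop:fatbound-l2-general-domain} into the sum term by term. For every $r\in\{-1,0,\dots,K\}$ we have $\alpha_r=2^{-(r+2)}\in(0,1]$ (indeed $\alpha_{-1}=1/2$), so the proposition applies and gives
\[
\alpha_r^2\,\fat_{\alpha_r}(\cH)\ \le\ \alpha_r^2\cdot\frac{Ck^2}{\alpha_r^2}\log^3\!\Big(\frac{ek}{\alpha_r}\Big)\ =\ Ck^2\log^3\!\Big(\frac{ek}{\alpha_r}\Big).
\]
The factor $\alpha_r^2$ cancels exactly against the $1/\alpha_r^2$ blow-up of the sequential fat dimension. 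This cancellation is why the gaps were weighted quadratically in Lemma~\ref{lem:large-gap-truncated-general-domain}, and each scale therefore contributes only $O(k^2)$ up to logarithms.

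Next I bound the logarithm uniformly over the truncated range. Since $r\le K=\lceil\log_2\sqrt d\rceil$, we get
\[
1/\alpha_r=2^{r+2}\le 2^{K+2}\le 8\sqrt d .
\]
Hence $\log(ek/\alpha_r)\le \log(8e k\sqrt d)=O(\log(edk))$ with a universal constant, because $8\sqrt d\le (ed)^{O(1)}$. Every summand is thus $O(k^2\log^3(edk))$.

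Finally, I count the summands: there are $K+2\le \log_2\sqrt d+3=O(\log(ed))$ of them. Multiplying, and keeping the prefactor $16$, gives
\[
16\sum_{r=-1}^{K}\alpha_r^2\,\fat_{\alpha_r}(\cH)\ \le\ 16\,(K+2)\,C'k^2\log^3(edk)\ =\ O\big(k^2\log^4(edk)\big),
\]
since $\log(ed)\le\log(edk)$.

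There is no real obstacle here. The only point needing care is that the truncation at $K$ is essential: it caps $1/\alpha_r$ at $O(\sqrt d)$, so the logarithmic factors stay $\log(edk)$, and it keeps the number of scales logarithmic in $d$. Without the cutoff at gaps of size $1/\sqrt d$, the sum over scales would diverge. That regime is exactly what the separate small-gap (entropy-potential) argument handles.
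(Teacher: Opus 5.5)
Your proof is correct and follows the paper's argument. You apply Proposition~\ref{prop:fatbound-l2-general-domain} termwise so that $\alpha_r^2$ cancels, then bound the $K+2=O(\log(ed))$ resulting terms, each $O(k^2\log^3(edk))$. The only cosmetic difference is that the paper sums $(\log(ek)+r+1)^3$ over $r\le K$ directly, while you bound each term uniformly via $1/\alpha_r\le 8\sqrt d$ and multiply by the number of scales; both give $O(k^2\log^4(edk))$.
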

\begin{proof}
	By Proposition~\ref{prop:fatbound-l2-general-domain},
	$\alpha_r^2\fat_{\alpha_r}(\cH)\le Ck^2\log^3(ek/\alpha_r)$.
	Since $\alpha_r=2^{-(r+2)}$, $\log(ek/\alpha_r)=\Theta(\log(ek)+r+1)$ and
	$\sum_{r\le K}(\log(ek)+r+1)^3=O((\log(ek)+K)^4)=O(\log^4(edk))$.
\end{proof}

Next, we complement the analysis by considering gaps smaller than $1/d$. 

\subsection{Small gaps: scaling into a comparison class and paying $1/d$}

Define the scaled comparison class
\[
\cF
\ :=\
\{\ g:\cX\to\R\ :\ \exists h\in\cH\ \text{s.t.}\ g(x)=\sqrt d\,h(x)\ \ \forall x\in\cX\ \}.
\]
Let $T$ be any scaled Littlestone tree shattered by $\cH$ (under $\ell_{\cap}$).
Consider the (deterministic) rule from the proof of Lemma~\ref{lem:large-gap-truncated-general-domain}:
whenever a reached node $u$ satisfies $\Delta_u>1/\sqrt d$, move to the $\alpha_r$-decreasing child selected there;
whenever $\Delta_u\le 1/\sqrt d$, the next child is chosen freely.
Then, we have the following statement for such $T$. 
\begin{lemma}[Small-gap amplification under $\ell_{\cap}$]
	\label{lem:small-gap-amplification-general-domain}
	There exists a branch $P$ in $T$ that follows the above rule at every node with $\Delta_u>1/\sqrt d$ and satisfies
	\[
	E_{\mathrm{small}}(P)\ :=\ \sum_{u\in P:\ \Delta_u\le 1/\sqrt d}\Delta_u^2
	\ \le\ \frac{4}{d}\,\mathbb D_{\mathrm{onl}}(\cF;\ell_{\cap}).
	\]
\end{lemma}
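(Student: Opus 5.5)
The plan is to build a new scaled Littlestone tree for $\cF$ from the small-gap nodes of $T$ and apply the definition of $\mathbb D_{\mathrm{onl}}(\cF;\ell_{\cap})$ to it. The key observation is that scaling labels by $\sqrt d$ turns a small gap $\Delta_u\le 1/\sqrt d$ in $T$ into a gap $\sqrt d\,\Delta_u\le 1$ for $\cF$. Since $\ell_{\cap}(a,b)=\min\{1,(a-b)^2/4\}$, the clipping in $\ell_{\cap}$ is then inactive, and the $\cF$-gap is exactly $d\Delta_u^2/4$. Summing these along any branch, $\sum_{\text{small}} d\Delta_u^2/4 = (d/4)E_{\mathrm{small}}$, so a branch whose scaled small-gap sum is at most $\mathbb D_{\mathrm{onl}}(\cF)$ gives $E_{\mathrm{small}}\le (4/d)\mathbb D_{\mathrm{onl}}(\cF)$.

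\textbf{Constructing the tree for $\cF$.} I will define a tree $T'$ by \emph{contracting} the large-gap nodes of $T$.
\begin{enumerate}
\item Start at the root of $T$ and follow the forced rule through large-gap nodes (those with $\Delta_u>1/\sqrt d$), taking the $\alpha_r$-decreasing child, until reaching a small-gap node $u$.
\item Make that node $u$ the root of $T'$. Label it with instance $x_u$ and edge labels $\sqrt d\, s_{u,0}$ and $\sqrt d\, s_{u,1}$.
\item Recurse: each child of $u$ in $T'$ is obtained by again following the forced rule in $T$ from the corresponding child of $u$ until the next small-gap node.
\end{enumerate}
If the forced path never meets another small-gap node (an infinite or terminating run), that subtree of $T'$ is a leaf. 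To keep $T'$ a complete binary tree, I pad with zero-gap nodes, i.e.\ identical edge labels taken from a consistent hypothesis; these add nothing to any branch sum. Because the forced choices are deterministic, $T'$ is well defined.

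\textbf{Realizability of $T'$ by $\cF$.} Take a branch of $T'$. It corresponds to a branch of $T$ that follows the forced rule at large-gap nodes and the chosen children at small-gap nodes. Realizability of $T$ by $\cH$ gives, for each finite prefix, some $h\in\cH$ consistent with all labels on that prefix, including the large-gap nodes that were skipped. Then $g=\sqrt d\,h\in\cF$ agrees with every scaled label on the corresponding prefix of $T'$. Since $T'$ only drops constraints, this shows $T'$ is realizable.

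\textbf{Concluding, and the main obstacle.} By definition of $\mathbb D_{\mathrm{onl}}$ as a supremum over trees of the infimum over branches, there is a branch $b'$ of $T'$ whose gap sum is at most $\mathbb D_{\mathrm{onl}}(\cF;\ell_{\cap})$. If the infimum is not attained, I take an $\eta$-near-optimal branch and let $\eta\to0$; alternatively, the paper's earlier results give an attaining branch, and I expect a limiting argument to suffice. Lifting $b'$ back to $T$ gives the branch $P$, which follows the rule at every large-gap node.

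I expect the main obstacle to be this bookkeeping: making $T'$ a genuine complete tree of consistent depth when the forced segments have varying lengths, which the zero-gap padding is meant to handle, and dealing with infinite depth and the infimum over branches.
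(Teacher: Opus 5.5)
Your proposal is correct and follows essentially the same route as the paper: contract the forced large-gap moves into a tree of small-gap nodes, pad with zero-gap nodes, and rescale the labels by $\sqrt d$ so that $\ell_{\cap}$ is unclipped and each gap becomes $d\Delta_u^2/4$. Then compare the resulting tree with $\mathbb D_{\mathrm{onl}}(\cF;\ell_{\cap})$. On the point you hedged, no $\eta\to 0$ limiting argument is needed, because the infimum over branches is attained. At finite depth there are finitely many branches. At infinite depth the branch sum is a monotone limit of locally constant partial sums, hence lower semicontinuous on the compact space $\{0,1\}^{\mathbb N}$. This is exactly what the paper uses when it picks an attaining branch $b^*$.
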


\begin{proof}
	Run the above rule, but leave the choices at nodes with $\Delta_u\le 1/\sqrt d$ unspecified.
	This induces a full binary tree $S$ whose internal nodes are exactly the reached nodes with
	$\Delta_u\le 1/\sqrt d$ (choices only occur there), with the same queried instances and the same two edge-labels
	$s_{u,0},s_{u,1}$ as in $T$ at those nodes. If needed, we pad shorter branches with dummy nodes of zero gap (same label on both edges), which preserves shattering and does not change branch energies, yielding a full binary tree of uniform depth.
	Because $T$ is shattered by $\cH$, every branch of $S$ corresponds to a branch of $T$ (obtained by inserting the
	forced moves at $\Delta_u>1/\sqrt d$), hence is realizable by some hypothesis in $\cH$.
	Therefore $S$ is shattered by $\cH$.
	Now scale all edge-labels in $S$ by $\sqrt d$ to obtain a tree $S'$ with labels
	$s'_{u,b}:=\sqrt d\,s_{u,b}$.
	For every branch of $S$, if $h_b\in\cH$ realizes it, then the scaled function $g_b:=\sqrt d\,h_b$ lies in $\cF$
	and realizes the corresponding branch of $S'$. Thus $S'$ is shattered by $\cF$.
	
	For every node $u$ of $S$ we have $\Delta_u\le 1/\sqrt d$, hence in $S'$ the gap is
	$\Delta'_u=\sqrt d\,\Delta_u\le 1$ and clipping in $\ell_\cap$ does not occur:
	\[
	\ell_{\cap}(s'_{u,0},s'_{u,1})=(\Delta'_u)^2/4=d\,\Delta_u^2/4.
	\]
	Therefore, for every branch $b$ of $S'$,
	\[
	E_{S'}(b)=\frac{d}{4}\sum_{u\in b}\Delta_u^2=\frac{d}{4}\cdot E_{\mathrm{small}}(P_b),
	\]
	where $P_b$ is the corresponding branch of $T$ produced by the rule.
	Hence
	\[
	\inf_b E_{S'}(b)=\frac{d}{4}\cdot \inf_b E_{\mathrm{small}}(P_b).
	\]
	Since $S'$ is a valid shattered tree for $\cF$, by definition of $\mathbb D_{\mathrm{onl}}(\cF;\ell_{\cap})$,
	\[
	\mathbb D_{\mathrm{onl}}(\cF;\ell_{\cap})\ \ge\ \inf_b E_{S'}(b)\ =\ \frac{d}{4}\cdot \inf_b E_{\mathrm{small}}(P_b).
	\]
	Choose a branch $b^*$ attaining the infimum. The corresponding branch $P:=P_{b^*}$ of $T$ follows the large-gap
	rule by construction and satisfies
	$E_{\mathrm{small}}(P)\le \frac{4}{d}\mathbb D_{\mathrm{onl}}(\cF;\ell_{\cap})$.
\end{proof}

Next, we bound the total loss of the comparison class using our generic potential argument. 
\subsection{Bounding $\mathbb D_{\mathrm{onl}}(\cF;\ell_{\cap})$ by $\widetilde O(kd)$ via Euclidean covers}

Let $\Theta:=([-1,1]\times B_2^d(0,1))^k\subseteq\R^{k(d+1)}$ be the parameter set for $\cH$:
for \[\theta=(a_1,\dots,a_k,w^1,\dots,w^k)\in\Theta\]
define
\[
h_\theta(x):=\clip\Big(\sum_{j=1}^k a_j\,\rel(w^j \cdot x)\Big)\in[-1,1],
\qquad
g_\theta(x):=\sqrt d\,h_\theta(x)\in\R.
\]
Then $\cH=\{h_\theta:\theta\in\Theta\}$ and $\cF=\{g_\theta:\theta\in\Theta\}$.

\begin{lemma}[Euclidean parameter Lipschitzness for $\cF$]
	\label{lem:F-lipschitz-general-domain}
	For all $\theta,\theta'\in\Theta$,
	\[
	\sup_{x\in\cX}|g_\theta(x)-g_{\theta'}(x)|
	\ \le\
	\sqrt{2kd}\,\|\theta-\theta'\|_2,
	\qquad
	\text{hence}\qquad
	d_{\ell_{\cap}}(g_\theta,g_{\theta'})
	\ \le\
	\min\left\{1,\ \frac{kd}{2}\,\|\theta-\theta'\|_2^2\right\}.
	\]
\end{lemma}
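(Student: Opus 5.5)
We first bound the unclipped network difference pointwise, then pass through the clip, the $\sqrt d$ scaling, and finally the clipped loss. Fix $x\in\cX$ and $\theta=(a,w^1,\dots,w^k)$, $\theta'=(a',w'^1,\dots,w'^k)$. Since $\clip$ is $1$-Lipschitz,
\[
|h_\theta(x)-h_{\theta'}(x)|\ \le\ \sum_{j=1}^k \bigl|a_j\rel(w^j\cdot x)-a'_j\rel(w'^j\cdot x)\bigr|.
\]
We split each summand by adding and subtracting $a_j\rel(w'^j\cdot x)$, which gives
\[
|a_j|\,\bigl|\rel(w^j\cdot x)-\rel(w'^j\cdot x)\bigr| + |a_j-a'_j|\,\rel(w'^j\cdot x).
\]
The first term is at most $\|w^j-w'^j\|_2$: use $|a_j|\le1$, then $1$-Lipschitzness of $\rel$, then Cauchy--Schwarz with $\|x\|_2\le1$. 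The second term is at most $|a_j-a'_j|$, because $\rel(w'^j\cdot x)\le \|w'^j\|_2\|x\|_2\le 1$.

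Summing over $j$, the pointwise difference is at most $\sum_{j=1}^k(|a_j-a'_j|+\|w^j-w'^j\|_2)$. This is a sum of $2k$ nonnegative terms, so by Cauchy--Schwarz it is at most
\[
\sqrt{2k}\Bigl(\sum_j |a_j-a_j'|^2+\sum_j\|w^j-w'^j\|_2^2\Bigr)^{1/2}=\sqrt{2k}\,\|\theta-\theta'\|_2.
\]
Multiplying by $\sqrt d$ gives $\sup_x|g_\theta(x)-g_{\theta'}(x)|\le\sqrt{2kd}\,\|\theta-\theta'\|_2$, which is the first claim.

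For the loss bound, recall $\ell_\cap(\hat y,y)=\min\{1,(\hat y-y)^2/4\}$, which is monotone in $|\hat y-y|$. Hence for every $x$,
\[
\ell_\cap\bigl(g_\theta(x),g_{\theta'}(x)\bigr)\ \le\ \min\Bigl\{1,\ \tfrac{2kd\,\|\theta-\theta'\|_2^2}{4}\Bigr\}=\min\Bigl\{1,\tfrac{kd}{2}\|\theta-\theta'\|_2^2\Bigr\}.
\]
Taking the supremum over $x$ gives the bound on $d_{\ell_\cap}$.

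There is no real obstacle. The only points needing care are the constant $\sqrt{2k}$, which comes from counting $2k$ blocks in the Cauchy--Schwarz step, and the bound $\rel(w'^j\cdot x)\le1$, which uses the domain restriction $\|x\|_2\le 1$ rather than only $x\in[-1,1]^d$.
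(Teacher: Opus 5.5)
Your proof is correct and follows the paper's own argument. The route is the same: clip and ReLU Lipschitzness with the boundedness $\rel(w\cdot x)\in[0,1]$, then Cauchy--Schwarz to get $\sqrt{2k}\,\|\theta-\theta'\|_2$, then scaling by $\sqrt d$ and using that $\ell_\cap$ is monotone in $|\hat y-y|$. The only cosmetic differences are that you add and subtract $a_j\rel(w'^j\cdot x)$ instead of $a'_j\rel(w^j\cdot x)$, and that you apply Cauchy--Schwarz once over all $2k$ terms rather than separately to the $a$- and $w$-blocks.
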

\begin{proof}
	Since $\clip$ is $1$-Lipschitz and $|\rel(u)-\rel(v)|\le |u-v|$, for any $x\in\cX$ we have
	\begin{align*}
		|h_\theta(x)-h_{\theta'}(x)|
		&\le
		\sum_{j=1}^k \big|a_j\,\rel( w^j \cdot x)-a_j'\,\rel(w^{j\prime} \cdot x)\big|\\
		&\le
		\sum_{j=1}^k \Big(|a_j-a_j'|\cdot|\rel(w^j \cdot x)|\ +\ |a_j'|\cdot|\rel( w^j \cdot x)-\rel(w^{j\prime} \cdot x)|\Big)\\
		&\le
		\sum_{j=1}^k \Big(|a_j-a_j'|\ +\ |\langle w^j-w^{j\prime},x\rangle|\Big)
		\ \le\
		\sum_{j=1}^k \Big(|a_j-a_j'|\ +\ \|w^j-w^{j\prime}\|_2\Big),
	\end{align*}
	using $\rel(w^j \cdot x)\in[0,1]$ and $|a_j'|\le 1$.
	By Cauchy--Schwarz,
	\[
	\sum_{j=1}^k |a_j-a_j'|\ \le\ \sqrt{k}\,\|a-a'\|_2,
	\qquad
	\sum_{j=1}^k \|w^j-w^{j\prime}\|_2\ \le\ \sqrt{k}\,\|w-w'\|_2,
	\]
	hence $\sup_x |h_\theta(x)-h_{\theta'}(x)|\le \sqrt{2k}\,\|\theta-\theta'\|_2$.
	Multiplying by $\sqrt d$ gives the first inequality.
	For the second, use $\ell_\cap(u,v)\le (u-v)^2/4$ and take $\sup_{x\in\cX}$.
\end{proof}

\begin{lemma}[Covering numbers of $\Theta$]
	\label{lem:theta-cover-general-domain}
	For all $r\in(0,1]$,
	\[
	N(\Theta,r;\|\cdot\|_2)\ \le\ \left(\frac{3\sqrt{2k}}{r}\right)^{k(d+1)}.
	\]
\end{lemma}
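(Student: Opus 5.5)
We need to cover $\Theta=([-1,1]\times B_2^d(0,1))^k\subseteq\R^{k(d+1)}$ at radius $r$ in Euclidean norm. The plan is to note that $\Theta$ sits inside a Euclidean ball of small radius and then apply the standard volumetric covering bound.

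First, we bound the Euclidean radius of $\Theta$. For $\theta=(a,w^1,\dots,w^k)\in\Theta$ we have $\|a\|_2^2\le k$ and $\sum_j\|w^j\|_2^2\le k$. Hence $\|\theta\|_2\le\sqrt{2k}$, so $\Theta\subseteq B_2^{p}(0,R)$ with $p:=k(d+1)$ and $R:=\sqrt{2k}$.

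Next, we invoke the volumetric bound. Take a maximal $r$-separated subset of $\Theta$. By maximality it is an $r$-cover of $\Theta$ with centers inside $\Theta$, which is what the covering definition requires. The balls of radius $r/2$ around its points are disjoint and lie in $B_2^p(0,R+r/2)$. Comparing volumes gives
\[
N(\Theta,r;\|\cdot\|_2)\le \Big(\frac{R+r/2}{r/2}\Big)^{p}=\Big(\frac{2R}{r}+1\Big)^{p}.
\]

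Finally, we simplify the constant. Since $r\le 1\le R$, we have $1\le R/r$, so $2R/r+1\le 3R/r=3\sqrt{2k}/r$. This yields $N(\Theta,r;\|\cdot\|_2)\le (3\sqrt{2k}/r)^{k(d+1)}$.

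The only delicate point is keeping the cover centers inside $\Theta$. The maximal-packing argument handles this directly, so I expect no real obstacle; everything else is routine.
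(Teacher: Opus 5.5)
Your proof is correct and follows the paper's route: embed $\Theta$ in the Euclidean ball of radius $\sqrt{2k}$ in $\R^{k(d+1)}$, apply the volumetric bound, and simplify $1+2R/r\le 3R/r$ using $r\le 1<R$. Running the maximal-packing argument on $\Theta$ itself rather than on the ball is a small improvement. It guarantees the cover centers lie in $\Theta$, which the paper's one-line citation of the bound for the ball leaves implicit.
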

\begin{proof}
	Note that $\Theta\subseteq B_2^{k(d+1)}(0,\sqrt{2k})$. A standard volumetric bound for Euclidean balls \cite{Vershynin2018HDP} gives
	$N(B_2^{k(d+1)}(0,\sqrt{2k}),r)\le (3\sqrt{2k}/r)^{k(d+1)}$ for $r\in(0,1]$. 
\end{proof}

The following summarizes the total loss bound for the comparison class. 

\begin{proposition}[Online dimension of the comparison class]
	\label{prop:Donl-F-tildeO-d-general-domain}
	\[
	\mathbb D_{\mathrm{onl}}(\cF;\ell_{\cap})\ =\ O\big(kd\log(d)\big).
	\]
\end{proposition}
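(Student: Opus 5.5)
We will apply the entropy-potential bound, Theorem~\ref{thm:intro_Donl-via-Phi}, to the class $\cF$ under $\ell_{\cap}$. By Lemma~\ref{lem:lcap-approx-metric}, $\ell_{\cap}$ is a $2$-approximate pseudo-metric, so $c=2$. Since $\ell_{\cap}\le 1$, we also have $\diam(\cF)\le 1$. Hence
\[
\mathbb D_{\mathrm{onl}}(\cF;\ell_{\cap})\le 8\,\Phi(\cF)=8\int_0^1\log_2 N(\cF,\eps)\,d\eps ,
\]
and it suffices to show $\Phi(\cF)=O(kd\log d)$.

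To control $N(\cF,\eps)$, we transfer Euclidean covers of the parameter set. Fix $\eps\in(0,1]$ and set $r:=\sqrt{2\eps/(kd)}$. Lemma~\ref{lem:F-lipschitz-general-domain} gives, for $\|\theta-\theta'\|_2\le r$,
\[
d_{\ell_{\cap}}(g_\theta,g_{\theta'})\le \tfrac{kd}{2}r^2=\eps .
\]
Hence an $r$-cover $\{\theta_i\}\subseteq\Theta$ of $\Theta$ in $\|\cdot\|_2$ maps to an $\eps$-cover $\{g_{\theta_i}\}\subseteq\cF$. One small point: Lemma~\ref{lem:theta-cover-general-domain} produces centers in the ambient ball. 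We either use the standard fact that internal covers cost at most $N(\Theta,r/2)$, or choose the centers in $\Theta$ from a maximal $r$-packing. Either way this changes only constants.

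Since $r\le 1$, Lemma~\ref{lem:theta-cover-general-domain} yields
\[
\log_2 N(\cF,\eps)\le k(d+1)\log_2\!\Big(\tfrac{6\sqrt{2k}}{r}\Big)=k(d+1)\Big(\log_2(6\sqrt{2k})+\tfrac12\log_2\tfrac{kd}{2\eps}\Big).
\]
This has the polynomial form $N(\cF,\eps)\le (A/\eps)^p$ with $p=\Theta(kd)$ and $\log A=O(\log(kd))$. Integrating over $\eps\in(0,1]$, or invoking Corollary~\ref{cor:intro_poly-covers} directly, uses $\int_0^1\log(1/\eps)\,d\eps=1$. This gives
\[
\Phi(\cF)=O\big(kd(\log k+\log d+1)\big)=O(kd\log d)
\]
for constant $k$, and therefore $\mathbb D_{\mathrm{onl}}(\cF;\ell_{\cap})=O(kd\log d)$.

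The main subtlety is the scale bookkeeping. The factor $\sqrt d$ in $\cF$ inflates the Lipschitz constant, but it enters only inside the logarithm through $r=\sqrt{2\eps/(kd)}$. As a result, the dimension dependence stays linear in $kd$, which is precisely what cancels against the $4/d$ factor in Lemma~\ref{lem:small-gap-amplification-general-domain}. The only other point to verify is that the diameter cap of $1$ from clipping in $\ell_{\cap}$ keeps the integral over a bounded range, which ensures $\Phi$ is finite despite the unbounded outputs of $\cF$.
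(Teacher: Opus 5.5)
Your proposal is correct and takes essentially the same approach as the paper. You transfer Euclidean parameter covers at radius $r=\sqrt{\eps}/L$ with $L=\sqrt{kd/2}$ via Lemma~\ref{lem:F-lipschitz-general-domain}, bound them volumetrically, integrate the $O(kd\log(kd)+kd\log(1/\eps))$ entropy over $(0,1]$, and multiply by $4c=8$. Your additional care in choosing cover centers inside $\Theta$ (as the paper's covering-number definition requires, a point its proof glosses over) and your use of $k\ge 2$ to guarantee $r\le 1$ (in place of the paper's $\min\{1,\cdot\}$) affect only constants.
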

\begin{proof}
	By Lemma~\ref{lem:F-lipschitz-general-domain},
	$d_{\ell_{\cap}}(g_\theta,g_{\theta'})\le \min\{1,(L\|\theta-\theta'\|_2)^2\}$ with $L:=\sqrt{kd/2}$.
	Thus, for $\eps\in(0,1]$, an $\eps$-cover in $d_{\ell_\cap}$ is induced by an
	$r:=\min\{1,\sqrt{\eps}/L\}$ cover in $\|\cdot\|_2$, so
	\[
	N(\cF,\eps)\ \le\ N(\Theta,r;\|\cdot\|_2).
	\]
	By Lemma~\ref{lem:theta-cover-general-domain},
	\[
	N(\cF,\eps)\ \le\ \Big(\frac{3\sqrt{2k}}{r}\Big)^{k(d+1)}
	=\Big(3\sqrt{2k}\,\max\{1,L/\sqrt\eps\}\Big)^{k(d+1)}
	\le \Big(\frac{3\sqrt{2k}\,(1+L)}{\sqrt\eps}\Big)^{k(d+1)},
	\qquad(\eps\in(0,1]).
	\]
	Recall the entropy potential $\Phi(\cF):=\int_0^1 \log_2 N(\cF,\eps)\,d\eps$. Then
	\[
	\log_2 N(\cF,\eps)\ \le\ k(d+1)\log_2\!\big(3\sqrt{2k}\,(1+L)\big)\ +\ \frac{k(d+1)}{2}\log_2(1/\eps),
	\]
	so using $\int_0^1 \log_2(1/\eps)\,d\eps=1/\ln 2$,
	\[
	\Phi(\cF)\ \le\ k(d+1)\log_2\!\big(3\sqrt{2k}\,(1+L)\big)\ +\ \frac{k(d+1)}{2\ln 2}
	\ =\ O\big(kd\log d\big).
	\]
	Finally, since $\ell_{\cap}$ is a $2$-approximate pseudo-metric (Lemma~\ref{lem:lcap-approx-metric}),
	the standard entropy-potential branch argument yields
	$\mathbb D_{\mathrm{onl}}(\cF;\ell_{\cap})\le 8\,\Phi(\cF)=O(kd\log d)$.
\end{proof}

We conclude the two parts (large and small gaps) in the following section. 

\subsection{Main theorem}

\subsubsection*{Proof of \Cref{thm:intro_k_bounded}:}

\begin{proof}
	Fix any scaled Littlestone tree $T$ shattered by $\cH$.
	Apply Lemma~\ref{lem:small-gap-amplification-general-domain} to obtain a branch $P$ that follows the large-gap
	fat-shattering-drop rule at every node with $\Delta_u>1/\sqrt d$ and satisfies
	$ \sum_{u\in P:\Delta_u\le 1/\sqrt d}\Delta_u^2 \le \frac{4}{d} \mathbb D_{\mathrm{onl}}(\cF;\ell_{\cap})$.
	For this same branch $P$, the large-gap bound from Lemma~\ref{lem:large-gap-truncated-general-domain} applies
	(because $P$ follows the required rule at every large-gap node), yielding
	\[
	\sum_{u\in P:\Delta_u>1/\sqrt d}\Delta_u^2 \ \le\ 16\sum_{r=-1}^{K}\alpha_r^2\,\fat_{\alpha_r}(\cH)
	\ =\ O(k^2\log^4 d),
	\]
	where the last step uses Lemma~\ref{lem:large-gap-Olog4-general-domain}.
	For the remaining nodes on $P$ with $\Delta_u\le 1/\sqrt d$, Proposition~\ref{prop:Donl-F-tildeO-d-general-domain} gives
	\[
	\sum_{u\in P:\ \Delta_u\le 1/\sqrt d}\Delta_u^2
	\ \le\
	\frac{4}{d}\,\mathbb D_{\mathrm{onl}}(\cF;\ell_{\cap})
	\ =\ O(k\log d).
	\]
	Summing the two contributions yields $\sum_{u\in P}\Delta_u^2=O(k^2\log^4 d)$.
	Since on shattered trees for $\cH\subseteq[-1,1]^{\cX}$ we have $\ell_{\cap}(s_{u,0},s_{u,1})=\Delta_u^2/4$,
	it follows that $E_T(P)=\sum_{u\in P}\ell_{\cap}(s_{u,0},s_{u,1})=O(k^2\log^4 d)$, hence $\inf_b E_T(b)\le O(k^2\log^4 d)$.
	Taking $\sup_T$ gives $\mathbb D_{\mathrm{onl}}(\cH;\ell_{\cap})\le O(k^2\log^4 d)$.
	
	Finally, for realizable squared loss with labels in $[-1,1]$, one may clip the learner's predictions to $[-1,1]$
	without increasing squared loss, and on $[-1,1]$ we have squared loss equals to four times the loss $\ell_{\cap}$.
	Thus, the minimax realizable deterministic cumulative squared loss is at most
	$4\,\mathbb D_{\mathrm{onl}}(\cH;\ell_{\cap})=O(k^2\log^4 d)$. This gives the proof of the theorem. 
\end{proof}

\section{Extension to depth-$L$, width-$k$ networks.}
\label{sec:deepL}
We generalize \Cref{thm:intro_k_bounded} to a tight bound for depth-$L$ networks.
The natural deep extension of the bounded-norm shallow class $\cH_{k,d}$ imposes
$\|w\|_2\le 1$ on each neuron's weight vector at every layer (matching the shallow constraint $\|w^j\|_2\le 1$). 

Concretely, fix integers $L\ge 2$ and $k\ge 1$.
Define $\cH_{L,k,d}^{\mathrm{deep}}$ to be the class of all functions $h:\cX\to[-1,1]$ of the form
\[
h(x)=\clip\!\big(\langle a,z^{(L-1)}(x)\rangle+c\big),
\]
where $z^{(0)}=x$,\;
$z^{(\ell)}=\relu(W_\ell z^{(\ell-1)}+b_\ell)$ for $\ell=1,\dots,L-1$,
with:
\begin{itemize}
	\item $W_1\in\R^{k\times d}$ with each row $\|w_{1,j}\|_2\le 1$;\;
	$W_\ell\in\R^{k\times k}$ with each row $\|w_{\ell,j}\|_2\le 1$ for $\ell\ge 2$,
	\item $b_\ell\in[-1,1]^k$ for $\ell=1,\dots,L-1$,\quad $a\in[-1,1]^k$,\quad $c\in[-1,1]$.
\end{itemize}
Set $p:=kd+(L-2)k^2+Lk+1$ (total scalar parameters) and
$\cF_{L,k,d}^{\mathrm{deep}}:=\{\sqrt d\,h:\ h\in\cH_{L,k,d}^{\mathrm{deep}}\}$.

\begin{lemma}[Depth-$L$ analogue of \Cref{thm:intro_k_bounded}]
	\label{lem:deep-polylog-general-domain}
	For every fixed $L \geq 2$ it holds that 
	\[
	\mathbb D_{\mathrm{onl}}\!\big(\cH_{L,k,d}^{\mathrm{deep}};\ell_{\cap}\big)
	\ =\
	O_L\!\Big(k^{L}\,\log^{\,3L+1}\!\big(edkL\big)\Big).
	\]
	Consequently, the minimax realizable cumulative squared loss is
	$\widetilde O(k^{L})$ for fixed $L$.
	For $L=2$ (one hidden layer) this recovers the $\widetilde O(k^2)$ bound of \Cref{thm:intro_k_bounded}.
\end{lemma}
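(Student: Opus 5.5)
I will mirror the shallow proof of \Cref{thm:intro_k_bounded} verbatim, replacing only the two class-specific estimates: the sequential fat-shattering bound (large gaps) and the Euclidean parameter cover of the comparison class (small gaps). The branch-combination machinery transfers unchanged, since it only uses that labels lie in $[-1,1]$ and that $\ell_{\cap}$ is a $2$-approximate pseudo-metric (Lemma~\ref{lem:lcap-approx-metric}). These are Lemma~\ref{lem:large-gap-truncated-general-domain} with $K=\lceil\log_2\sqrt d\rceil$, Lemma~\ref{lem:small-gap-amplification-general-domain} with the scaled class $\cF^{\mathrm{deep}}_{L,k,d}$, and the final summation. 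So it suffices to show
\[
16\sum_{r=-1}^{K}\alpha_r^2\fat_{\alpha_r}(\cH^{\mathrm{deep}}_{L,k,d})=O_L\big(k^L\log^{3L+1}(edkL)\big),\qquad \frac4d\,\mathbb D_{\mathrm{onl}}(\cF^{\mathrm{deep}}_{L,k,d};\ell_\cap)=\widetilde O_L(k^{L}).
\]

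\textbf{Large gaps.} I will bound the sequential Rademacher complexity layer by layer, as in Proposition~\ref{prop:fatbound-l2-general-domain}. Let $\cG_\ell$ be the class of scalar neuron outputs at layer $\ell$. For $\ell=1$, $\cG_1=\{\rel(\langle w,x\rangle+b)\}$, and Steps 1--2 give $\Rad_T(\cG_1)\le C\rho(T)/\sqrt T$, with the bias absorbed by one extra coordinate. For $\ell\ge2$, a neuron computes $\rel(\langle w,z\rangle+b)$ with $\|w\|_2\le1$, so $\langle w,z\rangle\le\|w\|_1\max_j|z_j|\le\sqrt k\max_j|z_j|$. By the $\ell_1$-hull/sum argument of Step 3, the pre-activation class has complexity at most $\sqrt k\cdot k\,\Rad_T(\cG_{\ell-1})$ up to sign symmetrization; a cruder bound of $k\,\Rad_T(\cG_{\ell-1})$ per layer is what yields $k^L$. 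Each contraction through $\rel$ costs a factor $O(\rho(T))=O(\log^{3/2}(eT))$ by \cite[Cor.~5]{rakhlin2015online}. Iterating over the $L$ layers including the output, I aim for $\Rad_T(\cH^{\mathrm{deep}})\le C_L k^{L/2}\cdot\poly(k)\,\rho(T)^{L}/\sqrt T$.

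Then Step 4 gives $\fat_\alpha\lesssim_L k^{L}\alpha^{-2}\log^{3L}(ek/\alpha)$. The precise $k$-exponent is exactly what must be pinned to $k^L$: I will track the squared complexity, so that $k^{L}$ arises from a factor of $k$ (squared) per effective layer. Summing over the $K=O(\log d)$ scales then contributes $O_L(k^L\log^{3L+1}(edkL))$, as in Lemma~\ref{lem:large-gap-Olog4-general-domain}. \textbf{This is the main obstacle}: the naive layer-wise contraction gives a bound growing in $k$ and also accumulates $\rho(T)$ factors. If the per-layer factor comes out as $k$ rather than $\sqrt k$, I will instead exploit that layer-$\ell\ge2$ weights act on $\R^k$ with bounded coordinates, and use a direct covering-based (Dudley) sequential bound for the deeper layers.

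\textbf{Small gaps.} I will repeat Lemma~\ref{lem:F-lipschitz-general-domain} for deep nets. Every hidden activation has $\|z^{(\ell)}\|_\infty\le B_\ell$ with $B_\ell\le\sqrt k B_{\ell-1}+1$, so perturbing one layer's parameters by $\delta$ moves the output by at most $\poly_L(k)\,\delta$. This gives $\sup_x|g_\theta-g_{\theta'}|\le\sqrt d\,\poly_L(k)\|\theta-\theta'\|_2$. Since the parameter set lies in a ball of radius $\poly(k)$ in $\R^p$, the same computation as Proposition~\ref{prop:Donl-F-tildeO-d-general-domain} yields $\Phi(\cF^{\mathrm{deep}})=O(p\log(dk))$. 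Theorem~\ref{thm:potential-bound-general} with $c=2$ then gives $\mathbb D_{\mathrm{onl}}(\cF^{\mathrm{deep}})\le 8\Phi=O_L((kd+k^2)\log(dk))$. After the $4/d$ factor this is $O_L(k^2\log(dk))$, which is dominated by the large-gap term since $L\ge2$. Summing the two parts along the common branch and converting to squared loss (factor $4$) completes the proof.
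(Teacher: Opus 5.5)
\textbf{Verdict: genuine gap in the large-gap part.} Your overall architecture matches the paper's: the split at $1/\sqrt d$, sequential Rademacher $\to$ $\fat$ for large gaps, and a parameter cover plus entropy potential of the $\sqrt d$-scaled class for small gaps. Your small-gap part is fine. The gap is at exactly the step you flag as the main obstacle: the exponent of $k$ in the $\fat$ bound is never derived.

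The per-layer estimate you write down is $\sqrt k\cdot k\,\Rad_T(\cG_{\ell-1})$, or the ``cruder'' $k\,\Rad_T(\cG_{\ell-1})$. Either one compounds over the $L-2$ hidden-to-hidden layers. Together with the factor $k$ from the output weights ($\|a\|_1\le k$), this gives $\Rad_T\propto k^{(3L-4)/2}$, respectively $k^{L-1}$. Hence $\fat_\alpha\propto k^{3L-4}$, respectively $k^{2L-2}$, and both exceed $k^{L}$ for every $L\ge 3$. Writing the result as $k^{L/2}\poly(k)$ and then asserting $\fat_\alpha\lesssim_L k^L\alpha^{-2}\log^{3L}(ek/\alpha)$ is not a derivation. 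The Dudley fallback is also unspecified. A parameter-count cover of the deeper layers alone is not a sequential cover of the composed class, whose first layer carries $kd$ parameters.

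\textbf{What is needed} is a factor of exactly $\sqrt k$ (times $\rho(T)$) per layer. The paper gets it with a vector-valued sequential Rademacher complexity:
\begin{itemize}
\item Each row of $W_\ell$ affects only one coordinate, so $\sup_{W_\ell,b_\ell}\|\sum_t\eps_t\rel(W_\ell y_t+b_\ell)\|_2\le\sqrt k\,\sup_{\|w\|_2\le1,|b|\le1}|\sum_t\eps_t\rel(\langle w,y_t\rangle+b)|$.
\item Contraction through $\rel$ costs $\rho(T)$, and then $\sup_{\|w\|_2\le1}\langle w,\sum_t\eps_t y_t\rangle=\|\sum_t\eps_t y_t\|_2$.
\item This yields $\VRad_T(\cV_\ell)\le\sqrt k\,\rho(T)\VRad_T(\cV_{\ell-1})+O(\sqrt k\rho(T)/\sqrt T)$, closed off by Cauchy--Schwarz with $\|a\|_2\le\sqrt k$ at the output.
\end{itemize}

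Your scalar formulation can also be repaired. You already wrote $\langle w,z\rangle\le\|w\|_1\max_j|z_j|\le\sqrt k\max_j|z_j|$. Apply it to $z=\sum_t\eps_t z^{(\ell-1)}(x_t)$, relaxing the coordinates to independent elements of $\cG_{\ell-1}$ (which contains $0$, so absolute values cost only a constant after sign-flipping the tree). This already accounts for all $k$ inputs of a neuron. There is no extra factor $k$ from the Step~3 sum; that factor belongs only to the output layer.

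This gives $\Rad_T(\cG_\ell)\le\rho(T)\bigl(2\sqrt k\,\Rad_T(\cG_{\ell-1})+O(1/\sqrt T)\bigr)$. Consequently $\Rad_T(\cH^{\mathrm{deep}}_{L,k,d})\lesssim_L k^{(L-2)/2}\cdot k\cdot\rho(T)^L/\sqrt T=k^{L/2}\rho(T)^L/\sqrt T$, and $\fat_\alpha\lesssim_L k^L\alpha^{-2}\log^{3L}(ekL/\alpha)$. After that, your summation over the $O(\log d)$ scales goes through as planned.
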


In the following, we use Big-O notations omitting constants that depend on $L$. In addition, for simplicity we use $C_L, C'_L$, and $C''_L$ to denote specific constants that depend only on $L$. 

We give the proof of the above lemma below. Whenever the arguments are analogous to the setting of $L=2$, we may omit some of the details for conciseness.  
\begin{proof}
	We repeat the large-gap/small-gap decomposition with threshold $1/\sqrt d$.
	
	\paragraph{Large gaps.}
	The proof of Lemma~\ref{lem:large-gap-truncated-general-domain} is purely combinatorial, so it immediately applies:
	for any shattered tree, along a branch that follows the decreasing-child rule at large-gap nodes,
	\[
	\sum_{\Delta_u>1/\sqrt d}\Delta_u^2
	\ \le\
	16\sum_{r=-1}^{K}\alpha_r^2\,\fat_{\alpha_r}\!\big(\cH_{L,k,d}^{\mathrm{deep}}\big),
	\qquad
	K=\lceil \log_2(\sqrt d)\rceil.
	\]
	It suffices to bound $\fat_\alpha\!\big(\cH_{L,k,d}^{\mathrm{deep}}\big)$.
	We do so via the sequential Rademacher complexity, using a
	\emph{vector Rademacher peeling argument} that exploits the row independence of the weight matrices.
	
	\medskip\noindent\textbf{Vector Rademacher recursion.}\;
	For a class of vector-valued functions $\cV\subseteq(\R^k)^{\cX}$ and a sequential tree $z=(z_t)_{t=1}^T$ of depth $T$ (i.e., each $z_t:\{\pm1\}^{t-1}\to\cX$ and $z_t(\eps)$ denotes $z_t(\eps_1,\ldots,\eps_{t-1})$), define
	\[
	\VRad_T(\cV)
	\ :=\
	\frac{1}{T}\,
	\E_\eps\!\Big[\sup_{v\in\cV}\,\Big\|\sum_{t=1}^T \eps_t\, v(z_t(\eps))\Big\|_2\Big]
	\]
	where $\eps_1,\ldots,\eps_T \sim \mathrm{Unif}\{\pm 1\}$ independently.
	
	Let $\cV_\ell$ ($\ell=1,\dots,L-1$) denote the class of layer-$\ell$ activation vectors
	$z^{(\ell)}:\cX\to\R^k$ (ranging over all admissible weight and bias choices at layers $1,\dots,\ell$).
	We prove:
	\begin{claim}
		\begin{equation}\label{eq:VRad-recursion}
			\VRad_T(\cV_\ell)
			\ \le\
			\sqrt{k}\,\rho(T)\cdot\VRad_T(\cV_{\ell-1})
			\ +\ O\left(\frac{\sqrt k\,\rho(T)}{\sqrt T}\right),
			\qquad \ell\ge 2,
		\end{equation}
	\end{claim}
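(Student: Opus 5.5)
We prove the recursion by peeling one layer at a time: first isolate the rows of $W_\ell$, then apply the scalar Lipschitz contraction of \cite[Cor.~5]{rakhlin2015online} to each coordinate, and finally bound the resulting scalar complexity by the previous layer's vector complexity.

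\textbf{Step 1: pass from the Euclidean norm to a sum over coordinates.} Fix a sequential tree $z$ of depth $T$. For $v\in\cV_\ell$, write $v=(v_1,\dots,v_k)$ with
\[
v_j(x)=\relu\big(\langle w_{\ell,j},u(x)\rangle+b_{\ell,j}\big),
\]
where $u\in\cV_{\ell-1}$ is shared across all $j$. Using $\|y\|_2\le\sum_j|y_j|$ gives
\[
\E_\eps\sup_{v}\Big\|\sum_t\eps_t v(z_t)\Big\|_2\ \le\ \sum_{j=1}^k\E_\eps\sup_{u,w,b}\Big|\sum_t\eps_t\,\relu(\langle w,u(z_t)\rangle+b)\Big|.
\]
Each summand is the same scalar quantity, so the right side equals $k$ times a single-neuron complexity. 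This is too lossy: the target constant is $\sqrt k$, not $k$. I will therefore use a sharper route. One option is $\|y\|_2\le\sqrt k\max_j|y_j|$. Since every row of $W_\ell$ ranges over the same set and the rows are chosen independently, the supremum of the max over $j$ equals the supremum over a single neuron. This yields the factor $\sqrt k$ directly.

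\textbf{Step 2: remove the ReLU and the absolute value.} Apply \cite[Cor.~5]{rakhlin2015online} to the class $\{x\mapsto\relu(\langle w,u(x)\rangle+b)\}$. The absolute value is handled by the usual trick of adding the zero function, or by symmetry, at the cost of a constant factor. The contraction step costs the factor $\rho(T)=O(\log^{3/2}(eT))$ and reduces the problem to the affine class $\{\langle w,u(\cdot)\rangle+b\}$.

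\textbf{Step 3: bound the affine class.} Split the affine class into two parts. The bias part contributes $\frac1T\E\sup_{|b|\le1}b\sum_t\eps_t\le 1/\sqrt T$. For the linear part, since $\|w\|_2\le1$, Cauchy--Schwarz gives
\[
\sup_{w,u}\sum_t\eps_t\langle w,u(z_t)\rangle=\sup_u\Big\|\sum_t\eps_t u(z_t)\Big\|_2,
\]
which is exactly $T\,\VRad_T(\cV_{\ell-1})$ on the same tree. Combining the three steps yields the claimed recursion \eqref{eq:VRad-recursion}.

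\textbf{Main obstacle.} The delicate point is the $\sqrt k$ factor in Step 1. The max over $j$ sits inside the expectation, so the reduction to a single neuron is not automatic. Swapping $\max_j$ and $\sup$ is legitimate here, because the supremum of a max over identical independent copies equals the supremum over one copy. However, the contraction lemma of Step 2 must then be applied to the class obtained after this swap; pulling the factor out of $\|\cdot\|_2$ is where care is needed. If this does not work cleanly, the fallback is to accept $k$ in place of $\sqrt k$, which still yields a polynomial-in-$k$ bound.
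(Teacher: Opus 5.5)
Your proposal is correct and matches the paper's argument. The paper gets the $\sqrt{k}$ factor by writing $\sup_{W_\ell,b_\ell}\|\sum_t\eps_t\relu(W_\ell y_t+b_\ell)\|_2^2$ as a sum of $k$ identical single-row suprema, which is the same as your $\|y\|_2\le\sqrt{k}\max_j|y_j|$ step; it then applies the Cor.~5 contraction with $\lambda=1$ and splits the affine class into $\|\sum_t\eps_t y_t\|_2$ and $|\sum_t\eps_t|$, as you do. The worry in your ``main obstacle'' paragraph is unnecessary: the reduction to a single neuron holds pointwise for every fixed $\eps$ and $u$, so no fallback to a factor $k$ is needed.
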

	where $\rho(T)=O(\log^{3/2}(eT))$ is the sequential contraction overhead from
	\cite[Cor.~5]{rakhlin2015online}.
	
	\medskip\noindent\textbf{Normalization for Cor.~5.}\;
	Corollary~5 in \cite{rakhlin2015online} is stated for classes $G\subseteq[-1,1]^{\cZ}$.
	If $G\subseteq\R^{\cZ}$ satisfies $\sup_{g\in G,z\in\cZ}|g(z)|\le B<\infty$, define
	$\bar G:=\{g/B:\ g\in G\}\subseteq[-1,1]^{\cZ}$ and $\phi_B(u,z):=\phi(Bu,z)$.
	Then $\phi_B(\cdot,z)$ is $(B \lambda)$-Lipschitz whenever $\phi(\cdot,z)$ is $\lambda$-Lipschitz, and
	by Lemma~3 and Cor.~5 of \cite{rakhlin2015online} we have, whenever $\Rad_T(\bar G)\ge 1/T$,
	\begin{equation}\label{eq:cor5-normalized}
		\Rad_T(\phi\circ G)=\Rad_T(\phi_B\circ \bar G)\ \le\ \rho(T)\,\lambda\cdot \Rad_T(G).
	\end{equation}
	In our applications $G$ is symmetric and contains the constants $\pm 1$ (via the bias), hence
	$\Rad_T(\bar G)\ge c/(B\sqrt{T})$ for a universal constant $c>0$, so the condition
	$\Rad_T(\bar G)\ge 1/T$ holds for all $T\ge c^{-2}B^2$, which is satisfied by the choices of $T$ used below.
	
	\begin{proof}
		Fix a sequential tree $z$ and write $y_t:=z^{(\ell-1)}(z_t(\eps))\in\R^k$
		for the layer-$(\ell-1)$ activations (which depend on $\eps_{<t}$ and on the choice of all
		lower-layer parameters).
		The layer-$\ell$ activations are $z^{(\ell)}_t=\relu(W_\ell y_t+b_\ell)$, with $W_\ell$
		having $k$ rows $w_1,\dots,w_k$ satisfying $\|w_j\|_2\le 1$ and $b_\ell\in[-1,1]^k$.
		Since the $k$ rows act independently for fixed $y$ (where each row $w_j$ only affects coordinate $j$),
		\begin{align*}
			\sup_{W_\ell,b_\ell}\;
			\Big\|\sum_{t=1}^T \eps_t\,\relu(W_\ell y_t+b_\ell)\Big\|_2^2
			&= \sum_{j=1}^k \sup_{\|w\|_2\le 1,\,|b|\le 1}\;
			\Big(\sum_{t=1}^T \eps_t\,\relu(\langle w,y_t\rangle+b)\Big)^{\!2}\\
			&\le k\cdot\sup_{\|w\|_2\le 1,\,|b|\le 1}\;
			\Big(\sum_{t=1}^T \eps_t\,\relu(\langle w,y_t\rangle+b)\Big)^{\!2},
		\end{align*}
		where the inequality holds because all $k$ optimization problems have the same feasible set.
		Taking square roots:
		\[
		\sup_{W_\ell,b_\ell}\Big\|\sum_{t=1}^T\eps_t\,\relu(W_\ell y_t+b_\ell)\Big\|_2
		\le \sqrt k\cdot\sup_{\|w\|_2\le 1,\,|b|\le 1}\Big|\sum_{t=1}^T\eps_t\,\relu(\langle w,y_t\rangle+b)\Big|.
		\]
		Let $G:=\{t\mapsto \langle w,y_t\rangle+b:\ \|w\|_2\le 1,\ |b|\le 1\}$ and $\phi(u):=\relu(u)$.
		By \eqref{eq:cor5-normalized} (with $\lambda=1$),
		\[
		\E_\eps\Big[\sup_{\|w\|_2\le 1,\,|b|\le 1}\Big|\sum_{t=1}^T\eps_t\,\relu(\langle w,y_t\rangle+b)\Big|\Big]
		\ \le\
		\rho(T)\cdot
		\E_\eps\Big[\sup_{\|w\|_2\le 1,\,|b|\le 1}\Big|\sum_{t=1}^T\eps_t\,(\langle w,y_t\rangle+b)\Big|\Big].
		\]
		The right-hand side splits as
		\[
		\sup_{\|w\|_2\le 1}\Big|\Big\langle w,\sum_{t=1}^T\eps_t y_t\Big\rangle\Big|
		\ +\ 
		\sup_{|b|\le 1}\Big|b\sum_{t=1}^T\eps_t\Big|
		\ =\
		\Big\|\sum_{t=1}^T\eps_t y_t\Big\|_2
		\ +\ 
		\Big|\sum_{t=1}^T\eps_t\Big|.
		\]
		Moreover $\E_\eps\big|\sum_{t=1}^T\eps_t\big|=O(\sqrt{T})$.
		Therefore, taking $\E_\eps[\cdot]$, then $\sup$ over lower-layer parameters, and dividing by $T$ yields~\eqref{eq:VRad-recursion}.
	\end{proof}
	
	\medskip\noindent\textbf{Base case.}\;
	For layer $1$, the same argument gives
	\[
	\VRad_T(\cV_1)
	\ =\
	O\!\left(\frac{\sqrt{k}\,\rho(T)}{\sqrt{T}}\right).
	\]
	
	\medskip\noindent\textbf{Unrolling the recursion.}\;
	By induction, using~\eqref{eq:VRad-recursion} for $\ell=2,\dots,L-1$ and the base case:
	\[
	\VRad_T(\cV_{L-1})
	\ \le\
	\frac{C_L\,(\sqrt{k})^{L-1}\rho(T)^{L-1}}{\sqrt T}.
	\]
	Let $\cG$ be the \emph{pre-clipped} scalar class
	$\cG:=\{x\mapsto \langle a,z^{(L-1)}(x)\rangle+c:\ a\in[-1,1]^k,\ c\in[-1,1]\}$.
	Since $\clip$ is $1$-Lipschitz, applying \eqref{eq:cor5-normalized} to $\clip\circ \cG$ yields
	\[
	\Rad_T\!\big(\cH_{L,k,d}^{\mathrm{deep}}\big)
	\ \le\
	\rho(T)\cdot \Rad_T(\cG).
	\]
	Moreover,
	\[
	\Rad_T(\cG)
	\ \le\
	\sqrt{k}\cdot\VRad_T(\cV_{L-1})\ +\ O\!\Big(\frac{1}{\sqrt{T}}\Big),
	\]
	where the $\sqrt{k}$ is by Cauchy--Schwarz and the $O(1/\sqrt{T})$ term comes from the
	free offset $c\in[-1,1]$.
	Consequently,
	\[
	\Rad_T\!\big(\cH_{L,k,d}^{\mathrm{deep}}\big)
	\ \le\
	\frac{C'_L\,k^{L/2}\rho(T)^{L}}{\sqrt{T}}.
	\]
	
	\medskip\noindent\textbf{Fat-shattering bound.}\;
	By \cite[Prop.~9]{rakhlin2015online}:
	$$\alpha\sqrt{\min\{\fat_\alpha,T\}/T}
	\le C\,\Rad_T(\cH_{L,k,d}^{\mathrm{deep}})
	\le C'_L\,k^{L/2}\rho(T)^{L}/\sqrt T.$$
	Choosing
	$T=\Theta\!\big(\alpha^{-2}k^{L}\log^{3L}(ekL/\alpha)\big)$
	forces $\fat_\alpha<T$, hence
	\[
	\fat_{\alpha}\!\big(\cH_{L,k,d}^{\mathrm{deep}}\big)
	\ \le\
	C_L''\,\frac{k^{L}}{\alpha^2}\,
	\log^{3L}\!\Big(\frac{ekL}{\alpha}\Big).
	\]
	Plugging into the truncated multiscale sum:
	\[
	\sum_{\Delta_u>1/\sqrt d}\Delta_u^2
	\ \le\
	C_L\,k^{L}
	\sum_{r=-1}^{K}\log^{3L}\!\big(ekL/\alpha_r\big)
	\ =\
	O\!\Big(k^{L}\log^{3L+1}(edkL)\Big).
	\]
	
	\paragraph{Small gaps.}
	Apply Lemma~\ref{lem:small-gap-amplification-general-domain} with
	$\cH=\cH_{L,k,d}^{\mathrm{deep}}$ and $\cF=\cF_{L,k,d}^{\mathrm{deep}}$:
	for the same branch $P$,
	\[
	\sum_{\Delta_u\le 1/\sqrt d}\Delta_u^2
	\ \le\
	\frac{4}{d}\,\mathbb D_{\mathrm{onl}}(\cF_{L,k,d}^{\mathrm{deep}};\ell_{\cap}).
	\]
	We bound $\mathbb D_{\mathrm{onl}}(\cF_{L,k,d}^{\mathrm{deep}};\ell_{\cap})$ via the entropy potential.
	Let $K_2$ denote the $\ell_2$ parameter Lipschitz constant:
	$\|h_\theta-h_{\theta'}\|_\infty\le K_2\|\theta-\theta'\|_2$.
	Since $\relu$ is $1$-Lipschitz and every weight matrix has spectral norm at most $\sqrt{k}$
	(each of $k$ rows has $\ell_2$-norm $\le 1$), a standard inductive argument
	(analogous to Lemma~\ref{lem:deep-param-lip} given in the next sections) gives $K_2=O\big(k^{(L-1)/2}\sqrt{L}\big)$.
	Multiplying by $\sqrt d$ for $\cF_{L,k,d}^{\mathrm{deep}}$, a volumetric cover of the
	parameter set $\Theta\subseteq B_2(0,\sqrt{2Lk})$ in $\R^p$ yields
	\[
	\log N(\cF_{L,k,d}^{\mathrm{deep}},\eps;\ell_\cap)
	\ \le\
	p\,\log\!\Big(\frac{C\sqrt{Lkd}\,K_2}{\sqrt\eps}\Big),
	\qquad \eps\in(0,1].
	\]
	Integrating:
	$\Phi(\cF_{L,k,d}^{\mathrm{deep}})=O\!\big(p\,\log(C\sqrt{Lkd}\,K_2)\big)=O(pL\log(kd))$, and hence
	$\mathbb D_{\mathrm{onl}}(\cF_{L,k,d}^{\mathrm{deep}};\ell_{\cap})=O(pL\log(kd))$.
	Therefore
	\[
	\sum_{\Delta_u\le 1/\sqrt d}\Delta_u^2
	\ =\
	O\!\Big(\frac{pL\log(kd)}{d}\Big)
	\ =\
	O\!\big(kL\log(kd)\big),
	\]
	since the $kd/d=k$ term dominates for large $d$.
	
	\medskip
	Adding large- and small-gap contributions:
	\[
	\sum_{u\in P}\Delta_u^2
	=
	O\!\Big(k^L\log^{3L+1}(edkL)+kL\log(kd)\Big)
	=
	O\!\Big(k^L\log^{3L+1}(edkL)\Big).
	\]
	Using $\ell_{\cap}(s_{u,0},s_{u,1})=\Delta_u^2/4$ and the same reduction to squared loss as in
	\Cref{thm:intro_k_bounded} proves the claim.
\end{proof}

\label{rem:deep-lb-kLminus1}
\subsubsection*{A matching $\Omega(k^L \log (d/k^L))$ lower bound}
In the following, we prove a matching lower bound. The proof is based on generalizing the ideas from Theorem~\ref{thm:LBd} to depth $L$ networks.  

\begin{lemma}
Let $L\ge2$ be fixed. If $d/k^L\to\infty$, then
\[
\mathbb D_{\mathrm{onl}}\!\big(\cH_{L,k,d}^{\mathrm{deep}};\ell_{\cap}\big)
\ \ge\
\Omega_L\!\left(k^L\log(d/k^L)\right).
\]
\end{lemma}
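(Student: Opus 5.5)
Generalize the adversary from the proof of Theorem~\ref{thm:LBd}. The task is to find a hypothesis $h_s\in\cH^{\mathrm{deep}}_{L,k,d}$ equal to $\clip(\beta\,|\langle v_s,x\rangle|)$ with amplification $\beta=\Theta_L(k^{L-1})$. The same dyadic block-merging adversary then forces per-merge loss $\Omega(\beta^2 m/d)$ at a block of size $m$, for every $m\lesssim d/\beta^2$. Each such level has $d/m$ merges, so it contributes $\Omega(\beta^2)$. There are $\Omega(\log(d/\beta^2))$ such levels. With $\beta^2=\Theta(k^{2(L-1)})$ this would overshoot, which is suspicious. The target $k^L$ instead suggests realizing $h_s=\clip(c_L k^{L/2}|\langle v_s,x\rangle|)$, i.e.\ $\beta^2\asymp k^L$. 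Then $\log(d/\beta^2)=\log(d/k^L)$, matching the statement.

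\textbf{Construction of the amplifier.} In the first hidden layer, use $k/2$ copies of $\relu(\langle v_s,x\rangle)$ and $k/2$ copies of $\relu(-\langle v_s,x\rangle)$, all with bias $0$. Every row has unit norm. For $u=\langle v_s,x\rangle$, the layer-1 vector then has $k/2$ coordinates equal to $\relu(u)$ and $k/2$ equal to $\relu(-u)$, so its $\ell_2$ norm is $\sqrt{k/2}\,|u|$.

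At each subsequent layer $\ell\ge2$, let half the neurons use the unit row $\frac{1}{\sqrt{k/2}}(\mathbf 1_{\text{pos}})$ and the other half the unit row $\frac{1}{\sqrt{k/2}}(\mathbf 1_{\text{neg}})$, again with zero biases. Each positive-half neuron then outputs $\sqrt{k/2}\,\relu(u)$, and symmetrically for the negative half. After every hidden layer the sign structure is preserved and the scale multiplies by $\sqrt{k/2}$. After layer $L-1$, each coordinate equals $(k/2)^{(L-2)/2}\relu(\pm u)$.

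Finally set $a=\mathbf 1$ and $c=0$. The output is $k\cdot(k/2)^{(L-2)/2}|u|\ge \frac{1}{2^{L}}k^{L/2}|u|$. Every weight and bias constraint holds, so $h_s=\clip(\beta|\langle v_s,x\rangle|)$ with $\beta=\Theta_L(k^{L/2})$. This uses depth $L$ exactly, as in the definition of $\cH^{\mathrm{deep}}_{L,k,d}$. I would first check carefully that this $\beta$ cannot be improved to $k^{L-1/2}$; an improvement would not hurt the lower bound.

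\textbf{Adversary and accounting.} Reduce to $d$ a power of two as in the proof of Theorem~\ref{thm:LBd}, and take $k$ even; for odd $k$, drop a neuron per layer, which costs only constants. Run the same reference-pattern invariant with merge queries $x_B$. Same-orientation hypotheses give $|\langle v_s,x_B\rangle|=\sqrt{m/d}$ and opposite-orientation ones give $0$, so the two realizable labels are $0$ and $\alpha_m=\min\{\beta\sqrt{m/d},1\}$.

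Any prediction suffers squared error at least $\alpha_m^2/4$, which is $\ell_\cap\ge\alpha_m^2/16$. On levels with $m\le d/\beta^2$ there is no clipping, so each level contributes at least $(d/m)\cdot\beta^2 m/(16d)=\Omega(\beta^2)$. The number of such dyadic levels is $\log_2(d/\beta^2)=\Omega_L(\log(d/k^L))$ once $d/k^L\to\infty$.

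Realizability follows exactly as before: some surviving $s$ remains on every path, and $h_s$ agrees with all revealed labels. The adversary tree directly yields a scaled Littlestone tree whose every branch has total gap $\Omega_L(k^L\log(d/k^L))$, hence the bound on $\mathbb D_{\mathrm{onl}}$.

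The main obstacle is the amplifier. One must verify that the norm constraints allow a gain of $\sqrt{k}$ per intermediate layer but a full $k$ only at the output layer, where only the $\ell_\infty$ bound on $a$ applies. One must also confirm that the zero bias and the symmetric sign split keep $h_s$ an exact multiple of $|u|$, with no residual constants.
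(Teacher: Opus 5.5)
Your proof is correct and follows the paper's argument: you use the same first layer ($k/2$ rows $v_s$ and $k/2$ rows $-v_s$, zero bias), the same dyadic block-merging tree with labels $0$ and $\alpha_m$, and the same per-level accounting. The only difference is at the intermediate layers, where you average the positive and negative halves separately (gain $\sqrt{k/2}$ per layer) instead of the paper's all-ones mixing $W_\ell=k^{-1/2}\mathbf 1\mathbf 1^\top$ (gain $\sqrt k$); this changes $\beta$ only by an $L$-dependent constant, and although your output is actually $(k/2)^{L/2}|u|$ rather than $k(k/2)^{(L-2)/2}|u|$, your bound $2^{-L}k^{L/2}|u|$ still holds, so nothing downstream is affected.
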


\begin{proof}
It is enough to prove the claim for even $k$. If $k$ is odd, use only $k-1$ neurons in each hidden layer and set all other rows and output weights to zero; since $L$ is fixed, this changes the lower bound only by a constant factor. Hence assume $k$ is even.

First assume that $d$ is a power of two. For each $s=(s_1,\ldots,s_d)\in\{\pm1\}^d$, let $v_s=s/\sqrt d$. In the first layer, set the first $k/2$ rows of $W_1$ equal to $v_s$, set the last $k/2$ rows equal to $-v_s$, and set $b_1=0$. For every $\ell=2,\ldots,L-1$, set $W_\ell=k^{-1/2}\mathbf 1\mathbf 1^\top$ and $b_\ell=0$. Finally set $a=\mathbf 1$ and $c=0$. All constraints are satisfied: $\|v_s\|_2=\|-v_s\|_2=1$, every row of $k^{-1/2}\mathbf 1\mathbf 1^\top$ has norm $1$, all biases are zero, and all output weights are in $[-1,1]$.

For this network, the unclipped output is $(k^{L/2}/2)|\langle v_s,x\rangle|$. Indeed, after the first layer the sum of the $k$ activations is $(k/2)|\langle v_s,x\rangle|$. Each fixed hidden layer maps a nonnegative vector with coordinate-sum $R$ to the constant vector whose coordinates are all $R/\sqrt k$, so the new coordinate-sum is $\sqrt k R$. Applying this through the $L-2$ fixed hidden layers gives final pre-clipped output $k^{(L-2)/2}(k/2)|\langle v_s,x\rangle|=(k^{L/2}/2)|\langle v_s,x\rangle|$. Thus the class contains the functions
\[
h_s(x)=\clip\left(\frac{k^{L/2}}2|\langle v_s,x\rangle|\right),
\qquad s\in\{\pm1\}^d.
\]

We construct a realizable scaled Littlestone tree for this subclass. The construction maintains a partition of $[d]=\{1,\ldots,d\}$ into dyadic intervals. For a block $B\subseteq[d]$ and a sign vector $s\in\{\pm1\}^d$, write $s_B=(s_i)_{i\in B}$. Each current block $B$ has a public reference pattern $r_B\in\{\pm1\}^B$. The invariant is that the surviving signs are exactly those $s$ such that, for every current block $B$, either $s_B=r_B$ or $s_B=-r_B$. Initially the blocks are singletons and $r_{\{i\}}(i)=1$, so all signs survive.

Consider a merge of two adjacent current blocks $B_0,B_1$, each of size $m/2$, and let $B=B_0\cup B_1$. The tree node is labeled by
\[
x_B
=
\frac1{\sqrt m}
\left(
\sum_{i\in B_0}r_{B_0}(i)e_i
+
\sum_{i\in B_1}r_{B_1}(i)e_i
\right).
\]
Then $\|x_B\|_2^2=(|B_0|+|B_1|)/m=1$. If the two child blocks have the same orientation, meaning either $s_{B_0}=r_{B_0}$ and $s_{B_1}=r_{B_1}$, or $s_{B_0}=-r_{B_0}$ and $s_{B_1}=-r_{B_1}$, then $|\langle v_s,x_B\rangle|=\sqrt{m/d}$. If they have opposite orientations, meaning either $s_{B_0}=r_{B_0}$ and $s_{B_1}=-r_{B_1}$, or $s_{B_0}=-r_{B_0}$ and $s_{B_1}=r_{B_1}$, then $|\langle v_s,x_B\rangle|=0$. Thus the two outgoing labels
\[
0
\qquad\text{and}\qquad
\alpha_m=\min\left\{\frac{k^{L/2}}2\sqrt{\frac md},1\right\}
\]
are both realizable by surviving hypotheses.

After the high-label branch, keep exactly the surviving signs with the same orientation and define $r_B$ by $r_B(i)=r_{B_0}(i)$ for $i\in B_0$ and $r_B(i)=r_{B_1}(i)$ for $i\in B_1$. After the zero-label branch, keep exactly the surviving signs with opposite orientations and define $r_B$ by $r_B(i)=r_{B_0}(i)$ for $i\in B_0$ and $r_B(i)=-r_{B_1}(i)$ for $i\in B_1$. In both cases the surviving restrictions on $B$ are exactly $r_B$ and $-r_B$, while all other blocks are unchanged. Hence the invariant is preserved, and the surviving set is nonempty on every branch.

The tree performs all merges bottom-up. At a level whose merged block size is $m$, there are $d/m$ nodes. For every level with $m\le4d/k^L$, no clipping occurs, so $\alpha_m^2=k^L m/(4d)$. Since $\alpha_m\in[0,1]$ on these levels, $\ell_{\cap}(0,\alpha_m)=\alpha_m^2/4=k^L m/(16d)$. Therefore the total gap contributed by such a level is $(d/m)\cdot k^Lm/(16d)=k^L/16$. The dyadic merged block sizes are $m=2,4,\ldots,d$, and the condition $m\le4d/k^L$ holds for $\Omega(\log(d/k^L))$ levels whenever $d/k^L\to\infty$. Hence every branch has total $\ell_{\cap}$-gap at least $\Omega(k^L\log(d/k^L))$.

The tree is realizable by the invariant: along every branch, at least one sign vector survives, and the corresponding fixed hypothesis $h_s$ realizes all labels on that branch. Therefore $$\mathbb D_{\mathrm{onl}}(\cH_{L,k,d}^{\mathrm{deep}};\ell_{\cap})\ge\Omega(k^L\log(d/k^L))$$ when $d$ is a power of two.

For general $d$, run the same construction on the first $D=2^{\lfloor\log_2 d\rfloor}$ coordinates and set all remaining coordinates to zero in the queries and in the vectors $v_s$. Since $D\ge d/2$, the same lower bound follows whenever $d/k^L\to\infty$.
\end{proof}

	We conclude this subsection with the following discussion. 
	
	\paragraph{The covering route and small $d$.} A complementary bound avoids the exponential $k^L$ term. Since the class has $p=kd+O(Lk^2)$ parameters and the parameter-to-function Lipschitz constant $K_2=O(\sqrt L\,k^{(L-1)/2})$ enters only logarithmically in the covering number, the entropy-potential method gives \[ \mathbb D_{\mathrm{onl}}\!\big(\cH_{L,k,d}^{\mathrm{deep}};\ell_{\cap}\big) \le O\big((kd+Lk^2)L\log(kd)\big). \] Thus, for $d\le k^{L-1}$, this is at most $O(k^L\operatorname{polylog}(kdL))$, and is the better route in the small-$d$ regime. In the polylogarithmic-in-$d$ regime, the $k^L$ dependence is necessary up to polylogarithmic factors in $d$: whenever $d/k^L\to\infty$, the lower bound above gives \[ \mathbb D_{\mathrm{onl}}\!\big(\cH_{L,k,d}^{\mathrm{deep}};\ell_{\cap}\big) \ge \Omega_L\!\left(k^L\log(d/k^L)\right). \]

\section{Proper and $\eps$-efficient Lower Bound for Bounded  $k$-ReLUs}
\label{sec:proper-online-hard-relu}

For a sample $S=\{(x_i,y_i)\}_{i=1}^m\subseteq \{x\in\R^d:\|x\|_2\le 1\}$ define
\[
L_S(h)\ :=\ \frac1m\sum_{i=1}^m (h(x_i)-y_i)^2,
\qquad
\OPT(S)\ :=\ \min_{h\in\cH_{k,d}} L_S(h).
\]
We call $S$ \emph{realizable} if $\OPT(S)=0$.
An algorithm is \emph{proper} if it outputs $\hat h\in\cH_{k,d}$.
We call it \emph{$\eps$-efficient} if its running time is $2^{o(1/\eps)}\poly(d,m)$.
An algorithm achieves \emph{realizable average loss $\le \eps$}
if on every realizable sequence of length $T$ it guarantees
\[
\frac1T\sum_{t=1}^T (h_t(x_t)-y_t)^2\ \le\ \eps.
\]

For a network returned as parameters $(a,w^1,\ldots,w^k)$, write its \emph{unclipped} form as
\[
\tilde h(x)\ :=\ \sum_{j=1}^k a_j\,\rel(w^j\!\cdot x),
\qquad\text{and}\qquad
h(x)\ :=\ \mathrm{clip}(\tilde h(x)),
\]
where $\mathrm{clip}(z):=\min\{1,\max\{-1,z\}\}$. Note that for $\|x\|_2\le 1$ and $\|w^j\|_2\le 1$,
we have $0\le \rel(w^j\cdot x)\le 1$, hence $|\tilde h(x)|\le \sum_{j=1}^k |a_j|\le k$.

\begin{theorem}[Proper batch hardness \cite{DBLP:conf/innovations/GoelKM021}]
	\label{thm:proper-hard-d}
	Assume \textnormal{Gap-ETH}~\cite{DBLP:journals/eccc/Dinur16,DBLP:journals/corr/ManurangsiR16} and fix any constant $k\ge 2$.
	There is no (possibly randomized) $\eps$-efficient proper algorithm that, for every $\eps>0$,
	every $d\in \N$, and 
	every realizable sample $S\subseteq \{x\in\R^d:\|x\|_2\le 1\}\times[-\tfrac12,\tfrac12]$ of size $m$,
	outputs $\hat h\in\cH_{k,d}$ with $L_S(\hat h)\le \eps$.
	In particular, taking $\eps:=1/d$ rules out running time $2^{o(d)}\poly(d,m)$.
\end{theorem}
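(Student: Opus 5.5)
This statement is imported from \cite{DBLP:conf/innovations/GoelKM021}, so the plan is a reduction from their hardness result rather than a self-contained proof. First I would recall the form in which they prove it. For any constant $k\ge 2$, they give a Gap-ETH-based reduction from a gap version of a hard constraint-satisfaction or coloring problem on instances of size $n$. The output is a training set $S$ for a $k$-ReLU network. The two properties are:
\begin{itemize}
\item in the YES case, some network with bounded weights fits $S$ exactly;
\item in the NO case, every network in the class has empirical square loss at least some $\eps$.
\end{itemize}
Here $\eps$ is tied to the instance size, roughly $1/d$. Gap-ETH forbids solving the source gap problem in time $2^{o(n)}$. Since $d=\Theta(n)$ and $m=\poly(n)$, a proper algorithm running in time $2^{o(1/\eps)}\poly(d,m)$ would then give a $2^{o(n)}$ algorithm.

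The key steps, in order, are as follows.
\begin{enumerate}
\item Check that the hard instances fit our normalization. The inputs must satisfy $\|x\|_2\le 1$, the labels must lie in $[-\tfrac12,\tfrac12]$, and the realizing network must satisfy $a_j\in[-1,1]$ and $\|w^j\|_2\le 1$. If their construction uses larger weights or larger inputs, I would rescale. Scaling all inputs by $1/\sqrt d$ together with the weights leaves the products $w^j\cdot x$ unchanged up to a norm trade-off. Scaling the labels down by a constant multiplies every loss by a constant factor, so the gap $\eps$ changes only by a constant.
\item Handle clipping. Our class clips outputs to $[-1,1]$, while the labels lie in $[-\tfrac12,\tfrac12]$. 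Clipping can only move a prediction toward any label in $[-1,1]$, so it can only lower the loss, and hence
\[
L_S(\clip\circ\tilde h)\le L_S(\tilde h).
\]
This means clipping does not spoil YES-case realizability. For the NO case I must show that clipping does not bring the loss below the threshold. I would argue this directly: a clipped network that reaches loss below $\eps$ yields an unclipped bounded network with loss below $O(\eps)$, which the NO case excludes after adjusting constants. Otherwise I would check that the soundness argument in \cite{DBLP:conf/innovations/GoelKM021} already applies to clipped outputs.
\item Derive the contradiction. Run the assumed $\eps$-efficient proper learner at the accuracy $\eps$ of the gap and decide YES exactly when $L_S(\hat h)\le\eps$. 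This computation is $\poly(m)$, and randomized learners are handled by standard amplification.
\end{enumerate}

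Setting $\eps=1/d$ gives the final claim, because $2^{o(1/\eps)}=2^{o(d)}$.

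I expect the main obstacle to be the normalization and clipping steps. The gap in the original reduction has to survive both the bounded-norm constraints and the clipping, and it has to remain $\Omega(\eps)$ with $\eps=\Theta(1/d)$. If that scaling failed, so that the loss gap shrank polynomially in $d$, the $2^{o(d)}$ conclusion would only hold with a different relation between $\eps$ and $d$.
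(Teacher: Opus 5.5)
Your route is the paper's: use \cite{DBLP:conf/innovations/GoelKM021} as a black box and reduce our clipped class with $a_j\in[-1,1]$ to theirs. The gap is that the one step carrying the proof, the one you flag as the main obstacle, is only asserted. You say a clipped network with loss below $\eps$ ``yields an unclipped bounded network with loss below $O(\eps)$'' without saying why, and you hedge by proposing to re-check their soundness analysis instead.

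The inequality you do prove, $L_S(\clip\circ\tilde h)\le L_S(\tilde h)$, is the direction that does not matter. The realizing network already outputs labels in $[-\tfrac12,\tfrac12]$, so clipping never acts on it.

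The missing argument is a short counting step, and it is why the statement confines labels to $[-\tfrac12,\tfrac12]$, strictly inside the clipping range. Run the learner at accuracy $\eps':=\eps/C_k$ with $C_k:=1+4(k+\tfrac12)^2$, and let $B:=\{i:\ |\tilde h(x_i)|>1\}$ be the points where clipping is active. For $i\in B$ the clipped output is $\pm1$ while $|y_i|\le\tfrac12$. So each such point costs at least $\tfrac14$, and hence $|B|/m\le 4\eps'$. Off $B$ the clipped and unclipped networks agree. On $B$ we have $|\tilde h(x_i)|\le\sum_j|a_j|\le k$, so
\[
L_S(\tilde h)\ \le\ L_S(h)+\frac{|B|}{m}\Bigl(k+\tfrac12\Bigr)^2\ \le\ \eps'+4\eps'\Bigl(k+\tfrac12\Bigr)^2\ =\ \eps .
\]
Since $C_k$ is a constant, $2^{o(C_k/\eps)}=2^{o(1/\eps)}$, so $\eps$-efficiency is preserved.

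\textbf{Smaller points.} You skip the conversion into the parameterization of \cite{DBLP:conf/innovations/GoelKM021}, whose output weights are $\pm1$. Set $\bar a_j:=\mathrm{sign}(a_j)$ (or $1$ if $a_j=0$) and $\bar w^j:=|a_j|w^j$. This keeps $\|\bar w^j\|_2\le 1$ and leaves $\tilde h$ unchanged.

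Drop the idea of rescaling inputs by $1/\sqrt d$. Inputs already satisfy $\|x\|_2\le 1$. That rescaling would either break the weight-norm bound or shrink the loss gap by a factor of order $1/d$, which is exactly the failure you worry about at the end.

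The only rescaling needed is of the labels, by a constant $\alpha\le 1/(2k)$, realized by shrinking the weights. It costs only a constant factor $\alpha^{-2}$ in accuracy. Undoing it, however, inflates the effective weight norms by $1/\alpha=O(k)$. So you should check that the cited hardness covers such networks; the paper's proof passes over this point as well.
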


\begin{proof}
	Let $C_k := 1+4\bigl(k+\tfrac12\bigr)^2$.
	Suppose, towards a contradiction, that there is an $\eps$-efficient proper algorithm $\cA$ for our
	(clipped) class $\cH_{k,d}$.
	
	Given any realizable sample $S=\{(x_i,y_i)\}_{i=1}^m$ with $\|x_i\|_2\le 1$ and $y_i\in[-\tfrac12,\tfrac12]$,
	run $\cA$ with accuracy parameter $\eps' := \eps/C_k$ and obtain a hypothesis
	$h=\mathrm{clip}(\tilde h)\in\cH_{k,d}$, where $\tilde h(x)=\sum_{j=1}^k a_j\rel(w^j\cdot x)$
	is the unclipped network.
	
	Let $B:=\{i:\ |\tilde h(x_i)|>1\}$ be the indices where clipping is active. For every $i\in B$ we have
	$h(x_i)\in\{-1,1\}$ and $|y_i|\le\tfrac12$, hence $(h(x_i)-y_i)^2\ge (1-\tfrac12)^2=\tfrac14$.
	Therefore,
	\[
	\frac{|B|}{m}\cdot \frac14 \ \le\ \frac1m\sum_{i\in B}(h(x_i)-y_i)^2 \ \le\ L_S(h)\ \le\ \eps',
	\quad\text{so}\quad \frac{|B|}{m}\le 4\eps'.
	\]
	On $[m]\setminus B$ we have $h(x_i)=\tilde h(x_i)$, while on $B$ we use $|\tilde h(x_i)|\le k$ and $|y_i|\le\tfrac12$
	to bound $(\tilde h(x_i)-y_i)^2\le (k+\tfrac12)^2$. Thus,
	\[
	L_S(\tilde h)
	\ \le\ L_S(h) \;+\; \frac{|B|}{m}\bigl(k+\tfrac12\bigr)^2
	\ \le\ \eps' + 4\eps'\bigl(k+\tfrac12\bigr)^2
	\ =\ C_k\,\eps'
	\ =\ \eps.
	\]
	
	Finally, convert $\tilde h$ to the exact parameterization of \cite{DBLP:conf/innovations/GoelKM021} by
	setting $\bar a_j:=\mathrm{sign}(a_j)\in\{\pm1\}$ (and if $a_j=0$ take $\bar a_j=1$) and
	$\bar w^j := |a_j|\,w^j$; then $\|\bar w^j\|_2\le 1$ and
	$\sum_{j=1}^k \bar a_j\rel(\bar w^j\!\cdot x)=\sum_{j=1}^k a_j\rel(w^j\!\cdot x)=\tilde h(x)$.
	Moreover, given any realizable instance of the bounded $k$-ReLU training problem with labels in $[-k,k]$,
	we can scale the labels by a factor $\alpha\le 1/(2k)$ (and keep the same inputs) to obtain an equivalent
	realizable instance with labels in $[-\tfrac12,\tfrac12]$; an $\eps$-approximate solution for the scaled instance
	yields a $(\eps/\alpha^2)$-approximate solution for the original instance.
	Choosing $\alpha$ as a fixed constant (depending only on $k$) preserves $\eps$-efficiency up to constant factors
	in $1/\eps$, and thus contradicts the results of \cite{DBLP:conf/innovations/GoelKM021}.
\end{proof}

\begin{lemma}
	\label{lem:online-to-batch-proper}
	Suppose there exists an $\eps$-efficient proper online algorithm $\cA$ that achieves realizable
	average loss $\le \eps$ for $\cH_{k,d}$ for every horizon $T$.
	Then there exists an $\eps$-efficient proper batch algorithm that, on every realizable sample
	$S$ of size $m$, outputs $\hat h\in\cH_{k,d}$ with $L_S(\hat h)\le \eps$.
\end{lemma}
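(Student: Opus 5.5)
The plan is a standard online-to-batch conversion that exploits realizability and the fact that $\cA$ is proper. Given a realizable sample $S=\{(x_i,y_i)\}_{i=1}^m$, we build a realizable online sequence and run $\cA$ on it. The sequence consists of $R$ consecutive passes over $S$ in a fixed order, so its horizon is $T=Rm$. This sequence is realizable by the same $h^\star\in\cH_{k,d}$ that realizes $S$. By assumption, $\cA$ then guarantees $\frac1T\sum_{t=1}^T (h_t(x_t)-y_t)^2\le \eps$. Because $\cA$ is proper, each $h_t$ it maintains is an element of $\cH_{k,d}$.

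The key step is to show that some hypothesis $h_t$ has empirical loss at most $\eps$ on all of $S$, and not merely on the single point $x_t$ it predicted on. I would use a second, cleaner construction to get this. Let $\cA$ run on $S$ repeated $R$ times. The predictor at round $t$ is evaluated only at $x_t$, so I would also evaluate each snapshot $h_t$ on all of $S$ and output $\hat h=\arg\min_t L_S(h_t)$.

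To bound $\min_t L_S(h_t)$, I would use the following adaptive adversary. The sequence is built online. At each round, after seeing the current hypothesis $h_t$, the next point is chosen as the sample point $x_i\in S$ on which $h_t$ has the largest loss. This choice is possible because $h_t$ is computed before $x_t$ is revealed; for a deterministic $\cA$ we can simulate it, and for a randomized $\cA$ we fix its coins. The resulting sequence is still realizable, since every point comes from $S$. Hence $(h_t(x_t)-y_t)^2=\max_i (h_t(x_i)-y_i)^2\ge L_S(h_t)$. Averaging over rounds gives $\min_t L_S(h_t)\le \frac1T\sum_t L_S(h_t)\le\eps$, and this works with any horizon $T$, for example $T=1$ suffices in principle. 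So the batch algorithm runs $\cA$ against this simulated adversary for $T$ rounds and outputs the best snapshot, found by evaluating each snapshot on $S$ in $\poly(d,m)$ time.

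For efficiency, the total running time is the online running time for $T=\poly(m)$ rounds plus $T\cdot\poly(d,m)$ for evaluation. This stays within $2^{o(1/\eps)}\poly(d,m)$, so $\eps$-efficiency is preserved.

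The main obstacle is making sure $h_t$ is genuinely available before $x_t$ is revealed. Properness should be read as meaning that the online learner commits to a hypothesis in $\cH_{k,d}$ at each round, which gives exactly this. For a randomized $\cA$, the realizable guarantee must hold against adaptive adversaries, or the bound must be taken in expectation. In the expectation case, I would run $\cA$ with independent seeds a few times and apply Markov's inequality, accepting a constant-factor loss in $\eps$ that is absorbed by rescaling $\eps$.
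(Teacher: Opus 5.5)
Your proof is correct, but it uses a different reduction from the paper's.

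\textbf{The paper's route.} It uses the standard stochastic online-to-batch conversion. It feeds $\cA$ i.i.d.\ uniform draws $(x_{i_t},y_{i_t})$ from $S$ and uses $\E[(h_t(x_{i_t})-y_{i_t})^2\mid h_t]=L_S(h_t)$ to get $\E[\min_t L_S(h_t)]\le\eps$. It then needs Markov's inequality, repetition, and a rescaling of $\eps$ to produce a randomized batch learner.

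\textbf{Your route.} The simulated adversary queries, at each round, the point of $S$ on which the already-committed $h_t$ is worst. The online loss then dominates $\max_i(h_t(x_i)-y_i)^2\ge L_S(h_t)$ pointwise. This gives a deterministic conversion with no loss in $\eps$: no Markov step and no boosting. It even yields a snapshot whose \emph{maximum} squared error on $S$ is at most $\eps$, not just its average.

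\textbf{The trade-off.} Your argument needs the guarantee to hold on a sequence chosen adaptively from $\cA$'s own hypotheses. This is automatic under the paper's worst-case definition (``on every realizable sequence'') for a deterministic $\cA$. It also holds for a randomized $\cA$ whose guarantee holds for every coin outcome. But suppose a randomized $\cA$ only guarantees average loss $\le\eps$ in expectation over its coins, for each fixed oblivious sequence. Then rerunning with fresh seeds and applying Markov does not rescue your argument, because your sequence depends on the coins through $h_t$. The paper's i.i.d.\ sampling keeps the sequence independent of $\cA$'s randomness, so it covers that case as well.

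Both arguments rely equally on $h_t$ being fixed before $x_t$ is revealed; the paper's conditional-expectation step uses this implicitly. Your opening fixed-order-passes construction is not needed and can be dropped.

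Your remark that $T=1$ suffices is formally right, but only because the hypothesis ``for every horizon $T$'', read literally, is vacuous for $\eps<1$. At $T=1$ it forces the data-independent $h_1$ to lie within $\sqrt\eps$ of both labels $\pm1$, and both labels are realizable at the same point. The substantive content of your argument is that it works at whatever single horizon the guarantee is actually available.
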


\begin{proof}
	Given a realizable sample $S$, run $\cA$ for $T:=m$ rounds on a randomized online sequence obtained by sampling
	indices $i_t\sim\mathrm{Unif}([m])$ independently and feeding $(x_t,y_t)=(x_{i_t},y_{i_t})$ to $\cA$.
	Record the hypotheses $h_1,\dots,h_T\in\cH_{k,d}$ output by $\cA$.
	
	Conditioned on the history up to round $t$, we have
	$\E[(h_t(x_{i_t})-y_{i_t})^2 \mid h_t]=L_S(h_t)$.
	Hence
	\[
	\E\!\left[\frac1T\sum_{t=1}^T L_S(h_t)\right]
	=\E\!\left[\frac1T\sum_{t=1}^T (h_t(x_{i_t})-y_{i_t})^2\right]\le \eps.
	\]
	Therefore $\E[\min_{t\le T} L_S(h_t)]\le \eps$ (since $\min\le$ average). By Markov,
	$\Pr\big[\min_{t\le T} L_S(h_t) > 2\eps\big]\le \tfrac12$.
	Compute $L_S(h_t)$ for all $t$ and output $\hat h:=\arg\min_{t\le T} L_S(h_t)$; then
	$\Pr[L_S(\hat h)\le 2\eps]\ge \tfrac12$.
	Run the procedure $O(\log(1/\delta))$ times and output the best hypothesis found to boost success probability,
	and replace $\eps$ by $\eps/2$ to obtain $L_S(\hat h)\le \eps$ with high constant probability.
	The runtime remains $2^{o(1/\eps)}\poly(d,m)$.
\end{proof}

\begin{corollary}[Proper online hardness]
	\label{cor:proper-online-hard}
	Assume Gap-ETH and fix any constant $k\ge 2$.
	There is no $\eps$-efficient proper online algorithm that achieves realizable average squared loss $\le \eps$
	for $\cH_{k,d}$ for all $\eps>0$ and all horizons $T$.
	In particular, for $T=d$ and any $\eps=\eps(d)$ with $\eps(d)=\omega(1/\log d)$ and $\eps(d)=o(1)$, this rules out any
	$\poly(d)$-time proper online algorithm that, given $\eps(d)$, guarantees on all realizable length-$d$ sequences
	\[
	\sum_{t=1}^d (h_t(x_t)-y_t)^2 \ \le\ \eps(d)\,d \ =\ o(d).
	\]
\end{corollary}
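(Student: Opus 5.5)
The plan is to argue by contradiction, chaining Lemma~\ref{lem:online-to-batch-proper} with Theorem~\ref{thm:proper-hard-d}. Suppose some $\eps$-efficient proper online algorithm $\cA$ achieves realizable average squared loss at most $\eps$ for $\cH_{k,d}$, for every $\eps>0$ and every horizon $T$. Lemma~\ref{lem:online-to-batch-proper} turns $\cA$ into a (randomized) proper batch algorithm. On every realizable sample $S$ of size $m$, this algorithm outputs $\hat h\in\cH_{k,d}$ with $L_S(\hat h)\le\eps$ with constant probability. Its running time is still $2^{o(1/\eps)}\poly(d,m)$, since we only run $\cA$ for $T=m$ rounds, evaluate $L_S$ on $m$ hypotheses, and repeat $O(1)$ times. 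In particular, realizable samples with labels in $[-\tfrac12,\tfrac12]$ are covered. This contradicts Theorem~\ref{thm:proper-hard-d}, which allows randomized algorithms. One bookkeeping point: the lemma's reduction replaces $\eps$ by $\eps/2$. This changes the running time only by $2^{o(2/\eps)}=2^{o(1/\eps)}$, so $\eps$-efficiency is preserved.

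For the ``in particular'' clause, I fix $T=d$ and a schedule $\eps(d)$ with $\omega(1/\log d)\le\eps(d)\le o(1)$. A $\poly(d)$-time online learner guaranteeing $\sum_{t\le d}(h_t(x_t)-y_t)^2\le\eps(d)\,d$ has realizable average loss at most $\eps(d)$ on length-$d$ sequences. Since $1/\eps(d)=o(\log d)$, we have $\poly(d)\le 2^{O(\log d)}$. The intended reading is that $\poly(d)$ time places the learner in the $\eps$-efficient regime $2^{o(1/\eps)}\poly(d,m)$. More precisely, the goal is a running time that is subexponential in $1/\eps$ relative to the hard instances.

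The batch reduction then needs samples of size $m=T=d$. I would argue that the hard instances of \cite{DBLP:conf/innovations/GoelKM021} can be padded or restricted so that the sample size is polynomially related to $d$. Alternatively, one can run the online learner on horizon $m$ by viewing the inputs as living in dimension $\max(d,m)$. Applying Lemma~\ref{lem:online-to-batch-proper} as above, we obtain a proper batch learner for realizable samples. Its running time is $\poly(d)$ at accuracy $\eps(d)$, which contradicts Theorem~\ref{thm:proper-hard-d}. The range $\eps(d)=o(1)$ ensures the accuracy target is nontrivial. The range $\eps(d)=\omega(1/\log d)$ keeps us in the regime where $\poly(d)$ time counts as $2^{o(1/\eps)}$-type efficiency.

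The main obstacle is this parameter matching. Theorem~\ref{thm:proper-hard-d} is stated for ``every $\eps$ and every $d$''. The ``in particular'' statement, by contrast, fixes both the horizon $T=d$ and a specific $\eps(d)$ window, so I must check that the hardness persists along this one-parameter family. First, I would check whether the Gap-ETH reduction in \cite{DBLP:conf/innovations/GoelKM021} yields instances with $m=\poly(d)$ samples and $1/\eps=\Theta(\log d)$-ish. If so, an algorithm of running time $\poly(d)=2^{o(1/\eps)\cdot\Theta(\log d)/(1/\eps)}$ contradicts the lower bound. If the sample size exceeds $d$, the fallback is to use the horizon-$T$ version of the hypothesis with $T=m$. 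The first sentence of the corollary, which covers all horizons, already supports this.
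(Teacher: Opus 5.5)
Your argument for the first sentence is the paper's proof. The paper simply says that otherwise Lemma~\ref{lem:online-to-batch-proper} would yield an $\eps$-efficient proper batch algorithm, contradicting Theorem~\ref{thm:proper-hard-d}. Your extra bookkeeping (randomization, the $\eps\to\eps/2$ replacement) is fine.

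The gap is in the ``in particular'' clause. You are right that it is not a formal consequence of Theorem~\ref{thm:proper-hard-d}. That theorem only excludes learners that work for every $\eps$, every $d$ and every sample size, whereas here the learner works only at accuracy $\eps(d)$ and horizon $d$. The paper's one-line proof does not address this either. Matching the sample size to $T=d$ is the easy part: pad with zero coordinates or duplicate samples.

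The real obstruction is the lower end of the window, and your plan reads it backwards.
\begin{itemize}
\item A $\poly(d)$-time algorithm is $\eps$-efficient for every $\eps$ (take the $o(1/\eps)$ term to be $0$). So the condition $\eps(d)=\omega(1/\log d)$ does nothing to make it efficient.
\item Worse, when $1/\eps(d)=o(\log d)$ one has $2^{C/\eps(d)}=d^{o(1)}$ for every constant $C$. A lower bound of the form ``time $2^{\Omega(1/\eps)}$'' therefore says nothing about $\poly(d)$-time algorithms in this window.
\end{itemize}

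Concretely, to use your fixed-schedule learner at a target accuracy $\eps$, you must move to a dimension $d$ with $\eps(d)\le\eps$. Since $\eps(d)\log d\to\infty$, this forces $\log d=\omega(1/\eps)$. The induced uniform batch algorithm then runs in time at least $d=2^{\omega(1/\eps)}$, so it is not $\eps$-efficient in terms of the original instance. No contradiction with Theorem~\ref{thm:proper-hard-d}, or with Gap-ETH, follows.

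This is exactly where your step ``if the hard instances have $1/\eps=\Theta(\log d)$ and $m=\poly(d)$, a $\poly(d)$-time algorithm contradicts the lower bound'' fails. At $1/\eps=\Theta(\log d)$ one has $\poly(d)=2^{\Theta(1/\eps)}$, which the lower bound permits.

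Your padding route would work in the opposite regime $1/\eps(d)=\omega(\log d)$, i.e.\ for ruling out $\poly(d)$-time learners with total loss $o(d/\log d)$. There the required dimension satisfies $\log d=o(1/\eps)$, provided the hard instances behind Theorem~\ref{thm:proper-hard-d} have size polynomial in $1/\eps$. For the window as stated, however, neither your argument nor the paper's proof establishes the clause.
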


\begin{proof}
	Otherwise Lemma~\ref{lem:online-to-batch-proper} would yield an $\eps$-efficient proper batch algorithm,
	contradicting Theorem~\ref{thm:proper-hard-d}.
\end{proof}

\section{Classification}
In the following sections, we focus on regressions for achieving loss bounds independent of the time horizon. For classification on the other hand, even highly restricted function classes have unbounded loss as we show next.

For now, we consider the family $\cN$ of $\relu$ networks with input and output dimension $1$, a single hidden layer with two hidden neurons, and such that the absolute values of all weights and biases are bounded by $1$. We assume the input domain is $\cX=[-1,1]$.
In the following theorem, we strengthen \cite[Theorem 4.13]{geneson2025mistake} in two orthogonal criteria: most importantly, even though the nets in $\cN$ have bounded weights and biases, we prove that the Littlestone dimension of $\cN$ is still infinite. Less importantly, we also use networks with only two hidden neurons, instead of $4$.

\begin{theorem}\label{thm:LD-infty-two-relu}
	We have $\LD(\cN)=\infty$.
\end{theorem}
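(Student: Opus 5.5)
The plan is to show that for every $n\in\N$ there is a realizable (ordinary, two-label) Littlestone tree of depth $n$ for $\cN$ under $\ell_{0/1}$. Since $\LD(\cN)$ is the supremum of depths of shattered trees, this gives $\LD(\cN)=\infty$. The key point is that an infinite tree is not needed, so the scale of the construction may depend on $n$. This is what lets bounded weights suffice.

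The building block is a saturating ramp made of two ReLUs. For $t\in[0,\tfrac12]$ and $\delta\in(0,\tfrac12]$ set
\[
f_{t,\delta}(x)\ :=\ \rel(x-t)-\rel(x-t-\delta),
\]
which is $0$ for $x\le t$, equals $\delta$ for $x\ge t+\delta$, and is linear in between. The hidden weights are $1$, the biases are $-t$ and $-t-\delta$, and the output weights are $\pm1$. All of these lie in $[-1,1]$, so $f_{t,\delta}\in\cN$, possibly with zero output bias if the architecture has one. Fix a depth $n$, and put $\delta:=2^{-(n+3)}$ and $t$ on the grid $G_n:=\{j\cdot 2^{-(n+1)}: 0\le j<2^{n}\}\subseteq[0,\tfrac12)$. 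For $t\in G_n$ and any query point $x$ in the shifted grid $G_n+2^{-(n+2)}$, we have either $x\le t$ or $x\ge t+2^{-(n+2)}>t+\delta$. Therefore $f_{t,\delta}(x)\in\{0,\delta\}$ exactly, and $f_{t,\delta}(x)=\delta$ if and only if $t<x$.

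The tree is a binary search over thresholds. Every node keeps an interval of surviving grid thresholds, initially all of $G_n$. It queries the shifted-grid point $x_u$ that splits this set into two halves. The two outgoing edges are labeled $0$, meaning the threshold lies at or above $x_u$, and $\delta$, meaning the threshold lies below $x_u$. Because $|G_n|=2^n$, each of the $2^n$ branches of depth $n$ ends with exactly one surviving threshold $t_b$. By the previous paragraph, $f_{t_b,\delta}$ reproduces every label along branch $b$. The two edge labels $0\neq\delta$ are distinct elements of $\cY=[0,1]$, so every node is a genuine mistake for the learner under $\ell_{0/1}$. Hence this is a depth-$n$ Littlestone tree shattered by $\cN$, and letting $n\to\infty$ gives the theorem. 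The same tree shows that any deterministic learner can be forced into $n$ mistakes for every $n$, which is the content of \Cref{thm:intro_classification}.

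The main obstacle is making the labels exact rather than just correct in sign. The $0/1$ loss demands the exact value $\delta$ on the positive side, while a single ReLU $\rel(x-t)$ outputs $x-t$, a value that depends on $t$. The second neuron exists to saturate the ramp at height $\delta$. We must then make sure that no query ever falls inside a ramp region $(t,t+\delta)$ of a surviving hypothesis. This is why the queries sit on the half-shifted grid and $\delta$ is smaller than that half-spacing. The resulting dependence of $\delta$ on $n$ is harmless, because only finite depths are needed. Finally, I would check the exact normalisation of $\cN$, including whether an output bias exists and that all biases stay in $[-1,1]$; both hold here since $t+\delta\le 1$.
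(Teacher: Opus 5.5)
Your proposal is correct and follows essentially the same route as the paper. Both use a two-ReLU ramp that saturates at a depth-dependent height (your $\delta$, the paper's $\eps=2^{-d-2}$), together with a binary search over thresholds whose edge labels are $0$ and that saturation height. The differences are cosmetic: you use an increasing ramp and a discrete threshold grid with half-shifted queries so that no query lands on a ramp. The paper instead uses the decreasing ramp $\relu(\theta-x)-\relu(\theta-x-\eps)$ and maintains continuous threshold intervals, discarding a slab of width $\eps$ at each split.
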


\begin{proof}
	Fix $\eps\in(0,1]$. For every $\theta\in[-1+\eps,1]$, define
	\[
	f_{\eps,\theta}(x)\ :=\ \relu(\theta-x)\;-\;\relu(\theta-x-\eps),\qquad x\in[-1,1].
	\]
	We first verify that $f_{\eps,\theta}\in\cN$. Consider a two-hidden-unit ReLU network
	\[
	x\ \mapsto\ a_1\,\relu(w_1 x+b_1)\;+\;a_2\,\relu(w_2 x+b_2)\;+\;b .
	\]
	Set
	\[
	w_1=w_2=-1,\qquad b_1=\theta,\qquad b_2=\theta-\eps,\qquad a_1=1,\qquad a_2=-1,\qquad b=0.
	\]
	All parameters have absolute value at most $1$ since $\theta\in[-1+\eps,1]$ and $\eps\le 1$, and this choice implements $f_{\eps,\theta}$.
	
	Next, note that for any $x\in[-1,1]$:
	\begin{equation}\label{eq:f-threshold-behavior}
		f_{\eps,\theta}(x)=
		\begin{cases}
			0, & x\ge \theta,\\
			\eps, & x\le \theta-\eps,
		\end{cases}
	\end{equation}
	(and for $x\in(\theta-\eps,\theta)$ the value lies in $(0,\eps)$).
	Indeed, if $x=\theta+\delta$ with $\delta\ge0$, then
	\[
	f_{\eps,\theta}(x)=\relu(-\delta)-\relu(-\delta-\eps)=0-0=0.
	\]
	If $x=\theta-\eps'$ with $\eps'\ge \eps$, then
	\[
	f_{\eps,\theta}(x)=\relu(\eps')-\relu(\eps'-\eps)=\eps'-(\eps'-\eps)=\eps.
	\]
	
	We now show that for every $d\in\mathbb{N}$ there exists a complete binary tree of depth $d$ that is shattered by
	\[
	\cF_\eps\ :=\ \{f_{\eps,\theta}:\ \theta\in[-1+\eps,1]\}\subseteq \cN,
	\]
	and hence $\LD(\cN)=\infty$.
	
	Fix $d\in\mathbb{N}$ and set $\eps:=2^{-d-2}$. We construct a depth-$d$ shattered tree as follows.
	Label every left edge by $0$ and every right edge by $\eps$.
	For each node $v$ we maintain an interval $I_v=[L_v,U_v]\subseteq[-1+\eps,1]$ of thresholds consistent with the labels along the path from the root to $v$.
	
	At the root, set $I_{\mathrm{root}}=[-1+\eps,1]$.
	Given an internal node $v$ with $I_v=[L_v,U_v]$ satisfying $|I_v|:=U_v-L_v>\eps$, label $v$ by
	\[
	x_v\ :=\ \frac{L_v+U_v-\eps}{2}.
	\]
	Then $x_v\in[L_v,U_v]$ and $x_v+\eps\in[L_v,U_v]$.
	Let $v_\ell$ and $v_r$ denote the left and right children of $v$. Define
	\begin{equation}\label{eq:update}
		I_{v_\ell}:=[L_v,x_v],\qquad I_{v_r}:=[x_v+\eps,U_v].
	\end{equation}
	These intervals are nonempty since $|I_v|>\eps$.
	Moreover, by~\eqref{eq:f-threshold-behavior}, for every $\theta\in I_{v_\ell}$ we have $f_{\eps,\theta}(x_v)=0$,
	and for every $\theta\in I_{v_r}$ we have $f_{\eps,\theta}(x_v)=\eps$. Thus the labeling choice at $v$ is realizable by $\cF_\eps$ and the recursive update maintains the consistency invariant.
	
	It remains to verify that along every path, the recursion can proceed for at least $d$ levels, i.e., that $|I_v|>\eps$ for all nodes $v$ of depth at most $d$.
	Let $\ell_t$ denote the length $|I_v|$ at depth $t$ (it is the same for all nodes at the same depth by symmetry of the construction).
	From~\eqref{eq:update},
	\[
	\ell_0 = 2-\eps,\qquad
	\ell_t = \frac{\ell_{t-1}-\eps}{2}\quad\text{for }t\ge 1.
	\]
	Solving yields $\ell_t = 2^{-t+1}-\eps$, which can be checked by induction:
	\[
	\ell_t=\frac{(2^{-t+2}-\eps)-\eps}{2}=2^{-t+1}-\eps.
	\]
	In particular,
	\[
	\ell_d = 2^{-d+1}-\eps = 2^{-d+1}-2^{-d-2} = 2^{-d-2}(8-1) > 2^{-d-2}=\eps.
	\]
	Therefore all nodes up to depth $d$ satisfy $|I_v|>\eps$, so the constructed tree is a complete depth-$d$ shattered tree for $\cF_\eps\subseteq\cN$.
	Since $d$ was arbitrary, $\LD(\cN)=\infty$.
\end{proof}

The above theorem establishes that with $0/1$ loss, even the simple nets in $\cN$ are unlearnable.
When we change the loss to squared loss, we shift to a regression problem, which is potentially easier.
In this paper we show that $\cN$ admits a finite realizable cumulative-loss guarantee under squared loss.

\section{Implications of the entropy-potential bound}

In this section we record several immediate consequences of
Theorem~\ref{thm:potential-bound-general}. Throughout, $\ell$ is a $c$-approximate pseudo-metric
(Definition~\ref{def:approx-pseudometric}), $d_\ell$ is the induced sup pseudo-metric
\eqref{eq:induced-metric}, and $\Phi(\cdot)$ is the entropy potential \eqref{eq:potential}.
Recall that the \emph{online dimension} $\mathbb{D}_{\mathrm{onl}}(\cH)$ is defined as the supremum, over all
scaled Littlestone trees $\tree$ for $\cH$, of the minimum branch-sum of gaps
$\inf_{y\in\cP(\tree)}\sum_i \gamma_{y_i}$ (see \Cref{sec:prel}).

\subsection{Bounding the online dimension by metric entropy}

Theorem~\ref{thm:potential-bound-general} immediately yields a bound on the online dimension in terms
of the entropy potential. This gives the proof of \Cref{thm:intro_Donl-via-Phi}. 


\begin{proof}[Proof of \Cref{thm:intro_Donl-via-Phi}]
	Fix any scaled Littlestone tree $\tree$ for $\cH$. By Theorem~\ref{thm:potential-bound-general},
	there exists a branch $y\in\cP(\tree)$ such that $\sum_i \gamma_{y_i}\le 4c\,\Phi(\cH)$.
	Therefore,
	\[
	\inf_{y\in\cP(\tree)}\sum_i \gamma_{y_i}\ \le\ 4c\,\Phi(\cH).
	\]
	Taking the supremum over $\tree$ in the definition of $\mathbb{D}_{\mathrm{onl}}(\cH)$ gives the claim.
\end{proof}

Combining \Cref{thm:intro_Donl-via-Phi} with the optimal realizable cumulative-loss
characterization in terms of $\mathbb{D}_{\mathrm{onl}}(\cH)$ of \cite{attias2023optimal} given in \Cref{thm:SLD}, yields an
immediate realizable cumulative-loss finite upper bound whenever $\Phi(\cH)<\infty$. We give more concrete consequences in what follows. 

\subsection{Parametric (polynomial) covering numbers}

A common regime is \emph{polynomial metric entropy}, which typically arises for classes
parameterized by a bounded number of parameters and that are Lipschitz in the parameters
with respect to $d_\ell$. In this regime, the covering numbers satisfy
$N(\cH,\varepsilon)\lesssim (C/\varepsilon)^p$ for some $p$, so the entropy potential
$\int_0^{\diam(\cH)} \log N(\cH,\varepsilon)\,d\varepsilon$ is finite and grows linearly in $p$
(up to constants depending on $C$ and $\diam(\cH)$).

\begin{corollary}[Polynomial covers imply finite potential]\label{cor:poly-covers}
	Assume $\diam(\cH)\le 1$ and there exist constants $A\ge 1$ and $p\ge 1$ such that for all
	$\eps\in(0,1]$,
	\[
	N(\cH,\eps)\ \le\ \Big(\frac{A}{\eps}\Big)^p.
	\]
	Then
	\[
	\Phi(\cH)\ \le\ p\big(\log_2 A + 1/\ln 2\big)
	\qquad\text{and}\qquad
	\mathbb{D}_{\mathrm{onl}}(\cH)\ \le\ 4c\,p\big(\log_2 A + 1/\ln 2\big).
	\]
\end{corollary}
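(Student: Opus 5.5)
The plan is a direct computation of the entropy integral, followed by an appeal to \Cref{thm:intro_Donl-via-Phi}. Since $\diam(\cH)\le 1$, the upper limit of integration in $\Phi(\cH)$ is at most $1$. The integrand $\log_2 N(\cH,\eps)$ is nonnegative, because $N(\cH,\eps)\ge 1$ whenever $\cH\neq\emptyset$. Hence extending the upper limit to $1$ can only increase the integral:
\[
\Phi(\cH)\ \le\ \int_0^1 \log_2 N(\cH,\eps)\,d\eps .
\]
On $(0,1]$ I then substitute the assumed bound pointwise:
\[
\log_2 N(\cH,\eps)\ \le\ p\log_2 A + p\log_2(1/\eps).
\]
Both terms on the right are nonnegative, since $A\ge 1$ and $\eps\le 1$.

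Next I integrate term by term. The first term contributes $p\log_2 A$. For the second term, $\int_0^1 \ln(1/\eps)\,d\eps = \bigl[\eps-\eps\ln\eps\bigr]_0^1 = 1$, using $\eps\ln\eps\to 0$ as $\eps\to 0$. Converting to base $2$ gives $\int_0^1\log_2(1/\eps)\,d\eps = 1/\ln 2$, so the second term contributes $p/\ln 2$. Adding these yields $\Phi(\cH)\le p(\log_2 A+1/\ln 2)$; this is finite, so we are in the non-vacuous case of Theorem~\ref{thm:potential-bound-general}.

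Finally, \Cref{thm:intro_Donl-via-Phi} applies because $\ell$ is a $c$-approximate pseudo-metric and $\diam(\cH)\le 1<\infty$. It gives $\mathbb{D}_{\mathrm{onl}}(\cH)\le 4c\,\Phi(\cH)\le 4c\,p(\log_2 A+1/\ln 2)$. There is no real obstacle in this argument. The only points needing care are the improper integral at $0$, which converges because $\log(1/\eps)$ is integrable near $0$, and the justification for enlarging the integration range from $[0,\diam(\cH)]$ to $[0,1]$.
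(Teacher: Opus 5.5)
Your proposal is correct and follows the same route as the paper's proof. You bound $\log_2 N(\cH,\eps)\le p\log_2 A+p\log_2(1/\eps)$ pointwise on $(0,1]$, integrate using $\int_0^1\ln(1/\eps)\,d\eps=1$, and then invoke \Cref{thm:intro_Donl-via-Phi}. Your explicit justification for enlarging the integration range from $[0,\diam(\cH)]$ to $[0,1]$, via nonnegativity of the integrand, is a small detail the paper leaves implicit.
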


\begin{proof}
	By the assumed covering bound,
	\[
	\log_2 N(\cH,\eps)\ \le\ p\log_2(A/\eps)\ =\ p\log_2 A + p\log_2(1/\eps).
	\]
	Integrating over $\eps\in(0,1]$ gives
	\[
	\Phi(\cH)
	\le p\log_2 A + p\int_0^1 \log_2(1/\eps)\,d\eps
	= p\log_2 A + \frac{p}{\ln 2},
	\]
	using $\int_0^1 \ln(1/\eps)\,d\eps = 1$.
	The bound on $\mathbb{D}_{\mathrm{onl}}(\cH)$ follows from \Cref{thm:intro_Donl-via-Phi}. 
\end{proof}

\section{From parameter covers to entropy potentials}
\label{sec:param-to-potential}

This section provides a generic route to upper bound the entropy potential $\Phi(\cH)$ (and hence the
online dimension via \Cref{thm:intro_Donl-via-Phi}) for classes that admit a Lipschitz
parameterization. The key observation is that a covering of the parameter set induces a covering of
the hypothesis class under the loss-induced pseudo-metric.
Fix an instance domain $\cX$, a label space $\cY$, and an approximate pseudo-metric loss $\ell:\cY\times\cY\to\R_{\ge 0}$.

\begin{definition}[Modulus-Lipschitz parameterization]
	\label{def:modulus-lipschitz}
	Let $(\Theta,\|\cdot\|)$ be a normed space and let $\omega:[0,\infty)\to[0,\infty)$ be non-decreasing
	with $\omega(0)=0$ and continuous. A map $\theta\mapsto h_\theta\in \cY^\cX$ is \emph{$\omega$-Lipschitz (w.r.t.\ $\|\cdot\|$ and $d_\ell$)}
	on $\Theta$ if for all $\theta,\theta'\in\Theta$,
	\[
	d_\ell(h_\theta,h_{\theta'})\ \le\ \omega\big(\|\theta-\theta'\|\big).
	\]
\end{definition}

We have the following general claim. 
\begin{lemma}\label{lem:master-cover-transfer}
	Let $\Theta$ be a set in a normed space $(\Theta,\|\cdot\|)$ and let $\cH:=\{h_\theta:\theta\in\Theta\}$.
	Assume the parameterization is $\omega$-Lipschitz in the sense of
	Definition~\ref{def:modulus-lipschitz}, for some nondecreasing  continuous $\omega$ with $\omega(0)=0$.
	Define the inverse
	\[
	\omega^{\leftarrow}(\eps)\ :=\ \sup\{t\ge 0:\ \omega(t)\le \eps\},\qquad \eps\ge 0,
	\]
	with the convention $\sup\emptyset:=0$.
	Then for every $\eps>0$,
	\[
	N\big(\cH,\eps\big)\ \le\ N\big(\Theta,\omega^{\leftarrow}(\eps)\big),
	\]
	where the covering number on the right is with respect to $\|\cdot\|$.
\end{lemma}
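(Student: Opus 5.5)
The plan is to push a parameter cover forward through the map $\theta\mapsto h_\theta$. Fix $\eps>0$ and set $r:=\omega^{\leftarrow}(\eps)$. If $N(\Theta,r)=\infty$ there is nothing to prove, so assume it is finite. Let $\{\theta_1,\dots,\theta_M\}$ be an $r$-cover of $\Theta$ in $\|\cdot\|$ with $M=N(\Theta,r)$. Since the covering number of $\cH$ is defined with centers in $\cH$, we want the centers $\theta_i$ to lie in $\Theta$. We take the cover on the right-hand side to be an internal cover (centers in $\Theta$), consistent with the convention $S\subseteq\cH$ used for $N(U,\eps)$. The candidate cover of $\cH$ is then $S:=\{h_{\theta_1},\dots,h_{\theta_M}\}\subseteq\cH$, which has $|S|\le M$.

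Next, fix any $h\in\cH$, write it as $h=h_\theta$, and choose $i$ with $\|\theta-\theta_i\|\le r$. By Definition~\ref{def:modulus-lipschitz} and monotonicity of $\omega$,
\[
d_\ell(h_\theta,h_{\theta_i})\le\omega(\|\theta-\theta_i\|)\le\omega(r),
\]
so it remains to check that $\omega(r)\le\eps$.

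This is the only delicate step, because $r$ is defined as a supremum. The set $\{t\ge0:\omega(t)\le\eps\}$ contains $0$ since $\omega(0)=0<\eps$, so it is nonempty. There are two cases.

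\emph{Case $r<\infty$.} Take $t_n\uparrow r$ with $\omega(t_n)\le\eps$. By continuity, $\omega(r)=\lim_n\omega(t_n)\le\eps$.

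\emph{Case $r=\infty$.} Then $\omega\le\eps$ everywhere, by monotonicity. In this case any single $h_{\theta_0}\in\cH$ is within $\eps$ of every $h\in\cH$, so $N(\cH,\eps)\le1\le N(\Theta,\infty)$, where the latter is interpreted as $1$ for nonempty $\Theta$.

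Hence $S$ is an $\eps$-cover of $\cH$, and minimizing over covers gives $N(\cH,\eps)\le N(\Theta,\omega^{\leftarrow}(\eps))$.

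The main obstacle I expect is bookkeeping rather than mathematics. One point is the internal-versus-external cover convention for $\Theta$. If the right-hand side is meant with external centers, I would either restrict to internal covers as above or note that the volumetric bounds used later, such as Lemma~\ref{lem:theta-cover-general-domain}, already apply to internal covers up to halving the radius. The other point is the closedness of the sublevel set, which continuity of $\omega$ handles.
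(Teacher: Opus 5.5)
Your proposal is correct and follows the paper's argument: you push an $r$-cover of $\Theta$ (centers in $\Theta$) forward through $\theta\mapsto h_\theta$, then use monotonicity and continuity of $\omega$ to get $\omega(r)\le\eps$ for $r=\omega^{\leftarrow}(\eps)$. You also handle explicitly the case $\omega^{\leftarrow}(\eps)=\infty$ and the internal-cover convention, both of which the paper leaves implicit.
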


\begin{proof}
	Fix $\eps>0$ and set $r:=\omega^{\leftarrow}(\eps)$.
	Let $S\subseteq\Theta$ be an $r$-cover of $\Theta$ under $\|\cdot\|$.
	Define $\cN:=\{h_s:\ s\in S\}\subseteq \cH$.
	For any $\theta\in\Theta$ choose $s\in S$ such that $\|\theta-s\|\le r$.
	By $\omega$-Lipschitzness and monotonicity of $\omega$,
	\[
	d_\ell(h_\theta,h_s)\ \le\ \omega(\|\theta-s\|)\ \le\ \omega(r)\ \le\ \eps,
	\]
	where $\omega(r)\le \eps$ holds by the definition of $r=\omega^{\leftarrow}(\eps)$ and the fact that $\omega$ is continuous.
	Thus $\cN$ is an $\eps$-cover of $\cH$. Minimizing over such $S$ yields
	$N(\cH,\eps)\le N(\Theta,r)=N(\Theta,\omega^{\leftarrow}(\eps))$.
\end{proof}

\subsection{Entropy potential bounds from parameter entropy}

Assume $\diam(\cH):=\sup_{f,g\in\cH} d_\ell(f,g) < \infty$ and recall
\[
\Phi(\cH)\ :=\ \int_{0}^{\diam(\cH)} \log_2 N(\cH,\eps)\,d\eps.
\]
\Cref{lem:master-cover-transfer} immediately reduces bounding $\Phi(\cH)$ to bounding the metric entropy of $\Theta$.

\begin{corollary}[Potential bound via parameter entropy]\label{cor:Phi-via-Theta}
	Under the assumptions of Lemma~\ref{lem:master-cover-transfer}, if $\diam(\cH)<\infty$ then
	\[
	\Phi(\cH)\ \le\ \int_{0}^{\diam(\cH)} \log_2 N\big(\Theta,\omega^{\leftarrow}(\eps)\big)\,d\eps.
	\]
\end{corollary}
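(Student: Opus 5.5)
The bound follows by integrating the pointwise covering-number comparison of Lemma~\ref{lem:master-cover-transfer} over the scale $\eps$. Fix $\eps\in(0,\diam(\cH)]$. Lemma~\ref{lem:master-cover-transfer} gives $N(\cH,\eps)\le N(\Theta,\omega^{\leftarrow}(\eps))$, where the covering number on the right is taken with respect to $\|\cdot\|$. Since $\log_2$ is nondecreasing on $[1,\infty]$ (with $\log_2\infty=\infty$), we get the pointwise inequality
\[
\log_2 N(\cH,\eps)\ \le\ \log_2 N\big(\Theta,\omega^{\leftarrow}(\eps)\big)\qquad\text{for every }\eps\in(0,\diam(\cH)].
\]
The single point $\eps=0$ has Lebesgue measure zero, so it does not affect either integral.

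Next we check that both integrands are measurable, so that integrating this pointwise inequality is legitimate. The left-hand integrand is monotone in $\eps$, as already noted after \eqref{eq:potential}. On the right, $\omega^{\leftarrow}$ is nondecreasing in $\eps$: enlarging $\eps$ enlarges the set $\{t\ge0:\omega(t)\le\eps\}$. Also, $r\mapsto N(\Theta,r)$ is nonincreasing in $r$. Hence the composition $\eps\mapsto \log_2 N(\Theta,\omega^{\leftarrow}(\eps))$ is nonincreasing, and therefore Borel measurable with values in $[0,\infty]$.

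With measurability in hand, monotonicity of the Lebesgue integral for nonnegative extended-real functions gives
\[
\Phi(\cH)\ =\ \int_0^{\diam(\cH)}\log_2 N(\cH,\eps)\,d\eps\ \le\ \int_0^{\diam(\cH)}\log_2 N\big(\Theta,\omega^{\leftarrow}(\eps)\big)\,d\eps .
\]
This also covers the case where the right-hand side is $+\infty$, in which the inequality holds trivially.

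There is no real obstacle here. The only point needing care is the degenerate case $\omega^{\leftarrow}(\eps)=0$, which can occur if $\omega$ exceeds $\eps$ immediately. In that case we interpret $N(\Theta,0)$ as the covering number at radius $0$, possibly $\infty$. Lemma~\ref{lem:master-cover-transfer} already holds under this convention, so the inequality stands as an extended-real statement.
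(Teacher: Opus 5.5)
Your proof is correct and follows the paper's route: apply Lemma~\ref{lem:master-cover-transfer} pointwise for each $\eps>0$, take $\log_2$, and integrate over $(0,\diam(\cH)]$. Your check that $\eps\mapsto \log_2 N(\Theta,\omega^{\leftarrow}(\eps))$ is nonincreasing is a slightly more careful version of the paper's one-line measurability remark. Your final caveat is unnecessary: since $\omega$ is continuous with $\omega(0)=0$, one has $\omega^{\leftarrow}(\eps)>0$ for every $\eps>0$, so the radius-$0$ case never arises.
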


\begin{proof}
	By Lemma~\ref{lem:master-cover-transfer}, for every $\eps>0$,
	$\log_2 N(\cH,\eps)\le \log_2 N(\Theta,\omega^{\leftarrow}(\eps))$.
	Integrating over $(0,\diam(\cH)]$ gives the claim (the integrands are measurable as $\eps\mapsto N(\cdot,\eps)$
	is nonincreasing).
\end{proof}

\subsection{Parametric polynomial covers}

The most common regime is when $\Theta$ has polynomial covering numbers in the chosen norm.
The next corollaries show that in this case $\Phi(\cH)$ (and hence $\mathbb{D}_{\mathrm{onl}}(\cH)$) is finite and scales
linearly with the parameter dimension.

\begin{corollary}[Linear modulus: absolute-loss-type control]\label{cor:linear-modulus}
	Assume $\diam(\cH)\le 1$ and that the parameterization is linear-modulus Lipschitz:
	$
	d_\ell(h_\theta,h_{\theta'})\ \le\ L\|\theta-\theta'\|$ for all $\theta,\theta'\in\Theta$,
	i.e.,\ $\omega(t)=Lt$ for a Lipschitz constant $L>0$. Assume moreover that for some constants $C\ge 1$ and $p\ge 1$,
	$
	N(\Theta,r)\ \le\ \Big(\frac{C}{r}\Big)^p\qquad\text{for all }r\in(0,1].
	$
	Then for all $\eps\in(0,1]$,
	\[
	N(\cH,\eps)\ \le\ \Big(\frac{C\max\{L,1\}}{\eps}\Big)^p,
	\]
	and consequently
	\[
	\Phi(\cH)\ \le\ p\Big(\log_2\!\big(C\max\{L,1\}\big)+\frac{1}{\ln 2}\Big).
	\]
\end{corollary}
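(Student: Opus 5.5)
The plan is to push the parameter covering bound through Lemma~\ref{lem:master-cover-transfer} and then integrate, exactly as in Corollary~\ref{cor:poly-covers}.

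\emph{Inverting the modulus.} With $\omega(t)=Lt$ we have $\omega^{\leftarrow}(\eps)=\eps/L$. Lemma~\ref{lem:master-cover-transfer} then gives
\[
N(\cH,\eps)\le N(\Theta,\eps/L)
\]
for every $\eps>0$. The assumed covering bound on $\Theta$ only applies for radii in $(0,1]$, so I split into two cases.

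If $L\ge 1$, then for $\eps\in(0,1]$ the radius $r=\eps/L$ lies in $(0,1]$. Hence $N(\cH,\eps)\le (CL/\eps)^p$, which is the claimed bound since here $\max\{L,1\}=L$.

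If $L<1$, then $\eps/L\ge\eps$. Covering numbers are non-increasing in the radius, so $N(\Theta,\eps/L)\le N(\Theta,\eps)\le (C/\eps)^p$. This again matches $\bigl(C\max\{L,1\}/\eps\bigr)^p$.

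This monotonicity step, which handles radii that may exceed $1$, is the only point needing care. It is what produces the $\max\{L,1\}$ in the statement.

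\emph{Bounding the potential.} We now have $N(\cH,\eps)\le (A/\eps)^p$ on $(0,1]$ with $A:=C\max\{L,1\}\ge 1$, and $\diam(\cH)\le 1$. These are exactly the hypotheses of Corollary~\ref{cor:poly-covers}, which yields $\Phi(\cH)\le p(\log_2 A+1/\ln 2)$.

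Equivalently, one can integrate directly: $\log_2 N(\cH,\eps)\le p\log_2 A+p\log_2(1/\eps)$ over $[0,\diam(\cH)]\subseteq[0,1]$. The integrand is nonnegative because $A\ge1$ and $\eps\le1$, so extending the range of integration to $[0,1]$ only increases it, and $\int_0^1\log_2(1/\eps)\,d\eps=1/\ln 2$ gives the stated bound.
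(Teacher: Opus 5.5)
Your proof is correct and follows the paper's route: transfer covers via Lemma~\ref{lem:master-cover-transfer} with $\omega^{\leftarrow}(\eps)=\eps/L$, split on whether $L\ge 1$, and integrate $p\log_2 A+p\log_2(1/\eps)$ over $(0,1]$. In the case $L<1$, your comparison $N(\Theta,\eps/L)\le N(\Theta,\eps)\le (C/\eps)^p$ is the right one. The paper instead asserts $\eps/L>1$ and bounds by $N(\Theta,1)$, which fails when $\eps<L<1$, so your version repairs that step.
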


\begin{proof}
	For $\eps\in(0,1]$, Lemma~\ref{lem:master-cover-transfer} with $\omega(t)=Lt$ gives
	$N(\cH,\eps)\le N(\Theta,\eps/L)$ (since $\omega^{\leftarrow}(\eps)=\eps/L$).
	If $L\ge 1$ then $N(\Theta,\eps/L)\le (CL/\eps)^p$.
	If $L<1$ then $\eps/L>1$, so by monotonicity
	$N(\Theta,\eps/L)\le N(\Theta,1)\le C^p\le (C/\eps)^p$.
	
	In both cases,
	\[
	N(\cH,\eps)\ \le\ \Big(\frac{C\max\{L,1\}}{\eps}\Big)^p.
	\]
	Therefore,
	\[
	\log_2 N(\cH,\eps)\ \le\ p\log_2\!\big(C\max\{L,1\}\big)+p\log_2(1/\eps).
	\]
	Integrating over $\eps\in(0,1]$ yields
	\[
	\Phi(\cH)\ \le\ p\log_2\!\big(C\max\{L,1\}\big)+p\int_0^1 \log_2(1/\eps)\,d\eps
	\ =\ p\log_2\!\big(C\max\{L,1\}\big)+\frac{p}{\ln 2},
	\]
	using $\int_0^1 \ln(1/\eps)\,d\eps=1$.
\end{proof}

\begin{corollary}[Power modulus: squared and $q$-power losses]\label{cor:power-modulus}
	Assume $\diam(\cH)\le 1$ and that for some $q\ge 1$,
	$
	d_\ell(h_\theta,h_{\theta'})\ \le\ (L\|\theta-\theta'\|)^q$ for all  $\theta,\theta'\in\Theta$,
	i.e., \ $\omega(t)=(Lt)^q$ for $L>0$. Assume again that $N(\Theta,r)\le (C/r)^p$ for all $r\in(0,1]$.
	Then for all $\eps\in(0,1]$,
	\[
	N(\cH,\eps)\ \le\ \Big(\frac{C\max\{L,1\}}{\eps^{1/q}}\Big)^p,
	\]
	and consequently
	\[
	\Phi(\cH)\ \le\ p\log_2\!\big(C\max\{L,1\}\big)\ +\ \frac{p}{q\ln 2}.
	\]
\end{corollary}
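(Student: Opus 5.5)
The plan is to argue exactly as in Corollary~\ref{cor:linear-modulus}, replacing the linear modulus by the power modulus $\omega(t)=(Lt)^q$. By Lemma~\ref{lem:master-cover-transfer}, for every $\eps>0$ we have $N(\cH,\eps)\le N(\Theta,\omega^{\leftarrow}(\eps))$. Since $\omega$ is continuous and strictly increasing on $[0,\infty)$ when $L>0$, its generalized inverse is explicit:
\[
\omega^{\leftarrow}(\eps)=\frac{\eps^{1/q}}{L}.
\]

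\textbf{Step 1: bound $N(\cH,\eps)$.} Fix $\eps\in(0,1]$ and split into two cases.
\begin{itemize}
\item If $L\ge 1$, then $r:=\eps^{1/q}/L\in(0,1]$. The assumed parameter bound applies directly and gives
\[
N(\cH,\eps)\le \Big(\frac{CL}{\eps^{1/q}}\Big)^p.
\]
\item If $L<1$, then $r$ may exceed $1$. In that case we use that $N(\Theta,\cdot)$ is nonincreasing, together with $\eps^{1/q}\le 1$, to get
\[
N(\Theta,r)\le N(\Theta,\min\{r,1\})\le \Big(\frac{C}{\min\{r,1\}}\Big)^p\le \Big(\frac{C}{\eps^{1/q}}\Big)^p.
\]
To justify the last inequality: if $r\ge 1$, use $C\le C/\eps^{1/q}$; otherwise $1/r=L/\eps^{1/q}\le 1/\eps^{1/q}$.
\end{itemize}
In both cases,
\[
N(\cH,\eps)\le \Big(\frac{C\max\{L,1\}}{\eps^{1/q}}\Big)^p.
\]

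\textbf{Step 2: integrate.} Taking $\log_2$ gives
\[
\log_2 N(\cH,\eps)\le p\log_2\big(C\max\{L,1\}\big)+\frac{p}{q}\log_2(1/\eps).
\]
Since $\diam(\cH)\le 1$ and the integrand is nonnegative, we integrate over $(0,1]$ as an upper bound. Using
\[
\int_0^1\log_2(1/\eps)\,d\eps=\frac{1}{\ln 2},
\]
we obtain the claimed bound
\[
\Phi(\cH)\le p\log_2\big(C\max\{L,1\}\big)+\frac{p}{q\ln 2}.
\]

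There is no real obstacle in this argument. The only points needing care are the $L<1$ case, where the parameter covering radius can exceed the range on which the hypothesis $N(\Theta,r)\le (C/r)^p$ is assumed, and confirming that $\omega^{\leftarrow}(\eps)$ has the explicit form above. The latter holds because $\omega$ is strictly increasing and continuous.
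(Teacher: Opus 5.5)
Your proof is correct and follows the same route as the paper: transfer covers via Lemma~\ref{lem:master-cover-transfer} with $\omega^{\leftarrow}(\eps)=\eps^{1/q}/L$, split on whether $L\ge 1$, then integrate using $\int_0^1\log_2(1/\eps)\,d\eps=1/\ln 2$. Your handling of the $L<1$ case is actually more careful than the paper's. The paper asserts that $\eps^{1/q}/L>1$ whenever $L<1$, which fails for $\eps<L^q$. Your sub-split on $r\ge 1$ versus $r<1$, using $1/r=L/\eps^{1/q}\le 1/\eps^{1/q}$ in the latter, covers that case.
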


\begin{proof}
	For $\eps\in(0,1]$, Lemma~\ref{lem:master-cover-transfer} with $\omega(t)=(Lt)^q$ gives
	\[
	N(\cH,\eps)\ \le\ N\!\Big(\Theta,\,\omega^{\leftarrow}(\eps)\Big)
	\ =\ N\!\left(\Theta,\,\frac{\eps^{1/q}}{L}\right),
	\]
	since $\omega^{\leftarrow}(\eps)=\eps^{1/q}/L$.
	If $L\ge 1$ then $N(\Theta,\eps^{1/q}/L)\le (CL/\eps^{1/q})^p$.
	If $L<1$ then $\eps^{1/q}/L>1$, so by monotonicity
	$N(\Theta,\eps^{1/q}/L)\le N(\Theta,1)\le C^p\le (C/\eps^{1/q})^p$.
	In both cases,
	\[
	N(\cH,\eps)\ \le\ \Big(\frac{C\max\{L,1\}}{\eps^{1/q}}\Big)^p.
	\]
	Thus $\log_2 N(\cH,\eps)\le p\log_2\!\big(C\max\{L,1\}\big)+\frac{p}{q}\log_2(1/\eps)$.
	Integrating over $\eps\in(0,1]$ yields the displayed bound, using again
	$\int_0^1 \log_2(1/\eps)\,d\eps = 1/\ln 2$.
\end{proof}

\subsection{Implications for realizable online regression}

Combining Corollaries~\ref{cor:Phi-via-Theta}--\ref{cor:power-modulus} with
\Cref{thm:intro_Donl-via-Phi} and Theorem~\ref{thm:potential-bound-general} yields realizable
cumulative-loss bounds under any $c$-approximate pseudo-metric loss $\ell$ for which one can
establish a modulus-Lipschitz parameterization and a covering bound for the parameter set $\Theta$
in the chosen norm. We give an example for a class of two-ReLU functions.

\subsection{Warm Up Application: a two-ReLU class under generic losses}

We illustrate how the entropy-potential method yields a covering-based bound for a simple
ReLU network class under a broad family of losses. Then, in the next section we generalize similar arguments to deep networks with arbitrary Lipschitz activations (rather than ReLU).
The argument applies to any loss $\ell$
that (i) is a $c$-approximate pseudo-metric (Definition~\ref{def:approx-pseudometric}) and
(ii) is dominated by a continuous non decreasing function of the absolute label gap.

\paragraph{Network class.}
Fix $\cX\subseteq[-1,1]^d$ and recall that $\rel(z):=\max\{0,z\}$.
Let $p:=2d+5$ and $\Theta:=[-1,1]^p$ with coordinates
$\theta=(w_1,b_1,a_1,w_2,b_2,a_2,c)$ where
$w_1,w_2\in[-1,1]^d$ and $b_1,b_2,a_1,a_2,c\in[-1,1]$.
Define the \emph{clipped} two-ReLU class $\cH\subseteq[0,1]^\cX$ by
\begin{equation}\label{eq:two-relu-class-clipped}
	h_\theta(x)
	:=
	\mathrm{clip}_{[0,1]}\!\Big(a_1\rel(\langle w_1,x\rangle+b_1)+a_2\rel(\langle w_2,x\rangle+b_2)+c\Big),
	\qquad \theta\in\Theta,
\end{equation}
where $\mathrm{clip}_{[0,1]}(z):=\min\{1,\max\{0,z\}\}$.	
We use the following auxiliary claim. 

\begin{lemma}\label{lem:Theta-general-cover}
	Let $\|\cdot\|$ be any norm on $\R^p$, and let $\Theta=[-1,1]^p$. Define
	\[
	R_\Theta \ :=\ \sup_{\theta\in\Theta}\|\theta\|,
	\qquad\text{and}\qquad
	\alpha\ :=\ \sup_{\|u\|_\infty\le 1}\|u\|.
	\]
	Then $R_\Theta=\alpha$, and for every $r>0$,
	\[
	N(\Theta,r)\ \le\ \left(1+\frac{2\alpha}{r}\right)^p.
	\]
	In particular, for every $r\in(0,2\alpha]$,
	\[
	N(\Theta,r)\ \le\ \left(\frac{4\alpha}{r}\right)^p.
	\]
\end{lemma}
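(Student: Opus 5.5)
The plan has two parts. First we identify $R_\Theta$ with $\alpha$, and then we run the standard volumetric argument in the norm $\|\cdot\|$.

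For the first part, note that $\Theta=[-1,1]^p$ is exactly the closed unit ball of $\|\cdot\|_\infty$. Hence
\[
R_\Theta=\sup_{\theta\in\Theta}\|\theta\|=\sup_{\|u\|_\infty\le 1}\|u\|=\alpha,
\]
which holds by definition. Since all norms on $\R^p$ are equivalent, $\alpha$ is finite. Consequently $\Theta\subseteq B_{\|\cdot\|}(0,\alpha)$, the closed $\|\cdot\|$-ball of radius $\alpha$.

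For the covering bound, fix $r>0$ and take a maximal $r$-separated subset $P\subseteq\Theta$ in $\|\cdot\|$. We may take $P$ finite, since any $r$-separated set in the bounded set $\Theta$ is finite by the volume argument below. Maximality makes $P$ an $r$-cover of $\Theta$, and $P\subseteq\Theta$ as the paper's covering definition requires (here the ambient set is $\Theta$ itself). Therefore $N(\Theta,r)\le|P|$.

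Next, the open balls $B(\theta,r/2)$ for $\theta\in P$ are pairwise disjoint. Each of them lies in $B(0,\alpha+r/2)$. Lebesgue measure scales as $\lambda(B(0,s))=s^p\lambda(B(0,1))$ for any norm ball, and $\lambda(B(0,1))\in(0,\infty)$ because a norm ball is a convex body. Comparing volumes gives
\[
|P|\,(r/2)^p\le(\alpha+r/2)^p,
\]
that is,
\[
|P|\le\Big(1+\frac{2\alpha}{r}\Big)^p .
\]

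For the "in particular" statement, suppose $r\le 2\alpha$. Then $1\le 2\alpha/r$, so $1+2\alpha/r\le 4\alpha/r$, and raising to the $p$-th power gives
\[
N(\Theta,r)\le\Big(\frac{4\alpha}{r}\Big)^p .
\]

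The only step needing care is justifying the volume comparison for an arbitrary norm. This uses translation invariance of Lebesgue measure together with homogeneity under dilation, both of which are routine. No step here seems to be a genuine obstacle.
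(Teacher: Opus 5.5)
Your proof is correct, but it takes a different route from the paper. The paper uses no volume argument. It notes that $\|u-v\|\le\alpha\|u-v\|_\infty$, so with $\delta=\min\{1,r/\alpha\}$ every $\delta$-cover of the cube in $\|\cdot\|_\infty$ is an $r$-cover in $\|\cdot\|$. It then counts an explicit uniform grid of mesh $\delta$, which has at most $(2/\delta+1)^p$ points. That route is constructive and purely combinatorial, but it relies on the product structure of $\Theta$ together with the comparison constant $\alpha$.

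Your packing argument uses only the containment $\Theta\subseteq B_{\|\cdot\|}(0,\alpha)$, plus translation invariance and homogeneity of Lebesgue measure. So it yields the same bound for every subset of the $\alpha$-ball, not just the cube. It also handles all $r>0$ at once. In the paper's chain, when $r>\alpha$ one gets $\delta=1$ and $2/\delta+1=3>1+2\alpha/r$, so the last displayed inequality fails as written there. The claim still holds in that regime, since $N(\Theta,r)=1$ for $r\ge\alpha$ with center $0$. Your inequality $|P|(r/2)^p\le(\alpha+r/2)^p$ needs no such case split.

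The one point to make explicit is your convention for ``$r$-separated'': pairwise distance $>r$ or $\ge r$. Either choice gives both that maximality implies a cover and that the open $r/2$-balls are disjoint. A maximal separated set exists because every such set has cardinality bounded by your volume estimate.
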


\begin{proof}
	Since $\Theta=\{u:\|u\|_\infty\le 1\}$, we have $R_\Theta=\sup_{\|u\|_\infty\le 1}\|u\|=\alpha$.
	Let $\delta:=\min\{1,r/\alpha\}$.
	If $\|u-v\|_\infty\le \delta$, then $u-v=\delta z$ for some $z$ with $\|z\|_\infty\le 1$, hence
	$\|u-v\|=\delta\|z\|\le \delta\alpha\le r$.
	Therefore, any $\delta$-cover of $\Theta$ in $\|\cdot\|_\infty$ is an $r$-cover in $\|\cdot\|$.
	A uniform grid in $[-1,1]^p$ with mesh $\delta$ has size at most $(2/\delta+1)^p$, so
	\[
	N(\Theta,r)\ \le\ \left(\frac{2}{\delta}+1\right)^p
	\le \left(1+\frac{2\alpha}{r}\right)^p.
	\]
	If moreover $r\le 2\alpha$, then $2\alpha/r\ge 1$ and thus $1+2\alpha/r\le 4\alpha/r$, giving
	$N(\Theta,r)\le (4\alpha/r)^p$.
\end{proof}

\begin{theorem}[Two-ReLU entropy potential under dominated losses]\label{thm:two-relu-phi}
	Suppose there exists a nondecreasing function $\varphi:[0,\infty)\to[0,\infty)$ with $\varphi(0)=0$ such that
	\[
	\ell(u,v)\ \le\ \varphi\big(|u-v|\big)\qquad\text{for all }u,v\in\R.
	\]
	Let $\|\cdot\|$ be any norm on $\R^p$ and assume that there exists a constant $K\ge 0$ such that for all $\theta,\theta'\in\Theta$,
	\begin{equation}\label{eq:two-relu-Lip-general-norm}
		\|h_\theta-h_{\theta'}\|_\infty\ \le\ K\,\|\theta-\theta'\|.
	\end{equation}
	Define the generalized inverse
	\[
	\varphi^{\leftarrow}(\eps)\ :=\ \sup\{t\ge 0:\ \varphi(t)\le \eps\}.
	\]
	Then for every $\eps\in(0,\diam(\cH)]$,
	\[
	N(\cH,\eps)\ \le\ \left(\frac{4\alpha K}{\varphi^{\leftarrow}(\eps)}\right)^p,
	\]
	with the convention that the right-hand side is $+\infty$ if $\varphi^{\leftarrow}(\eps)=0$.
	Consequently,
	\[
	\Phi(\cH)
	\ \le\
	p\,\diam(\cH)\,\log_2\!\big(4\alpha K\big)
	\ +\
	p\int_0^{\diam(\cH)} \log_2\!\Big(\frac{1}{\varphi^{\leftarrow}(\eps)}\Big)\,d\eps.
	\]
	In particular, if $\varphi(t)=t^q$ for some $q\ge1$ and $\diam(\cH)\le 1$, then
	\[
	\Phi(\cH)\ \le\ p\log_2(4\alpha K)\ +\ \frac{p}{q\ln 2}.
	\]
\end{theorem}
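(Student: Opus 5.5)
The plan is to compose three ingredients: the domination of $\ell$ by $\varphi$, the parameter Lipschitz bound \eqref{eq:two-relu-Lip-general-norm}, and the grid cover of $\Theta$ from Lemma~\ref{lem:Theta-general-cover}. Together these give a modulus-Lipschitz parameterization in the sense of Definition~\ref{def:modulus-lipschitz}, and Lemma~\ref{lem:master-cover-transfer} then turns it into a cover of $\cH$. Concretely, for $\theta,\theta'\in\Theta$ and every $x$,
\[
\ell(h_\theta(x),h_{\theta'}(x))\le\varphi(|h_\theta(x)-h_{\theta'}(x)|)\le \varphi(K\|\theta-\theta'\|),
\]
using that $\varphi$ is nondecreasing. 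Taking $\sup_x$ yields $d_\ell(h_\theta,h_{\theta'})\le\omega(\|\theta-\theta'\|)$ with $\omega(t):=\varphi(Kt)$.

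Since $\varphi$ need not be continuous, I would not invoke Lemma~\ref{lem:master-cover-transfer} verbatim. Instead I would redo its one-line argument with radius $r$ chosen strictly below $\varphi^{\leftarrow}(\eps)/K$:
\begin{itemize}
\item For any $t<\varphi^{\leftarrow}(\eps)$ there is $t'\ge t$ with $\varphi(t')\le\eps$, and monotonicity then gives $\varphi(t)\le\eps$.
\item Hence any $r$-cover of $\Theta$ with $r<\varphi^{\leftarrow}(\eps)/K$ maps to an $\eps$-cover of $\cH$, so $N(\cH,\eps)\le N(\Theta,r)$.
\end{itemize}
Lemma~\ref{lem:Theta-general-cover} bounds $N(\Theta,r)\le(1+2\alpha/r)^p$. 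Letting $r\uparrow\varphi^{\leftarrow}(\eps)/K$ and using integrality of covering numbers (the bound is right-continuous enough, or one simply takes a limit through a sequence) gives $N(\cH,\eps)\le(1+2\alpha K/\varphi^{\leftarrow}(\eps))^p$. To reach the stated form $(4\alpha K/\varphi^{\leftarrow}(\eps))^p$ I need $\varphi^{\leftarrow}(\eps)\le 2\alpha K$.

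This needs separate handling in two cases.
\begin{itemize}
\item If $K=0$, $\cH$ is a single function and everything is trivial.
\item If $\varphi^{\leftarrow}(\eps)>2\alpha K$, then a single center suffices: every pair of parameters satisfies $K\|\theta-\theta'\|\le 2\alpha K<\varphi^{\leftarrow}(\eps)$, so $N(\cH,\eps)=1$. I would have to check that this is $\le$ the right-hand side, or else interpret the stated bound with $\log_2$ of the maximum with $1$. This edge case is the main obstacle. It might require replacing $\log_2(4\alpha K/\varphi^\leftarrow)$ by its positive part, which only strengthens the integrated bound when the term is negative. Since the integrand $\log_2 N\ge 0$, I will bound $\log_2 N(\cH,\eps)\le p\max\{0,\log_2(4\alpha K/\varphi^{\leftarrow}(\eps))\}$ and note that the case split shows this is consistent.
\end{itemize}

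For the potential, I take $\log_2$ and integrate over $(0,\diam(\cH)]$. This gives
\[
p\,\diam(\cH)\log_2(4\alpha K)+p\int\log_2(1/\varphi^{\leftarrow}(\eps))\,d\eps,
\]
with measurability following from the monotonicity of $\varphi^{\leftarrow}$. For $\varphi(t)=t^q$ we have $\varphi^{\leftarrow}(\eps)=\eps^{1/q}$, so the integral is $\tfrac1q\int_0^{\diam}\log_2(1/\eps)\,d\eps\le \tfrac{1}{q\ln 2}$ when $\diam(\cH)\le1$. This is because $\log_2(1/\eps)\ge0$ on $(0,1]$, so extending the range to $1$ only increases it, and $\int_0^1\ln(1/\eps)\,d\eps=1$. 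Using $\diam(\cH)\le1$ on the first term as well (assuming $4\alpha K\ge1$, else that term is dropped per the positive-part remark) gives $p\log_2(4\alpha K)+p/(q\ln2)$.
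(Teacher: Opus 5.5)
Your route matches the paper's: domination by $\varphi$, the parameter Lipschitz bound, then the grid cover of Lemma~\ref{lem:Theta-general-cover}. Your handling of a possibly discontinuous $\varphi$ (take $r<\varphi^{\leftarrow}(\eps)/K$, then let $r\uparrow\varphi^{\leftarrow}(\eps)/K$) is more careful than the paper. The paper's step $N(\cH,\eps)\le N(\Theta,\varphi^{\leftarrow}(\eps)/K)$ tacitly needs $\varphi(\varphi^{\leftarrow}(\eps))\le\eps$.

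\textbf{The gap.} The gap is the step you call the main obstacle and leave open: you need $\varphi^{\leftarrow}(\eps)\le 2\alpha K$. The missing observation is that this is automatic.
\begin{itemize}
\item Every $\theta\in\Theta$ has $\|\theta\|\le\alpha$, so your own chain gives $d_\ell(h_\theta,h_{\theta'})\le\varphi(K\|\theta-\theta'\|)\le\varphi(2\alpha K)$, i.e.\ $\diam(\cH)\le\varphi(2\alpha K)$.
\item Hence for $\eps<\diam(\cH)$ and any $t>2\alpha K$, monotonicity gives $\varphi(t)\ge\varphi(2\alpha K)\ge\diam(\cH)>\eps$. So $\varphi^{\leftarrow}(\eps)\le 2\alpha K$.
\item In terms of your case split: when $\varphi^{\leftarrow}(\eps)>2\alpha K$, you showed every pair in $\cH$ is within $d_\ell$-distance $\eps$. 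That means $\diam(\cH)\le\eps$, which with $\eps\le\diam(\cH)$ forces $\eps=\diam(\cH)$.
\end{itemize}
So the bad case is confined to one point and is invisible to the integral defining $\Phi(\cH)$.

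\textbf{Your patch does not substitute for this.} Replacing $\log_2(4\alpha K/\varphi^{\leftarrow}(\eps))$ by its positive part enlarges the pointwise bound. Integrating then gives $p\int_0^{\diam(\cH)}\max\{0,\log_2(4\alpha K/\varphi^{\leftarrow}(\eps))\}\,d\eps$, which is weaker than the stated bound, not stronger. You can drop the $\max$ only once you know the negative part lives on a null set, which is exactly the observation above.

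\textbf{Your worry is real at the endpoint.} If $\varphi$ is constant equal to $\diam(\cH)$ beyond $2\alpha K$, then $\varphi^{\leftarrow}(\diam(\cH))=\infty$ and the pointwise bound fails at $\eps=\diam(\cH)$. An example is $\ell(u,v)=\min\{|u-v|,1\}$ with $\varphi(t)=\min\{t,1\}$, where $\diam(\cH)=1$. The paper's ``for every $\eps\in(0,\diam(\cH)]$'' should be read with $\eps<\diam(\cH)$, which is harmless for $\Phi$.

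\textbf{Your caveat $4\alpha K\ge 1$.} The paper uses this silently in the last step, and it holds for this class. With $a_1=a_2=0$, moving only the output bias from $c=0$ to $c=\delta$ moves $h_\theta$ by exactly $\delta$ in sup-norm. So $K\|e_c\|\ge 1$, where $e_c$ is the coordinate vector of $c$, while $\|e_c\|\le\alpha$.
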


\begin{proof}
	If $K=0$ then \eqref{eq:two-relu-Lip-general-norm} implies $h_\theta$ is constant over $\theta\in\Theta$, hence
	$N(\cH,\eps)=1$ for all $\eps>0$ and the claims are immediate. Assume henceforth $K>0$.
	For any $\theta,\theta'\in\Theta$,
	\[
	d_\ell(h_\theta,h_{\theta'})
	=\sup_{x\in\cX}\ell\big(h_\theta(x),h_{\theta'}(x)\big)
	\le \sup_{x\in\cX}\varphi\big(|h_\theta(x)-h_{\theta'}(x)|\big)
	\le \varphi\big(\|h_\theta-h_{\theta'}\|_\infty\big)
	\le \varphi\big(K\|\theta-\theta'\|\big).
	\]
	Thus, any $r$-cover of $\Theta$ in $\|\cdot\|$ induces a $\varphi(Kr)$-cover of $\cH$ in $d_\ell$, and therefore
	\[
	N(\cH,\eps)\ \le\ N\!\left(\Theta,\ \frac{\varphi^{\leftarrow}(\eps)}{K}\right).
	\]
	Set $r:=\varphi^{\leftarrow}(\eps)/K$. Since $\Theta$ has diameter at most $2R_\Theta=2\alpha$ under $\|\cdot\|$,
	\eqref{eq:two-relu-Lip-general-norm} implies $\|h_\theta-h_{\theta'}\|_\infty\le 2\alpha K$ for all $\theta,\theta'\in\Theta$,
	hence $\diam(\cH)\le \varphi(2\alpha K)$ and so $\varphi^{\leftarrow}(\eps)\le 2\alpha K$ for all $\eps\le\diam(\cH)$.
	Therefore $r\le 2\alpha$, and Lemma~\ref{lem:Theta-general-cover} yields
	\[
	N(\cH,\eps)\ \le\ \left(\frac{4\alpha}{r}\right)^p
	=\left(\frac{4\alpha K}{\varphi^{\leftarrow}(\eps)}\right)^p.
	\]
	Taking $\log_2$ and integrating over $\eps\in(0,\diam(\cH)]$ gives the bound on $\Phi(\cH)$.
	If $\varphi(t)=t^q$ and $\diam(\cH)\le 1$, then $\varphi^{\leftarrow}(\eps)=\eps^{1/q}$ and
	\[
	\int_0^1 \log_2\!\Big(\frac{1}{\eps^{1/q}}\Big)\,d\eps
	=\frac{1}{q}\int_0^1 \log_2(1/\eps)\,d\eps
	=\frac{1}{q\ln 2},
	\]
	which yields the final bound.
\end{proof}

\paragraph{Instantiations.}
For $\|\cdot\|_1$, one has $\alpha=\sup_{\|u\|_\infty\le 1}\|u\|_1=p$, and one may take $K=d+1$.
For $\|\cdot\|_2$, one has $\alpha=\sqrt{p}$, and one may take $K=(d+1)\sqrt{p}$.

\subsection{Application: depth-$L$ networks with Lipschitz activations}
\label{sec:depth}
We now extend Theorem~\ref{thm:two-relu-phi} from the clipped two-ReLU class to depth-$L$ fully-connected networks with a generic Lipschitz activation. The arguments are analogous to the above proofs.  

\paragraph{Network class.}
Fix $\cX\subseteq[-1,1]^d$. Let $\sigma:\R\to\R$ be $L_\sigma$-Lipschitz, i.e.,
$|\sigma(u)-\sigma(v)|\le L_\sigma |u-v|$ for all $u,v\in\R$, and apply $\sigma$ coordinatewise to vectors.
Fix integers $L\ge 2$ and $k\ge 1$ and set the number of scalar parameters as 
\[
p \ :=\  kd + (L-2)k^2 + Lk + 1.
\]
Let $\Theta_{L,k}:=[-1,1]^p$, interpreted as the set of flattened parameters
\[
\theta=(W_1,b_1,\dots,W_{L-1},b_{L-1},a,c),
\]
where $W_1\in[-1,1]^{k\times d}$, $b_1\in[-1,1]^k$, and for $2\le \ell\le L-1$ we have
$W_\ell\in[-1,1]^{k\times k}$ and $b_\ell\in[-1,1]^k$, while the output weights satisfy
$a\in[-1,1]^k$ and $c\in[-1,1]$.
For each $\theta\in\Theta_{L,k}$ define recursively
\[
z^{(0)}(x)=x,\qquad
z^{(\ell)}(x)=\sigma\!\big(W_\ell z^{(\ell-1)}(x)+b_\ell\big),\quad \ell=1,\dots,L-1,
\]
and define the clipped output
\begin{equation}\label{eq:deep-lip-class}
	h_\theta(x)
	\ :=\
	\mathrm{clip}_{[0,1]}\!\big(\langle a,z^{(L-1)}(x)\rangle+c\big),
	\qquad x\in\cX.
\end{equation}
Let $\cH_{L,k,\sigma}:=\{h_\theta:\theta\in\Theta_{L,k}\}\subseteq[0,1]^\cX$.

\paragraph{A parameter Lipschitz bound.}
Let $M_0:=1$ and define
\[
M_1 \ :=\ |\sigma(0)| + L_\sigma\,(dM_0+1),\qquad
M_\ell \ :=\ |\sigma(0)| + L_\sigma\,(kM_{\ell-1}+1)\quad\text{for } \ell=2,\dots,L-1,
\]
\[
\bar M \ :=\ \max_{0\le \ell\le L-1} M_\ell,
\qquad
S \ :=\ \sum_{s=0}^{L-2}(L_\sigma k)^s.
\]
Define
\begin{equation}\label{eq:K-deep}
	K_{L,k,\sigma}
	\ :=\
	(1+\bar M)\bigl(1+L_\sigma S\bigr).
\end{equation}

\begin{lemma}\label{lem:deep-param-lip}
	For the $\ell_1$ norm on $\R^p$, for all $\theta,\theta'\in\Theta_{L,k}$ one has
	\[
	\|h_\theta-h_{\theta'}\|_\infty\ \le\ K_{L,k,\sigma}\,\|\theta-\theta'\|_1.
	\]
\end{lemma}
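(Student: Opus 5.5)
We will prove the bound by a layer-by-layer perturbation (telescoping) argument. We first establish two auxiliary facts by induction on $\ell$: a uniform magnitude bound and a perturbation bound for the hidden activations.

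\textbf{Step 1: magnitude bound.} For every $\theta\in\Theta_{L,k}$, every $x\in\cX$, and every $0\le\ell\le L-1$, we show $\|z^{(\ell)}(x)\|_\infty\le M_\ell$. The base case holds because $\|x\|_\infty\le 1=M_0$. For the inductive step, each coordinate satisfies $|\sigma(u)|\le|\sigma(0)|+L_\sigma|u|$. Moreover, the pre-activation $u=\langle w,z^{(\ell-1)}\rangle+b$ has $|u|\le nM_{\ell-1}+1$, where $n=d$ for $\ell=1$ and $n=k$ otherwise, because all entries of $w$ and $b$ lie in $[-1,1]$. This reproduces exactly the recursion defining $M_\ell$.

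\textbf{Step 2: perturbation bound.} Fix $\theta,\theta'$ and write $\Delta_\ell:=\sup_x\|z^{(\ell)}_\theta(x)-z^{(\ell)}_{\theta'}(x)\|_\infty$, with $\Delta_0=0$. For a single coordinate $j$ we have
\[
|(W_\ell z+b_\ell)_j-(W'_\ell z'+b'_\ell)_j|\le |\langle w_j-w'_j,z\rangle|+|b_{\ell,j}-b'_{\ell,j}|+|\langle w'_j,z-z'\rangle|.
\]
The right-hand side is at most $\bar M\,\|w_j-w'_j\|_1+|b_{\ell,j}-b'_{\ell,j}|+n\,\Delta_{\ell-1}$. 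Bounding the per-row terms crudely by the $\ell_1$ norm $\delta_\ell$ of the layer-$\ell$ parameter difference, and applying $\sigma$, gives
\[
\Delta_\ell\le L_\sigma\bigl((1+\bar M)\delta_\ell+k\Delta_{\ell-1}\bigr),
\]
where for $\ell=1$ the last term vanishes since $\Delta_0=0$. This is why the factor $d$ never multiplies a $\Delta$. Unrolling the recursion yields
\[
\Delta_{L-1}\le(1+\bar M)L_\sigma\sum_{\ell}(L_\sigma k)^{L-1-\ell}\delta_\ell\le(1+\bar M)L_\sigma S\,\|\theta_{\mathrm{hid}}-\theta'_{\mathrm{hid}}\|_1,
\]
using that the exponents $L-1-\ell$ range over $0,\dots,L-2$.

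\textbf{Step 3: output layer.} Since the clip is $1$-Lipschitz, it suffices to bound the pre-clipped outputs:
\[
|\langle a,z\rangle+c-\langle a',z'\rangle-c'|\le\bar M\|a-a'\|_1+|c-c'|+\|a'\|_1\Delta_{L-1}.
\]
The main obstacle is here: the term $\|a'\|_1\Delta_{L-1}$ can carry an extra factor up to $k$, since $\|a'\|_1$ may be as large as $k$, and the stated constant $K_{L,k,\sigma}=(1+\bar M)(1+L_\sigma S)$ does not visibly contain it. We plan to absorb this factor by sharpening Step 2 with $\ell_1$-type rather than $\ell_\infty$ bookkeeping. That is, we track $\sum_j|\cdot|$ for the last hidden layer, so that the factor $k$ from summing rows is exactly the one already counted in the powers $(L_\sigma k)^s$ of $S$ after reindexing.

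If this bookkeeping fails to close, we will adjust the telescoping so that each layer's parameter perturbation is charged once. We then combine the bounds into the form $(1+\bar M)\bigl(\|\theta_{\mathrm{out}}-\theta'_{\mathrm{out}}\|_1+L_\sigma S\,\|\theta_{\mathrm{hid}}-\theta'_{\mathrm{hid}}\|_1\bigr)\le K_{L,k,\sigma}\|\theta-\theta'\|_1$. Verifying that this constant indeed covers the output-layer $k$ factor is the step we expect to require the most care.
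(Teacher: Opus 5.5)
Your Steps 1 and 2 are correct, but the proof does not reach the stated constant, and the step you defer is the crux. With $\ell_\infty$ bookkeeping, Steps 2--3 only give $\|h_\theta-h_{\theta'}\|_\infty\le(1+\bar M)(1+kL_\sigma S)\|\theta-\theta'\|_1$. The extra $k$ is real for your chain. For a perturbation of $W_1$ alone, your chain produces the coefficient $(1+\bar M)(L_\sigma k)^{L-1}$. For ReLU ($L_\sigma=1$) and $k\ge 3$, this is strictly larger than $K_{L,k,\sigma}=(1+\bar M)\bigl(1+\sum_{s=0}^{L-2}k^s\bigr)$.

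The repair you sketch does not close this. Suppose you switch to $\sum_j|\cdot|$ only at the last hidden layer and keep $\ell_\infty$ below it. Then for $L\ge 3$ the transition costs $\sum_{j=1}^k|\langle w'_j,z^{(L-2)}-z'^{(L-2)}\rangle|\le\sum_j\|w'_j\|_1\,\|z^{(L-2)}-z'^{(L-2)}\|_\infty\le k^2\,\Delta_{L-2}$ (your $\ell_\infty$ quantity), so the factor $k$ reappears one layer down. The $k$ in $(L_\sigma k)^s$ cannot be ``reindexed'' to absorb it. In your recursion it is a row-length factor ($\|w'_j\|_1\le k$), while the output needs a row-count factor, and a mixed $\ell_\infty$/$\ell_1$ scheme pays both. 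Your fallback of charging each layer's perturbation once does not help either, since the loss sits in the propagation factor, not in the charging.

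The missing idea is to measure the activation difference in $\ell_1$ at every hidden layer, $\Delta_\ell:=\|z^{(\ell)}(x)-z'^{(\ell)}(x)\|_1$.
\begin{itemize}
\item Propagation: $\|W'_\ell v\|_1\le k\|v\|_1$, because every column of $W'_\ell$ has $k$ entries in $[-1,1]$ (column sums, not row sums).
\item Parameter perturbation: $\|(W_\ell-W'_\ell)z^{(\ell-1)}\|_1\le\bar M\|W_\ell-W'_\ell\|_1$ in the entrywise $\ell_1$ norm, using Step 1.
\item Recursion: hence $\Delta_\ell\le L_\sigma\bigl(k\Delta_{\ell-1}+\bar M\|W_\ell-W'_\ell\|_1+\|b_\ell-b'_\ell\|_1\bigr)$ with $\Delta_0=0$, so $d$ never enters. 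Unrolling gives $\Delta_{L-1}\le L_\sigma(1+\bar M)S\,\|\theta-\theta'\|_1$.
\item Output: use the dual pairing $|\langle a',z-z'\rangle|\le\|a'\|_\infty\Delta_{L-1}\le\Delta_{L-1}$, so $\|a'\|_1$ never appears. Then $\Delta_{L-1}+\bar M\|a-a'\|_1+|c-c'|\le(1+\bar M)(1+L_\sigma S)\|\theta-\theta'\|_1$.
\end{itemize}
This is the paper's argument. Your proof becomes correct once Step 2 is redone this way at all layers, not only the last one.
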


\begin{proof}
	Since $\mathrm{clip}_{[0,1]}$ is $1$-Lipschitz, it suffices to bound the corresponding
	unclipped outputs. Fix $x\in\cX$ and write $z^{(\ell)}(x)$ and ${z'}^{(\ell)}(x)$ for the hidden
	vectors under $\theta$ and $\theta'$. By induction, using $|\sigma(t)|\le |\sigma(0)|+L_\sigma|t|$,
	$\|W_1 v\|_\infty\le d\|v\|_\infty$ for $W_1\in[-1,1]^{k\times d}$,
	$\|Wv\|_\infty\le k\|v\|_\infty$ for $W\in[-1,1]^{k\times k}$, and $\|b_\ell\|_\infty\le 1$, we get
	$\|z^{(\ell)}(x)\|_\infty\le M_\ell$ and $\|{z'}^{(\ell)}(x)\|_\infty\le M_\ell$ for all $\ell$.
	
	Let $\Delta_\ell:=\|z^{(\ell)}(x)-{z'}^{(\ell)}(x)\|_1$. Using Lipschitzness of $\sigma$ and
	$\|Wv\|_1\le k\|v\|_1$ (since each column sum of $|W|$ is at most $k$), one obtains
	\[
	\Delta_\ell
	\ \le\
	L_\sigma\Big(k\Delta_{\ell-1}
	+\bar M\|W_\ell-W'_\ell\|_1
	+\|b_\ell-b'_\ell\|_1\Big),
	\qquad \ell=1,\dots,L-1.
	\]
	Unrolling this recursion gives
	\[
	\Delta_{L-1}
	\ \le\
	L_\sigma(\bar M+1)\Big(\sum_{s=0}^{L-2}(L_\sigma k)^s\Big)\,\|\theta-\theta'\|_1
	\ =\
	L_\sigma(\bar M+1)S\,\|\theta-\theta'\|_1.
	\]
	Finally,
	\[
	|\langle a,z^{(L-1)}(x)\rangle+c-\langle a',{z'}^{(L-1)}(x)\rangle-c'|
	\ \le\
	\Delta_{L-1}+\bar M\|a-a'\|_1+|c-c'|
	\ \le\
	K_{L,k,\sigma}\|\theta-\theta'\|_1,
	\]
	and taking $\sup_{x\in\cX}$ yields the claim.
\end{proof}

\paragraph{Entropy potential under dominated losses.}
Let $\ell$ be any loss dominated by a nondecreasing $\varphi$ as in Theorem~\ref{thm:two-relu-phi}.
Apply Theorem~\ref{thm:two-relu-phi} to $\cH_{L,k,\sigma}$ with $\|\cdot\|=\|\cdot\|_1$.
Then $\alpha=\sup_{\|u\|_\infty\le 1}\|u\|_1=p$ and Lemma~\ref{lem:deep-param-lip} provides
\eqref{eq:two-relu-Lip-general-norm} with $K=K_{L,k,\sigma}$. Hence, for every
$\eps\in(0,\diam(\cH_{L,k,\sigma})]$,
\[
N(\cH_{L,k,\sigma},\eps)
\ \le\
\left(\frac{4pK_{L,k,\sigma}}{\varphi^{\leftarrow}(\eps)}\right)^p,
\]
and the corresponding bound on $\Phi(\cH_{L,k,\sigma})$ follows exactly as in
Theorem~\ref{thm:two-relu-phi}.
In particular, if $\varphi(t)=t^q$ and $\diam(\cH_{L,k,\sigma})\le 1$ (e.g.\ since $h_\theta\in[0,1]^\cX$),
then
\[
\Phi(\cH_{L,k,\sigma})
\ \le\
p\log_2\!\big(4pK_{L,k,\sigma}\big)
\ +\
\frac{p}{q\ln 2}.
\]

\section{When the entropy potential diverges}
\label{app:div}
The entropy-potential approach yields a uniform (all-horizon) bound on realizable cumulative loss
precisely when $\Phi(\cH)<\infty$. In particular, if $\log N(\cH,\eps)$ diverges too quickly as
$\eps \rightarrow 0$, then $\Phi(\cH)=\infty$ and \Cref{thm:intro_Donl-via-Phi} becomes vacuous.
A typical example is a nonparametric entropy growth of the form
$\log N(\cH,\eps)\asymp \eps^{-p}$: in this case $\Phi(\cH)<\infty$ holds if and only if $p<1$,
since $\int_0^1 \eps^{-p}\,d\eps<\infty$ exactly for $p<1$.
When $\Phi(\cH)=\infty$, one should not expect a horizon-free bound derived from this potential. 
instead, it is natural to seek horizon-dependent guarantees. 	In the following sections, we show that \Cref{thm:intro_Donl-via-Phi} is not tight, in the sense that there are classes with $N(\cH,\varepsilon) \sim (C/\varepsilon)^p$ for large $p$, but their loss can be still bounded by $O(1)$. In addition, we show below that the converse direction does not hold: our potential upper bound may be infinite despite finite scaled littlestone dimension.

\begin{proposition}[$\Phi(\cH)=\infty$ while $\mathbb{D}_{\mathrm{onl}}(\cH)<\infty$]
	There exist $(\cX,\cY,\ell)$ with $\ell$ a metric and a class $\cH\subseteq \cY^{\cX}$
	such that $\diam(\cH)<\infty$, $\Phi(\cH)=\infty$, but $\mathbb{D}_{\mathrm{onl}}(\cH)<\infty$.
\end{proposition}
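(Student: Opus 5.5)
The plan is to build a class whose covering numbers blow up quickly but which the learner can still identify after essentially one informative round. I would take $\cX=\N$, $\cY=[0,1]$, and the metric $\ell(y,y')=|y-y'|$, so $c=1$. The class is the family of ``single-spike'' functions
\[
\cH=\{h_{i,a}:\ i\in\N,\ a\in[0,1]\},\qquad h_{i,a}(x)=a\,\mathbf 1\{x=i\},
\]
together with the zero function. Then $\diam(\cH)\le 1$. The choice of spike heights will be tuned later so that $\Phi(\cH)=\infty$, and the free heights $a\in[0,1]$ may need to be replaced by a countable height set.

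\textbf{Step 1 ($\Phi(\cH)=\infty$).} For $\eps<1/2$, the functions $h_{i,1}$ are pairwise at $d_\ell$-distance $1>2\eps$. A center $s\in\cH$ within $\eps$ of $h_{i,1}$ must be a spike at $i$ of height at least $1-\eps$, since the zero function and spikes elsewhere are at distance $1$. Hence each center serves at most one $i$, and $N(\cH,\eps)=\infty$ for every $\eps<1/2$. So the integrand equals $+\infty$ on a set of positive measure and $\Phi(\cH)=\infty$.

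\textbf{Step 2 ($\mathbb D_{\mathrm{onl}}(\cH)<\infty$).} This is the main step. Take any realizable scaled Littlestone tree. At a node with version space $U$, a query $x_u$ splits $U$ by the value at $x_u$.

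\emph{Case 1: some surviving hypothesis has a nonzero value $s\neq 0$ at $x_u$.} On the edge labeled $s\ne0$, the spike location is pinned to $x_u$ and its height to $s$, so the version space becomes a single function. All later gaps on that branch are therefore $0$.

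\emph{Case 2: every surviving hypothesis is $0$ at $x_u$.} Then the two edge labels coincide and the gap is $0$.

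So the adversary's only costly moves are queries with edge labels $\{0,s\}$ or $\{s,s'\}$, and any nonzero label collapses the version space. I choose the branch greedily: at each node, if one edge carries a nonzero label, I follow that edge, paying one gap of at most $1$. After that step the version space is a single function, and every subsequent node must have equal edge labels, hence zero gap, because both children must be nonempty. If both labels are nonzero, either choice collapses. Every branch obtained this way has total gap at most $1$, which gives $\mathbb D_{\mathrm{onl}}(\cH)\le 1$. Conversely, a single query at $x=1$ with labels $0$ and $1$ shows $\mathbb D_{\mathrm{onl}}(\cH)\ge 1$, though only the upper bound is needed.

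\textbf{Main obstacle.} The step I expect to need care is the realizability bookkeeping in Step 2. I must confirm that after following a nonzero label the version space really is a singleton, and that a realizable tree cannot keep a nonzero gap afterward. Once the version space is the singleton $\{h\}$, both children of any later node must be nonempty, which forces $s_{u,0}=s_{u,1}=h(x_u)$ and hence gap $0$. If infinite-depth trees cause subtleties, the same argument applies prefix-by-prefix, since realizability in Definition~\ref{def:scaled-tree} is stated for finite prefixes.
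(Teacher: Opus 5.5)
Your construction is correct, and it proves the proposition by a genuinely different and much shorter route than the paper's.

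\textbf{The paper's route.} It builds an ad hoc ultrametric on the countable label space $\cY=\{(k,i)\}$, with block $k$ of size $m_k=2^{2^k}$ at scale $a_k=2^{-k}$. It takes $\cH$ to be all functions with $h(x_k)$ in block $k$. Every covering number $N(\cH,\eps)=\prod_{i\le K(\eps)}m_i$ is then finite, yet $\int\log_2 N(\cH,\eps)\,d\eps$ diverges. Meanwhile each coordinate $x_k$ contributes its gap $a_k$ at most once per branch, so $\mathbb D_{\mathrm{onl}}(\cH)\le\sum_k a_k=1$.

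\textbf{Your route.} Your single-spike class under $\ell(y,y')=|y-y'|$ gets $\Phi(\cH)=\infty$ for a cruder reason: $\cH$ is not totally bounded, since $N(\cH,\eps)=\infty$ for all $\eps<1/2$. Your bound on $\mathbb D_{\mathrm{onl}}$ comes from the fact that any nonzero label at $x_u$ pins the hypothesis down to $h_{x_u,s}$. The greedy-branch argument is sound: once the version space is a singleton, both children can be nonempty only if the two edge labels coincide, so every later gap is $0$.

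\textbf{What each buys.} Yours gives simplicity and a natural setting: $\cY=[0,1]$ with the standard metric, which matches the $[0,1]$-valued setting in which the paper defines $\mathbb D_{\mathrm{onl}}$ and quotes the characterization of \cite{attias2023optimal}. The paper's gives a strictly more informative separation. It shows that $\Phi$ fails to characterize $\mathbb D_{\mathrm{onl}}$ even among totally bounded classes, so the failure comes from the growth rate of the metric entropy and not merely from infinite covering numbers. Your example would not survive if one added the natural requirement $N(\cH,\eps)<\infty$ for every $\eps>0$.

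\textbf{Cosmetic points.} Your hedges about tuning the spike heights or passing to a countable height set are unnecessary, since Step~1 already works with heights in $[0,1]$. Also, your Case~1/Case~2 split should be phrased in terms of the edge labels at the node, not the values of surviving hypotheses. The adversary may label both edges $0$ even when some surviving hypothesis is nonzero at $x_u$; that node simply has gap $0$, which your greedy rule already handles.
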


\begin{proof}
	Let $\cX=\{x_1,x_2,\dots\}$ and 
	let $a_k=2^{-k}$ and $m_k:=2^{2^k}$, so $a_k\log_2 m_k=1$ for all $k$. It follows that $(a_k)_{k\ge1}\subset(0,1]$ is a strictly decreasing sequence 
	with $a_k\downarrow 0$ and $\sum_{k\ge1} a_k<\infty$. Let 
	\[
	\cY \ :=\ \{(k,i): k\ge1,\ i\in[m_k]\}.
	\]
	Define $\ell:\cY\times\cY\to\R_{\ge0}$ by
	\[
	\ell\big((k,i),(k',i')\big):=
	\begin{cases}
		0, & (k,i)=(k',i'),\\
		a_k, & k=k',\ i\neq i',\\
		\max\{a_k,a_{k'}\}, & k\neq k'.
	\end{cases}
	\]
	Then $\ell$ is a metric: for any $y_1,y_2,y_3\in\cY$,
	$\ell(y_1,y_2)\le \max\{\ell(y_1,y_3),\ell(y_2,y_3)\}$, which is immediate from the case analysis
	($\max\{\cdot,\cdot\}$ always dominates the larger scale among the involved blocks).
	Define the hypothesis class
	\[
	\cH\ :=\ \big\{h:\cX\to\cY:\ \exists (i_k)_{k\ge1}\text{ with }h(x_k)=(k,i_k)\ \forall k\big\}.
	\]
	For the induced metric $d_\ell(f,g)=\sup_{x\in\cX}\ell(f(x),g(x))$, we have
	\[
	d_\ell(f,g)=\max\{a_k:\ f(x_k)\neq g(x_k)\},
	\]
	so $\diam(\cH)=a_1<\infty$.
	
	\paragraph{Finite $\mathbb{D}_{\mathrm{onl}}(\cH)$.}
	Fix any scaled Littlestone tree for $\cH$. If a node queries $x_k$ for the first time along a branch,
	then both outgoing edge labels must lie in $\{(k,i):i\in[m_k]\}$, so its gap is either $0$ (same label)
	or $a_k$ (distinct labels). After the branch takes one of these edges, the value of $h(x_k)$ is fixed
	in the version space along that branch; hence any later query of $x_k$ on the same branch can only have
	both children nonempty if both edge labels coincide with that fixed value, yielding gap $0$.
	Therefore, along every branch the total gap is at most $\sum_{k\ge1} a_k$, and hence for every tree
	\(
	\inf_{y}\sum_i \gamma_{y_i}\le \sum_{k\ge1} a_k.
	\)
	Taking the supremum over trees gives
	\[
	\mathbb{D}_{\mathrm{onl}}(\cH)\ \le\ \sum_{k\ge1} a_k\ <\ \infty.
	\]
	
	\paragraph{Infinite $\Phi(\cH)$.}
	For $\varepsilon>0$, let $K(\varepsilon):=\max\{k: a_k>\varepsilon\}$ (which is finite as $a_{k} \downarrow 0$) where $K(\eps) = 0$ for $\eps \geq a_1$.
	Then two hypotheses are within $d_\ell$-distance $\le\varepsilon$ iff they agree on all coordinates
	$x_1,\dots,x_{K(\varepsilon)}$. Thus, for every $f\in\cH$ we have
	\[
	B_{d_\ell}(f,\varepsilon)
	=\{g\in\cH:\ g(x_i)=f(x_i)\ \forall i\le K(\varepsilon)\},
	\]
	where $B_{d_\ell}(f,\varepsilon):=\{g\in\cH:\ d_\ell(f,g)\le \varepsilon\}$ denotes the (closed) $\varepsilon$-ball around $f$ in $(\cH,d_\ell)$.
	In particular, any $\varepsilon$-cover must contain at least one element from each choice of
	$(h(x_1),\dots,h(x_{K(\varepsilon)}))$, so
	$N(\cH,\varepsilon)\ge \prod_{i=1}^{K(\varepsilon)} m_i$.
	Conversely, fixing one representative for each such choice yields an $\varepsilon$-cover, hence
	\[
	N(\cH,\varepsilon)\ =\ \prod_{i=1}^{K(\varepsilon)} m_i
	\qquad(\varepsilon>0).
	\]

	Since $K(\varepsilon)=k$ for all $\varepsilon\in(a_{k+1},a_k)$, we get
	\[
	\begin{aligned}
		\Phi(\cH)
		&=\int_0^{a_1}\log_2 N(\cH,\varepsilon)\,d\varepsilon
		=\sum_{k\ge1}(a_k-a_{k+1})\sum_{i=1}^k \log_2 m_i\\
		&=\lim_{n\to\infty}\Big(\sum_{k=1}^n a_k\log_2 m_k \;-\; a_{n+1}\sum_{i=1}^n \log_2 m_i\Big).
	\end{aligned}
	\]
	For $a_k=2^{-k}$ and $m_k=2^{2^k}$ we have $\sum_{k=1}^n a_k\log_2 m_k=n$ and
	$a_{n+1}\sum_{i=1}^n \log_2 m_i = 1-2^{-n}\to 1$, hence $\Phi(\cH)=+\infty$.
	Moreover, $\mathbb{D}_{\mathrm{onl}}(\cH)\le \sum_{k\ge1} a_k = \sum_{k\ge1}2^{-k}=1$.
\end{proof}

\end{document}